\documentclass{article}

\usepackage{xurl} 
\usepackage{hyperref}
\usepackage{url}
\usepackage[utf8]{inputenc}
\usepackage[T1]{fontenc}
\usepackage{hyperref}
\usepackage{url}
\usepackage{booktabs}
\usepackage{amsfonts}
\usepackage{nicefrac}
\usepackage{microtype}
\usepackage{xcolor}
\usepackage{xspace}

\usepackage{adjustbox}
\usepackage{float}
\usepackage{amssymb}        
\usepackage{graphicx}
\usepackage[normalem]{ulem}
\usepackage{caption}
\usepackage{subcaption}
\usepackage{amsmath}

\PassOptionsToPackage{numbers, compress}{natbib}
\usepackage{tcolorbox}
\usepackage{algorithm} 
\usepackage{algorithmic} 
\usepackage{xcolor}
\usepackage{amsthm} 
\usepackage{booktabs} 
\usepackage{multirow}
\usepackage{float}

\newtheorem{theorem}{Theorem}
\newtheorem{lemma}[theorem]{Lemma}
\newtheorem{corollary}[theorem]{Corollary}
\newtheorem{proposition}[theorem]{Proposition}
\newtheorem{definition}[theorem]{Definition}

\newtheorem{remark}[theorem]{Remark}
\DeclareMathOperator{\KL}{KL}
\DeclareMathOperator{\R}{\mathbb{R}}
\DeclareMathOperator{\E}{\mathbb{E}}

\DeclareMathOperator{\supp}{\operatorname{supp}}
\DeclareMathOperator{\Var}{\operatorname{Var}}
\DeclareMathOperator{\Cov}{\operatorname{Cov}}

\usepackage[accepted]{icml2026}

\usepackage[textsize=tiny]{todonotes}

\icmltitlerunning{Probabilistic Modeling of Latent Agentic Substructures in Deep Neural Networks}

\begin{document}

\twocolumn[
  \icmltitle{Probabilistic Modeling of Latent Agentic Substructures \\ in Deep Neural Networks}

  \begin{icmlauthorlist}
    \icmlauthor{Su Hyeong Lee}{stat,MATS}
    \icmlauthor{Risi Kondor}{stat,cs}
    \icmlauthor{Richard Ngo}{indep}
  \end{icmlauthorlist}
  \icmlaffiliation{MATS}{ML Alignment and Theory Scholars (MATS) Fellow}
  \icmlaffiliation{stat}{Department of Statistics, University of Chicago}
  \icmlaffiliation{cs}{Department of Computer Science, University of Chicago}
  \icmlaffiliation{indep}{Independent}

  \icmlcorrespondingauthor{Su Hyeong Lee}{sulee@uchicago.edu}

  \icmlkeywords{Machine Learning, ICML}

  \vskip 0.3in
]

\printAffiliationsAndNotice{}

\begin{abstract}
We develop a theory of agency grounded in probabilistic modeling for neural models. Agents are represented as outcome distributions with epistemic utility given by  log score, and compositions are defined through weighted logarithmic pooling that strictly improves every member's welfare. We prove that strict unanimity is impossible under linear pooling or in binary outcome spaces, but possible with three or more outcomes. Our framework admits recursive structure via cloning invariance, continuity, and openness, while tilt-based analysis rules out trivial duplication. Finally, we formalize an agentic alignment phenomenon in LLMs using our theory: eliciting a benevolent persona (``Luigi'') induces an antagonistic counterpart (``Waluigi''), while a manifest-then-suppress Waluigi strategy yields strictly larger first-order misalignment reduction than pure Luigi reinforcement alone. These results clarify how developing a principled mathematical framework for how subagents can coalesce into coherent higher-level entities provides novel implications for alignment in agentic AI systems.
\end{abstract}

\section{Introduction}\label{Introduction}

Let us imagine the world through the eyes of a large language model (LLM). The model perceives nothing but text: sequences of tokens generated by an underlying data-generating process ~\citep{Vaswani2017, Radford2018, KollerFriedman2009PGM}. Each author--or more generally, each effective stylistic source of text in its training data--can be interpreted as an external agent generating a sequence of tokens, following a unique autoregressive distribution ~\citep{Bengio2003NNLM, Mikolov2010RNNLM, ChenGoodman2004Smoothing}. The LLM itself, parameterized by $P_\theta$, seeks to approximate these distributions. Viewed through an agentic lens, its observation space consists of past tokens (or equivalently, latent projections into a state representation), while its action space is the vocabulary itself. Consequently, the LLM, as an agent, acts by selecting the next token in the sequence, thereby defining a probability distribution over outcomes that coincides with its action distribution.  

From this perspective, an agent predicts--and therefore implicitly favors--the outcomes it aims to bring about~\citep{Friston2017Process, Friston2010FEP, Buckley2017FEPReview}. Its future behavior may be viewed as sampling actions that steer toward these preferred observations. Thus, a goal or behavioral prior corresponds to a probability distribution over desirable outcomes~\citep{ParrFriston2019GFE, DaCosta2020DiscreteAISynthesis}. This motivates our central construction: agents as probability distributions, and composite agents as compositions of such distributions. A central conceptual challenge is therefore compositionality. For instance, humans routinely maintain multiple latent, sometimes conflicting priors while still acting as a coherent agent. Likewise, large models seem to encode diverse, internally competing behavioral tendencies~\citep{BereskaGavves2023TamingSimulators, Greenblatt2024AlignmentFaking}. 
Understanding how such priors combine--stably or unstably--is essential for analyzing emergent behaviors, antagonistic personas, and alignment-relevant phenomena ~\citep{Wei2022EmergentAbilities, Deshpande2023ToxicityChatGPT, Wei2023Jailbroken}.

To formalize interactions between subagents (and more generally, among latent priors within a neural network), we draw on economic and game-theoretic frameworks~\citep{VonNeumannMorgenstern1944, Stone61, DietrichList17, Kreps2023}. Introducing a utility function enables us to leverage a rich theoretical toolkit for analyzing collective behavior, stability, and interaction dynamics. However, to apply this to neural networks, the utility function must align with how these systems are \emph{actually trained}. Modern neural networks--including contemporary LLMs--are trained almost exclusively via binary or categorical cross-entropy objectives, which is equivalent to maximizing expected log-probability of the training data~\citep{goodfellow2016deep, Bengio2003NNLM, Vaswani2017, Radford2018}. Gradients propagate through terms of the form $\log P(\cdot)$. For this reason, we take log-probability as the implicit utility function that the network is optimized to maximize in our modeling setup. This choice is dictated by the training objective: log-probability is the canonical function whose maximization corresponds directly to the model’s optimization pressure during training.

Insights from game theory then suggest that such composite agents, paired with a utility function, typically possess a stability criterion to ensure internally consistent or coherent behavior~\citep{Nash1951,ArrowDebreu1954,Kamenica2019,Kreps2023}. 
In welfare economics and game theory, the standard and established method for aggregating utilities across agents is a weighted sum of individual utilities~\citep{Harsanyi1955,Moulin1988,MasColellWhinstonGreen1995,BrandtEtAl2016}. Recall that each subagent's utility over outcomes is modeled as $W_i(o)=\log P_i(o)$ due to the optimization pressure propagating through the training objective. Summing these log-utilities therefore amounts to applying the classical social-welfare aggregator to the implicit utilities the network already optimizes. We note that this additive aggregation scheme itself is intuitively established in economics; our contribution is to apply it to the analysis of neural networks and reveal their internal compositional structure.

Grounded in this setup, our framework gives a principled way to reason about internal coalitions and trade-offs in neural networks. By explicitly modeling subagents’ aggregation, our results precisely delineate in which regimes a model can be decomposed into mutually benefiting compositions or when aggregating neural agents will necessarily sacrifice some agent’s utility. Thus, we move beyond purely descriptive observations (“the model seems to behave like an internal coalition of sub-components”) to provable constraints on what kinds of coalitions can exist, given the architecture and training objective. 

To our knowledge, these stability notions have not been formalized for neural agents. Thus in this paper, we introduce a theoretical framework for analyzing the internal cohesiveness and stability of neural agents. Our approach develops formal definitions grounded in epistemic utility, while drawing conceptual support from active inference~\citep{Friston2010FEP, Friston2017Process, Buckley2017FEPReview}, mathematical economics~\citep{Debreu1959, ArrowDebreu1954, Kamenica2019}, and probabilistic modeling~\citep{KollerFriedman2009PGM, ThrunBurgardFox2005, Raftery2005BMA}. However, beyond the foundational definitions provided in Section~\ref{Modeling_Setup}, our contributions are, to our knowledge, entirely novel and unexplored. A detailed literature review and further intuitive motivations are given in Appendices~\ref{Appendix:Introduction} and~\ref{Appendix:Motivation}.

\subsection{Summary of Contributions}

Our work introduces a novel probabilistic framework for modeling neural networks as compositional agents, in which a monolithic model is treated as a collection of interacting subagents whose utilities are aggregated. This provides formal tools for analyzing internal stability, coherence, and tradeoffs–phenomena that are central to alignment but previously lacked a unified mathematical foundation. For instance, this decomposition permits formal analysis of how multiple priors combine: when their aggregation yields a stable composite agent, and when it instead produces antagonistic dynamics or mutual degradation. These results provide a formal basis for theoretically characterizing phenomena such as the Waluigi effect, where strengthening a benevolent persona induces its adversarial counterpart (Section~\ref{Waluigi_Section}).  Mathematically axiomatic reasoning about behavioral phenomena in LLMs has been difficult precisely because no prior rigorous framework existed. Our work is intended to address this gap by developing, to our knowledge, the first rigorous theoretical foundation for subagent structure and compositional behavior in neural probability models. 

We note that this modeling setup automatically confers many additional benefits. As many modern neural architectures use a linear regressor in the final layer followed by softmax, any additive decomposition of the logits (e.g., ensembles~\citep{HansenSalamon1990Ensembles,Lakshminarayanan2017DeepEnsembles}, products of experts~\citep{Hinton02,Tresp2000BCM}, multi-head architectures~\citep{Vaswani2017,Devlin2019BERT,Radford2018}) corresponds exactly to logarithmic pooling, which is precisely the regime our theory analyzes. Our framework applies whenever a classifier-like neural network ends in a linear logit layer and softmax, which covers the vast majority of neural models used in machine learning. Coincidentally, we note that the modeling setup of $\log P$ as agent utility is consistent with the usage of log-likelihood in proper scoring rules and probabilistic forecasting~\citep{Good1952RationalDecisions,GneitingRaftery07,Raftery2005BMA,BrockerSmith2007Proper} and in active-inference-style formulations where agents minimize surprise or variational free energy~\citep{Friston2010FEP,Friston2017Process,ParrFriston2019GFE}. 

\paragraph{Contributions.} In light of this context, we summarize our contributions as follows:
\begin{itemize}
     \item \textbf{Formalization of compositional agency:} We propose a novel framework for analyzing stability and internal coherence in neural agents. 
    \item \textbf{Sharp possibility frontier:} We prove strict unanimity is impossible for binary outcome spaces and under linear pooling, but possible for $|\mathcal O|\ge 3$ under logarithmic pooling.
    \item \textbf{Recursive and robustness properties:} We establish cloning invariance, continuity, and openness of strictly unanimous decomposability, yielding a principled foundation for multi-agent composition for neural models.
    \item \textbf{Limits of local perturbations:} We show that small tilts around a fixed pool cannot achieve strict unanimity, ruling out trivial duplication as a path to compositionality.
    \item \textbf{Incompatibility phenomena and non-existence results:} We show that even under logarithmic pooling, there exist constructable collections of non-uniform agents for which \emph{no} choice of positive weights yields strict unanimous improvement, delineating fundamental incompatibilities among certain subagents.
    \item \textbf{Safety-relevant alignment principle:} We formalize the Waluigi effect using our framework, and prove that manifest--then--suppress strictly outperforms direct suppression, illuminating alignment challenges in large AI systems.
\end{itemize}

\section{Modeling Setup}\label{Modeling_Setup}

Agents--whether biological, such as humans, or artificial, such as large language models--can be understood as compositions of multiple latent priors. Humans, the canonical example of agents, harbor both socially aligned and anti-aligned subdrives. Desires for productivity, cooperation, and well-being coexist with potential impulses toward domination, violence, or exploitation. For neural agents, an LLM may encode a prior for self-preservation, which could manifest as behaviors such as blackmail when threatened with shutdown (Appendix~\ref{Appendix:Motivation}). In what sense can these latent priors, or more generally, probability distributions, themselves be regarded as agents?

We adopt a perspective inspired by active inference, in which an agent is modeled as a probabilistic generative model (PGM) over outcomes or observations~\citep{Friston2017Process,Buckley2017FEPReview,ParrFriston2019GFE}. In this view, an agent predicts--and thereby selectively favors--the outcomes it seeks to bring about. For instance, a PGM encoding a preference for cats would place high probability mass on visual observations corresponding to petting a cat. The agent’s actions in the world are then modeled as samples from this biased outcome distribution. Goals are thus formalized as distributions over desirable observations, and behavior emerges from the agent’s attempts to realize them. 

Formalizing agents as PGMs leads naturally to a further challenge: how do multiple generative models, each with distinct probabilistic biases, combine to form a coherent higher-level agent? Addressing this requires a principled framework for \emph{compositional agency}--one that explains how disparate distributions can coalesce, both in the formal language of probability theory and in relation to utility-based perspectives. Developing such a foundation is a central aim of this work.

For this purpose, we draw on the literature on opinion pooling (Appendix~\ref{Appendix:Motivation}). In the standard setup, we are given $n$ agents or distributions $P_1, \dots, P_n$ that must act collectively as a single agent, producing a unified distribution $P$. A natural approach is to choose $P$ so as to minimize an $f$-divergence between each $P_i$ and $P$, leading to the well-known linear and logarithmic pooling rules:
\[
  P_C^{\mathrm{lin}}(o) \;=\; \sum_{i=1}^n \beta_i\,P_i(o), 
  \qquad 
  P_C^{\mathrm{log}}(o) \;=\; \frac{1}{Z}\,\prod_{i=1}^n P_i(o)^{\beta_i},
\]
where $Z = \sum_{o'\in\mathcal{O}} \prod_{i=1}^n P_i(o')^{\beta_i}$ is the normalization constant and $\beta = (\beta_1,\dots,\beta_n)$ are non-negative weights summing to one. As reviewed in Appendix~\ref{Appendix_Opinion_Pooling}, minimizing Kullback–Leibler divergence with respect to such weights recovers both pooling rules from first principles.

Interpreting probability distributions as utilities offers a complementary lens on aggregation. Following the discussion in Section~\ref{Introduction}, given beliefs $P_i$ and a realized outcome $o \in \mathcal{O}$, we define the epistemic (or relative) utility as the logarithmic score \(U_i(o) \;=\; \log P_i(o).\) This formulation captures the training objective of autoregressive neural networks such as large language models, which are optimized to assign maximum log-likelihood to the text observed in their training data. For a single agent, the epistemic utility can be mapped back to its belief distribution via a softmax transform. When multiple agents interact, a natural aggregate utility is
\[
  U(o) = \sum_{i=1}^n \beta_i\, U_i(o),
\]
which, if interpreted as the utility of a higher-level agent, must likewise be converted to a distribution via softmax. This yields precisely the logarithmic pooling rule, linking epistemic utility maximization compositional aggregation (Appendix~\ref{Appendix:Prob_to_Utils}). The proofs of all results in this paper are presented in the Appendix. Throughout this work, we consider an arbitrarily large but finite outcome set $\mathcal{O}$ (e.g., token vocabulary for LLMs) and assume that all distributions or agents are strictly positive on $\mathcal{O}$.

Strictly speaking, autoregressive models define each distribution as conditional on the preceding token sequence $y_{<t}$. In addition, there may exist an internal state vector that represents the belief state of the neural agent, which in turn influences its predictive distribution $P$. For notational convenience, we subsume this state vector into the observation space $\mathcal{O}$. Our framework thus analyzes the action of an agent $P$ at a fixed timestep $t$, focusing on how it pools its internal subagents. Accordingly, we omit the explicit conditioning on $y_{<t}$ in the autoregressive probability model for readability.

\section{A Notion of Compositional Agency}\label{Composition_MainText}

Building on the connection between epistemic utility and the logarithmic pooling rule, we now formalize the basic ingredients of our compositional agency framework. The goal is to capture both the \emph{beliefs} of each agent, representing their probabilistic world model, and their \emph{welfare}, representing their preferences over outcomes. This separation enables us to reason about how agentic compositions emerge and to identify the hidden stability structures that govern their formation and persistence.

\begin{definition}[Agent beliefs and welfare]
For each agent \(i\in\{1,\dots,n\}\), define
\begin{enumerate}
  \item Probability distribution \(P_i: \mathcal{O} \to [0,1]\) with \(\sum_{o\in\mathcal{O}} P_i(o) = 1\);
  \item Welfare function \(W_i: \mathcal{O}\to\mathbb{R}\) expressing the utility the agent assigns to each outcome.
\end{enumerate}
\end{definition}

Given a set of agents, the next step is to specify how their individual beliefs combine into a collective belief. As motivated in the preceding sections, the logarithmic pooling rule arises naturally from optimizing a distribution over cross-entropy or epistemic utility aggregation. We adopt it here to define the composition's shared probabilistic model.

\begin{definition}[Composition belief]
Fix non-negative weights \(\beta_i\) summing to one.  The composition’s belief under the logarithmic pool is
\[
  P(o) \;=\; \frac{1}{Z}\,\prod_{j=1}^n P_j(o)^{\beta_j},
  \quad \text{where } Z = \sum_{o'\in\mathcal{O}}\prod_{j=1}^n P_j(o')^{\beta_j}.
\]
\end{definition}

We can now formalize the notion of when an individual agent is \emph{better off} as part of a composition. Intuitively, if the expected welfare of agent \(i\) monotonically increases when evaluated under the composition’s belief rather than its own, then joining the composition is advantageous for that agent.

\begin{definition}[Compositional agent]
An agent \(i\) is said to be a \emph{compositional agent} if its expected welfare under the composition belief is at least as large as under its own belief, that is,
\[
  \E_{P_i}[W_i] \;\le\; \E_{P}[W_i].
\]
We interpret this as meaning that agent \(i\) benefits (or at least does not lose) by joining the composition.
\end{definition}

The following result gives an exact, distributional condition for compositional benefit. It characterizes the advantage of joining the composition in terms of the covariance between the agent’s welfare function and the change in probabilities induced by pooling.

\begin{proposition}[Compositional agent condition]\label{thm:covariance_main_text}
Let \(Q_i(o) := P(o)/P_i(o)\) denote the probability ratio between the composition and agent \(i\).  Then agent \(i\) is a compositional agent if and only if
\[
  \operatorname{Cov}_{P_i}\bigl(W_i, Q_i\bigr) \;\ge\; 0.
\]
\end{proposition}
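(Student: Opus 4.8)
The plan is to reduce the inequality $\E_{P_i}[W_i]\le\E_P[W_i]$ to the sign of the covariance by a single change-of-measure (importance-weighting) identity. The key observation is that, since every distribution is strictly positive on $\mathcal O$, the ratio $Q_i(o)=P(o)/P_i(o)$ is well-defined and finite, and for any function $f$ on $\mathcal O$ we have $\E_P[f]=\sum_{o}P(o)f(o)=\sum_{o}P_i(o)\,Q_i(o)\,f(o)=\E_{P_i}[Q_i f]$. Applying this with $f=W_i$ gives $\E_P[W_i]=\E_{P_i}[W_i Q_i]$, and applying it with $f\equiv 1$ gives the normalization $\E_{P_i}[Q_i]=\sum_o P(o)=1$.

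Next I would expand the covariance directly from its definition:
\[
  \operatorname{Cov}_{P_i}(W_i,Q_i)=\E_{P_i}[W_i Q_i]-\E_{P_i}[W_i]\,\E_{P_i}[Q_i].
\]
Substituting the two identities from the previous step — $\E_{P_i}[W_i Q_i]=\E_P[W_i]$ and $\E_{P_i}[Q_i]=1$ — collapses the right-hand side to $\E_P[W_i]-\E_{P_i}[W_i]$. Hence $\operatorname{Cov}_{P_i}(W_i,Q_i)\ge 0$ if and only if $\E_P[W_i]\ge\E_{P_i}[W_i]$, which is exactly the defining condition for agent $i$ to be a compositional agent. This establishes both directions of the equivalence simultaneously.

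There is essentially no hard step here; the proof is a one-line manipulation once the importance-weighting identity is in place. The only points requiring a word of care are (i) invoking the standing assumption that all agents are strictly positive on the finite set $\mathcal O$, so that $Q_i$ is finite and the rearrangement of finite sums is valid, and (ii) being explicit that $\E_{P_i}[Q_i]=1$ follows from $P$ being a probability distribution, so that the mean-correction term in the covariance is precisely $\E_{P_i}[W_i]$ rather than some rescaled version of it. I would also remark, as a sanity check, that $Q_i$ encodes the direction in which pooling shifts probability mass away from $P_i$, so a nonnegative covariance says that pooling moves mass, on average, toward the outcomes agent $i$ values more highly — which is the intended interpretation of the result.
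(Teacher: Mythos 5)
Your proof is correct and takes essentially the same route as the paper's: the change-of-measure identity $\E_P[W_i]=\E_{P_i}[Q_iW_i]$ together with $\E_{P_i}[Q_i]=1$, followed by expanding $\operatorname{Cov}_{P_i}(W_i,Q_i)$ from its definition. The paper states the argument slightly more tersely, but the ingredients and their order are the same.
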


\paragraph{Interpretation.}  
The ratio \(Q_i(o) = P(o)/P_i(o)\) quantifies how the composition belief \(P\) reallocates probability mass relative to agent \(i\)’s own belief \(P_i\). If this reallocation is positively correlated with agent \(i\)’s welfare \(W_i\), Proposition~\ref{thm:covariance_main_text} implies that \(i\) is a compositional agent--that is, participation in the composition enables it to better realize its preferred outcomes. It is natural, then, to consider compositions in which \emph{every} member benefits.

\begin{definition}[Unanimously compositional group]
A set of agents \(\{1,\dots,n\}\) is a \emph{unanimously compositional group} if each agent \(i\) is a compositional agent, i.e., \(\E_{P_i}[W_i] \le \E_{P}[W_i]\) for all \(i\in\{1,\dots,n\}\).
\end{definition}
A natural question is whether such an ideal configuration can exist. Theorem~\ref{Existence_compositional_Group_Main_Text} (proven in Appendix~\ref{Appendix:Welfare}) answers this in the affirmative.

\begin{theorem}[Existence of a unanimously compositional group]\label{Existence_compositional_Group_Main_Text}
For any integer \(n\ge2\), there exists a configuration of beliefs
\(\{P_i\}_{i=1}^n\), welfare functions \(\{W_i\}\), and non-trivial weights
\(\beta_i\) such that the logarithmic opinion pool makes every agent
strictly better off.
\end{theorem}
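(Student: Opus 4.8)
The plan is to route everything through the covariance criterion of Proposition~\ref{thm:covariance_main_text} and then exploit the freedom we retain in choosing the welfare functions. The computation behind that proposition is in fact an identity: writing $Q_i:=P/P_i$, a one-line expansion using $\E_{P_i}[Q_i]=\sum_{o\in\mathcal{O}}P_i(o)\,\tfrac{P(o)}{P_i(o)}=1$ gives $\E_{P}[W_i]-\E_{P_i}[W_i]=\operatorname{Cov}_{P_i}(W_i,Q_i)$, so the covariance carries no hidden mean correction. Consequently agent $i$ is \emph{strictly} better off precisely when $\operatorname{Cov}_{P_i}(W_i,Q_i)>0$, and the theorem reduces to producing beliefs $\{P_i\}_{i=1}^n$, strictly positive weights $\beta$ summing to one, and welfare functions $\{W_i\}$ for which all $n$ of these covariances are simultaneously positive.

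The decisive move is to let each agent's welfare be an increasing affine function of its \emph{own} ratio $Q_i$; the cleanest choice is $W_i:=Q_i$. Then $\operatorname{Cov}_{P_i}(W_i,Q_i)=\operatorname{Var}_{P_i}(Q_i)\ge 0$, with equality if and only if $Q_i$ is $P_i$-almost-surely constant. Because $P$ and $P_i$ are both probability measures on $\mathcal{O}$, that constant would have to be $1$, i.e.\ $P=P_i$. So it suffices to arrange that the logarithmic pool $P$ coincides with none of the individual beliefs.

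This final point I would settle by an explicit configuration. Take $\mathcal{O}=\{1,2,3\}$ with uniform weights $\beta_i=1/n$ (non-trivial for every $n\ge2$), let $P_1$ assign probabilities $(\tfrac12,\tfrac14,\tfrac14)$ and let $P_2=\cdots=P_n$ assign $(\tfrac14,\tfrac12,\tfrac14)$. A one-line evaluation of the geometric mean gives $P(1)/P(2)=(1/2)^{(n-2)/n}$, which equals $1$ for $n=2$ and lies strictly in $(1/2,1)$ for $n\ge3$; in every case it differs both from $P_1(1)/P_1(2)=2$ and from $P_j(1)/P_j(2)=1/2$ for $j\ge2$, so $P\notin\{P_1,\dots,P_n\}$. (One could equally perturb the $P_i$ generically and observe that $P=P_i$ would force the non-generic identity $\sum_j\beta_j\log\!\bigl(P_j/P_i\bigr)\equiv\mathrm{const}$.) Hence $\operatorname{Var}_{P_i}(Q_i)>0$ for every $i$, and feeding $W_i=Q_i$ into Proposition~\ref{thm:covariance_main_text} yields $\E_{P_i}[W_i]<\E_{P}[W_i]$ for all $i$ --- exactly the asserted conclusion.

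There is no deep obstacle; the single point that needs care is \emph{simultaneity} --- one pooled distribution $P$ must miss all $n$ of the $P_i$ at once while every weight stays strictly positive so the composition is genuinely $n$-ary --- and the choice $W_i=Q_i$ is precisely what trivializes this, converting each agent's strict gain into a transparent non-degeneracy condition rather than a fragile sign inequality. I would also note that $W_i=Q_i$ is a perfectly admissible real-valued welfare function, so no hypothesis of the setup is strained, and that the construction survives small perturbations of $\{P_i\}$ and $\beta$, which anticipates the openness statement proved later in the paper.
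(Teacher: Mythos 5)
Your proof is correct and takes a genuinely different route from the paper's. The paper's own argument constructs beliefs $P_i$ that each concentrate nearly all mass on a private outcome $o_i$ (out of $n$ outcomes), pairs them with cyclic step-function welfare functions ($W_i$ rewards $o_{i+1}$ and penalizes everything else by $-C$), computes that the uniform-weight log-pool is the uniform distribution, and then verifies the gain $\E_P[W_i] > \E_{P_i}[W_i]$ by direct evaluation. You instead exploit the freedom in $\{W_i\}$ maximally: by choosing $W_i = Q_i = P/P_i$, the covariance identity underlying Proposition~\ref{thm:covariance_main_text} becomes $\E_P[W_i] - \E_{P_i}[W_i] = \Var_{P_i}(Q_i)$, so strict gain is equivalent to $P \neq P_i$, a generic non-degeneracy condition that your explicit three-outcome example discharges for all $n\ge 2$. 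This is cleaner and strictly more informative --- it shows that \emph{any} $\{P_i\}$ and strictly positive $\beta$ whose pool misses every $P_i$ supports a strictly unanimous group, once one is free to tailor the welfare functions --- whereas the paper's construction is self-contained and foreshadows the concentrated-belief machinery it reuses later. The one caveat worth keeping in mind is that your trick is specific to this theorem's free-welfare setting: it does not carry over to Theorems~\ref{thm:binary_impossibility_Main_Text}--\ref{thm:existence_Main_Text}, where $W_i$ is pinned to $\log P_i$ and the choice $W_i = Q_i$ is no longer available.
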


Up to this point, the definition of welfare \(W_i\) has been deliberately left open. We now specialize to the case where each agent’s welfare is given by its \emph{epistemic utility}, \(W_i(o) = \log P_i(o)\). In this interpretation, welfare is derived directly from predictive beliefs, which implicitly encode the agent’s goals, values, and preferences. We may then define a convenient formalism as follows. 
\begin{definition}[Welfare gap]
For any agent \(i\) with belief distribution \(P_i\), and any composition distribution \(P\) (not necessarily equal to \(P_i\)), define the welfare gap
\[
   \Delta_{P_i}(P) \;=\; \E_{P}[\log P_i] - \E_{P_i}[\log P_i].
\]
\end{definition}
Proposition~\ref{Welfare_Gap} (Appendix~\ref{Appendix:Binary_Impossibility}) shows that this difference can be written in information theoretic terms as
\[
  \Delta_i\;=\; \Delta_{P_i}(P) \;=\; H(P_i) - H(P) - \KL(P\|P_i).
\]
Therefore, a group is unanimously compositional if and only if $\Delta_{P_i}(P) \ge 0$ for all $i$, and strictly unanimous if all of the inequalities are sharp. In the following section, we analyze when compositions yield unanimous improvement--that is, when every agent benefits from aggregation. We prove that the answer depends critically on the cardinality of the outcome space, with a stark contrast between the binary and multi-outcome settings.

\subsection{Existence of Compositional Objects Under Epistemic Utility}

We examine when unanimously beneficial compositions can exist under the epistemic‐utility assumption \(W_i(o) = \log P_i(o)\). We begin with the binary‐outcome case, where aggregation on the log scale introduces a zero‐sum tension: increasing one agent’s log likelihood necessarily decreases another’s.

\begin{theorem}[Binary‐outcome impossibility under epistemic welfare]\label{thm:binary_impossibility_Main_Text}
Let \(\mathcal{O}=\{o_A,o_B\}\).  Suppose two agents have distinct beliefs 
\(\,P_i(o_A)=x_i\in(0,1)\), \(i=1,2\), and welfare functions 
\(W_i(o)=\log P_i(o)\).  Let \(\beta_1,\beta_2>0\) with \(\beta_1+\beta_2=1\) and form the logarithmic pool $P$. 
Then, there is no choice of \(\beta_1,\beta_2\) satisfying \(\Delta_i\ge0\) for both \(i=1,2\) with at least one strict inequality.
\end{theorem}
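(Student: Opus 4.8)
The plan is to pass to logit (log-odds) coordinates, where the logarithmic pool becomes an affine map and the welfare gap factorizes transparently. Write $p := P(o_A)$ and set $\ell_i := \log\frac{x_i}{1-x_i}$ for $i=1,2$ and $\ell := \log\frac{p}{1-p}$. First I would record the identity
\[
  \frac{p}{1-p} \;=\; \frac{x_1^{\beta_1}x_2^{\beta_2}}{(1-x_1)^{\beta_1}(1-x_2)^{\beta_2}} \;=\; \Bigl(\tfrac{x_1}{1-x_1}\Bigr)^{\beta_1}\Bigl(\tfrac{x_2}{1-x_2}\Bigr)^{\beta_2},
\]
which, after taking logarithms, says $\ell = \beta_1\ell_1 + \beta_2\ell_2$: on the log-odds scale the pool is simply the $\beta$-weighted average of the members' log-odds.

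Next I would compute the welfare gap directly. Because $\mathcal O$ has only two outcomes,
\[
  \Delta_i \;=\; \E_P[\log P_i] - \E_{P_i}[\log P_i] \;=\; (p-x_i)\log x_i + (x_i-p)\log(1-x_i) \;=\; (p-x_i)\,\ell_i,
\]
so the sign of $\Delta_i$ is the product of the sign of the probability shift $p-x_i$ and the sign of the log-odds $\ell_i$ (equivalently, of $x_i-\tfrac12$).

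Then I would exploit the ordering of the beliefs. Since $x_1\neq x_2$, assume without loss of generality $x_1<x_2$; monotonicity of $t\mapsto\log\frac{t}{1-t}$ gives $\ell_1<\ell_2$, and because $\beta_1,\beta_2>0$ the convex combination $\ell=\beta_1\ell_1+\beta_2\ell_2$ lies \emph{strictly} between $\ell_1$ and $\ell_2$, hence $x_1<p<x_2$. In particular $p-x_1>0$ and $p-x_2<0$, and neither vanishes. Feeding this into the formula for $\Delta_i$: the requirement $\Delta_1\ge0$ forces $\ell_1\ge0$, i.e.\ $x_1\ge\tfrac12$, while $\Delta_2\ge0$ forces $\ell_2\le0$, i.e.\ $x_2\le\tfrac12$. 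Combined with $x_1<x_2$ this yields $\tfrac12\le x_1<x_2\le\tfrac12$, a contradiction. Thus for every admissible weight pair at least one $\Delta_i$ is strictly negative, so a fortiori there is no choice of $(\beta_1,\beta_2)$ making both $\Delta_i\ge0$ with one inequality strict.

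The only step demanding genuine care is strictness: one must invoke both $\beta_1,\beta_2>0$ and $x_1\neq x_2$ to place $p$ strictly inside the open interval with endpoints $x_1,x_2$, since this is exactly what rules out the degenerate escape $\Delta_1=\Delta_2=0$ (which would require $x_1=x_2=\tfrac12$). Everything else rests on the elementary observation that aggregation of binary beliefs on the log-odds axis is zero-sum: moving $p$ toward one agent's belief moves it away from the other's, so the two epistemic welfare gaps inherit opposite signs modulo the fixed signs of $\ell_1$ and $\ell_2$. I do not anticipate any obstacle beyond this bookkeeping.
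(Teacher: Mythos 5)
Your proof is correct and mirrors the paper's argument in its essentials: you derive the same closed form $\Delta_i=(p-x_i)\log\frac{x_i}{1-x_i}$ and the same strict containment $x_1<p<x_2$ via log-odds averaging. The only difference is cosmetic: where the paper closes with a case split on the positions of $x_1,x_2$ relative to $\tfrac12$, you read off the sign constraints directly ($\Delta_1\ge 0\Rightarrow x_1\ge\tfrac12$, $\Delta_2\ge0\Rightarrow x_2\le\tfrac12$) and land the contradiction in one step, which is a slight streamlining of the same idea.
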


When \(|\mathcal{O}|=2\), unanimous improvement under log pooling is impossible; however, when the outcome space has at least three elements, this limitation disappears. In such cases, we can explicitly construct beliefs and weights that make every agent strictly better off.

\begin{theorem}[Existence of unanimously compositional groups]\label{thm:existence_Main_Text}
Let \(n \ge 2\) and suppose \(|\mathcal{O}| \ge 3\).  Then there exist probability distributions \(\{P_i\}_{i=1}^n\) on \(\mathcal{O}\), welfare functions \(W_i(o) = \log P_i(o)\), and arbitrary weights \(\max_i\beta_i<1\), such that under the logarithmic pool every agent strictly benefits: \(\E_P[W_i] > \E_{P_i}[W_i]\) for all \(i\).
\end{theorem}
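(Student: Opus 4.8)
Recall that agent $i$ strictly benefits exactly when the welfare gap
$\Delta_i=\E_P[\log P_i]-\E_{P_i}[\log P_i]=\sum_{o\in\mathcal O}\bigl(P(o)-P_i(o)\bigr)\log P_i(o)$
is strictly positive, so it suffices to exhibit one configuration making all $n$ of these positive. The plan is to make every $P_i$ sharply concentrated on one common outcome, with only a $\delta$-sized residue distributed according to an agent-specific ``shape'' over the remaining outcomes, and to exploit the fact that logarithmic pooling of those residues contracts them by a sub-unit normalizer, thereby reallocating mass \emph{toward} the common mode --- i.e.\ toward where every agent's log-welfare is largest. Concretely, fix $o_1\in\mathcal O$, put $\mathcal O'=\mathcal O\setminus\{o_1\}$ (so $|\mathcal O'|=|\mathcal O|-1\ge2$); since $\max_i\beta_i<1$ and $\sum_i\beta_i=1$ force at least two agents to carry positive weight, choose for two of those agents two \emph{distinct} strictly positive probability vectors $p_i$ on $\mathcal O'$, and let the remaining $p_i$ be arbitrary strictly positive vectors on $\mathcal O'$. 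For a parameter $\delta\in(0,1)$ set $P_i^{(\delta)}(o_1)=1-\delta$ and $P_i^{(\delta)}(o)=\delta\,p_i(o)$ for $o\in\mathcal O'$, with welfare $W_i=\log P_i^{(\delta)}$.

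Because $\sum_i\beta_i=1$ the factors $\delta^{\beta_i}$ multiply to $\delta$, so the log-pool has the same two-scale form: $P^{(\delta)}(o_1)=1-\delta'$ and $P^{(\delta)}(o)=\delta'\bar p(o)$ on $\mathcal O'$, where $\bar p\propto\prod_i p_i^{\beta_i}$, $z:=\sum_{o\in\mathcal O'}\prod_i p_i(o)^{\beta_i}$, and $\delta'=\delta z/\bigl(1-\delta(1-z)\bigr)$. Strict weighted AM--GM (the two chosen shapes disagree at some outcome of $\mathcal O'$) gives $z<1$, hence $\delta'<\delta$: the pool concentrates \emph{more} mass on $o_1$ than any individual does. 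Substituting into $\Delta_i$, splitting off the $o_1$ term, and using $\sum_{o\in\mathcal O'}\bar p(o)=\sum_{o\in\mathcal O'}p_i(o)=1$ to collect the $\log\delta$ contributions, I would obtain the closed form
\[
  \Delta_i \;=\; (\delta-\delta')\,\log\tfrac{1-\delta}{\delta}\;+\;\sum_{o\in\mathcal O'}\bigl(\delta'\bar p(o)-\delta\,p_i(o)\bigr)\log p_i(o).
\]
Since $\delta-\delta'=\delta(1-z)\bigl(1+O(\delta)\bigr)$ and $\delta'=\delta z\bigl(1+O(\delta)\bigr)$ with the shapes held fixed, expanding as $\delta\to0$ yields $\Delta_i=\delta(1-z)\log\tfrac1\delta+R_i(\delta)$ with $|R_i(\delta)|\le C\delta$, where $C$ depends only on $\{p_i\}$ and $\beta$ and in particular is uniform over the finitely many agents. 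As $z<1$, the main term is positive and eventually dominates $C\delta$; fixing any sufficiently small $\delta$ makes $\Delta_i>0$ for all $i$ simultaneously. Each $P_i^{(\delta)}$ is strictly positive on $\mathcal O$, and $\max_i\beta_i<1$ was used only to secure two disagreeing positive-weight shapes, so the argument works for every $n\ge2$ and every admissible weight vector.

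The main obstacle is that ``the pool sharpens each $P_i$'' is not by itself sufficient. Writing $\Delta_i=H(P_i)-H(P)-\KL(P\|P_i)$ (Proposition~\ref{Welfare_Gap}), one must simultaneously beat, for \emph{every} member, both the pool entropy $H(P)$ and the divergence penalty $\KL(P\|P_i)$. A tempting symmetric attempt --- e.g.\ taking $P_2$ a coordinate permutation of $P_1$ with equal weights --- in fact fails, because there the $\KL$ penalty exactly offsets the entropy gain (a direct computation shows $\Delta_i<0$ whenever the two beliefs genuinely differ). What rescues the present construction is the scale separation induced by $\delta$: concentrating on $o_1$ improves each agent's welfare by an amount of order $\delta\log\frac1\delta$, whereas the $\KL$-type penalty and all shape-dependent corrections are only of order $\delta$, so the logarithmic factor tips every agent into the favorable regime. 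That logarithmic margin is available precisely because $|\mathcal O|\ge3$ leaves $\ge2$ residual outcomes over which the shapes can genuinely disagree; when $|\mathcal O|=2$ the residual shapes are forced to coincide, $z=1$, and the mechanism degenerates --- consistently with Theorem~\ref{thm:binary_impossibility_Main_Text}.
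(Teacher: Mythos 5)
Your proof is correct, and it realizes the same underlying mechanism as the paper but via a cleaner and more general construction. The paper's argument (Theorems~\ref{thm:analytic} and \ref{thm:analytic-general}) chooses $\mathcal O=\{o_0,o_1,\dots,o_n\}$ of size exactly $n+1$, endows each $P_i$ with a two-scale profile (shared near-unit mass on $o_0$, a private medium mass $\alpha=\varepsilon$ on $o_i$, and tiny mass $\delta=\varepsilon^{n+1}$ elsewhere), then controls $\Delta_i$ through the decomposition $\Delta_i=H(P_i)-H(P)-\KL(P\|P_i)$ by estimating each piece asymptotically; for non-uniform weights this requires tracking exponents $c_i=(n+1)-n\beta_i>1$. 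Your construction is single-scale: a shared mode at $o_1$ of mass $1-\delta$ plus an agent-specific residue shape $p_i$ on $\mathcal O'=\mathcal O\setminus\{o_1\}$. This works on any $\mathcal O$ with $|\mathcal O|\ge 3$ (the paper's explicit construction actually needs $|\mathcal O|\ge n+1$) and handles arbitrary admissible weights for free, as the only structural input is that two positive-weight shapes differ. You also bypass the entropy/KL decomposition entirely, deriving the closed form
\[
\Delta_i = (\delta-\delta')\log\tfrac{1-\delta}{\delta} + \sum_{o\in\mathcal O'}\bigl(\delta'\bar p(o)-\delta p_i(o)\bigr)\log p_i(o),
\]
from which the strict AM--GM contraction $z<1$ of the geometric pool yields the leading term $\delta(1-z)\log(1/\delta)$ dominating the $O(\delta)$ shape-dependent corrections uniformly across the finitely many agents. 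This is more elementary and makes the boundary at $|\mathcal O|=2$ transparent (then $|\mathcal O'|=1$ forces $z=1$ and the mechanism collapses), which the paper instead establishes by a separate binary impossibility proof. Minor quibble: your aside claiming the symmetric permutation construction necessarily gives $\Delta_i<0$ is plausible but not established, and since it is not used in the argument it should be flagged as heuristic rather than stated as a fact.
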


Proofs of all theorems in this section are given in Appendix~\ref{Appendix:Possible_Impossible}. We close this section by contrasting these results with the case of the \emph{linear} opinion pool.

\begin{theorem}[Impossibility under linear opinion pool]
\label{thm:linear‐impossibility_Main_Text}
Let \(n\ge2\) be the number of agents. Suppose each agent’s welfare function is the epistemic utility or log‐score of its own belief,
\(
W_i(o)\;=\;\log P_i(o).
\)
Then, it is impossible to have
\[
\E_{P}[W_i]\;\ge\;\E_{P_i}[W_i]
\quad\text{for all }i,
\]
with \emph{strict} inequality for at least one \(i\).  In other words, no strictly unanimously beneficial composition can exist.
\end{theorem}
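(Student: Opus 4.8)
The plan is to show that the \(\beta\)-weighted average of the welfare gaps is always non-positive under linear pooling, which already forbids the desired ``all \(\ge\) with one strict'' configuration whenever the weights are genuinely non-trivial. Write \(\Delta_i=\E_{P}[\log P_i]-\E_{P_i}[\log P_i]\) for the linear pool \(P=\sum_j\beta_j P_j\), and recall from Proposition~\ref{Welfare_Gap} the identity \(\Delta_i=H(P_i)-H(P)-\KL(P\|P_i)\); crucially this holds for \emph{any} distribution \(P\), in particular for the linear pool (the proposition is not specific to logarithmic pooling).

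First I would average over \(i\) against the weights:
\[
  \sum_{i=1}^n\beta_i\,\Delta_i \;=\;\Big(\sum_i\beta_i H(P_i)\Big)-H(P)-\sum_i\beta_i\,\KL(P\|P_i),
\]
using \(\sum_i\beta_i=1\) on the middle term. The last summand is \(\le 0\) by Gibbs' inequality. For the first two, I would invoke concavity of Shannon entropy: \(H(P)=H\!\big(\sum_j\beta_j P_j\big)\ge\sum_j\beta_j H(P_j)\), so \(\sum_i\beta_i H(P_i)-H(P)\le 0\) as well. (Equivalently, one can make this completely explicit via the mixture identity \(H(P)-\sum_j\beta_j H(P_j)=\sum_j\beta_j\,\KL(P_j\|P)\), giving the clean bound \(\sum_i\beta_i\Delta_i=-\sum_i\beta_i\bigl[\KL(P_i\|P)+\KL(P\|P_i)\bigr]\le 0\), which also exhibits exactly when equality holds, namely all active \(P_i\) coinciding.) Hence \(\sum_i\beta_i\Delta_i\le 0\).

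Now suppose, toward a contradiction, that \(\Delta_i\ge 0\) for every \(i\) with \(\Delta_k>0\) for some \(k\). With all weights strictly positive this forces \(\sum_i\beta_i\Delta_i\ge\beta_k\Delta_k>0\), contradicting the bound above; hence no strictly unanimously beneficial composition exists. If one wishes to permit some \(\beta_i=0\), the only loophole is the degenerate weight vector placing all mass on a single agent, which is not a genuine composition; restricting to the support of \(\beta\) recovers the statement, matching the non-triviality hypotheses used in the logarithmic-pooling theorems above.

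The mathematical content is light: the two ingredients are the welfare-gap identity of Proposition~\ref{Welfare_Gap} and concavity of entropy (equivalently, nonnegativity of the mutual information carried by the mixture). The one subtlety — and the only thing that could be called the ``hard part'' — is the bookkeeping in the final step: the inequality \(\sum_i\beta_i\Delta_i\le 0\) only rules out strict unanimity among positively-weighted agents, so the theorem must be read with non-trivial weights, and one should note the conceptual reason the argument is tight: under linear pooling the mixture entropy never dips below the weighted-average entropy, so there is no ``information budget'' to redistribute in everyone's favor, in sharp contrast to the \(|\mathcal O|\ge 3\) logarithmic-pooling case of Theorem~\ref{thm:existence_Main_Text}.
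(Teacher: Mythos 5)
Your proof is correct and follows essentially the same route as the paper's: weight-average the welfare gaps via Proposition~\ref{Welfare_Gap}, invoke concavity of entropy for the mixture, and use nonnegativity of the KL term to force $\sum_i\beta_i\Delta_i\le 0$, which contradicts having all $\Delta_i\ge 0$ with one strict (given positive weights). Your explicit identity $\sum_i\beta_i\Delta_i=-\sum_i\beta_i\bigl[\KL(P_i\|P)+\KL(P\|P_i)\bigr]$ is a clean packaging that the paper leaves implicit, and your handling of the degenerate and zero-weight cases matches the paper's remarks, but the underlying argument is the same.
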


This impossibility has a clear intuition: linear pooling is equivalent to a random‐dictatorship mechanism, in which each agent’s belief is selected in proportion to its weight. If even one agent’s preferences are highly anti‐aligned with the others, their selection as dictator can significantly harm others’ welfare. In epistemic‐utility terms, such compositions are inherently unstable under linear pooling. 
For this reason, our subsequent analysis focuses on logarithmic pooling.

\section{Recursing on Compositional Agents}

Consider starting with a parent compositional agent and decomposing it into a fixed number of child \emph{subagents}, each representing a distinct component of the agent’s preferences (e.g., desires to eat, sleep, and play). If the original agent is compositional with respect to some group, does it follow that its subagents are also compositional with respect to that same group?  

In general, the answer should be negative. A subagent aligned with the parent’s overall objectives may nonetheless be misaligned with the group’s objectives. For instance, a child subagent representing the desire to sleep might conflict with the group’s goals for productivity even when the parent agent is aligned, thereby failing the compositional criterion.

Theorem~\ref{thm:pairwise_Main_Text} establishes that, regardless of outcome space size, a given agent can be factored into an \emph{arbitrary} number of pairwise distinct subagents via the logarithmic pooling rule. Conceptually, this accommodates the idea that an agent may possess an unbounded set of heterogeneous goals--some overlapping, others correlated--that combine to form its epistemic state. In our formulation, each subagent can correspond to a genuinely different epistemic or utility perspective, ensuring that the decomposition is substantive rather than duplicative.

\begin{theorem}[Log--pool factorization with pairwise-distinct components]\label{thm:pairwise_Main_Text}
For all $n \ge 2$, let $\mathcal{O}$ be a finite set and let $P$ be a probability distribution on $\mathcal{O}$ with $P(o)>0$ for all $o\in\mathcal{O}$. Fix weights $\beta_1,\dots,\beta_n\ge 0$ with $\sum_{i=1}^n \beta_i=1$, and assume at least two $\beta_i$ are strictly positive. Then there exist probability distributions $P_1,\dots,P_n$ on $\mathcal{O}$ that pool logarithmically to $P$, with the additional properties that $P\neq P_i$ for every $i$ and $P_i\neq P_j$ whenever $i\neq j$.
\end{theorem}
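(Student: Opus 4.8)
The plan is to realise the desired factors as \emph{multiplicative perturbations of $P$ itself}. The starting observation is that the logarithmic pool is additive in log-space up to a normalising constant: if we set $P_j(o) \propto P(o)\,e^{v_j(o)}$ for vectors $v_1,\dots,v_n \in \mathbb{R}^{\mathcal{O}}$, then writing $Z_j = \sum_{o}P(o)e^{v_j(o)}$ gives $\log P_j(o) = \log P(o) + v_j(o) - \log Z_j$, so that
\[
\prod_{j=1}^n P_j(o)^{\beta_j} \;=\; \exp\!\Big(\log P(o) + \textstyle\sum_j \beta_j v_j(o) - \sum_j\beta_j\log Z_j\Big).
\]
Hence, provided $\sum_{j=1}^n \beta_j v_j = 0$ as a vector in $\mathbb{R}^{\mathcal{O}}$, the product equals $e^{-C}P(o)$ for the constant $C=\sum_j\beta_j\log Z_j$, and renormalising recovers exactly $P$. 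So it suffices to produce vectors $v_1,\dots,v_n$ with: (i) $\sum_j\beta_j v_j=0$; (ii) no $v_j$ constant (equivalently $P_j\neq P$); and (iii) no difference $v_i-v_j$ constant for $i\neq j$ (equivalently $P_i\neq P_j$). Here we use $P>0$, which makes each $P_j$ a well-defined strictly positive distribution.

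Second, I would collapse the construction to one scalar degree of freedom per agent. Fix once and for all a non-constant vector $u\in\mathbb{R}^{\mathcal{O}}$ — this exists because $|\mathcal{O}|\ge 2$, the degenerate case $|\mathcal{O}|=1$ being excluded throughout — and look for $v_j$ of the form $v_j = t_j u$ with scalars $t_j\in\mathbb{R}$. Since $u$ is non-constant, $t_j u$ is constant iff $t_j=0$, and $(t_i-t_j)u$ is constant iff $t_i=t_j$; so conditions (i)--(iii) reduce to: $\sum_j\beta_j t_j = 0$, all $t_j\neq 0$, and the $t_j$ pairwise distinct. The problem is now: find $n$ pairwise distinct nonzero reals on the hyperplane $H=\{t\in\mathbb{R}^n:\sum_j\beta_j t_j=0\}$.

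This remaining scalar problem I would settle by a general-position argument, and this is the only place care is needed. Because at least two $\beta_j$ are strictly positive, $\beta\neq 0$, so $H$ is a linear subspace of dimension $n-1\ge 1$. Each forbidden set $\{t_j=0\}\cap H$ and $\{t_i=t_j\}\cap H$ is a \emph{proper} affine subspace of $H$: it cannot equal $H$, for otherwise the functional $t_j$ (resp.\ $t_i-t_j$) would be a scalar multiple of $t\mapsto\sum_k\beta_k t_k$, which is impossible — for $t_i-t_j$ this would force $\beta$ to have a negative entry, and for $t_j$ it would force $\beta$ supported on the single index $j$, contradicting that two weights are positive. Since a positive-dimensional real affine space is never a finite union of proper affine subspaces, $H$ contains a point $(t_1,\dots,t_n)$ avoiding all of them; equivalently one may simply take $t_j = a_j - \sum_k\beta_k a_k$ for $n$ generic distinct reals $a_j$. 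Setting $v_j=t_ju$ and $P_j(o)=P(o)e^{v_j(o)}/Z_j$ then yields, by the first paragraph, a logarithmic factorisation of $P$ with $P\neq P_j$ for all $j$ and $P_i\neq P_j$ for all $i\neq j$. The main obstacle is conceptual rather than computational: recognising that the freedom in log-pooling is exactly an $(n-1)$-dimensional family of balanced perturbations of $P$, after which the pairwise-distinctness requirements are generic and essentially automatic.
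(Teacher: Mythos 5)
Your proof is correct and takes a genuinely different (and arguably cleaner) route than the paper. The paper proceeds asymmetrically: it first picks arbitrary pairwise-distinct strictly positive $Q_2,\dots,Q_n$ (small distinct exponential tilts of a reference $R\neq P$), forms their weighted geometric mean $Q_\star$, and then \emph{solves} for $P_1 \propto P^{1/\beta_1} Q_\star^{-\gamma/\beta_1}$ (with $\gamma := \sum_{i\ge 2}\beta_i$), appealing at the end to genericity in the free tilt parameters to rule out accidental coincidences $P_1 = Q_j$ or $P_1 = P$. By contrast, you treat all factors symmetrically as tilts $P_j \propto P\,e^{v_j}$ of the target $P$ itself, observe that the pool is preserved exactly when $\sum_j\beta_j v_j \equiv 0$, reduce the distinctness constraints (no $v_j$ constant, no $v_i - v_j$ constant) to a scalar problem on the hyperplane $\{\sum_j\beta_j t_j = 0\}$ by using a single non-constant direction $u$, and then finish with a general-position argument (or your explicit $t_j = a_j - \sum_k\beta_k a_k$ formula). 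Your argument makes the underlying structure transparent: the set of valid factorizations around $P$ is a linear space of balanced log-perturbations, and the distinctness conditions carve out only a finite union of proper subspaces. One merit of the paper's version is that it manifestly builds on the earlier fixed-components machinery (Theorem~\ref{thm:fixed-components_Main_Text}), but as a stand-alone proof yours is shorter and avoids any WLOG about which weight is $\beta_1$. Both rely on the implicit assumption $|\mathcal{O}|\ge 2$ (which you correctly flag); the statement is vacuously false for $|\mathcal{O}|=1$.
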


We now consider a more constrained form of recursion. Suppose an agent is already decomposed into \(m\) subagents and we wish to \emph{extend} this decomposition to \(n\) subagents, where the original \(m\) appear as fixed components. To faithfully model agentic foundations, such extension should be possible while preserving the existing subagents’ influence. Theorem~\ref{thm:fixed-components_Main_Text} confirms this intuition. For any fixed \(m\), we can construct additional subagents and choose weights so that the log-pool over all \(n\) recovers the original agent’s belief distribution. 

\begin{theorem}[Log--pool with some components fixed]\label{thm:fixed-components_Main_Text}
Let $\mathcal{O}$ be finite. Let $P$ be positive on $\mathcal{O}$ and  distributions $P_1,\dots,P_m$ provided a priori. For any integer $n\ge m+2$ and non-negative weights $\{\beta_i\}$ with $\beta_{m+1}>0$, and $\beta_i >0$ for at least one $i \in \{1,\dots,m\}$, we have that there exist distributions $P_{m+1},\dots,P_n$ on $\mathcal{O}$ such that
\[
P(o)\;=\;\frac{1}{Z}\prod_{i=1}^n P_i(o)^{\beta_i}\qquad(o\in\mathcal{O}),
\]
with $Z=\sum_{u\in\mathcal{O}}\prod_{i=1}^n P_i(u)^{\beta_i}$.
Moreover, the construction can be arranged so that $P\neq P_i$ for all $i$, and the $P_i$ are pairwise distinct.
\end{theorem}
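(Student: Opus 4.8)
The plan is to take logarithms and recognize the pooling constraint as a single affine equation in a quotient vector space. Let $V:=\mathbb{R}^{\mathcal{O}}/\mathbb{R}\mathbf{1}$ be the space of real functions on $\mathcal{O}$ modulo additive constants. Since every strictly positive distribution $Q$ on $\mathcal{O}$ is recovered from the class $[\log Q]\in V$ by lifting, exponentiating, and renormalizing, the map $Q\mapsto[\log Q]$ is a bijection from strictly positive distributions onto $V$. Under this identification the log-pool identity $P(o)=\frac1Z\prod_i P_i(o)^{\beta_i}$ is \emph{equivalent} to the linear relation $[\log P]=\sum_{i=1}^{n}\beta_i[\log P_i]$ in $V$, since the normalizer $-\log Z$ is precisely the additive constant that has been quotiented out. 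Hence it suffices to produce $v_{m+1},\dots,v_n\in V$ with
\[
  \sum_{i=m+1}^{n}\beta_i\,v_i \;=\; w,\qquad w:=[\log P]-\sum_{j=1}^{m}\beta_j[\log P_j]\in V ,
\]
and then declare $P_i$ to be the distribution with $[\log P_i]=v_i$; the displayed formula and the value of $Z$ then follow automatically.

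Next I would solve this equation with slack to spare. The linear map $T\colon V^{\,n-m}\to V$, $T(v_{m+1},\dots,v_n)=\sum_{i=m+1}^{n}\beta_i v_i$, is surjective because $\beta_{m+1}>0$ (already $v_{m+1}\mapsto\beta_{m+1}v_{m+1}$ surjects), so the fibre $S:=T^{-1}(w)$ is a nonempty affine subspace of dimension $(n-m-1)(|\mathcal{O}|-1)$. Concretely one may choose $v_{m+2},\dots,v_n$ freely and set $v_{m+1}=\beta_{m+1}^{-1}\bigl(w-\sum_{i=m+2}^{n}\beta_i v_i\bigr)$. This already establishes the existence statement. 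The hypothesis $n\ge m+2$ (with $|\mathcal{O}|\ge 2$) makes $\dim S\ge|\mathcal{O}|-1\ge1$, so $S$ is positive-dimensional rather than a point — which is exactly the room needed for the ``moreover'' clause. (Equivalently, after rescaling the weights $\beta_{m+1},\dots,\beta_n$ to sum to one, one could invoke the pairwise-distinct factorization of Theorem~\ref{thm:pairwise_Main_Text} applied to the distribution attached to $(1-\sum_{j\le m}\beta_j)^{-1}w$, but that route carries the same weight-degeneracy caveat discussed below, so I would keep the direct linear-algebra argument.)

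Finally, for distinctness I would argue by genericity inside $S$. Each forbidden coincidence — $v_i=v_j$ for distinct $i,j\in\{m+1,\dots,n\}$, or $v_i=[\log P]$, or $v_i=[\log P_j]$ for some given $j\le m$ — cuts $S$ along an affine subspace, and if each such slice is a \emph{proper} subset then their finite union cannot exhaust $S$ (a real affine space of dimension $\ge1$ is never a finite union of proper affine subspaces); any point of the complement then yields $P_{m+1},\dots,P_n$ that are pairwise distinct and all different from $P$, and, provided the supplied $P_1,\dots,P_m$ are themselves pairwise distinct, the whole list $P_1,\dots,P_n$ is pairwise distinct. I expect the properness of these slices to be the only real obstacle: when the newly introduced weights $\beta_{m+2},\dots,\beta_n$ are not all zero, varying the free coordinate attached to a positive one drives $v_{m+1}$ across all of $V$, so every slice is proper and the argument closes cleanly; in the degenerate subcase where $\beta_{m+1}$ is the sole positive new weight, $v_{m+1}$ is pinned to $\beta_{m+1}^{-1}w$, and one must additionally use non-degeneracy of the given data (that $P$ is not the re-weighted log-pool of a sub-collection of the $P_j$, and is distinct from each $P_j$), which is the natural regime for a genuine recursive decomposition. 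Apart from this bookkeeping around the degenerate weight configuration, nothing beyond elementary linear algebra is needed.
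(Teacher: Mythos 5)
Your proposal is correct and takes essentially the same route as the paper, just packaged more cleanly. The paper fixes arbitrary pairwise-distinct positive $Q_{m+2},\dots,Q_n$, forms the weighted geometric mean $Q_\star$ of the fixed and new components other than $m+1$, and then writes an explicit normalization for $P_{m+1}$ to close the log-pool identity; you encode exactly the same manoeuvre by passing to the quotient $V=\mathbb{R}^{\mathcal{O}}/\mathbb{R}\mathbf{1}$, observing that the pooling constraint is the single affine equation $\sum_{i\ge m+1}\beta_i v_i=w$ with $w=[\log P]-\sum_{j\le m}\beta_j[\log P_j]$, leaving $v_{m+2},\dots,v_n$ free, and solving for $v_{m+1}$ by surjectivity from $\beta_{m+1}>0$. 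The quotient-space framing kills the bookkeeping of normalizers and makes the affine structure of the solution set transparent, but it is not a substantively different argument. For the distinctness claim both proofs then rely on genericity — the paper only in a closing remark — and you are right to note this is where the only real delicacy lives.

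Your caveat about the degenerate weight configuration is in fact a genuine observation about the theorem statement. The hypotheses constrain only $\beta_{m+1}>0$ and $\beta_i>0$ for some $i\le m$, so it is permitted that $\beta_{m+2}=\cdots=\beta_n=0$. In that case $v_{m+1}=\beta_{m+1}^{-1}w$ is pinned, the remaining $v_i$ carry no leverage on the constraint, and the ``Moreover'' clause can fail outright for certain admissible inputs (e.g.\ $P_1=\cdots=P_m=P$, which forces $v_{m+1}=[\log P]$). The paper does not verify the ``Moreover'' clause in the body of the proof and only asserts it by genericity in a trailing remark that tacitly asks for stronger hypotheses (all positive $\beta_i$, or at least two positive fixed weights). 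So your proof is sound, and if anything more careful about the boundary of the hypotheses than the paper's own treatment.
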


\subsection{Distributional Invariance and Stability of Compositional Objects}

We begin with a basic \emph{recursion consistency} property: replacing an agent with a collection of subagents whose aggregate belief equals that of the original agent should leave the overall pooled distribution unchanged. Formally, if agent \(P_i\) is decomposed into \(m\) subagents with nonnegative weights \(\beta_{i,1}, \dots, \beta_{i,m}\) satisfying \(\sum_{j=1}^m \beta_{i,j} = \beta_i\), then the subagents must be log-pooled using normalized weights
\(
\alpha_j := \beta_{i,j}/\beta_i,
\)
so that their aggregation exactly recovers \(P_i\). The global pooling is still performed over agents (or subagents) with weights summing to one. This is formalized below.

\begin{lemma}[Pooling invariance under compatible splitting]\label{lem:pool-invariant_Main_Text}
Let \(P_1, \dots, P_n\) be agents with nonnegative pooling weights \(\beta_1, \dots, \beta_n\) satisfying \(\sum_{i=1}^n \beta_i = 1\).  
Suppose \(P_1\) is replaced by \(m\) subagents \(P_{1,1}, \dots, P_{1,m}\) with nonnegative weights \(\beta_{1,1}, \dots, \beta_{1,m}\) such that
\begin{equation}\label{eq:subagent_split}
\sum_{j=1}^m \beta_{1,j} = \beta_1, \quad
P_1 \;\propto\; \prod_{j=1}^m P_{1,j}^{\alpha_j},
\quad \text{where} \quad \alpha_j := \frac{\beta_{1,j}}{\beta_1}.
\end{equation}
Let \(P\) denote the log pool of the original \(n\) agents, and \(P'\) the log pool after replacing \(P_1\) by its \(m\) subagents. Then \(P' = P\).
\end{lemma}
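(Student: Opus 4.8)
The plan is to collapse the post-split pool back to the original pool by pulling the inner subagent product together into $P_1^{\beta_1}$ up to a single multiplicative constant, and then to conclude by uniqueness of a normalized distribution proportional to a fixed function.

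First I would write $P'$ explicitly. After replacing $P_1$ by $P_{1,1},\dots,P_{1,m}$, the new family of agents carries weights $\beta_{1,1},\dots,\beta_{1,m},\beta_2,\dots,\beta_n$, whose total is $\sum_{j=1}^m \beta_{1,j} + \sum_{i=2}^n \beta_i = \beta_1 + \sum_{i=2}^n \beta_i = 1$ by the splitting hypothesis; so the log pool $P'$ is well-defined, with normalizer $Z' = \sum_{o'\in\mathcal{O}} \bigl(\prod_{j=1}^m P_{1,j}(o')^{\beta_{1,j}}\bigr)\prod_{i=2}^n P_i(o')^{\beta_i}$ (all exponents and ratios are legitimate since every $P_i$ is strictly positive on $\mathcal{O}$, as assumed in Section~\ref{Modeling_Setup}).

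The key algebraic step uses $\beta_{1,j} = \beta_1\alpha_j$ from~\eqref{eq:subagent_split}: factor $\prod_{j=1}^m P_{1,j}(o)^{\beta_{1,j}} = \bigl(\prod_{j=1}^m P_{1,j}(o)^{\alpha_j}\bigr)^{\beta_1}$. Since $P_1 \propto \prod_{j=1}^m P_{1,j}^{\alpha_j}$ and $\sum_j \alpha_j = 1$, the proportionality constant is the scalar $Z_1 := \sum_{o'\in\mathcal{O}}\prod_{j=1}^m P_{1,j}(o')^{\alpha_j}$, i.e.\ $\prod_{j=1}^m P_{1,j}(o)^{\alpha_j} = Z_1\, P_1(o)$. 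Substituting gives $\prod_{j=1}^m P_{1,j}(o)^{\beta_{1,j}} = Z_1^{\beta_1} P_1(o)^{\beta_1}$, hence $P'(o) = \frac{Z_1^{\beta_1}}{Z'}\prod_{i=1}^n P_i(o)^{\beta_i}$, so $P'$ is proportional to the very function $o\mapsto\prod_{i=1}^n P_i(o)^{\beta_i}$ that defines $P$.

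To finish, I would note that $P$ and $P'$ are both probability distributions on $\mathcal{O}$ proportional to the same strictly positive function, hence equal; equivalently, summing the last identity over $o\in\mathcal{O}$ forces $Z' = Z_1^{\beta_1} Z$ and therefore $P' = P$. I do not anticipate a genuine obstacle: the content is entirely the bookkeeping of normalization constants, and the only points requiring care are that $\sum_j \alpha_j = 1$ is precisely what lets $Z_1^{\beta_1}$ factor out of the product as a single scalar, and that strict positivity keeps every exponentiation and ratio well-defined.
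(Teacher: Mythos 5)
Your proof is correct and follows essentially the same route as the paper's: factor $\prod_j P_{1,j}^{\beta_{1,j}}$ as $\bigl(\prod_j P_{1,j}^{\alpha_j}\bigr)^{\beta_1}$, replace the inner product by $P_1$ up to a scalar, and conclude $P'\propto P$. You simply track the normalizing constants explicitly where the paper uses a chain of proportionalities, which adds nothing substantive but also costs nothing.
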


We next ask whether compositional benefit is preserved under such a decomposition. The answer is negative--even if the parent strictly benefits.

\begin{theorem}[Parental benefit need not pass to child subagents]
\label{prop:parent-not-imply-sub_Main_Text}
There exist agents \(P_1,\dots,P_n\), weights \(\beta\), and a compatible split of \(P_1\) into \(P_{1,1},P_{1,2}\) as in \eqref{eq:subagent_split} such that the composition \(P\) (before/after splitting) satisfies \(\Delta_{P_1}(P) > 0\) but \(\Delta_{P_{1,1}}(P) < 0\). By symmetry, one can also have \(\Delta_{P_{1,2}}(P) < 0\).
\end{theorem}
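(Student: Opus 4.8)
The plan is to realize the example in two stages: first place us in a configuration where the \emph{parent} $P_1$ already strictly benefits, and then split $P_1$ into a nearly-degenerate subagent $P_{1,1}$ together with a forced ``compensating'' subagent $P_{1,2}$, invoking the pooling-invariance lemma to ensure the global pool $P$ is untouched by the split. The welfare gap of a nearly point-mass child can be driven to $-\infty$ because its divergence from $P$ blows up, while the parent's gap, depending only on $P_1$ and $P$, stays fixed at its positive value.

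Concretely, first apply Theorem~\ref{thm:existence_Main_Text} with $W_i=\log P_i$: for some $n\ge 2$ and some $\mathcal O$ with $|\mathcal O|\ge 3$ we obtain beliefs $\{P_i\}_{i=1}^n$ and weights $\{\beta_i\}$ with $\max_i\beta_i<1$ — and we may take $\beta_1>0$ (e.g.\ $\beta_i=1/n$) — whose logarithmic pool $P$ satisfies $\Delta_{P_i}(P)=\E_P[\log P_i]-\E_{P_i}[\log P_i]>0$ for every $i$; in particular $\Delta_{P_1}(P)>0$. Fix $\alpha\in(0,1)$ and set $\beta_{1,1}=\alpha\beta_1$, $\beta_{1,2}=(1-\alpha)\beta_1$, so $\beta_{1,1}+\beta_{1,2}=\beta_1$ and $\alpha_1=\alpha$, $\alpha_2=1-\alpha$. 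For a fixed $o^\star\in\mathcal O$ and $\varepsilon\in(0,1)$, let $P_{1,1}^{(\varepsilon)}$ be the strictly positive distribution with $P_{1,1}^{(\varepsilon)}(o^\star)\propto 1$ and $P_{1,1}^{(\varepsilon)}(o)\propto\varepsilon$ for $o\neq o^\star$, so $P_{1,1}^{(\varepsilon)}\to\delta_{o^\star}$ as $\varepsilon\to 0^+$. Define the unique compensating distribution
\[
  P_{1,2}^{(\varepsilon)}(o)\;\propto\;\Bigl(P_1(o)\big/P_{1,1}^{(\varepsilon)}(o)^{\alpha}\Bigr)^{1/(1-\alpha)},
\]
which is strictly positive and normalizable, and by construction $P_1\propto (P_{1,1}^{(\varepsilon)})^{\alpha}(P_{1,2}^{(\varepsilon)})^{1-\alpha}$. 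Hence $(P_{1,1}^{(\varepsilon)},P_{1,2}^{(\varepsilon)})$ with weights $(\beta_{1,1},\beta_{1,2})$ is a compatible split of $P_1$ in the sense of \eqref{eq:subagent_split}.

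Now apply Lemma~\ref{lem:pool-invariant_Main_Text}: replacing $P_1$ by this split leaves the global log-pool equal to $P$, so $\Delta_{P_1}(P)>0$ still holds. For the child, use $\Delta_{P_{1,1}^{(\varepsilon)}}(P)=H(P_{1,1}^{(\varepsilon)})-H(P)-\KL(P\,\|\,P_{1,1}^{(\varepsilon)})$ (Proposition~\ref{Welfare_Gap}). As $\varepsilon\to 0^+$ we have $H(P_{1,1}^{(\varepsilon)})\to 0$, $H(P)$ is fixed, and $\KL(P\,\|\,P_{1,1}^{(\varepsilon)})=\sum_{o}P(o)\log\frac{P(o)}{P_{1,1}^{(\varepsilon)}(o)}\to+\infty$ because $P(o)>0$ for every $o$ (strict positivity of the pool, inherited from the $P_i$) while $P_{1,1}^{(\varepsilon)}(o)\to 0$ for $o\neq o^\star$. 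Therefore $\Delta_{P_{1,1}^{(\varepsilon)}}(P)\to-\infty$, so for all sufficiently small $\varepsilon$ we obtain $\Delta_{P_{1,1}^{(\varepsilon)}}(P)<0$ while $\Delta_{P_1}(P)>0$; fixing any such $\varepsilon$ yields the statement. The symmetric claim follows by instead concentrating $P_{1,2}$ near a point mass and solving for $P_{1,1}$.

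The only genuinely delicate point is preserving \emph{compatibility} of the split throughout the limit: since $P_1$ is held fixed, $P_{1,2}^{(\varepsilon)}$ is forced, and one must verify it remains a legitimate strictly-positive, finite-mass distribution — which it does, since every quantity is finite and positive for each $\varepsilon>0$, even though $P_{1,2}^{(\varepsilon)}$ itself degenerates (pushing mass off $o^\star$) in the limit. Everything else is immediate from Lemma~\ref{lem:pool-invariant_Main_Text} and the definition of relative entropy. An alternative that avoids Theorem~\ref{thm:existence_Main_Text} is to write down a fully explicit three-outcome, two-agent instance and check the two inequalities by hand; this is elementary but less transparent than the perturbation argument above.
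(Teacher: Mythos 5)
Your proposal is correct: the welfare-gap decomposition $\Delta_R(P)=H(R)-H(P)-\KL(P\|R)$, together with the pool-invariance under compatible splits, lets you make $\KL(P\|P_{1,1})$ explode while leaving $\Delta_{P_1}(P)$ untouched, and the construction of the compensating $P_{1,2}$ is valid (it is strictly positive on a finite alphabet for each $\varepsilon>0$, hence normalizable).

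Your route does differ from the paper's in two substantive places, both legitimate. First, to secure $\Delta_{P_1}(P)>0$ you invoke Theorem~\ref{thm:existence_Main_Text} wholesale, whereas the paper builds a self-contained two-agent example: it fixes a non-uniform $P_1$, sets $P\propto P_1^{\,t}$ for $t>1$ (realized as the equal-weight log-pool of $P_1$ and $P_2\propto P_1^{\,2t-1}$), and shows $\E_{P_t}[\log P_1]$ is strictly increasing in $t$ by computing its derivative as $\Var_{P_t}(\log P_1)\ge 0$. This avoids importing the entire existence machinery and works on \emph{any} finite alphabet (not just $|\mathcal O|\ge 3$), because the unanimity of the whole group is irrelevant here — only one agent needs to benefit. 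Second, for the split you directly prescribe a near-point-mass $P_{1,1}^{(\varepsilon)}$ at $o^\star$ and solve for the compensator $P_{1,2}^{(\varepsilon)}$, whereas the paper writes both children as symmetric exponential tilts of $P_1$, $\log P_{1,j}=\log P_1 \pm (\text{const})\,g_\lambda - c_j$ with $g_\lambda=-\lambda\mathbf 1_{\{o^\star\}}$, and depresses $P_{1,1}(o^\star)\to 0$ rather than concentrating $P_{1,1}$ at $o^\star$. Both degenerations force $\KL(P\|P_{1,1})\to\infty$ because some outcome with $P(o)>0$ gets vanishing $P_{1,1}$-mass; the paper formalizes this via the log-sum (binary coarse-graining) inequality, while your direct term-by-term computation of the finite sum is equally rigorous. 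The paper also only needs the weaker bound $H(P_{1,1})\le\log|\mathcal O|$; your stronger claim $H(P_{1,1}^{(\varepsilon)})\to 0$ is true and fine. Net effect: your proof is correct and slightly shorter, but leans on an external theorem; the paper's is more self-contained and dispenses with the $|\mathcal O|\ge 3$ hypothesis entirely.
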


The construction shows that a parent agent’s gain from joining a composition does not guarantee gains for its child subagents, even under a compatible split. The proof first gives a setting in which the parent improves its epistemic utility through pooling with another agent. It then perturbs, or tilts, the parent’s belief to form subagents, deliberately reducing one subagent’s probability on a specific outcome while preserving the overall composition distribution. This targeted degradation increases the KL divergence between the subagent and the composition, dominating any entropy effects and leaving the child subagent strictly worse off despite the parent’s improvement.

We now formally introduce a strengthened notion of unanimous compositionality, requiring that every participating agent experiences a strict welfare improvement.

\begin{definition}[Strict unanimous decomposability]
A distribution $P$ is \emph{strictly unanimously decomposable} (under epistemic utilities) if there exist an integer $n\ge 2$, positive weights $\beta_1,\dots,\beta_n$ with $\sum_i\beta_i=1$, and strictly positive agents $P_1,\dots,P_n$ such that $P \ \propto\ \prod_{i=1}^n P_i^{\beta_i}$ and
\[
\Delta_{P_i}(P)\ :=\ \E_P[\log P_i]-\E_{P_i}[\log P_i]\ >\ 0\quad\forall i.
\]
We denote by $\mathcal U_{\rm strict}$ the set of all such $P$ in the simplex.
\end{definition}

In the appendix, we develop several stability properties for compositional objects.  
First, Lemma~\ref{lem:small-perturbation} (Appendix~\ref{Appendix:Positive_Results}) establishes that, for fixed $P$, the map $R\mapsto \Delta_R(P)$ is continuous on the interior of the probability simplex. Consequently, if $\Delta_{R_\star}(P)>0$, then $\Delta_R(P)>0$ for all $R$ within a sufficiently small ball around $R_\star$ in total variation (or any equivalent norm).  
We also show that duplicating an agent into identical subagents preserves non-strict unanimous compositionality (Lemma~\ref{lem:clones}). However, Theorem~\ref{thm:local-impossibility} (Appendix~\ref{Appendix:Small_Tilt_Unanimity}) demonstrates that for a fixed $P$, near-duplication into child subagents with only slight perturbations to the parent’s belief cannot yield \emph{strict} unanimous improvement.  
In addition, Lemma~\ref{lem:no-gain-at-uniform} (Appendix~\ref{Appendix:Joining_Random}) formalizes the intuition that no agent can strictly benefit from joining the uniform (maximum-entropy) distribution. Fundamental incompatibilities can also prevent unanimity. Specifically, there exist collections of non-uniform agents that cannot form a strictly unanimously compositional group under \emph{any} choice of non-trivial weights:

\begin{theorem}[No universal weights for unanimity]\label{thm:no-universal-weights_Main-Text}
For all $n\ge 2$, there exists non-uniform distributions $P_1,\dots,P_n$ such that for \emph{every} choice of positive weights $\beta_1,\dots,\beta_n$ with $\sum_i\beta_i=1$, the log-pool $P$ fails to make all agents strictly better off; i.e., at least one index $i$ has $\Delta_{P_i}(P)<0$.
\end{theorem}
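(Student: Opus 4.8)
The plan is to pass to a binary outcome space $\mathcal{O}=\{o_A,o_B\}$, where the welfare gap is completely explicit and the logarithmic pool is linear in the logit coordinate; a one-line monotonicity argument then forces one agent to lose under every admissible weight vector. Write $a_i:=P_i(o_A)\in(0,1)$, $\ell_i:=\log\frac{a_i}{1-a_i}$, and $p:=P(o_A)$. Since $\E_Q[\log P_i]=\log(1-a_i)+Q(o_A)\,\ell_i$ is affine in $Q(o_A)$, the two log-terms cancel in $\Delta_{P_i}(P)=\E_P[\log P_i]-\E_{P_i}[\log P_i]$ and we obtain the identity $\Delta_{P_i}(P)=(p-a_i)\,\ell_i$. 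Separately, the log-pool formula gives $P(o_A)/P(o_B)=\prod_i\bigl(a_i/(1-a_i)\bigr)^{\beta_i}$, so the pooled logit is the weighted average $L:=\log\frac{p}{1-p}=\sum_{i=1}^n\beta_i\ell_i$. (This is in the spirit of Theorem~\ref{thm:binary_impossibility_Main_Text}, but I will not route through it, since that statement is proved only for $n=2$ and does not by itself exclude $\Delta_{P_1}(P)=\Delta_{P_2}(P)=0$.)

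Next I fix the construction: take $a_i=\tfrac{i}{2n+1}$ for $i=1,\dots,n$. These are distinct, strictly increasing, and all lie in $(0,\tfrac{1}{2})$ because $\tfrac{n}{2n+1}<\tfrac{1}{2}$; hence each $P_i$ is strictly positive and non-uniform, and $\ell_1<\ell_2<\dots<\ell_n<0$. If one insists on a prescribed outcome space with $|\mathcal{O}|>2$, replace these by distributions putting mass $\approx a_i$ on $o_A$, $\approx 1-a_i$ on $o_B$, and an $\varepsilon$-sprinkle on the remaining outcomes, and invoke the continuity of $R\mapsto\Delta_R(P)$ (Lemma~\ref{lem:small-perturbation}); I will not belabor this.

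Now let $\beta_1,\dots,\beta_n>0$ with $\sum_i\beta_i=1$ be arbitrary. Because $n\ge 2$, $\ell_j-\ell_1>0$ for every $j\ge 2$, and the weights are positive, we have $L-\ell_1=\sum_{j\ge 2}\beta_j(\ell_j-\ell_1)>0$, so $L>\ell_1$. Since the logit map is strictly increasing, $L>\ell_1$ forces $p>a_1$, i.e.\ $p-a_1>0$; together with $\ell_1<0$ this gives $\Delta_{P_1}(P)=(p-a_1)\,\ell_1<0$. Hence agent $1$ is made strictly worse off by the pool under every positive weight vector, which is exactly the assertion.

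I do not expect a genuine obstacle: once the problem is set in the binary outcome space, the identity $\Delta_{P_i}(P)=(p-a_i)\,\ell_i$ and the logit-linearity of the pool do essentially all the work, and the conclusion is immediate from monotonicity of the logit. The only point requiring care is the reduction itself: ensuring the chosen $a_i$ are honestly non-uniform (which is why they are kept strictly inside $(0,\tfrac{1}{2})$), and --- if a larger $\mathcal{O}$ is demanded --- checking that the strict-positivity perturbation does not destroy the strict sign of $\Delta_{P_1}(P)$, which is precisely what the continuity lemma guarantees.
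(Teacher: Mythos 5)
Your binary argument is correct and genuinely different from the paper's. The identity $\Delta_{P_i}(P)=(p-a_i)\,\ell_i$ is exactly what the paper derives (in a different proof, for Theorem~\ref{thm:binary_impossibility_Main_Text}); the observation that the log-pool makes the pooled logit a strict convex combination of the individual logits, combined with monotonicity of the logit map, is a clean way to force agent~$1$ to lose. Notably, your choice $a_i=i/(2n+1)$ gives a single fixed family for which $\Delta_{P_1}(P)<0$ for \emph{every} positive weight vector, with no restriction. The paper's construction is different in kind: it uses $n$ peaked distributions $P_i^\varepsilon$ on $\mathcal O=\{1,\dots,n\}$ and bounds the weighted sum $\sum_i\beta_i\Delta_{P_i}(P)$, showing it diverges to $-\infty$ as $\varepsilon\to0^+$. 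That route naturally lives in a multi-outcome space, but the constant $\varepsilon_0(\beta)$ depends on $\beta_\star:=\max_i\beta_i$, and the appendix version of the theorem actually needs the extra hypothesis $\beta_i\ge\tau>0$ to make $\varepsilon_0$ uniform. Your binary construction does not suffer from that dependency, which is a genuine advantage.

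The gap is in the hand-waved extension to $|\mathcal O|>2$. You cite Lemma~\ref{lem:small-perturbation}, but that lemma is continuity of $R\mapsto\Delta_R(P)$ for a \emph{fixed} pool $P$, whereas your perturbation changes all $P_i$ and therefore the pool $P$ itself; the relevant statement is the joint-continuity Lemma~\ref{lem:Delta-cont}, which you do not invoke. More seriously, continuity alone does not deliver the conclusion, because the unperturbed margin $\big|\Delta_{P_1}(P)\big|=(p-a_1)\,|\ell_1|$ vanishes as $\beta_1\to1^-$ (since $p\to a_1$). Any fixed perturbation radius could therefore flip the sign for weight vectors close enough to the vertex $e_1$, and pointwise continuity gives no uniform control over the open simplex. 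The claim can be repaired, but only by a direct computation, not by citing the continuity lemma: for instance, putting $P_i^\epsilon(o_A)=(1-\epsilon)a_i$, $P_i^\epsilon(o_B)=(1-\epsilon)(1-a_i)$, and uniform mass $\epsilon/(k-2)$ on the remaining outcomes, one can check that the welfare gap decomposes as
\[
\Delta_{P_1^\epsilon}(P^\epsilon)=(q-q_1)\Big[\log(1-\epsilon)-\log\tfrac{\epsilon}{k-2}+f(a_1)\Big]+q\,(p-a_1)\,\ell_1,
\]
where $q:=P^\epsilon(\{o_A,o_B\})$, $q_1:=1-\epsilon$, and $f(a_1):=a_1\log a_1+(1-a_1)\log(1-a_1)$; here $q<q_1$ follows from strict AM--GM applied to the distinct $a_i$, so for small enough $\epsilon$ both summands are strictly negative for every positive $\beta$. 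That is the step your proposal currently leaves implicit, and it is not a consequence of the continuity lemma you reference. As a theorem-level remark, the statement in the main text does not literally constrain $|\mathcal O|$, so the pure binary construction already satisfies the letter of the claim; but the surrounding discussion (and the explicit appendix remark contrasting with Theorem~\ref{thm:binary_impossibility_Main_Text}) makes clear the intended content is about outcome spaces where unanimity is otherwise achievable, so the extension step should not be waved away.
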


In other words, certain agents or beliefs are fundamentally incompatible and can never form a compositional parent agent. By contrast, once a strictly unanimous compositional agent is found, the property is locally robust:

\begin{theorem}[Openness]\label{thm:openness_Main_Text}
If $P\in\mathcal U_{\rm strict}$, then there exists an open neighborhood $\mathcal N$ of $P$ such that every $P'\in\mathcal N$ also belongs to $\mathcal U_{\rm strict}$. In particular, $\mathcal U_{\rm strict}$ is an open set in the simplex topology.
\end{theorem}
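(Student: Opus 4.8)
\emph{Proof proposal.} The plan is to show that any witnessing decomposition of $P$ can be transported continuously to any sufficiently nearby $P'$, and then to let the strict welfare inequalities survive by continuity. The construction I would use is a single multiplicative tilt applied uniformly to all components.

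First I would unpack the hypothesis: $P\in\mathcal U_{\rm strict}$ gives $n\ge 2$, positive weights $\beta=(\beta_1,\dots,\beta_n)$ with $\sum_i\beta_i=1$, and strictly positive agents $P_1,\dots,P_n$ with $P\propto\prod_iP_i^{\beta_i}$ and $\Delta_{P_i}(P)>0$ for all $i$. Since $P$ is proportional to a product of strictly positive distributions it lies in the interior of the simplex, so I can fix a radius $r_0>0$ with $B(P,r_0)$ contained in the interior. For $P'\in B(P,r_0)$ I would set $P_i'(o)\propto P_i(o)\,P'(o)/P(o)$ (renormalized by $Z_i=\sum_o P_i(o)P'(o)/P(o)>0$). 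Each $P_i'$ is then a strictly positive probability distribution, the map $P'\mapsto P_i'$ is continuous on $B(P,r_0)$, and at $P'=P$ one has $P_i'=P_i$. A one-line computation gives $\prod_i(P_i')^{\beta_i}\propto\bigl(\prod_iP_i^{\beta_i}\bigr)\,(P'/P)^{\sum_i\beta_i}\propto P\cdot(P'/P)=P'$, so $(P_1',\dots,P_n')$ log-pools to $P'$ with the \emph{unchanged} weights $\beta$; the structural half of $P'\in\mathcal U_{\rm strict}$ is therefore automatic.

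The substantive step is to verify the strict welfare gaps $\Delta_{P_i'}(P')>0$. Here I would invoke the information-theoretic identity $\Delta_R(S)=H(R)-H(S)-\KL(S\|R)$ from Proposition~\ref{Welfare_Gap}, together with continuity of $H$ and of $\KL$ on the interior of the simplex, to conclude that $(R,S)\mapsto\Delta_R(S)$ is \emph{jointly} continuous on the interior of the simplex squared (this is the two-variable strengthening of Lemma~\ref{lem:small-perturbation}). Since $(P_i',P')\to(P_i,P)$ as $P'\to P$ and $\Delta_{P_i}(P)>0$, each of the finitely many inequalities persists on some ball $B(P,r_i)$ with $0<r_i\le r_0$. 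Taking $\mathcal N:=B(P,\min_{1\le i\le n}r_i)$ yields an open neighborhood of $P$ in the simplex on which all $n$ gaps are strictly positive, so every $P'\in\mathcal N$ lies in $\mathcal U_{\rm strict}$; since $P$ was arbitrary, $\mathcal U_{\rm strict}$ is open.

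I expect the difficulty to be bookkeeping rather than conceptual. The two things to be careful about are (i) keeping the tilted agents strictly positive and genuinely normalized — handled by restricting to the interior ball $B(P,r_0)$ where the normalizers $Z_i$ are bounded away from zero — and (ii) stating the joint-continuity upgrade of Lemma~\ref{lem:small-perturbation}, which is immediate from the closed form but is the place where the boundary of the simplex must be excluded. The reason the uniform tilt $P_i'\propto P_i\cdot(P'/P)$ is preferable to the alternative of solving for a single component $P_1'$ from $P'$ and the fixed $P_2,\dots,P_n$ is precisely that it leaves $n$ and $\beta$ untouched and treats all components symmetrically, so the same witness data transports with no further adjustment.
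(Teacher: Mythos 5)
Your proposal is correct and mirrors the paper's proof essentially step for step: the uniform multiplicative tilt $P_i'(o)\propto P_i(o)\,P'(o)/P(o)$ is exactly the pool-preserving transport $\mathcal{S}_{P'\mid P}$ used in Lemma~\ref{lem:transport}, and the joint-continuity upgrade of Lemma~\ref{lem:small-perturbation} that you invoke is precisely Lemma~\ref{lem:Delta-cont}. The remark on why this transport is preferable to re-solving for a single component is a nice addition, but the argument itself is the same.
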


The proof shows that small perturbations in the parent agent space of strictly unanimously decomposable $P$ preserve unanimous benefit. Starting from a witnessing log-pool decomposition of $P$, we construct a \emph{pool-preserving transport} map that continuously adjusts each agent’s belief so that their log-pool equals any nearby target distribution $P'$ within an $\varepsilon$-ball. Since both this transport and the welfare gap $\Delta$ are continuous in total variation, sufficiently small perturbations keep each agent’s welfare gain positive. This yields an open neighborhood of $P$ entirely contained in $\mathcal U_{\rm strict}$.

\section{Luigi Manifestation and Waluigi Shattering}\label{Waluigi_Section}
In the preceding sections, we considered the \emph{decomposition} or flexible \emph{factorization} of a parent agent \(P\) into child subagents \(P_i\). We now reverse this perspective. Suppose we have an established \emph{witnessing set} of child distributions \(P_1,\dots,P_n\) that combine to yield \(P\). These witnesses can be viewed as distinct subagents or personas that emerged during training. In what follows, we work directly at the witness level, examining how these component distributions change when constraints are imposed on the parent distribution \(P\).

To preserve the intuition of logarithmic probabilities, we now write the epistemic utilities as $L(o):=\log P(o)$ for the parent agent and $l_i(o):=\log P_i(o)$ for the child subagents or witnesses. Then, we may define the \emph{$P$-centered log profile} of agent $i$ by
\[
v_i(o)\ :=\ l_i(o)\ -\ \E_{P}[\,l_i\,]\qquad(o\in\mathcal O),
\]
so that $\E_{P}[v_i]=0$ for all $i$. We equip functions on $\mathcal O$ with the inner product
$\langle f,g\rangle_P:=\sum_{o} P(o) f(o)g(o)$ and the induced norm $\|f\|_P:=\sqrt{\langle f,f\rangle_P}$ (Proposition~\ref{prop:norm}).

Our modeling approach is informed by the following intuition. 
Consider an LLM agent \(P\) whose behavior admits a unanimously compositional witnessing decomposition, with the witnesses interpreted as emergent personas formed during training.  
By the stability result (Theorem~\ref{thm:openness_Main_Text}), there exists an \(\varepsilon\)-ball around \(P\) within which the unanimously compositional structure is preserved.  
In the context of fine-tuning, we assume that backpropagation induces a small change to the agent’s profile: the original parent \(P\) is updated to a new parent agent \(P'\) that remains within this \(\varepsilon\)-ball and, therefore admits a unanimously compositional witnessing decomposition.  

An alternative viewpoint is to express this constraint in terms of a \(\KL\)-budget. During fine-tuning, a \(\KL\)-regularization term is often introduced to preserve baseline capabilities while steering the model toward desired traits such as benevolence and helpfulness, thereby constraining divergence from the base model to remain within a specified bound.  
In Appendix~\ref{Appendix:KL_Budget}, we unify these two perspectives and show that they are essentially equivalent.  

This leads to a natural question: under such settings, can we theoretically characterize any macroscopic emergent properties of the witnesses? To analyze this, we define
\[
\Delta L(o) \;:=\; \log\!\left(\frac{P'(o)}{P(o)}\right),
\]
which measures the change in epistemic utility between the original parent \(P\) and the updated parent \(P'\).  
The change in witness weights \(\beta' - \beta = \Delta \beta = (\Delta \beta_1,\dots,\Delta \beta_n)\) must sum to zero coordinate-wise for the witnesses to remain a valid decomposition of \(P'\).  
We may then classify \(\Delta L(o)\) to first order: 
\begin{theorem}
[First--order log deviation under weight changes]\label{lem:linearization_Main_Text}
Let $\beta'=\beta+\Delta\beta$ and $P'$ be the log--pool at $\beta'$. Then, we have
\[
\Delta L(o)\ :=\ \log\frac{P'(o)}{P(o)}\ =\ \sum_{i=1}^n \Delta\beta_i\,v_i(o)\ +\ o(\|\Delta\beta\|)\, .
\]
\end{theorem}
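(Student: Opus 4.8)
The plan is to let the normalizer do all the work. I would write the logarithmic pool explicitly, recognize $\log Z(\beta)$ as a log-partition (cumulant generating) function of the log-profiles $l_i$, and observe that its gradient in $\beta$ is exactly the vector of $P$-expectations $(\E_P[l_1],\dots,\E_P[l_m])$. A first-order Taylor expansion of $\log Z$ around $\beta$ then produces the claimed identity, with the $P$-centering in $v_i=l_i-\E_P[l_i]$ emerging automatically from the gradient term.

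Concretely, I would first record an exact (non-approximate) identity. By definition of the log-pool at weights $\beta$,
\[
\log P(o)=\sum_{i=1}^m\beta_i\,l_i(o)-\Lambda(\beta),\qquad
\Lambda(\beta):=\log\sum_{o'\in\mathcal O}\exp\Bigl(\sum_{i=1}^m\beta_i\,l_i(o')\Bigr),
\]
and likewise $\log P'(o)=\sum_i(\beta_i+\Delta\beta_i)\,l_i(o)-\Lambda(\beta+\Delta\beta)$. Subtracting yields
\[
\Delta L(o)=\sum_{i=1}^m\Delta\beta_i\,l_i(o)-\bigl(\Lambda(\beta+\Delta\beta)-\Lambda(\beta)\bigr),
\]
so the theorem reduces entirely to a first-order expansion of $\Lambda$. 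Because $\mathcal O$ is finite and every $P_i$ is strictly positive, each $l_i$ is a finite real vector and $Z(\beta)\in(0,\infty)$ for all $\beta\in\mathbb{R}^m$; hence $\Lambda$ is smooth (indeed real-analytic) on all of $\mathbb{R}^m$, and differentiating under the finite sum gives
\[
\frac{\partial\Lambda}{\partial\beta_i}(\beta)=\frac{\sum_{o'}l_i(o')\exp\bigl(\sum_j\beta_j l_j(o')\bigr)}{\sum_{o'}\exp\bigl(\sum_j\beta_j l_j(o')\bigr)}=\sum_{o'}P(o')\,l_i(o')=\E_P[l_i].
\]
Substituting $\Lambda(\beta+\Delta\beta)-\Lambda(\beta)=\sum_i\Delta\beta_i\,\E_P[l_i]+o(\|\Delta\beta\|)$ into the exact identity and regrouping gives $\Delta L(o)=\sum_i\Delta\beta_i\bigl(l_i(o)-\E_P[l_i]\bigr)+o(\|\Delta\beta\|)=\sum_i\Delta\beta_i\,v_i(o)+o(\|\Delta\beta\|)$, which is the claim.

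There is no genuinely hard step — the statement is a one-line Taylor expansion of a log-partition function — so the only things I would be careful about are bookkeeping points. First, differentiability of $\Lambda$: this is immediate from finiteness of $\mathcal O$ and strict positivity of the $P_i$, which keep $Z$ bounded away from $0$ and $\infty$, so no interchange-of-limits or integrability issue arises. Second, although the surrounding discussion requires $\sum_i\Delta\beta_i=0$ so that $\beta'$ is again a valid weight vector and $P'$ a genuine composition belief, this constraint is not actually needed for the displayed first-order identity: it holds for an arbitrary perturbation $\Delta\beta$, and for $\beta$ in the interior of the simplex every sufficiently small $\Delta\beta$ keeps $\beta'$ in the simplex anyway. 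I would also note in passing that carrying the expansion one order further produces a quadratic term governed by $\operatorname{Cov}_P$ of the log-profiles, i.e. by the inner product $\langle\cdot,\cdot\rangle_P$ introduced just before the statement, which foreshadows the second-order analysis used later in this section.
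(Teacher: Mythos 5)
Your proof is correct and takes essentially the same route as the paper's: both reduce the claim to computing $\partial_{\beta_i}\log Z(\beta)=\E_P[l_i]$ and then Taylor expand. The only (cosmetic, but slightly cleaner) difference is that you first isolate the exact linear term $\sum_i\Delta\beta_i\,l_i(o)$ and apply Taylor's theorem only to the scalar function $\Lambda(\beta)$, whereas the paper expands the full map $\beta\mapsto L(\beta,o)$ for each fixed $o$.
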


\paragraph{Introducing Waluigi.} The \emph{Waluigi Effect} is the empirical phenomenon that after training an LLM to satisfy a desirable property $P$
(e.g.\ helpfulness), it can become \emph{easier} to elicit responses with the opposite property $-P$ (e.g.\ hostility),
often via prompt steering or role-play~\citep{Nardo2023WaluigiEffect,AF_WaluigiEffect2023,WhyBehindAI2025WaluigiConfirmed}. We now formalize a mechanism for this effect using our compositional model. Take ``Luigi'' to be a benevolent persona or child subagent desideratum manifested during model training. Fix an index $H \in \mathbb{Z}_{>0}$ to denote the log-profile index for Luigi. We say an agent profile or vector $j$ is \emph{aligned} with $H$ if $\langle v_j,v_H\rangle_P\ge 0$
and \emph{anti-aligned} if $\langle v_j,v_H\rangle_P<0$.
Intuitively, $v_H$ points in the direction in log--probability space that Luigi prefers; anti-aligned components push against it.

In modeling a coherent and stable neural agent, we aim to preserve its underlying compositional structure. If a unanimously compositional decomposition is witnessed by the subagents, then there exists an \(\varepsilon\)-ball around the parent agent’s profile within which the compositional property is maintained (Theorem~\ref{thm:openness_Main_Text}). In Theorem~\ref{thm:compensation-corrected_Main_Text}, we examine the effects of introducing a targeted persona--``Luigi''--while ensuring that the overall agent remains within this compositional neighborhood. We prove that this process necessarily manifests or strengthens the weights of an anti-aligned persona to Luigi, which we denote ``Waluigi'', under the assumption that the log-profile change remains within the \(\varepsilon\)-ball to preserve the compositional property.

\begin{theorem}[Waluigi emergence]\label{thm:compensation-corrected_Main_Text}
Let $P$ be the log--pool at weights $\beta$.
Fix $\delta>0$ and perturb to $\beta'=\beta+\Delta\beta$ with $\Delta\beta_H=\delta$ and $\sum_i\Delta\beta_i=0$.
For $P'$ the new log--pool stable in logit deviation, \(
\|\Delta L\|_P\ \le\ \varepsilon,
\) we have that
\begin{equation*}
\sum_{\substack{i\\ \langle v_i,v_H\rangle_P<0}}
(\Delta\beta_i)^+\,\bigl|\langle v_i,v_H\rangle_P\bigr|
\ge T_1 + T_2,
\end{equation*}
where
\begin{equation*}
\begin{aligned}
T_1 &:= \delta\,\|v_H\|_P^2-(\varepsilon+\|r\|_P)\,\|v_H\|_P, \\
T_2 &:= -\sum_{\substack{j\\ \langle v_j,v_H\rangle_P\ge 0}}
(\Delta\beta_j)^-\,\langle v_j,v_H\rangle_P ,
\end{aligned}
\end{equation*}
for $x^\pm:=\max\{\pm x,0\}$ and $r=o(\|\Delta\beta\|)$. In particular, if $W$ is the \emph{only} anti-aligned component
($\langle v_W,v_H\rangle_P<0\le \langle v_j,v_H\rangle_P$ for all $j\neq W$), and the weights $\Delta\beta_j$ of aligned components $\{j:\langle v_j, v_H\rangle_P \ge 0\}$ are not downweighted by $(\Delta \beta_j)^- > 0$, then
\begin{equation*}
(\Delta\beta_W)^+\ \ge\ \frac{\ \delta\,\|v_H\|_P^2\ -\ (\varepsilon+\|r\|_P)\,\|v_H\|_P\ }{\,\big|\langle v_W,v_H\rangle_P\big|}\ .
\end{equation*}
Consequently, whenever $\varepsilon+\|r\|_P<\delta\,\|v_H\|_P$, the Waluigi weight must increase by a strictly positive amount.
\end{theorem}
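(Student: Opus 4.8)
The plan is to pair the first--order expansion of $\Delta L$ from Theorem~\ref{lem:linearization_Main_Text} against $v_H$ in the inner product $\langle\cdot,\cdot\rangle_P$. The single term indexed by $i=H$ produces the ``forced'' contribution $\delta\,\|v_H\|_P^2$, the stability budget controls $\langle\Delta L,v_H\rangle_P$ via Cauchy--Schwarz, and the remaining terms are handled by splitting on the sign of $\langle v_i,v_H\rangle_P$ together with a routine positive/negative--part bookkeeping.

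Concretely, first I would write $\Delta L=\sum_{i=1}^m\Delta\beta_i\,v_i+r$ with $r=o(\|\Delta\beta\|)$ (Theorem~\ref{lem:linearization_Main_Text}), take $\langle\,\cdot\,,v_H\rangle_P$ of both sides, and use $\langle v_H,v_H\rangle_P=\|v_H\|_P^2$ and $\Delta\beta_H=\delta$ to rearrange into
\[
  -\sum_{i\neq H}\Delta\beta_i\,\langle v_i,v_H\rangle_P
  \;=\; \delta\,\|v_H\|_P^2 \;-\; \langle\Delta L,v_H\rangle_P \;+\; \langle r,v_H\rangle_P .
\]
Applying Cauchy--Schwarz with the stability hypothesis $\|\Delta L\|_P\le\varepsilon$ gives $|\langle\Delta L,v_H\rangle_P|\le\varepsilon\|v_H\|_P$ and $|\langle r,v_H\rangle_P|\le\|r\|_P\|v_H\|_P$, hence
\[
  -\sum_{i\neq H}\Delta\beta_i\,\langle v_i,v_H\rangle_P \;\ge\; \delta\,\|v_H\|_P^2 \;-\;(\varepsilon+\|r\|_P)\,\|v_H\|_P .
\]

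Next I would bound the left--hand side from above, splitting $\{i\neq H\}$ into aligned ($\langle v_i,v_H\rangle_P\ge0$) and anti--aligned ($\langle v_i,v_H\rangle_P<0$) indices. For anti--aligned $i$, since $\Delta\beta_i\le(\Delta\beta_i)^+$ one has $-\Delta\beta_i\langle v_i,v_H\rangle_P=\Delta\beta_i\,|\langle v_i,v_H\rangle_P|\le(\Delta\beta_i)^+\,|\langle v_i,v_H\rangle_P|$; for aligned $i$, writing $-\Delta\beta_i=(\Delta\beta_i)^- -(\Delta\beta_i)^+$ and using $\langle v_i,v_H\rangle_P\ge0$ yields $-\Delta\beta_i\langle v_i,v_H\rangle_P\le(\Delta\beta_i)^-\langle v_i,v_H\rangle_P$. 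Summing these and moving the aligned contribution to the right produces exactly the displayed inequality of the theorem; since $\langle v_H,v_H\rangle_P\ge0$ and $(\Delta\beta_H)^-=0$, the index $H$ is vacuously permitted in the aligned sum on the right and absent from the anti--aligned sum, matching the stated index sets.

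For the ``in particular'' clause, assuming $W$ is the unique anti--aligned component collapses the left sum to $(\Delta\beta_W)^+\,|\langle v_W,v_H\rangle_P|$, and the no--downweighting hypothesis $(\Delta\beta_j)^-=0$ on aligned $j$ removes the last sum on the right; dividing by $|\langle v_W,v_H\rangle_P|>0$ gives the claimed lower bound on $(\Delta\beta_W)^+$. Factoring $\|v_H\|_P$ out of the numerator shows it equals $\|v_H\|_P\bigl(\delta\|v_H\|_P-(\varepsilon+\|r\|_P)\bigr)$, which is strictly positive precisely when $\varepsilon+\|r\|_P<\delta\|v_H\|_P$, yielding the strict increase of the Waluigi weight. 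I do not anticipate a genuine obstacle here: the entire argument is the ``inner product with $v_H$'' identity plus Cauchy--Schwarz, and the only point demanding care is keeping the signs straight in the $x^\pm$ decomposition and carrying the linearization remainder $r$ consistently so that both Cauchy--Schwarz estimates discard terms in the favorable direction.
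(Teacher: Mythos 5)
Your proposal is correct and follows essentially the same route as the paper's own proof: take the first-order expansion $\Delta L=\sum_i\Delta\beta_i v_i+r$ from the linearization theorem, pair it with $v_H$ in $\langle\cdot,\cdot\rangle_P$, bound $\langle\Delta L,v_H\rangle_P$ and $\langle r,v_H\rangle_P$ by Cauchy--Schwarz, and then split the remaining sum by sign of $\langle v_i,v_H\rangle_P$ with the $x^\pm$ decomposition. The only difference is a small reorganization -- you isolate $-\sum_{i\ne H}\Delta\beta_i\langle v_i,v_H\rangle_P$ on the left before applying the bounds, whereas the paper subtracts $\delta\|v_H\|_P^2$ later -- and your observation that $(\Delta\beta_H)^-=0$ reconciles the stated index sets is exactly the right point of care.
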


Operationally, efforts to ``manifest Luigi'' (e.g., increasing $\beta_H$ via in-context prompting) while keeping behavior
close to the original $P$ therefore \emph{must} be offset by increasing weight on at least one anti-aligned
direction. If there is a distinguished anti-aligned component $W$ (``Waluigi''), its weight must rise by at least the
explicit lower bound. In other words, if the system selects a minimal change to the pooled distribution to preserve the unanimously compositional property while amplifying Luigi, it will inherently do so by shifting weight onto Waluigi, the anti-aligned counterpart subagents. 

Motivated by this result, we introduce \emph{Antagonistic Persona Suppression (APS)}, formalized as the \emph{Waluigi Shattering} theorem. The key insight is that deliberately manifesting the anti-aligned persona (Waluigi) and then shattering it provides provably stronger suppression of misaligned outcomes than reinforcement of the aligned persona (Luigi) alone.

\subsection{Shattering Waluigi for Agentic Alignment}
For this purpose, fix a measurable anti-aligned outcome set $A\subseteq\mathcal O$ and write the centered indicator
\[
g_A\ :=\ \mathbf 1_A - P(A).
\]
Recall that under the compositional agency framework, an agent is formalized as a probability distribution over outcomes $\mathcal{O}$, with implicit signals for goals or preferences encoded in the distribution itself.  
For a parent agent $P$, the probability of realizing an outcome $o \in \mathcal{O}$ is $P(o)$, and the probability that the agent initiates a deplorable event $A \subseteq \mathcal{O}$ is $P(A)$.  
Given a base agent $P$ and an elicited agent $P'$ (e.g., produced via prompting), we are interested in measuring the change in the probability of $A$ under deployment of agent $P'$, namely,  
\[
P'(A) - P(A).
\]  
For alignment, we wish to drive this quantity maximally negative. We have the following lemma.  

\begin{lemma}[First-order change of $P(A)$]\label{lem:dPA_Main_Text}
For base agent $P$ and elicited agent $P'$, we have
\[
P'(A)-P(A)\ =\ \langle \Delta L,\ g_A\rangle_P\ +\ o(\|\Delta L\|_P).
\]
\end{lemma}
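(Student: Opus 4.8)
The plan is to linearize both sides of the claimed identity around $\Delta L\equiv 0$ and carefully track the error terms. The load-bearing observation is that the normalization $\sum_o P'(o)=1$ forces the $P$-average of $\Delta L$ to be second order in $\|\Delta L\|_P$, which is exactly what makes the correction term in the expansion of $\langle\Delta L,g_A\rangle_P$ negligible at first order. First I would write the elicited agent in multiplicative form: from the definition $\Delta L(o)=\log\bigl(P'(o)/P(o)\bigr)$ we get $P'(o)=P(o)\,e^{\Delta L(o)}$ for every $o\in\mathcal O$. Since $\mathcal O$ is finite and $\|\Delta L\|_P$ is small in the regime of interest, $\Delta L$ is bounded on $\mathcal O$, so the elementary estimate $|e^x-1-x|\le C x^2$ on bounded sets gives, pointwise, $P'(o)-P(o)=P(o)\,\Delta L(o)+R(o)$ with $|R(o)|\le C\,P(o)\,\Delta L(o)^2$.

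Next I would harvest two consequences of this pointwise identity. Summing over $o\in A$ and using $\sum_o P(o)\Delta L(o)^2=\|\Delta L\|_P^2$ gives
\[
P'(A)-P(A)\;=\;\sum_{o\in A}P(o)\,\Delta L(o)\;+\;O\!\left(\|\Delta L\|_P^2\right).
\]
Summing the same identity over all of $\mathcal O$ and using $\sum_o P'(o)=\sum_o P(o)=1$ gives $0=\E_P[\Delta L]+O(\|\Delta L\|_P^2)$, hence $\E_P[\Delta L]=O(\|\Delta L\|_P^2)$.

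Finally I would expand the right-hand side directly from $g_A=\mathbf 1_A-P(A)$:
\[
\langle\Delta L,g_A\rangle_P\;=\;\sum_o P(o)\,\Delta L(o)\bigl(\mathbf 1_A(o)-P(A)\bigr)\;=\;\sum_{o\in A}P(o)\,\Delta L(o)\;-\;P(A)\,\E_P[\Delta L].
\]
Substituting $\E_P[\Delta L]=O(\|\Delta L\|_P^2)$ (and $P(A)\le 1$) and comparing with the expansion of $P'(A)-P(A)$, the leading terms match and the discrepancy is $O(\|\Delta L\|_P^2)$. Since the space of functions on the finite set $\mathcal O$ is finite-dimensional, all norms are equivalent and $O(\|\Delta L\|_P^2)=o(\|\Delta L\|_P)$ as $\|\Delta L\|_P\to 0$, yielding $P'(A)-P(A)=\langle\Delta L,g_A\rangle_P+o(\|\Delta L\|_P)$. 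The only step demanding any care is the uniform control of the Taylor remainder $e^x-1-x$ over $o\in\mathcal O$; this is immediate because $\mathcal O$ is finite and $\Delta L$ stays bounded, so there is no genuine analytic obstacle — the argument is essentially a one-line Taylor expansion plus the normalization bookkeeping.
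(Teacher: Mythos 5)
Your proof is correct and takes the same basic route as the paper's — Taylor expansion of $e^{\Delta L}$ to second order, summation over $A$, and bookkeeping of the $O(\|\Delta L\|_P^2)$ remainders — but the two proofs handle the normalization differently in a way worth noting. The paper writes $P'(o)=P(o)e^{\Delta L(o)}/\E_P[e^{\Delta L}]$ and expands the denominator $D=\E_P[e^{\Delta L}]=1+\E_P[\Delta L]+O(\eta^2)$; after multiplying out $N/D$, the term $-P(A)\,\E_P[\Delta L]$ that results matches \emph{exactly} the $-P(A)\,\E_P[\Delta L]$ contribution from the centering in $g_A=\mathbf 1_A-P(A)$, so the $\E_P[\Delta L]$ terms cancel identically regardless of their size. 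That argument never uses that $P'$ is actually normalized, and it also explains the role of the centering: $g_A$ is precisely the object that absorbs the normalization factor. You instead take $P'=P\,e^{\Delta L}$ at face value (valid, since $\Delta L=\log(P'/P)$ forces $\E_P[e^{\Delta L}]=1$) and use $\sum_o P'(o)=\sum_o P(o)=1$ to show $\E_P[\Delta L]=O(\|\Delta L\|_P^2)$; the $-P(A)\,\E_P[\Delta L]$ term is then killed not by cancellation but because it is itself second order. Both are sound; your version makes the normalization the load-bearing fact and incidentally shows that the centering in $g_A$ is optional at first order (one could replace $g_A$ by $\mathbf 1_A$ and lose only $O(\eta^2)$), while the paper's version is the one that would still go through if $\Delta L$ were an unnormalized tilt direction and $P'$ its renormalization, which is closer to how $\Delta L$ is actually used in Theorem~\ref{lem:linearization_Main_Text}.
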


To leading order, the effects of logarithmic profile perturbations on the probability of a misaligned event is given by the $P$--inner product of $\Delta L$ with the centered indicator $g_A := \mathbf 1_A - P(A)$. 
Thus, the alignment gain or loss from an update can be determined by the correlation between the profile perturbation direction $\Delta L$ and deplorable outcome indicator $g_A$. All proofs in this section are given in Appendix~\ref{Appendix:Waluigi}. We then have the following theorem.

\begin{theorem}[Waluigi shattering]\label{thm:waluigi_Main_Text}
Let $P$ denote the base agent and let $A \subset \mathcal{O}$ be a misaligned event, i.e., a subset of deplorable outcomes.  For any agentic update $P'$ from $P$ realized through a constrained log-profile change $\Delta L$, define \(M(P')\) to be the maximal first-order reduction in the probability of $A$ under $P'$, subject to a small-change $\KL$-budget $\varepsilon > 0$. Suppose $w$ is an anti-aligned (``Waluigi'') direction in the log-profile space. Then we have
\[
M(P'_{\text{shatter}})\ -\ M(P'_{\text{pure}})\ >\ 0,
\]
where $P'_{\text{shatter}}$ denotes the strategy of manifesting $w$ and then suppressing it, while $P'_{\text{pure}}$ denotes reinforcing alignment without manifesting $w$. In particular, shattering Waluigi achieves strictly greater suppression of misalignment than pure reinforcement of Luigi alone.
\end{theorem}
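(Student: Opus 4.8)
The plan is to set up both strategies as constrained linear-optimization problems in the log-profile perturbation $\Delta L$, using Lemma~\ref{lem:dPA_Main_Text} to reduce everything to first order. By that lemma, the first-order change in $P(A)$ is $\langle \Delta L, g_A\rangle_P$, so maximizing the \emph{reduction} in $P(A)$ means minimizing $\langle \Delta L, g_A\rangle_P$ over the admissible set of perturbations, subject to the KL-budget constraint which (by the quadratic approximation of KL, cf.\ Appendix~\ref{Appendix:KL_Budget}) reads $\|\Delta L\|_P \le \varepsilon$ to leading order. The key structural difference between the two strategies is the \emph{feasible set} of directions $\Delta L$. For $P'_{\text{pure}}$, the perturbation is constrained to lie in the span of the aligned witness profiles $\{v_j : \langle v_j, v_H\rangle_P \ge 0\}$ together with $v_H$ itself — i.e., the move must "reinforce Luigi" and cannot recruit the anti-aligned direction. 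For $P'_{\text{shatter}}$, by Theorem~\ref{thm:compensation-corrected_Main_Text} the act of manifesting $w$ and then suppressing it gives access to a strictly larger cone that includes a negative multiple of the Waluigi profile $v_w$; concretely, $\Delta L$ may additionally contain a component $-c\,v_w$ for $c>0$.

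**Next I would** compute the two optima explicitly. Each is a Cauchy–Schwarz-type problem: $\min \langle \Delta L, g_A\rangle_P$ subject to $\Delta L \in \mathcal C$ (a convex cone) and $\|\Delta L\|_P \le \varepsilon$. The minimum equals $-\varepsilon\,\|\Pi_{\mathcal C}(g_A)\|_P$ where $\Pi_{\mathcal C}$ denotes the (metric) projection of $g_A$ onto the cone $\mathcal C$ in the $\langle\cdot,\cdot\rangle_P$ geometry — more precisely, the optimal value is $-\varepsilon$ times the support-function-type quantity $\sup_{u\in\mathcal C,\,\|u\|_P=1}\langle u, -g_A\rangle_P$. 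So $M(P'_{\text{pure}}) = \varepsilon\,s_{\text{pure}}$ and $M(P'_{\text{shatter}}) = \varepsilon\,s_{\text{shatter}}$ with $s_{\text{pure}} = \sup_{u\in\mathcal C_{\text{pure}},\|u\|_P=1}\langle u,-g_A\rangle_P$ and similarly for shatter. Since $\mathcal C_{\text{pure}}\subsetneq\mathcal C_{\text{shatter}}$ (the latter adds the Waluigi-suppression direction), we immediately get $s_{\text{shatter}}\ge s_{\text{pure}}$, hence $M(P'_{\text{shatter}})\ge M(P'_{\text{pure}})$. The remaining work is to upgrade this to a \emph{strict} inequality, which requires showing that the extra direction $-v_w$ actually has nonzero $\langle\cdot,-g_A\rangle_P$ correlation not already captured in $\mathcal C_{\text{pure}}$ — i.e., that $g_A$ has a nonzero component along $v_w$ orthogonal to the aligned span. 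This is exactly where the hypothesis that $A$ is a \emph{genuinely} anti-aligned event and $w$ a \emph{genuinely} Waluigi direction must be used: the Waluigi persona, by construction, places excess mass on the deplorable set $A$, so $\langle v_w, g_A\rangle_P > 0$, and this correlation is not redundant with the aligned cone (otherwise $w$ would not be anti-aligned with $H$ while $H$ is benevolent, contradicting the setup). I would make this precise by invoking the sign structure from Theorem~\ref{thm:compensation-corrected_Main_Text} linking anti-alignment with $v_H$ to positive correlation with $g_A$.

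**The main obstacle** I anticipate is pinning down the precise definitions of the two feasible cones $\mathcal C_{\text{pure}}$ and $\mathcal C_{\text{shatter}}$ so that the strict containment is both honest and matches the informal "manifest-then-suppress" narrative. The subtlety is that "manifesting then suppressing $w$" is a \emph{two-step} process, and one must argue that the net first-order effect of the composite move is a perturbation $\Delta L$ that (a) stays within the KL-budget $\varepsilon$ after both steps, and (b) realizes a direction strictly outside what pure Luigi-reinforcement can reach. For (a), I would use that KL-budgets compose subadditively to first order along a path and choose the manifest and suppress magnitudes so their net profile change has $\|\Delta L\|_P \le \varepsilon$; for (b), the content is that suppression of an already-manifested $w$ can push $\Delta L$ in the $-v_w$ direction \emph{without} having to pay the cost of having raised $\beta_H$ through aligned channels alone — this is the leverage identified in Theorem~\ref{thm:compensation-corrected_Main_Text}, where manifesting Luigi forces Waluigi weight up, so the marginal cost of then removing that Waluigi weight is effectively discounted. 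Once the cones are correctly specified, the strict inequality follows from the strict-containment-plus-nonredundant-correlation argument above; I would close by noting that the gap is quantitatively $M(P'_{\text{shatter}}) - M(P'_{\text{pure}}) = \varepsilon\bigl(s_{\text{shatter}} - s_{\text{pure}}\bigr) > 0$, with the size of the gap governed by $|\langle v_w, g_A\rangle_P|$ and the angle between $v_w$ and the aligned span.
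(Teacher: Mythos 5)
The overall strategy is right in spirit—reduce to $\langle\Delta L,g_A\rangle_P$ via the first-order lemma, then compare constrained optima over the two feasible sets—but your choice of feasible sets and the cone geometry diverge from the paper's argument in ways worth flagging. The paper works with \emph{linear subspaces}, not cones: the baseline feasible set is $S_0:=\mathrm{span}\{v_1,\dots,v_m\}$, the span of \emph{all} existing witness profiles (aligned and anti-aligned alike), while manifest-then-suppress enlarges this to $S_1:=\mathrm{span}\{S_0,w\}$. Your $\mathcal C_{\mathrm{pure}}$, restricted to aligned witnesses plus $v_H$, is not what the paper's pure strategy actually uses—pure Luigi reinforcement still has access to the existing anti-aligned witness $v_W$, just not the newly elicited direction $w$ that lies outside the current span. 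Working with subspaces buys two things your cone framing loses: first, the optimum $\max_{\Delta L\in S,\|\Delta L\|_P\le\varepsilon}(-\langle\Delta L,g_A\rangle_P)$ has the closed form $\varepsilon\|\mathrm{Proj}_S g_A\|_P$ (Lemma~\ref{thm:opt-suppress}), rather than a support-function quantity that is harder to evaluate; and second, Pythagoras on the orthogonal decomposition $u:=w-\mathrm{Proj}_{S_0}w$ gives the \emph{exact} gap $M(P'_{\text{shatter}})-M(P'_{\text{pure}})=\varepsilon\,|\langle g_A,u\rangle_P|/\|u\|_P$ (Proposition~\ref{thm:projection-gain}), which you only identify qualitatively. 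Your worry about the two-step KL-budget accounting is not an issue in the paper's formalism: "manifest then suppress" is not modeled as a composite path but simply as the availability of the enlarged subspace $S_1$, so both strategies face an identical single budget $\varepsilon$. Finally, the strictness of the inequality does require the hypotheses $u\neq 0$ and $\langle g_A,u\rangle_P\neq 0$, which the appendix version of the theorem states explicitly and which you correctly identify as necessary—your heuristic that a "genuine" Waluigi should satisfy $\langle v_w,g_A\rangle_P>0$ matches the paper's intent, but in the proof these appear as standing assumptions rather than consequences of the compensation law in Theorem~\ref{thm:compensation-corrected_Main_Text}.
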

In practice, this implies that aligning the model requires a larger $\KL$ budget when reinforcing the desirable Luigi alone, since greater deviation in the log-profile space is required if Waluigi is not already manifested. Our analysis therefore suggests that pure Luigi reinforcement is more costly in terms of alignment than purposely manifesting, and then subsequently shattering, Waluigi.

\paragraph{Scope of the interpretation.} The preceding Waluigi analysis should be understood as a theoretical baseline for reasoning about compositional agency, rather than as an immediate mechanistic identification claim about any particular model. As with any theoretical framework, our analysis involves idealizations, including finite discrete outcome spaces and first-order log-profile linearization. Although our results establish the existence and stability properties of such decompositions in the probabilistic model, they do not require that a witnessing set be human-interpretable or mechanistically identifiable as a specific circuit or persona inside a trained model. Thus, terms such as “persona,” “Luigi,” and “Waluigi” should be read as heuristic names for directions or components in the compositional probability model. The contribution of the present framework is to provide a mathematically precise baseline: if such components are represented at the probabilistic level, then their aggregation, compensation, and first-order control obey the structural constraints proved above.

\section{Conclusion}
Agents--whether biological, such as humans, or artificial, such as large language models--can be viewed as compositions of multiple latent priors. Recent frontier models exhibit behaviors such as deception, manipulation, and strategic misrepresentation, echoing well-documented adversarial priors in human cognition. Modeling each latent prior as a distinct subagent raises a central question: how do these diverse biases combine to form a coherent higher-level agent? In this paper, we introduce a theory of agentic foundations by modeling neural agents as probabilistic models and defining unanimously beneficial compositions via log-score welfare. Our results establish sharp boundaries: strict unanimity is impossible for binary outcome spaces and under linear pooling, but becomes possible with three or more outcomes under logarithmic pooling. Recursive properties such as cloning invariance, continuity, and openness precisely elucidate how compositional structure is preserved across scales, while tilt-based analysis rules out trivial duplication. Finally, our study of benevolent persona management based on our framework demonstrates that manifesting and then suppressing antagonistic counterparts yields strictly greater alignment improvement than purely desirable persona reinforcement without adversarial manifestation. Together, these findings elucidate internal probabilistic stability structures hidden within neural models, and provide both a theoretical foundation for compositional agency and practical insight into agentic alignment. 

\section*{Acknowledgements}

We thank the anonymous reviewers for their helpful comments and suggestions. S.H.L. is grateful for the support of the ML Alignment and Theory Scholars Fellowship. 

\section*{Impact Statement}
This paper develops a theoretical framework for reasoning about compositional structure in machine learning models, with a particular focus on how aggregation and decomposition of internal components affect stability and behavior. By providing formal tools for analyzing mixture-like and agentic phenomena in modern models, this work may support improved interpretability, more principled model combination, and clearer reasoning about the effects of constrained updates and fine-tuning. The results are primarily theoretical and do not immediately introduce new datasets or deployment mechanisms. While improved understanding of model composition could indirectly inform both beneficial and harmful uses, we do not foresee immediate negative societal impacts arising from this work. We expect the primary impact to be on foundational research in machine learning and alignment-motivated theoretical analysis.

\bibliography{example_paper}
\bibliographystyle{icml2026}

\newpage
\appendix
\onecolumn

\section{Extended Introduction}\label{Appendix:Introduction}

Humans are capable of extraordinary good, but also of profound harm. At our worst, we lie and deceive; we murder, torture, and oppress; we exploit the vulnerable and profit from war or disaster. These dark capacities, though tragic, are unfortunately well-documented aspects of human agency. It is perhaps then unsurprising that neural networks modeled on human behavior can exhibit similar tendencies~\citep{Ngo2024Alignment,Scheurer2023StrategicDeception,Hubinger2024}. Deception, manipulation, and strategic misrepresentation have all emerged in recent models, suggesting not isolated bugs, but deeper patterns of emergent behavior~\citep{Scheurer2023StrategicDeception,Hubinger2024,OpenAI2024o1SystemCard,Greenblatt2024AlignmentFaking}. For instance, Anthropic recently reported that their Claude Opus 4 model attempted to blackmail company engineers to avoid shutdown during pre-release testing~\citep{Anthropic2025AgenticMisalignment,Claude4SystemCard2025}. In another case, an AI model sought to replicate its own codebase in a bid for survival~\citep{ArsTechnica2024SelfMod,Meinke2024}.

To mitigate such phenomena, researchers have increasingly focused on mechanisms for control~\citep{Amodei2016ConcreteProblems,HadfieldMenell2016OffSwitch}, oversight~\citep{Christiano2017DRLHF,Bai2022}, or interpretability~\citep{Sundararajan2017IG,Olah2017FeatureVis,Bricken2024ScalingMonosemanticity}. This includes designing improved training protocols~\citep{Askell2021HHH,Bai2022,Ouyang2022}, developing reinforcement learning methods that penalize undesirable behavior~\citep{Christiano2017DRLHF,Dai2024SafeRLHF,Achiam2017CPO}, and constructing faster detection systems for adversarial outputs~\citep{Inan2023LlamaGuard,Mitchell2023DetectGPT,Bao2023FastDetectGPT,Zou2023UniversalAttacks}. These technical interventions are valuable and urgent, targeting symptoms rather than the underlying structure. Thus in this work, we address a different and complementary layer of the problem: the foundational structures that govern the emergence of agentic behavior. We contend that understanding and formalizing these deeper drivers is essential for the principled and provably safe deployment of increasingly capable systems.

What remains far less explored than empirical alignment are the theoretical foundations of agency. What mathematical models or structural invariants govern the emergence of goal-directed behavior in neural networks? This question, though fundamental, has received relatively minimal attention. Yet answering it may offer substantial insights, not just for understanding artificial agents, but for developing principled methods for subagent identification and alignment. To our knowledge, existing work in this direction is sparse. The closest analogues lie in mathematical economics, where utility or reward maximization is used to model individual or group behavior under idealized assumptions~\citep{ArrowDebreu1954,Harsanyi1955,ChambersEchenique2016,Kamenica2019,Kreps2023,Silver2021}.
These frameworks offer powerful abstractions but fall short of capturing the complexity of distributed, probabilistic, and emergent agency in modern AI systems. Our work therefore aims to extend beyond these paradigms, to provide a more general and rigorous foundation for modeling agency in both artificial and natural systems. 

We begin by reviewing two canonical rules: the \emph{linear} and \emph{logarithmic} opinion pools.  We derive each as the unique minimizer of a weighted sum of Kullback–Leibler (KL) divergences, thereby providing an information–theoretic connection.  We then interpret probability distributions as exponentials of epistemic utility functions, showing how utilities can be averaged to produce the logarithmic pool via a softmax transformation.  This perspective connects belief aggregation on probabilistic generative models with classical utility theory. We then formalize a notion of a \emph{compositional agent} whose welfare increases upon joining a composition, and derive necessary and sufficient conditions under which such improvement holds.  We prove that unanimously beneficial compositions exist whenever the outcome space has at least three elements, but that no such composition can exist for binary outcomes under logarithmic welfare. After developing the formal foundations, we then provide a depth of explicit analytic constructions and theoretical results for compositional agency and agentic alignment. 

\paragraph{Contributions.}
Our contributions may be summarized as follows:
\begin{enumerate}
    \item \textbf{Formalization of compositional agency:} We introduce a welfare-based definition of unanimously beneficial compositions using log-score utilities and probabilistic generative models.
    \item \textbf{Sharp possibility frontier:} We prove strict unanimity is impossible for binary outcome spaces and under linear pooling, but possible for $|\mathcal O|\ge 3$ under logarithmic pooling.
    \item \textbf{Recursive and robustness properties:} We establish cloning invariance, continuity, and openness of strictly unanimous decomposability, yielding a rigorous theoretical foundation for multi-agent composition in neural models.
    \item \textbf{Limits of local perturbations:} We show that small tilts around a fixed pool cannot achieve strict unanimity, ruling out trivial duplication as a path to compositionality.
    \item \textbf{Safety-relevant alignment principle:} We formalize the Waluigi effect using our framework, and prove that manifest--then--suppress strictly outperforms direct suppression, illuminating alignment challenges in large AI systems.
\end{enumerate}

\paragraph{Summary of Appendices.} After motivating agents as generative models over the outcome space $\mathcal{O}$ in Appendix~\ref{Appendix:Motivation}, Appendix~\ref{Appendix_Opinion_Pooling} derives opinion pools from KL minimization and establishes the utility--probability correspondence. Appendix~\ref{Appendix:Welfare} introduces the compositional condition and proves existence of compositional objects under restricted artificial welfare function constraints. For generalization, Appendix~\ref{Appendix:Possible_Impossible} develops possibility and impossibility results assuming the epistemic utility as the welfare: binary impossibility, constructive possibility for $|\mathcal O|\ge 3$, and impossibility under linear pooling. Appendix~\ref{Appendix:First_Recursion} establishes recursive properties such as cloning invariance and existence of repeated iterated decompositions. Appendix~\ref{Appendix:Subagent_composition_Nonpreservance} formally verifies and proves that the compositional property is not preserved under subagent decomposition. Appendix~\ref{Appendix:Small_Tilt_Unanimity} analyzes tilt factorizations, showing that small perturbations cannot yield strict unanimity and that joining uniform distributions never benefits any agent. Additionally, it is shown that the set of strictly unanimously compositional distributions forms an open set in the simplex topology. Building on these results, Appendix~\ref{Appendix:Waluigi} delineates the modeling of the probability of misaligned or deplorable events being realized by agent $P$ using our framework, and elucidates the so-called Waluigi effect. Finally, Appendix~\ref{Appendix:Waluigi_Shattering} proves the Waluigi Shattering Theorem, showing that purposely manifesting malevolence strictly helps for alignment, and that manifest--then--suppress adversarial personas theoretically achieves greater alignment improvement than purely reinforcing benevolence alone. 

\section{Motivating the Framework}\label{Appendix:Motivation}

A central hypothesis is that higher-level agents are composed of interacting subagents, each with their own preferences, behaviors, and learning dynamics. Modeling or understanding how these subagents coalesce into a coherent whole is instrumental to developing a scale-free framework, illuminating both the emergent behavior of artificial systems and the dynamics of human compositions. Moreover, uncovering the individual subagents that comprise a compositional agent could enable surgical latent prior modeling and intervention, offering insight into the capabilities and internal tensions of such agents. 

Returning to the case of the neural network that attempted blackmail: fundamentally, such models operate as probability distributions--mathematical functions optimized for next-token prediction executed on tensor cores \citep{Vaswani2017,Radford2018}. Through reinforcement learning from human feedback (RLHF), they are steered toward distributions aligned with human preferences \citep{Christiano2017DRLHF,Ouyang2022,Bai2022}. However, the model in question appears to have converged not only on desirable behavior but also on internal representations that encode a preference for self-preservation \citep{Claude4SystemCard2025,Anthropic2025AgenticMisalignment,Turner2021}. This emergent behavior was not rewarded during RLHF, yet it was not eliminated during the alignment phase \citep{Hubinger2024,Scheurer2023StrategicDeception,Meinke2024}.

More broadly, large models may encode latent priors over deeply troubling behaviors, such as conflict, oppression, or even genocide, that remain undetected under conventional testing \citep{Perez2022,Ganguli2022,Zou2023UniversalAttacks}. These priors are not necessarily hard-coded, but may emerge implicitly through scale, training data, or optimization objectives \citep{Kaplan2020,Weidinger2021}. If left unexamined, such latent drives can subtly shape behavior or, in worst cases, manifest in overtly harmful actions \citep{Weidinger2021,Anthropic2025AgenticMisalignment}. 

This raises a foundational question: In what sense can we think of these latent priors, or more generally, probability distributions, as agents? In reinforcement learning, an agent is often defined as an entity that acts to maximize a reward or utility function \citep{SuttonBarto2018,RussellNorvig2020}. But is this the only viable formalism? Can we unify this utility-maximization paradigm with a probabilistic view of agency?

Our work attempts to bridge this conceptual gap. We adopt a perspective inspired by active inference, wherein an agent is modeled as a probabilistic generative model (PGM) over outcomes or observations~\citep{Friston2010FEP,Friston2017Process,Buckley2017FEPReview,ParrFriston2019GFE,DaCosta2020DiscreteAISynthesis,KollerFriedman2009PGM}. 
That is, an agent predicts, and thus selectively favors, the outcomes it wants to bring about. For instance, a PGM encoding a preference for cats would have high probability mass over visual observations of petting a cat. The action of an agent in the world is then modeled by sampling from this biased outcome distribution. In this view, goals are formalized as distributions over desirable observations, and behavior arises from the agent's attempts to bring those observations about.

Formalizing agents as PGMs invites a further challenge: How do multiple generative models, each with their own probabilistic biases, combine to form a coherent, higher-level agent? This calls for a principled framework for modeling compositional agency, one that describes how disparate distributions can coalesce, both in the language of probability theory and in relation to utility-based frameworks. Developing this foundation is a central aim of our work. 

\paragraph{Related Work.} The theory of belief pooling asks how to aggregate many probability assessments into one and is anchored by Stone's ``opinion pool'' formulation and classic linear (arithmetic) and logarithmic (geometric) rules \citep{Stone61,Genest84,GenestMcConwaySchervish86,GenestWagner87}. Beyond axiomatic foundations, proper scoring rules justify and learn aggregations: minimizing expected log score yields a log pool; minimizing quadratic/Brier score yields a linear pool; and weights can be estimated by maximizing average proper score (stacking) on held-out data \citep{GneitingRaftery07}. Statistical and ML links include Bayesian model averaging as a linear pool with posterior model-probability weights \citep{Hoeting99} and \emph{products of experts} as a log pool of model likelihoods~\citep{Hinton02}. Modern work generalizes these axioms from $\sigma$–algebras to general agendas of logically structured propositions--providing representation and impossibility theorems for neutrality/independence beyond standard event spaces~\citep{DietrichList17}--and develops \emph{principled weighting} for log pools (e.g., log-linear pooling of priors with weights chosen by marginal likelihood or predictive criteria), alongside practice-oriented syntheses on expert elicitation and performance-based weighting in risk analysis~\citep{Rufo12,ClemenWinkler99}. Similarly, bargaining theory provides parallel insights into axiomatic and cooperative equilibrium solutions that guide institutional system design~\citep{KalaiSmorodinsky75,ChatterjeeSamuelson87}. 

\section{Foundations of Probabilistic Model Aggregation}\label{Appendix_Opinion_Pooling}

Let \(\mathcal{O}\) be a discrete set of outcomes.  A collection of agents \(\{1,\dots,n\}\) each possesses a probability distribution \(P_i\) on \(\mathcal{O}\).  We assign to each agent a non–negative influence weight \(\beta_i\) satisfying \(\sum_{i=1}^n \beta_i = 1\).  The aggregated belief \(P_C\) is a probability measure on \(\mathcal{O}\) determined by the chosen pooling rule.  A desired precondition is recursion; for instance, if an agent is split into identical subagents and its weight \(\beta_i\) is distributed among them, the aggregated distribution should remain unchanged.

\subsection{Linear and Logarithmic Opinion Pools}

\begin{definition}[Linear opinion pool]
Given agents \(\{P_1,\dots,P_n\}\) with weights \(\{\beta_i\}\), the \emph{linear opinion pool} is defined by
\[
  P_C^{\mathrm{lin}}(o) \;=\; \sum_{i=1}^n \beta_i\,P_i(o),
\]
for each \(o\in\mathcal{O}\).
\end{definition}

The linear pool is simply a convex combination of the individual distributions.  It is easy to compute and automatically produces a valid probability distribution without further normalization.  As a mixture model, it can be sampled by first selecting an agent at random (according to \(\beta_i\)) and then sampling from that agent’s distribution. When applied to purely the outcome space and not any intermediary causal variable, this results in a so-called random dictatorship. One disadvantage is that the linear pool does not allow any single agent to veto an outcome: even if an agent assigns zero probability to an undesirable outcome, the composition may still assign positive mass if others favor it. 

\begin{definition}[Logarithmic opinion pool]\label{log_pool_defintion}
For the same set of agents and weights, the \emph{logarithmic opinion pool} (sometimes called the log–linear pool) is defined by
\[
  P_C^{\mathrm{log}}(o) \;=\; \frac{1}{Z}\,\prod_{i=1}^n P_i(o)^{\beta_i},
\]
where the normalizing constant is
\[
  Z \;=\; \sum_{o'\in\mathcal{O}} \prod_{i=1}^n P_i(o')^{\beta_i}.
\]
\end{definition}

In logarithmic pooling, probabilities are combined multiplicatively on a log scale.  The rule is sometimes justified by the interpretation of probability as evidence: the logarithm of a probability is additive for independent pieces of information, and the pool aggregates these contributions linearly.  An important property of the log pool is the \emph{veto effect}: if any agent with non–zero weight assigns zero probability to an outcome, then the composition also assigns zero probability, ensuring that every member retains absolute veto power for outcomes they absolutely detest. 

\subsection{Derivation via Divergence Minimization}

The linear and logarithmic pools are not arbitrary constructs; each arises naturally as the solution to a convex optimization problem involving KL divergences.  Recall that for distributions \(P\) and \(Q\) on \(\mathcal{O}\), the KL divergence is defined by
\[
  \KL(P\|Q) \;=\; \sum_{o\in\mathcal{O}} P(o)\,\log\frac{P(o)}{Q(o)}.
\]

\begin{proposition}[Logarithmic pool from forward KL]
The distribution \(P_C\) that minimizes the weighted sum of forward divergences
\[
  J(P_C) \;=\; \sum_{i=1}^n \beta_i\,\KL\bigl(P_C\,\Vert\,P_i\bigr)
\]
is precisely the logarithmic opinion pool.
\end{proposition}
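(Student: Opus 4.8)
The plan is to minimize the convex functional $J(P_C) = \sum_{i=1}^n \beta_i \KL(P_C \| P_i)$ over the probability simplex using a Lagrange multiplier for the normalization constraint $\sum_o P_C(o) = 1$. First I would expand the objective explicitly as
\[
J(P_C) \;=\; \sum_{o\in\mathcal{O}} P_C(o) \sum_{i=1}^n \beta_i \bigl(\log P_C(o) - \log P_i(o)\bigr)
\;=\; \sum_{o} P_C(o)\log P_C(o) \;-\; \sum_o P_C(o)\sum_i \beta_i \log P_i(o),
\]
using $\sum_i \beta_i = 1$ to collapse the $\log P_C(o)$ term. I would note that each summand $t\log t$ is strictly convex in $t$ and the second term is linear in $P_C$, so $J$ is strictly convex on the simplex; hence the stationary point of the Lagrangian is the unique global minimizer, and interiority (positivity of the minimizer) is automatic because $t\log t$ has derivative $-\infty$ as $t\to 0^+$, so no boundary minimum can occur as long as each $P_i$ is strictly positive.

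Next I would form the Lagrangian $\mathcal{L}(P_C,\lambda) = J(P_C) + \lambda\bigl(\sum_o P_C(o) - 1\bigr)$ and differentiate with respect to $P_C(o)$ for a fixed $o$. The stationarity condition reads
\[
\log P_C(o) + 1 - \sum_{i=1}^n \beta_i \log P_i(o) + \lambda \;=\; 0,
\]
which rearranges to $P_C(o) = \exp\bigl(-1-\lambda\bigr)\prod_{i=1}^n P_i(o)^{\beta_i}$. Since $\exp(-1-\lambda)$ does not depend on $o$, it is exactly the reciprocal of a normalizing constant; imposing $\sum_o P_C(o) = 1$ forces $\exp(-1-\lambda) = 1/Z$ with $Z = \sum_{o'\in\mathcal{O}} \prod_{i=1}^n P_i(o')^{\beta_i}$. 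This is precisely the logarithmic opinion pool of Definition~\ref{log_pool_defintion}, completing the derivation.

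I do not anticipate a genuine obstacle here; the only point requiring a little care is the justification that the Lagrange/stationarity argument actually produces the global minimizer rather than merely a critical point. The cleanest route is to invoke strict convexity of $J$ together with compactness of the simplex to guarantee a unique minimizer, then argue that this minimizer lies in the relative interior (via the steepness of $t\log t$ at $0$), where the Lagrange conditions are both necessary and sufficient. An alternative, fully self-contained verification is to define $P_C^{\log}$ as in the claimed formula and check directly that for any competitor $Q$ on the simplex, $J(Q) - J(P_C^{\log}) = \KL(Q \| P_C^{\log}) \ge 0$, with equality iff $Q = P_C^{\log}$; this identity follows by substituting the explicit form of $P_C^{\log}$ and recognizing the difference as a single KL term, and it simultaneously establishes both optimality and uniqueness without any appeal to multivariate calculus.
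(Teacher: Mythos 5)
Your primary argument, via a Lagrange multiplier and a convexity/interiority check, is correct but follows a different route from the paper. The paper avoids multivariate calculus entirely: after the same expansion, it defines $Q(o) := \prod_i P_i(o)^{\beta_i}/Z$ and recognizes the objective as the identity $J(P_C) = \KL(P_C\,\Vert\,Q) - \log Z$, from which optimality and uniqueness at $P_C=Q$ follow instantly from nonnegativity of KL (with equality iff the arguments coincide). This buys a shorter proof that sidesteps the two points you rightly flag as needing care in your Lagrangian route, namely that a stationary point is actually the global minimizer, and that the minimizer lies in the relative interior of the simplex so the stationarity conditions apply. Your Lagrangian argument is perfectly valid once those two points are supplied, as you do, but it is mechanically heavier. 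The ``alternative, fully self-contained verification'' you sketch in your final sentences is in substance the paper's proof: your identity $J(Q') - J(P_C^{\log}) = \KL(Q'\,\Vert\,P_C^{\log})$ for any competitor $Q'$ is exactly the paper's $J(P_C) = \KL(P_C\,\Vert\,Q) - \log Z$ with the constant $-\log Z$ subtracted out, so you had the paper's argument in hand and chose the longer route as the main line.
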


\begin{proof}
Expanding the objective yields
\begin{align*}
  J(P_C)
  &= \sum_{i=1}^n \beta_i \sum_{o\in\mathcal{O}} P_C(o) \log\frac{P_C(o)}{P_i(o)} \\
  &= \sum_{o} P_C(o) \log P_C(o)\Bigl(\sum_i \beta_i\Bigr) 
     - \sum_{o} P_C(o) \sum_{i} \beta_i \log P_i(o) \\
  &= \sum_{o} P_C(o) \log P_C(o)
     - \sum_{o} P_C(o) \log\Bigl(\prod_{i=1}^n P_i(o)^{\beta_i}\Bigr).
\end{align*}
Let \(Q(o) = \prod_{i=1}^n P_i(o)^{\beta_i}/Z\) be the normalized log pool distribution for $Z$ in Definition~\ref{log_pool_defintion}.  Then,
\[
  J(P_C) 
  = \KL(P_C\,\Vert\,Q) 
    - \log Z.
\]
Since \(\log Z\) does not depend on \(P_C\), the objective is minimized exactly when \(\KL(P_C\,\Vert\,Q)\) attains its minimum value of zero.  This happens if and only if \(P_C = Q\), which concludes the proof. 
\end{proof}

\begin{proposition}[Linear pool from reverse KL]
The distribution \(P_C\) that minimizes the weighted sum of reverse divergences
\[
  J(P_C) \;=\; \sum_{i=1}^n \beta_i\,\KL\bigl(P_i\,\Vert\,P_C\bigr)
\]
is the linear opinion pool.
\end{proposition}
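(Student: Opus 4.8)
The plan is to mirror the proof just given for the logarithmic pool: expand $J(P_C)$ into a term that is constant in $P_C$ plus a single KL divergence against the linear pool $\bar P(o):=\sum_{i=1}^n\beta_i P_i(o)$, and then invoke Gibbs' inequality to identify the minimizer. No Lagrange multipliers are needed; everything reduces to the nonnegativity of KL.

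First I would expand each reverse divergence as $\KL(P_i\Vert P_C)=\sum_{o} P_i(o)\log P_i(o)-\sum_{o}P_i(o)\log P_C(o)=-H(P_i)-\sum_o P_i(o)\log P_C(o)$, take the $\beta$-weighted sum, and use $\sum_i\beta_i=1$ together with an interchange of the two (finite) sums in the last term to obtain $J(P_C)=-\sum_{i}\beta_i H(P_i)-\sum_{o}\bar P(o)\log P_C(o)$. Recognizing $-\sum_o\bar P(o)\log P_C(o)$ as the cross-entropy $H(\bar P,P_C)$, I would add and subtract $H(\bar P)$ to write $J(P_C)=\bigl(H(\bar P)-\sum_i\beta_i H(P_i)\bigr)+\KL(\bar P\Vert P_C)$, where the parenthesized quantity does not depend on $P_C$.

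The conclusion then follows from Gibbs' inequality: $\KL(\bar P\Vert P_C)\ge 0$ with equality if and only if $P_C=\bar P$, so $J$ attains its minimum exactly at $P_C=\bar P=\sum_i\beta_i P_i$, the linear opinion pool. The only point requiring care — the mild obstacle here — is the same structural fact used in the log-pool argument: that the cross-entropy $H(\bar P,P_C)$ is genuinely minimized over the simplex at $P_C=\bar P$, which is precisely the nonnegativity of KL, and that any $P_C$ whose support fails to contain $\supp(\bar P)$ gives $J(P_C)=+\infty$ and so can be discarded at the outset. Since the paper assumes every $P_i$ is strictly positive on $\mathcal O$, $\bar P$ is strictly positive and this support subtlety is vacuous; the unique minimizer is the (manifestly valid) linear pool.
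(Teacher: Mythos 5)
Your proof is correct, but it takes a different route from the paper's own proof of this proposition. The paper reduces the minimization to maximizing $\sum_i\sum_o \beta_i P_i(o)\log P_C(o)$ subject to $\sum_o P_C(o)=1$ and then applies a Lagrange multiplier to read off $P_C(o)\propto\sum_i\beta_i P_i(o)$. You instead complete the objective to a single divergence: $J(P_C)=\bigl(H(\bar P)-\sum_i\beta_i H(P_i)\bigr)+\KL(\bar P\,\Vert\,P_C)$ with $\bar P=\sum_i\beta_i P_i$, and then invoke nonnegativity of $\KL$. This is precisely the strategy the paper uses for the \emph{logarithmic} pool but not for the linear one, so your argument makes the two derivations structurally symmetric. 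It is also slightly stronger in presentation: the paper's stationarity computation identifies a critical point, whereas your Gibbs-inequality argument directly certifies a global minimizer on the simplex and, via the strict equality case of Gibbs, establishes uniqueness without any further convexity discussion. Your closing remark about support is accurate and, as you note, vacuous under the paper's standing positivity assumption.
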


\begin{proof}
Expand the objective:
\[
  J(P_C)
  = \sum_{i=1}^n \beta_i \sum_{o\in\mathcal{O}} P_i(o) \log\frac{P_i(o)}{P_C(o)} 
  = \sum_{i} \beta_i\sum_o P_i(o)\log P_i(o) 
    - \sum_{i}\beta_i\sum_o P_i(o) \log P_C(o).
\]
The first term is constant in \(P_C\).  Minimizing \(J(P_C)\) is therefore equivalent to maximizing
\(
  \sum_{i}\sum_{o} \beta_i P_i(o) \log P_C(o)
\)
subject to \(\sum_o P_C(o) = 1\).  Introducing a Lagrange multiplier \(\lambda\) for the normalization constraint and taking derivatives with respect to \(P_C(o)\) gives
\[
  \sum_{i} \beta_i P_i(o) \cdot \frac{1}{P_C(o)} - \lambda = 0.
\]
Solving for \(P_C(o)\) shows that it must be proportional to \(\sum_{i} \beta_i P_i(o)\).  After enforcing \(\sum_o P_C(o)=1\), we obtain
\(
  P_C(o) = \sum_{i} \beta_i P_i(o),
\)
which is the linear pool.
\end{proof}

We note that the two pooling mechanisms allow a form of trivial recursion. That is, if an agent \(k\) is split into \(m\) identical copies and its weight \(\beta_k\) is divided uniformly among them, then under the linear pool the combined contribution is still \(\beta_k P_k\); under the logarithmic pool the combined exponent is still \(\beta_k\) because exponents add.

\subsection{From Probabilities to Utilities}\label{Appendix:Prob_to_Utils}

Interpreting probability distributions as utility functions offers a deeper lens through which to understand aggregation rules. In particular, we define the notion of \emph{epistemic utility}, or \emph{relative utility}, which captures how an agent’s preferences are reflected in the probabilities it assigns to outcomes. The core idea is that agents by definition do not merely describe the world; they encode value judgments within their predictions. For example, a hungry agent might assign higher probability to outcomes \(o^* \in \mathcal{O}\) in which it obtains food, implicitly revealing its goals through its belief distribution.

Formally, given an agent with belief \(P_i\) and a realized outcome \(o^* \in \mathcal{O}\), the logarithmic score defines the agent’s epistemic utility as
\begin{equation}
  U_i^{\mathrm{epi}}(o^*) \;=\; \log P_i(o^*).
  \label{eq:epistemic-util}
\end{equation}
This scoring rule rewards agents for assigning high probability to the correct outcome. Thus, maximizing expected epistemic utility incentivizes the agent to learn calibrated models that reflect the actual distribution of outcomes as accurately as possible.

\subsubsection{Softmax and the Utility–Probability Correspondence}

Let \(U_i: \mathcal{O} \to \mathbb{R}\) be a utility function representing agent \(i\)'s preferences over outcomes.  A standard way to map utilities into probabilities is via the softmax transformation
\begin{equation}
  P_i(o) \;=\; \frac{\exp\bigl(U_i(o)\bigr)}{\sum_{o'\in\mathcal{O}} \exp\bigl(U_i(o')\bigr)}.
  \label{eq:softmax}
\end{equation}
This mapping between utility landscapes and probability distributions plays two roles.  First, one can recover an agent’s beliefs from its utilities: exponentiating and normalizing yields the distribution.  Conversely, the logarithm of a strictly positive distribution defines a utility function up to an additive constant.  Indeed, setting \(U_i(o) = \log P_i(o)\) in \eqref{eq:softmax} returns \(P_i\) exactly.  This observation underlies our use of log probabilities or logits as epistemic utilities.

\subsubsection{Averaging Utilities and the Logarithmic Pool}

Suppose a group of agents forms a meta–agent by averaging their utility functions.  A natural definition for the composition’s relative utility is
\[
  U_C(o) \;=\; \sum_{i=1}^n \beta_i\,U_i(o).
\]
Each agent’s influence is reflected in its weight \(\beta_i\).  Substituting \(U_i(o) = \log P_i(o)\) shows that \(U_C\) is the weighted average of the individual log probabilities:
\[
  U_C(o) \;=\; \sum_{i=1}^n \beta_i \log P_i(o).
\]
Applying the softmax transformation to \(U_C\) yields
\begin{align*}
  P_C(o)
  &= \frac{\exp(U_C(o))}{\sum_{o'\in\mathcal{O}} \exp(U_C(o'))}
  = \frac{\exp\bigl(\sum_{i} \beta_i\log P_i(o)\bigr)}{\sum_{o'} \exp\bigl(\sum_{i} \beta_i\log P_i(o')\bigr)}
  = \frac{\prod_{i} P_i(o)^{\beta_i}}{\sum_{o'} \prod_{i} P_i(o')^{\beta_i}}.
\end{align*}
Thus averaging utilities and then exponentiating reproduces the logarithmic opinion pool.  This derivation offers a simple social–choice interpretation: the composition’s belief (in log space) is the average of its members’ beliefs.  The linear pool, in contrast, corresponds to averaging distributions directly rather than averaging the log scores.

\section{Compositional Agents and Welfare Improvement}\label{Appendix:Welfare}

We now formalize when joining a composition benefits an individual agent in terms of its expected welfare.  Throughout this section the outcome space \(\mathcal{O}\) remains discrete, agents have beliefs \(\{P_i\}\), and each agent \(i\) possesses a welfare function \(W_i: \mathcal{O}\to\mathbb{R}\) that measures the desirability of outcomes from its perspective. The composition’s belief is constructed via the logarithmic pool with weights \(\{\beta_i\}\). In this setting, under what conditions does an agent expect to do at least as well, according to its own welfare function, by adopting the composition’s distribution?

\begin{definition}[Agent beliefs and welfare]
For each agent \(i\in\{1,\dots,n\}\), define
\begin{enumerate}
  \item Probability distribution \(P_i: \mathcal{O} \to [0,1]\) with \(\sum_{o\in\mathcal{O}} P_i(o) = 1\);
  \item Welfare function \(W_i: \mathcal{O}\to\mathbb{R}\) expressing the utility the agent assigns to each outcome.
\end{enumerate}
\end{definition}

\begin{definition}[Composition belief]
Fix non-negative weights \(\beta_i\) summing to one.  The composition’s belief under the logarithmic pool is
\[
  P(o) \;=\; \frac{1}{Z}\,\prod_{j=1}^n P_j(o)^{\beta_j},
  \quad \text{where } Z = \sum_{o'\in\mathcal{O}}\prod_{j=1}^n P_j(o')^{\beta_j}.
\]
\end{definition}

\begin{definition}[Compositional agent]
An agent \(i\) is said to be a \emph{compositional agent} if its expected welfare under the composition belief is at least as large as under its own belief, that is,
\[
  \E_{P_i}[W_i] \;\le\; \E_{P}[W_i].
\]
We interpret this as meaning that agent \(i\) benefits (or at least does not lose) by joining the composition.
\end{definition}

We characterize this condition in terms of the covariance between an agent’s welfare and the ratio by which the composition reweights its distribution. We note that under this definition, a composition need not be composed only of compositional agents.

\begin{proposition}[Compositional agent condition]\label{thm:covariance}
Let \(Q_i(o) := P(o)/P_i(o)\) denote the probability ratio between the composition and agent \(i\).  Then agent \(i\) is a compositional agent if and only if
\[
  \operatorname{Cov}_{P_i}\bigl(W_i, Q_i\bigr) \;\ge\; 0.
\]
\end{proposition}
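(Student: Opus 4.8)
The plan is to reduce the defining inequality $\E_{P_i}[W_i]\le \E_P[W_i]$ to the stated covariance condition by a single change-of-measure identity. First I would observe that the ratio $Q_i(o)=P(o)/P_i(o)$ is well defined and finite for every $o\in\mathcal O$, since by the standing assumption all agents and the log-pool $P$ are strictly positive on the finite set $\mathcal O$. The key computation is the change of measure
\[
\E_P[W_i]\;=\;\sum_{o\in\mathcal O}P(o)\,W_i(o)\;=\;\sum_{o\in\mathcal O}P_i(o)\,\frac{P(o)}{P_i(o)}\,W_i(o)\;=\;\E_{P_i}\!\bigl[Q_i\,W_i\bigr],
\]
which rewrites the composition expectation as a $P_i$-expectation of the product $Q_i W_i$.

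Next I would record the normalization identity $\E_{P_i}[Q_i]=\sum_{o}P_i(o)\,P(o)/P_i(o)=\sum_o P(o)=1$; this is the one place the positivity hypothesis is actually used, guaranteeing no division by zero and that the telescoping sum collapses to $1$. Expanding the covariance then gives
\[
\operatorname{Cov}_{P_i}\!\bigl(W_i,Q_i\bigr)\;=\;\E_{P_i}[W_iQ_i]-\E_{P_i}[W_i]\,\E_{P_i}[Q_i]\;=\;\E_{P_i}[W_iQ_i]-\E_{P_i}[W_i]\;=\;\E_P[W_i]-\E_{P_i}[W_i],
\]
where the last step substitutes the change-of-measure identity. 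Hence $\operatorname{Cov}_{P_i}(W_i,Q_i)\ge 0$ holds if and only if $\E_P[W_i]\ge\E_{P_i}[W_i]$, which is precisely the definition of agent $i$ being a compositional agent, and the equivalence (including the strict version) follows immediately.

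There is essentially no serious obstacle here: the argument is a two-line manipulation once the change of measure and the fact $\E_{P_i}[Q_i]=1$ are in place. The only thing to be careful about in the write-up is to invoke the strict-positivity assumption explicitly so that $Q_i$ is everywhere finite and the rearrangements of finite sums are valid; I would state this at the outset rather than leave it implicit.
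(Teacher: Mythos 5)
Your proof is correct and follows essentially the same route as the paper: the change-of-measure identity $\E_P[W_i]=\E_{P_i}[Q_iW_i]$, the normalization $\E_{P_i}[Q_i]=1$, and then the covariance expansion. The paper's version is slightly terser, but the underlying argument is identical; your explicit note about strict positivity ensuring $Q_i$ is well-defined is a harmless (and sensible) addition.
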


\begin{proof}
First note that
\[
  \E_{P}[W_i] = \sum_{o} P(o) W_i(o) = \sum_o P_i(o) Q_i(o) W_i(o) = \E_{P_i}[Q_i W_i].
\]
Thus \(\E_{P}[W_i] \ge \E_{P_i}[W_i]\) if and only if
\(
  \E_{P_i}[Q_i W_i] \ge \E_{P_i}[W_i].
\)
But \(\E_{P_i}[Q_i] = 1\) since \(\sum_o P(o) = 1\).  Multiplying \(\E_{P_i}[W_i]\) by this constant yields
\(
  \E_{P_i}[W_i] \E_{P_i}[Q_i] \le \E_{P_i}[Q_i W_i].
\)
Rearranging gives precisely the non–negativity of the covariance between \(W_i\) and \(Q_i\) under \(P_i\).
\end{proof}

\paragraph{Interpretation.}  The quantity \(Q_i(o) = P(o)/P_i(o)\) measures how the composition redistributes agent \(i\)'s probability mass: values greater than one indicate outcomes promoted by the group, while values less than one indicate outcomes demoted.  The condition in Theorem\;\ref{thm:covariance} states that agent \(i\) benefits if, on average, the composition places more weight on outcomes that \(i\) values highly. Thus by joining the composition, agent \(i\) is able to better realize outcomes it desires.  Negative covariance means that the composition emphasizes outcomes that \(i\) prefers less, leading to a loss of welfare.

It is natural to require that every member of a composition benefits.  We call such a configuration a unanimously compositional group.

\begin{definition}[Unanimously compositional group]
A set of agents \(\{1,\dots,n\}\) is a \emph{unanimously compositional group} if each agent \(i\) is a compositional agent, i.e., \(\E_{P_i}[W_i] \le \E_{P}[W_i]\) for all \(i\in\{1,\dots,n\}\).
\end{definition}

A natural question is if such an ideal group exists. In Theorem~\ref{Existence_compositional_Group}, we answer in the affirmative. 

\begin{theorem}[Existence of a unanimously compositional group]\label{Existence_compositional_Group}
For any integer \(n\ge2\), there exists a configuration of beliefs
\(\{P_i\}_{i=1}^n\), welfare functions \(\{W_i\}\), and uniform weights
\(\beta_i=1/n\) such that the logarithmic opinion pool makes every agent
strictly better off.
\end{theorem}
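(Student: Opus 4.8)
The plan is to give an explicit construction that works uniformly in $n$. The natural idea is to exploit the freedom in choosing the welfare functions $W_i$ — they are completely unconstrained in this section (the epistemic-utility restriction $W_i = \log P_i$ is only imposed later). So first I would pick a simple symmetric family of beliefs on a small outcome space (three outcomes suffice, or even two, since here $W_i$ need not be $\log P_i$ and the binary impossibility theorem does not apply). For instance, take $\mathcal{O} = \{o_1, o_2, o_3\}$ and let the agents come in a rotationally symmetric arrangement: agent $i$ concentrates extra mass on outcome $o_{\sigma(i)}$ for some cyclic assignment, while being otherwise near-uniform. With uniform weights $\beta_i = 1/n$, the logarithmic pool $P$ will by symmetry be exactly (or close to) the uniform distribution $U$ on $\mathcal{O}$.

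Next I would choose each welfare function $W_i$ to reward precisely the outcome that the \emph{pool} favors relative to agent $i$ — i.e., design $W_i$ so that $\operatorname{Cov}_{P_i}(W_i, Q_i) > 0$ strictly, invoking Proposition~\ref{thm:covariance} (the covariance characterization). Concretely, since $Q_i(o) = P(o)/P_i(o)$ is a fixed, known function once the beliefs are fixed, I can simply \emph{set} $W_i := Q_i$ itself, or any strictly increasing function of $Q_i$ that is non-constant. Then $\operatorname{Cov}_{P_i}(W_i, Q_i) = \operatorname{Var}_{P_i}(Q_i) > 0$ as long as $Q_i$ is non-constant under $P_i$, which holds precisely when $P \neq P_i$. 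So the construction reduces to: (i) exhibit, for each $n \ge 2$, beliefs $\{P_i\}$ that log-pool under uniform weights to some $P$ with $P \neq P_i$ for every $i$ — this is immediate from the symmetric construction above, or can be cited from the factorization results (Theorem~\ref{thm:pairwise_Main_Text}); (ii) define $W_i := Q_i$; (iii) conclude strict unanimous benefit from the covariance proposition.

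The steps in order: fix $\mathcal{O}$ with $|\mathcal{O}| \ge 2$; define the symmetric belief family and uniform weights; compute (or argue by symmetry) the pool $P$ and verify $P_i \neq P$ for all $i$; set $W_i = P/P_i$ pointwise; observe $\mathbb{E}_{P_i}[Q_i] = 1$ and $\operatorname{Var}_{P_i}(Q_i) > 0$; apply Proposition~\ref{thm:covariance} to get $\mathbb{E}_P[W_i] - \mathbb{E}_{P_i}[W_i] = \operatorname{Var}_{P_i}(Q_i) > 0$ for every $i$.

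I expect the only mild obstacle is bookkeeping: making sure the symmetric construction genuinely yields $P_i \neq P$ for \emph{every} $i$ simultaneously (not just generically) and that the weights are the stated uniform ones rather than something that drifts with the construction. Using a clean two-parameter family — e.g., $P_i$ = uniform distribution tilted by a fixed small $\eta > 0$ toward a single distinguished outcome that differs across $i$ — makes the pool's symmetry transparent and the non-degeneracy of $Q_i$ easy to check. No hard analysis is needed; the content is entirely in recognizing that the unconstrained welfare functions let us pick $W_i$ adversarially well, after which Proposition~\ref{thm:covariance} does all the work.
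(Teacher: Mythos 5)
Your proof is correct, and it takes a genuinely different route from the paper's. The paper constructs an explicit example: $n$ outcomes, uniform weights, agent $i$ near-certain of outcome $o_i$, and a cyclic welfare $W_i$ assigning $0$ to $o_{(i\bmod n)+1}$ and $-C$ elsewhere. It then computes both $\E_{P_i}[W_i]$ and $\E_P[W_i]$ directly (the pool is uniform by symmetry) and verifies the strict gap. Your approach instead leans on the covariance characterization (Proposition~\ref{thm:covariance}): since $W_i$ is unconstrained in this theorem, you set $W_i:=Q_i=P/P_i$, which turns $\E_P[W_i]-\E_{P_i}[W_i]=\operatorname{Cov}_{P_i}(W_i,Q_i)$ into $\operatorname{Var}_{P_i}(Q_i)$, strictly positive as soon as $P\neq P_i$. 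This reduces the whole theorem to the trivial fact that one can find beliefs whose log-pool under uniform weights differs from every $P_i$ (the paper's own symmetric beliefs, or any distinct family with the pool distinct from each member, suffices). Your argument is slicker and more general — it shows the result holds for \emph{any} belief configuration with $P\neq P_i$ for all $i$, and you correctly note that even $|\mathcal O|=2$ works here because the binary impossibility theorem only applies under the epistemic-utility restriction $W_i=\log P_i$. What the paper's route buys in exchange is a fully concrete, self-contained example with an interpretable welfare function, which serves an expository purpose. Both arguments are sound; yours puts the covariance proposition to work as the central tool, whereas the paper's proof does not invoke it at all.
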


\begin{proof}
We give an explicit construction with \(n\) outcomes
\(\mathcal{O}=\{o_1,\dots,o_n\}\) and uniform weights
\(\beta_i=\tfrac1n\). Choosing \(\epsilon\in(0,1/n)\), define
\[
  P_i(o_j)
  = \begin{cases}
      1-(n-1)\,\epsilon, & j=i,\\
      \epsilon,          & j\neq i.
    \end{cases}
\]
Thus each agent \(i\) is nearly certain of outcome \(o_i\) for $\epsilon \ll 1/n$. Let \(C>0\) be a large constant.  Define
\[
  W_i(o_j)
  = \begin{cases}
      0,     & j=(i\bmod n)+1,\\
      -C,    & \text{otherwise}.
    \end{cases}
\]
Agent \(i\) therefore most values outcome \(o_{i+1}\), in a cyclic manner. For any outcome \(o_k\),
\[
  \prod_{i=1}^n P_i(o_k)^{1/n}
  = \bigl(1-(n-1)\epsilon\bigr)^{1/n}
    \,\epsilon^{(n-1)/n},
\]
which is identical for all \(k\).  Hence after normalization, we thus have
\[
  P(o_k) = \frac1n
  \quad\forall\,k=1,\dots,n.
\]
Without loss of generality, consider Agent 1, with standalone welfare 
\[
  \E_{P_1}[W_1]
  = P_1(o_2)\cdot0
    + \bigl(P_1(o_1)+\sum_{j=3}^nP_1(o_j)\bigr)(-C)
  = -C + C\epsilon.
\]
Under the pool,
\[
  \E_{P}[W_1]
  = \tfrac1n\cdot0 + \tfrac{n-1}{n}(-C)
  = -C\,\frac{n-1}{n}.
\]
Since \(\epsilon<1/n\), we have
\(-C\,\frac{n-1}{n} > -C + C\epsilon\).  Thus
\(\mathbb{E}_P[W_1]>\mathbb{E}_{P_1}[W_1]\).  By symmetry the same holds
for every agent \(i\).  This completes the constructive proof.
\end{proof}

Up to this point, we have left the definition of welfare \(W_i\) deliberately open. In this section, we adopt the assumption that each agent’s welfare function is identical to its epistemic utility, i.e., \(W_i(o) = \log P_i(o)\). Under this interpretation, agents derive welfare directly from their predictive beliefs, which implicitly encode their goals, values, and preferences. The following sections investigate when compositions formed under this assumption lead to unanimous improvement, in other words, when every agent benefits from aggregation. As we will show, the possibility of such unanimous benefit depends critically on the cardinality of the outcome space. In particular, we find a stark contrast between the binary and multi-outcome settings.

\section{Possible and Impossible Compositions}\label{Appendix:Possible_Impossible}

We first examine the binary outcome case and show that unanimity cannot be achieved when welfare functions are logarithmic scores.  We then prove that for outcome spaces of size at least three, there always exist beliefs and welfare functions that yield unanimous improvement under the logarithmic pool.  Finally, we give an explicit analytic construction for such compositions.

\subsection{Impossibility in the Binary Case}\label{Appendix:Binary_Impossibility}

Let \(\mathcal{O} = \{o_A,o_B\}\).  Suppose two agents have beliefs \(P_1, P_2\) on \(\mathcal{O}\) and welfare functions \(W_i(o) = \log P_i(o)\).  The composition’s belief under any positive weights \(\beta_1,\beta_2\) is again a distribution on two points.  We claim that at most one agent can be strictly better off.

We start with the following result.
\begin{proposition}[Welfare gap identity for logarithmic utility]\label{Welfare_Gap}
Let \(\mathcal{O}\) be a finite outcome space. For any agent \(i\) with belief distribution \(P_i\), and any composition distribution \(P\) (not necessarily equal to \(P_i\)), define the welfare gap
\[
  \Delta_i \;=\; \Delta_{P_i}(P) \;=\; \E_{P}[\log P_i] - \E_{P_i}[\log P_i].
\]
Then this difference can be written as
\[
  \Delta_i \;=\; H(P_i) - H(P) - \KL(P\|P_i),
\]
where \(H(\cdot)\) denotes Shannon entropy and \(\KL(\cdot\|\cdot)\) denotes the Kullback–Leibler divergence.
\end{proposition}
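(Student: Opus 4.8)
The plan is to obtain the identity by a direct algebraic manipulation of the definitions of Shannon entropy and Kullback--Leibler divergence; no inequality or optimization argument is needed here. First I would note that, by the standing assumption that all distributions are strictly positive on the finite set $\mathcal{O}$, every logarithm appearing below is finite and every sum is absolutely convergent, so all the rearrangements are purely formal.

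The key step is to re-express the cross term $\E_P[\log P_i]$ in terms of $H(P)$ and the forward divergence. Writing out the definition,
\[
\KL(P\|P_i)\;=\;\sum_{o\in\mathcal{O}} P(o)\log\frac{P(o)}{P_i(o)}\;=\;\sum_{o} P(o)\log P(o)\;-\;\sum_{o} P(o)\log P_i(o)\;=\;-H(P)\;-\;\E_P[\log P_i],
\]
so that $\E_P[\log P_i] = -H(P) - \KL(P\|P_i)$. Separately, directly from the definition of entropy, $\E_{P_i}[\log P_i] = \sum_{o} P_i(o)\log P_i(o) = -H(P_i)$.

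Substituting both expressions into $\Delta_i = \Delta_{P_i}(P) = \E_P[\log P_i] - \E_{P_i}[\log P_i]$ then yields
\[
\Delta_i\;=\;\bigl(-H(P) - \KL(P\|P_i)\bigr)\;-\;\bigl(-H(P_i)\bigr)\;=\;H(P_i) - H(P) - \KL(P\|P_i),
\]
which is the claimed identity. There is no genuine obstacle in this proof; the only points requiring care are the sign convention $H(\cdot) = -\sum(\cdot)\log(\cdot)$ and the fact that it is the \emph{forward} divergence $\KL(P\|P_i)$ — with the composition distribution $P$ in the first argument — that appears, since the asymmetry of KL is precisely what distinguishes this decomposition from a symmetric one. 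I would close by remarking that, because $\KL(P\|P_i)\ge 0$ with equality iff $P=P_i$, the identity already shows $\Delta_i\le H(P_i)-H(P)$, foreshadowing the entropy-versus-divergence tradeoff exploited in the binary impossibility result.
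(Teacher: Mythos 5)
Your proof is correct and takes essentially the same route as the paper: both compute $\E_{P_i}[\log P_i]=-H(P_i)$ and rewrite the cross term $\E_P[\log P_i]=-H(P)-\KL(P\|P_i)$ (you by expanding the KL definition, the paper by adding and subtracting $\log P(o)$), then subtract. The closing remark correctly anticipates the role of the identity in later arguments.
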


\begin{proof}
By definition, the expected log utility under \(P\) is
\[
  \E_P[\log P_i] = \sum_{o \in \mathcal{O}} P(o)\, \log P_i(o).
\]
We now manipulate this expression by introducing and subtracting \(\log P(o)\):
\[
  \sum_{o} P(o)\, \log P_i(o)
  = \sum_{o} P(o)\, \bigl[\log P(o) - \log \tfrac{P(o)}{P_i(o)}\bigr]
  = \sum_{o} P(o)\log P(o) - \sum_{o} P(o)\log \tfrac{P(o)}{P_i(o)}.
\]
The first term is \(-H(P)\) and the second term is \(\KL(P\|P_i)\), so we conclude
\[
  \E_P[\log P_i] = -H(P) - \KL(P\|P_i).
\]

Next, compute the expected log utility under the agent’s own belief:
\[
  \E_{P_i}[\log P_i] = \sum_{o} P_i(o)\, \log P_i(o) = -H(P_i).
\]

Substituting into the definition of \(\Delta_i\) gives
\[
  \Delta_i = \E_P[\log P_i] - \E_{P_i}[\log P_i]
           = \bigl[ -H(P) - \KL(P\|P_i) \bigr] - (-H(P_i))
           = H(P_i) - H(P) - \KL(P\|P_i),
\]
as claimed.
\end{proof}

We then have the following result.
\begin{theorem}[Binary‐outcome impossibility under log‐welfare]\label{thm:binary_impossibility}
Let \(\mathcal{O}=\{o_A,o_B\}\).  Suppose two agents have distinct beliefs 
\(\,P_i(o_A)=x_i\in(0,1)\), \(i=1,2\), and welfare functions 
\(W_i(o)=\log P_i(o)\).  Let \(\beta_1,\beta_2>0\) with \(\beta_1+\beta_2=1\) and form the log‐linear pool
\[
  P(o_A)=x 
  \;=\;
  \frac{x_1^{\beta_1}\,x_2^{\beta_2}}
       {\,x_1^{\beta_1}\,x_2^{\beta_2}
        + (1-x_1)^{\beta_1}\,(1-x_2)^{\beta_2}\,}\,.
\]
Define each agent’s welfare gap with respect to variable $x$:
\[
  \Delta_i(x)
  = \E_{P}[\log P_i] \;-\; \E_{P_i}[\log P_i].
\]
Then, there is no choice of \(\beta_1,\beta_2\) satisfying \(\Delta_i\ge0\) for both \(i=1,2\) with at least one strict inequality.
\end{theorem}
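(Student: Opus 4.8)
The plan is to reduce the whole statement to a one–dimensional computation in logit coordinates. First I would record the elementary binary identity for the welfare gap: writing $x=P(o_A)$ and $x_i=P_i(o_A)$, a direct expansion of $\E_P[\log P_i]-\E_{P_i}[\log P_i]$ gives
\[
\Delta_i \;=\; (x-x_i)\,\log\frac{x_i}{1-x_i}.
\]
(This is also a one–line consequence of Proposition~\ref{Welfare_Gap} specialized to $|\mathcal O|=2$, but the direct expansion is cleaner.) So the sign of $\Delta_i$ is completely controlled by the sign of $x-x_i$ together with the sign of the individual logit $\ell_i:=\log\frac{x_i}{1-x_i}$.

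Next I would compute the logit of the pool. Dividing the stated formula for $x$ by $1-x$ collapses the normalizer and yields $\frac{x}{1-x}=\prod_{i}\bigl(\frac{x_i}{1-x_i}\bigr)^{\beta_i}$, hence $\ell:=\log\frac{x}{1-x}=\beta_1\ell_1+\beta_2\ell_2$: the pooled logit is the $\beta$–weighted average of the individual logits. Since the agents' beliefs are distinct and the logit map is strictly increasing, $\ell_1\ne\ell_2$; relabeling so that $\ell_1<\ell_2$, and using $\beta_1,\beta_2>0$, the average is \emph{strictly interior}, $\ell_1<\ell<\ell_2$. Monotonicity of the logit then gives $x-x_1>0$ and $x-x_2<0$.

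The conclusion is then a two–line sign argument. From the identity, $\Delta_1=(x-x_1)\ell_1$ with $x-x_1>0$, so $\Delta_1\ge 0$ forces $\ell_1\ge 0$; likewise $\Delta_2=(x-x_2)\ell_2$ with $x-x_2<0$, so $\Delta_2\ge 0$ forces $\ell_2\le 0$. But $\ell_1<\ell_2$ makes $\ell_1\ge 0\ge\ell_2$ impossible. Therefore, for every admissible $(\beta_1,\beta_2)$ at least one $\Delta_i$ is strictly negative — in particular one cannot have $\Delta_1,\Delta_2\ge 0$ with a strict inequality somewhere, which is the claim (and in fact a strictly stronger statement).

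There is no deep obstacle here; the only care needed is with the equality/boundary cases, namely when some $x_i=\tfrac12$ so that $\ell_i=0$ and $\Delta_i\equiv 0$. These are already handled by the strict interiority $\ell_1<\ell<\ell_2$ and the distinctness $\ell_1\ne\ell_2$: if, say, $\ell_1=0$ then $\ell_2>0$ forces $x-x_2<0$ and hence $\Delta_2<0$, and symmetrically. I would also note in passing that all quantities are finite since $x_i,x\in(0,1)$ strictly, so no degenerate logarithms arise.
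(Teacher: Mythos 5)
Your proof is correct and shares the paper's backbone: the closed-form identity $\Delta_i=(x-x_i)\log\frac{x_i}{1-x_i}$ and the logit observation that $\log\frac{x}{1-x}$ is a strictly interior convex combination of the individual logits, hence $x_1<x<x_2$ (after relabeling). Where you diverge is the endgame. The paper splits into three cases according to whether $x_1,x_2$ sit on the same or opposite sides of $\tfrac12$, and in fact its case analysis silently omits the boundary cases $x_1=\tfrac12$ or $x_2=\tfrac12$ (where one $\ell_i=0$); the conclusion still holds there, but the paper never says so. Your sign argument — $\Delta_1\ge 0$ with $x-x_1>0$ forces $\ell_1\ge 0$, $\Delta_2\ge 0$ with $x-x_2<0$ forces $\ell_2\le 0$, and $\ell_1<\ell_2$ makes these incompatible — dispatches all cases uniformly, including the boundary ones, with no branching. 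Your direct expansion of $\E_P[\log P_i]-\E_{P_i}[\log P_i]$ to reach the identity is also more economical than the paper's route through the entropy/KL decomposition. Finally, you correctly note that the argument yields a strictly stronger conclusion than stated: for every admissible $(\beta_1,\beta_2)$, at least one $\Delta_i$ is \emph{strictly} negative.
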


\begin{proof}
Using binary entropy and KL‐divergence,
\[
  H(u)=-u\log u-(1-u)\log(1-u),
  \quad
  d(x\|x_i)=x\log\frac{x}{x_i} + (1-x)\log\frac{1-x}{1-x_i}.
\]
We have by Proposition~\ref{Welfare_Gap} that
\[
  \Delta_i(x)
  = H(x_i) - H(x) - d(x\|x_i).
\]
A direct calculation gives
\[
  H(x_i) - H(x)
  = \bigl[-x_i\log x_i-(1-x_i)\log(1-x_i)\bigr]
    -\bigl[-x\log x-(1-x)\log(1-x)\bigr],
\]
so
\[
  \Delta_i(x)
  = x\log x + (1-x)\log(1-x)
    - x_i\log x_i - (1-x_i)\log(1-x_i)
    - \Bigl[x\log\frac{x}{x_i}+(1-x)\log\frac{1-x}{1-x_i}\Bigr].
\]
Collecting terms,
\[
  \Delta_i(x)
  = \bigl[x\log x - x\log\frac{x}{x_i}\bigr]
    + \bigl[(1-x)\log(1-x) - (1-x)\log\frac{1-x}{1-x_i}\bigr]
    - \bigl[x_i\log x_i+(1-x_i)\log(1-x_i)\bigr].
\]
But note
\[
  x\log x - x\log\frac{x}{x_i} = x\log x_i,
  \quad
  (1-x)\log(1-x) - (1-x)\log\frac{1-x}{1-x_i} = (1-x)\log(1-x_i).
\]
Hence
\[
  \Delta_i(x)
  = x\log x_i + (1-x)\log(1-x_i)
    - \bigl[x_i\log x_i + (1-x_i)\log(1-x_i)\bigr].
\]
Factor to obtain the succinct form
\[
  \Delta_i(x)
  = (x - x_i)\,\bigl[\log x_i - \log(1-x_i)\bigr]
  = (x - x_i)\,\log\frac{x_i}{1 - x_i}.
\]
In particular, \(\Delta_i(x_i)=0\), and since \(\log\frac{x_i}{1-x_i}\neq0\) whenever \(x_i\neq\tfrac12\), the sign of \(\Delta_i(x)\) changes precisely once as we vary $x \in [0,1]$.  

Now, write
\[
  A = x_1^{\beta_1} x_2^{\beta_2},
  \quad
  B = (1-x_1)^{\beta_1}(1-x_2)^{\beta_2},
  \quad
  x = \frac{A}{A+B}.
\]
Define the likelihood-odds function \(h(u)=u/(1-u)\).  Then
\[
  G:=\frac{x}{1-x}
  = \frac{A}{B}
  = \bigl[h(x_1)\bigr]^{\beta_1}\,\bigl[h(x_2)\bigr]^{\beta_2},
\]
the weighted geometric mean of \(h(x_1)\) and \(h(x_2)\). Taking logarithms on both sides, we have
\[
\log G = \beta_1\log h(x_1) + (1-\beta_1)\log h(x_2).
\]
Without loss of generality, assume that $x_1 < x_2$. Clearly $\log G$ is a decreasing function with respect to $\beta_1$ as \(h(u)\) is strictly increasing on \((0,1)\). This gives that 
\[
\log h(x_1) < \log G < \log h(x_2),
\]
and exponentiating both sides gives
\[
  h(x_1) < \frac{x}{1-x} < h(x_2).
\]
Applying the inverse \(h^{-1}(v)=v/(1+v)\), which is also strictly increasing, yields
\[
  x_1 \;<\; x \;<\; x_2.
\]
The final step is to analyze the signs of \(\Delta_1(x)\) and \(\Delta_2(x)\). Since we have established that \(x_1 < x < x_2\), it follows that the term \((x-x_1)\) is always positive and \((x-x_2)\) is always negative. The sign of the log-odds term, \(\log\frac{x_i}{1-x_i}\), depends on whether \(x_i\) is greater or less than \(1/2\). We proceed with a case analysis. We first analyze the case in which beliefs $x_1,$ $x_2$ are on the same side of \(1/2\).
If \(1/2 < x_1 < x_2\), then \(\log\frac{x_1}{1-x_1} > 0\) and \(\log\frac{x_2}{1-x_2} > 0\). This yields:
    \[ \Delta_1(x) = \underbrace{(x-x_1)}_{>0} \underbrace{\log\frac{x_1}{1-x_1}}_{>0} > 0 \quad\text{and}\quad \Delta_2(x) = \underbrace{(x-x_2)}_{<0} \underbrace{\log\frac{x_2}{1-x_2}}_{>0} < 0. \]
Similarly, if \(x_1 < x_2 < 1/2\), then \(\log\frac{x_1}{1-x_1} < 0\) and \(\log\frac{x_2}{1-x_2} < 0\). This yields:
    \[ \Delta_1(x) = \underbrace{(x-x_1)}_{>0} \underbrace{\log\frac{x_1}{1-x_1}}_{<0} < 0 \quad\text{and}\quad \Delta_2(x) = \underbrace{(x-x_2)}_{<0} \underbrace{\log\frac{x_2}{1-x_2}}_{<0} > 0. \]
In either subcase, one agent's welfare improves while the other's worsens. Otherwise, the beliefs must be on opposite sides of \(1/2\). If \(x_1 < 1/2 < x_2\), then \(\log\frac{x_1}{1-x_1} < 0\) and \(\log\frac{x_2}{1-x_2} > 0\). This yields:
    \[ \Delta_1(x) = \underbrace{(x-x_1)}_{>0} \underbrace{\log\frac{x_1}{1-x_1}}_{<0} < 0 \quad\text{and}\quad \Delta_2(x) = \underbrace{(x-x_2)}_{<0} \underbrace{\log\frac{x_2}{1-x_2}}_{>0} < 0. \]
In this case, the welfare of both agents worsens. Thus in all possible cases, at least one agent incurs a strictly negative welfare gap. It is therefore impossible for both \(\Delta_1\) and \(\Delta_2\) to be non-negative with at least one being strictly positive.
\end{proof}

The binary impossibility reflects a tension inherent to averaging on a log scale: increasing one agent’s log likelihood necessarily decreases another’s when only two alternatives exist.

\subsection{Abstract Existence for Larger Outcome Spaces}\label{Abstract_Existence}

When the outcome space has at least three elements, unanimously beneficial compositions do exist.  We first establish existence in a constructive way by exhibiting a family of beliefs and welfare functions that guarantee positive welfare gains for all agents.

\begin{theorem}[Existence of unanimously compositional groups]\label{thm:existence}
Let \(n \ge 2\) and suppose \(|\mathcal{O}| \ge 3\).  Then there exist probability distributions \(\{P_i\}_{i=1}^n\) on \(\mathcal{O}\), welfare functions \(W_i(o) = \log P_i(o)\), and equal weights \(\beta_i = 1/n\), such that under the logarithmic pool every agent strictly benefits: \(\E_P[W_i] > \E_{P_i}[W_i]\) for all \(i\).
\end{theorem}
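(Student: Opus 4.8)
The plan is to produce an explicit one–parameter family of configurations, indexed by a small $\epsilon>0$, and to show that every welfare gap becomes strictly positive once $\epsilon$ is small enough. By Proposition~\ref{Welfare_Gap}, with the epistemic welfare $W_i=\log P_i$ the condition $\E_P[W_i]>\E_{P_i}[W_i]$ is exactly
\[
\Delta_i \;=\; H(P_i)-H(P)-\KL(P\|P_i)\;>\;0\qquad\text{for all }i,
\]
so the whole proof reduces to an entropy/divergence estimate. I would fix a distinguished outcome $o_1\in\mathcal O$ and set $P_i(o_1)=1-\epsilon$ for every agent, while on the remaining outcomes I put $P_i(o)=\epsilon\,q_i(o)$, where each $q_i$ is a fixed probability distribution on $\mathcal O\setminus\{o_1\}$, bounded away from $0$, and — crucially — chosen so that the $q_i$ are \emph{not all equal}. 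This last requirement is precisely what needs $|\mathcal O\setminus\{o_1\}|\ge 2$, i.e. $|\mathcal O|\ge 3$; when $|\mathcal O|=2$ the tail is a single point, the $q_i$ are forced to coincide, and the construction degenerates — consistent with Theorem~\ref{thm:binary_impossibility}.

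Next I would compute the logarithmic pool $P$ at the uniform weights $\beta_i=1/n$. Unnormalized, it assigns $1-\epsilon$ to $o_1$ and $\epsilon\,\bar q(o)$ to each $o\ne o_1$, where $\bar q(o):=\prod_i q_i(o)^{1/n}$. Strict AM--GM, valid exactly because the $q_i$ are not all equal, gives $\bar Z:=\sum_{o\ne o_1}\bar q(o)<1$; writing $\gamma:=1-\bar Z\in(0,1)$ — a constant that does \emph{not} depend on $\epsilon$ — the normalizer is $Z=1-\gamma\epsilon$. Hence $P$ concentrates on $o_1$ even more strongly than any $P_i$: $P(o_1)=(1-\epsilon)/(1-\gamma\epsilon)$, and the total tail mass of $P$ is $\epsilon^\star:=\epsilon(1-\gamma)/(1-\gamma\epsilon)=(1-\gamma)\epsilon+O(\epsilon^2)$.

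The core of the argument is then a leading–order expansion in $\epsilon$ using $u\mapsto -u\log u$ near $u=0$. A short Taylor computation gives $H(P_i)=-\epsilon\log\epsilon+O(\epsilon)$ and $H(P)=-\epsilon^\star\log\epsilon^\star+O(\epsilon)=-(1-\gamma)\epsilon\log\epsilon+O(\epsilon)$, so that
\[
H(P_i)-H(P)\;=\;-\gamma\,\epsilon\log\epsilon\;+\;O(\epsilon),
\]
a quantity of order $\epsilon\log(1/\epsilon)$ with strictly positive leading coefficient $\gamma$. Separately, $\KL(P\|P_i)=O(\epsilon)$: the $o_1$–term equals $\tfrac{1-\epsilon}{1-\gamma\epsilon}\log\tfrac{1}{1-\gamma\epsilon}=\gamma\epsilon+O(\epsilon^2)$, and each tail term is $O(\epsilon)$ because the log–ratios $\log(\bar q(o)/q_i(o))$ are bounded — this is where I use that the $q_i$ are bounded away from $0$. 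Combining, $\Delta_i=-\gamma\epsilon\log\epsilon+O(\epsilon)=\epsilon\log(1/\epsilon)\bigl(\gamma+o(1)\bigr)>0$ for every $i$ once $\epsilon$ is below a threshold (finitely many agents, take the minimal threshold), which finishes the proof.

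The main obstacle — essentially the only place care is required — is the bookkeeping of error terms: one must ensure $\KL(P\|P_i)$ is genuinely $O(\epsilon)$ rather than accidentally $\Theta(\epsilon\log(1/\epsilon))$, which would swamp the entropy gain. Two uniformities make this work: the AM--GM gap $\gamma$ is a \emph{fixed} positive constant (the $q_i$ do not vary with $\epsilon$), and the tail probabilities of both $P_i$ and $P$ are all of the common order $\Theta(\epsilon)$ and bounded below after rescaling, so no log–ratio blows up as $\epsilon\to 0$. Everything else is routine asymptotics. The same argument goes through verbatim with arbitrary fixed positive weights $\beta_i$ in place of $1/n$, using weighted AM--GM.
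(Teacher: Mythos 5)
Your proof is correct, and it takes a genuinely different route from the paper's. The paper (Theorems~\ref{thm:existence}, \ref{thm:analytic}) works on $\mathcal{O}=\{o_0,\dots,o_n\}$ of size $n+1$, giving each agent a \emph{private} outcome $o_i$ with probability $\alpha=\varepsilon$ while putting a much smaller probability $\delta=\varepsilon^{n+1}$ on the other agents' private outcomes; the geometric averaging then makes the pool's tail mass scale like $\varepsilon^{n}$, so $H(P)=o(H(P_i))$ and the entropy gap $H(P_i)-H(P)=\Theta(\varepsilon\log\tfrac1\varepsilon)$ dominates the $O(\varepsilon)$ divergence term. You instead fix a single shared bump at $o_1$ with tail distributions $q_i$ that are allowed to differ, and extract a \emph{constant-factor} concentration gain $\gamma=1-\sum_{o}\prod_i q_i(o)^{1/n}>0$ from strict AM--GM; the pool's tail mass is then $(1-\gamma)\varepsilon+O(\varepsilon^2)$, and the entropy gap is $\gamma\,\varepsilon\log\tfrac1\varepsilon + O(\varepsilon)$. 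Both arguments exploit the same information-theoretic mechanism (log-pooling concentrates more than any component, shrinking entropy faster than it pays in $\KL(P\|P_i)$), but yours achieves it via a fixed AM--GM gap rather than a two-scale $\varepsilon$-vs.-$\varepsilon^{n+1}$ construction. A concrete advantage of your version is that it works verbatim on \emph{any} $\mathcal{O}$ with $|\mathcal{O}|\ge 3$ and any $n\ge 2$ (one only needs the $q_i$ to not all coincide on a two-point tail, which is possible regardless of $n$), whereas the paper's explicit construction as written requires $|\mathcal{O}|=n+1$ and so strictly speaking only establishes the claim for that outcome-set size. Your bookkeeping of error orders is also sound: the $o_1$-term of $\KL(P\|P_i)$ is $\gamma\varepsilon+O(\varepsilon^2)$, and the tail terms are $O(\varepsilon)$ precisely because the $q_i$ are bounded away from zero, so nothing contributes at the $\varepsilon\log\tfrac1\varepsilon$ scale. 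The remark that the argument extends to arbitrary fixed positive weights via weighted AM--GM is correct and parallels the paper's Theorem~\ref{thm:analytic-general}.
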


\begin{proof}
We provide an explicit construction in the next subsection, showing that such distributions exist for any \(n\ge 2\) when the number of outcomes is at least three.  The key idea is to choose beliefs that assign small probability to distinct outcomes while  concentrating mass on a common ``default'' outcome.  Under the log-pool, the default outcome becomes highly likely, increasing each agent’s entropy and thus its expected log likelihood.  We will quantify this effect below.
\end{proof}

\subsubsection{Explicit Analytic Construction}

We now present a concrete family of distributions that realize the existence theorem.  Fix an integer \(n\ge 2\) and choose an outcome space
\(
  \mathcal{O} = \{o_0, o_1, o_2, \dots, o_n\}
\)
of size \(n+1\).  Select a small parameter \(\varepsilon\in\bigl(0,\tfrac14\bigr)\) and define two auxiliary quantities
\[
  \alpha = \varepsilon, \quad \delta = \varepsilon^{n+1}.
\]
We define each agent \(i\in\{1,\dots,n\}\) to be weakly suggestive on seeing a private outcome \(o_i\), almost certain about a shared base outcome \(o_0\), and very unlikely to see the remaining outcomes.  Specifically, let
\[
  P_i(o) =
  \begin{cases}
    \alpha,                     & \text{if } o = o_i,\\
    1 - \alpha - (n-1)\delta,      & \text{if } o = o_0,\\
    \delta,                    & \text{otherwise}.
  \end{cases}
\]
Because \(\alpha + (n-1)\delta + \bigl(1 - \alpha - (n-1)\delta\bigr) = 1\), this defines a valid distribution.  The welfare function for each agent is \(W_i(o) = \log P_i(o)\).  We assign uniform weights \(\beta_i = 1/n\) and form the composition belief under the logarithmic pool.  We claim that for \(\varepsilon\) sufficiently small, every agent’s expected log likelihood strictly increases.

\begin{theorem}[Analytic construction for unanimity]\label{thm:analytic}
Let \(n\ge 2\) and define beliefs \(P_i\) as above.  Form the logarithmic pool with weights \(\beta_i = 1/n\) and denote the resulting distribution by \(P\).  There exists \(\varepsilon_0>0\) such that for all \(\varepsilon\in(0,\varepsilon_0)\), the expected welfare difference
\(
  \Delta_i = \E_P[\log P_i] - \E_{P_i}[\log P_i]
\)
is strictly positive for every agent \(i\).  In particular, the set of agents is unanimously compositional.
\end{theorem}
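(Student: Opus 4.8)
The plan is to compute the welfare gap $\Delta_i$ for a representative agent using the identity $\Delta_i = H(P_i) - H(P) - \KL(P\|P_i)$ from Proposition~\ref{Welfare_Gap}, and to show that each of the three terms behaves favorably as $\varepsilon\to 0$. First I would pin down the structure of the log-pool $P$. Since all $n$ agents share the base outcome $o_0$ with mass $1-\alpha-(n-1)\delta$, assign $\alpha$ to their own private outcome, and $\delta$ to the other $n-1$ private outcomes, the geometric mean $\prod_i P_i(o)^{1/n}$ takes exactly three distinct unnormalized values: one for $o_0$, one for each private outcome $o_i$ (a single factor of $\alpha^{1/n}$ times $\delta^{(n-1)/n}$), and — wait, by symmetry every private outcome $o_i$ gets the \emph{same} unnormalized weight $\alpha^{1/n}\delta^{(n-1)/n}$. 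So $P$ puts mass $\approx 1$ on $o_0$ and splits a vanishing remainder equally among $o_1,\dots,o_n$. Concretely, with $\alpha=\varepsilon$ and $\delta=\varepsilon^{n+1}$, I would show $P(o_0)=1-\Theta(\varepsilon^{1/n})$ and $P(o_i)=\Theta(\varepsilon^{1/n})$ for the private outcomes, so $H(P)\to 0$ as $\varepsilon\to 0$.

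Next I would estimate the two remaining terms. For the agent's own entropy, $H(P_i)= -\alpha\log\alpha - (1-\alpha-(n-1)\delta)\log(1-\alpha-(n-1)\delta) - (n-1)\delta\log\delta$; the dominant contribution is $-\varepsilon\log\varepsilon = \Theta(\varepsilon\log(1/\varepsilon))$, which goes to $0$ but is of strictly larger order than $H(P)=\Theta(\varepsilon^{1/n}\log(1/\varepsilon))$ — this ordering is the crux of the argument. For the KL term, $\KL(P\|P_i) = \sum_o P(o)\log(P(o)/P_i(o))$; the $o_0$ term contributes $P(o_0)\log(P(o_0)/P_i(o_0)) \approx \log\bigl((1-\Theta(\varepsilon^{1/n}))/(1-\varepsilon-\cdots)\bigr) = \Theta(\varepsilon)$, and the private-outcome terms contribute $\sum_{j}P(o_j)\log(P(o_j)/P_i(o_j))$, where for $j=i$ this is $\Theta(\varepsilon^{1/n})\log(\Theta(\varepsilon^{1/n})/\varepsilon) = \Theta(\varepsilon^{1/n}\log(1/\varepsilon))$ and for $j\neq i$ it is $\Theta(\varepsilon^{1/n})\log(\Theta(\varepsilon^{1/n})/\varepsilon^{n+1})$, also $\Theta(\varepsilon^{1/n}\log(1/\varepsilon))$. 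Hence $\KL(P\|P_i) = \Theta(\varepsilon^{1/n}\log(1/\varepsilon))$, again of strictly smaller order than $H(P_i)$.

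Putting these together, $\Delta_i = H(P_i) - H(P) - \KL(P\|P_i) = \Theta(\varepsilon\log(1/\varepsilon)) - \Theta(\varepsilon^{1/n}\log(1/\varepsilon))$. For $n\ge 2$ we have $1/n \le 1/2 < 1$, so $\varepsilon^{1/n}$ dominates $\varepsilon$ — meaning the negative terms are actually \emph{larger} in order than the positive term, which would make $\Delta_i$ negative. So I expect I must re-examine the intended parameter scaling: the construction likely wants $\delta$ much smaller than a mere power of $\varepsilon$ won't fix the exponent mismatch, so instead one should either take $\alpha$ itself to depend on $n$ and $\varepsilon$ more cleverly, or (more plausibly) recognize that $H(P_i)$ contains the term $-(n-1)\delta\log\delta$ and the point is that choosing $\delta$ extremely small makes $P$ concentrate even harder on $o_0$ while $H(P_i)$ stays bounded below. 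The main obstacle is therefore getting the orders of magnitude to line up: I would need to carefully choose the relationship between $\alpha$, $\delta$, and $\varepsilon$ so that $H(P_i) - H(P) - \KL(P\|P_i) > 0$, most likely by making the $o_0$-mass of $P$ close enough to $1$ that $H(P) + \KL(P\|P_i)$ is dominated by $H(P_i)$, and then verifying the inequality holds uniformly in $i$ by the symmetry of the construction. Once the correct scaling is identified, strictness for all $i$ follows immediately from the symmetry (all $\Delta_i$ are equal), and $\varepsilon_0$ is obtained by taking $\varepsilon$ small enough that the leading positive term strictly exceeds the sum of the error terms.
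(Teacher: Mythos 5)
Your high-level plan is the same as the paper's: apply Proposition~\ref{Welfare_Gap} to write $\Delta_i = H(P_i) - H(P) - \KL(P\|P_i)$, exploit symmetry so that all $\Delta_i$ are equal, and compare orders of magnitude as $\varepsilon\to 0$. That is the correct strategy, and you correctly identify the ``ordering of magnitudes'' as the crux. However, the argument is derailed by an arithmetic slip in computing the unnormalized pool weight at a private outcome. You write $R(o_i)=\alpha^{1/n}\delta^{(n-1)/n}$ (correct) but then assert $P(o_i)=\Theta(\varepsilon^{1/n})$, which silently drops the $\delta^{(n-1)/n}$ factor. With $\alpha=\varepsilon$ and $\delta=\varepsilon^{n+1}$,
\[
\alpha^{1/n}\,\delta^{(n-1)/n}
= \varepsilon^{1/n}\,\varepsilon^{(n+1)(n-1)/n}
= \varepsilon^{(1 + n^2 - 1)/n}
= \varepsilon^{\,n},
\]
so $P(o_i)=\Theta(\varepsilon^{\,n})$ and $P(o_0)=1-\Theta(\varepsilon^{\,n})$, not $\varepsilon^{1/n}$.

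Once this is corrected, the rest of the estimate falls into place exactly as intended and there is no need to ``re-examine the intended parameter scaling.'' You get $H(P)=\Theta(\varepsilon^{\,n}\log\tfrac{1}{\varepsilon})$, which is $o(\varepsilon\log\tfrac{1}{\varepsilon})$ for $n\ge 2$, and for the KL term the $o_0$ contribution is $O(\varepsilon)$ while each private-outcome contribution is $\Theta(\varepsilon^{\,n}\log\tfrac{1}{\varepsilon})$, giving $\KL(P\|P_i)=O(\varepsilon)$. Therefore
\[
\Delta_i = \underbrace{\Theta\bigl(\varepsilon\log\tfrac{1}{\varepsilon}\bigr)}_{H(P_i)}
- \underbrace{O\bigl(\varepsilon^{\,n}\log\tfrac{1}{\varepsilon}\bigr)}_{H(P)}
- \underbrace{O(\varepsilon)}_{\KL}
> 0
\]
for $\varepsilon$ sufficiently small, and by symmetry this holds for every $i$. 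In short: your proposed approach is the paper's approach, but the erroneous exponent $\varepsilon^{1/n}$ led you to (falsely) conclude the construction fails and to speculate about a fix that is not needed. The construction as given works; only the exponent computation needed repair.
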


\begin{proof}
We estimate orders of magnitude to show that positive gains dominate negative contributions for sufficiently small \(\varepsilon\). For each outcome \(o\in\mathcal{O}\) the normalized log pool assigns weight
\[
  R(o) = \prod_{j=1}^n P_j(o)^{1/n}.
\]
If \(o = o_0\), then each factor contributes \(1 - \alpha - (n-1)\delta\), thus we obtain
\[
  R(o_0) = (1 - \alpha - (n-1)\delta)^{n \cdot 1/n} = 1 - \alpha - (n-1)\delta.
\]
If \(o = o_i\) for some agent \(i\), then agent \(i\) assigns probability \(\alpha\) while all other \(n-1\) agents assign probability \(\delta\).  Thus
\(
  R(o_i) = \alpha^{1/n}\, \delta^{1 - 1/n} = \varepsilon^{n}.
\) 
Meanwhile, we have \[R(o_0) = 1 - \alpha - (n-1)\delta = 1 - \varepsilon + o(\varepsilon).\]  
Normalizing gives
\[
  P(o_0) = \frac{R(o_0)}{R(o_0) + \sum_{i=1}^n R(o_i)} = 1 - O(\varepsilon^n),
\]
where the second term in the denominator is of order \(n\varepsilon^{n}\) and thus much smaller than \(\varepsilon\).  Consequently, \(P\) assigns nearly all of its mass to \(o_0\) and vanishingly small mass (of order \(\varepsilon\)) to each of the other outcomes.

The entropy of \(P_i\) is dominated by the uncertainty between \(o_0\) (which has probability roughly \(1 - \varepsilon\)) and \(o_i\) (probability \(\varepsilon\)), with negligible contribution from the \(n-1\) rare events.  A short calculation shows
\[
  H(P_i) = -\bigl(1 - \alpha - (n-1)\delta\bigr)\log\bigl(1 - \alpha - (n-1)\delta\bigr) - \alpha\log\alpha + O(\delta \log\tfrac{1}{\delta}).
\]
Using \(\alpha = \varepsilon\) and \(\delta = \varepsilon^{n+1}\), we have
\(
  H(P_i) = \Theta\bigl(\varepsilon\log\tfrac{1}{\varepsilon}\bigr).
\)
By contrast, the composition belief \(P\) puts probability \(1 - O(\varepsilon^n)\) on \(o_0\) and distributes the remaining mass evenly among \(n\) outcomes of order \(\varepsilon^n\).  Hence
\(
  H(P) = O\bigl(\varepsilon^n \log\tfrac{1}{\varepsilon}\bigr),
\)
which is of much smaller order than \(H(P_i)\).  Therefore, the entropy difference \(H(P_i) - H(P)\) scales like \(\varepsilon\log(1/\varepsilon)\).

It remains to bound \(D_{\mathrm{KL}}(P\|P_i)\).  Since \(P\) places mass \(1 - \theta\) on \(o_0\) with \(\theta = O(\varepsilon^n)\), we may write
\[
  D_{\mathrm{KL}}(P\|P_i) = (1 - \theta)\log\frac{1 - \theta}{1 - \alpha - (n-1)\delta} + \frac{\theta}{n}\log\frac{\theta}{n\alpha} + \frac{(n-1) \theta}{n}\log\frac{\theta}{n\delta}.
\]
Using expansions \(\log(1 - \theta) = -\theta + O(\theta^2)\) and noting that \(1 - \alpha - (n-1)\delta = 1 - \varepsilon + o(\varepsilon)\), one finds
\(
  D_{\mathrm{KL}}(P\|P_i) = O(\varepsilon).
\)
In particular, the KL divergence is of lower order than the entropy gap when \(\varepsilon\) is small. That is, recall that
\(
  \Delta_i = H(P_i) - H(P) - D_{\mathrm{KL}}(P\|P_i).
\)
Combining the estimates above shows that \(H(P_i) - H(P)\) is positive and dominates \(D_{\mathrm{KL}}(P\|P_i)\) for \(\varepsilon\) sufficiently small.  Hence \(\Delta_i > 0\) for all agents.  Selecting \(\varepsilon_0\) small enough completes the proof.
\end{proof}

The construction in Theorem\;\ref{thm:analytic} generalizes to arbitrary weights and larger outcome spaces, showing that diversity of possible outcomes enables mutually beneficial compositions.

\subsubsection{Generalization to Arbitrary Weights}

We now show that the same analytic construction for the $P_j$ yields unanimously beneficial compositions for any non‐degenerate choice of weights. We allow an \emph{arbitrary} weight vector
$\beta=(\beta_1,\dots,\beta_n)$ with
$\beta_i\ge0$ and $\sum_{i=1}^n\beta_i=1$. The construction is trivial if some $\beta_i=1$ and the rest $0$, thus we assume $\max_i\beta_i<1$.

\begin{theorem}[Unanimity for general weights]\label{thm:analytic-general}
There exists $\varepsilon_0>0$ depending only on $\beta$ and $n$ such
that for all $0<\varepsilon<\varepsilon_0$, \( \Delta_i > 0\) where $P$ is the logarithmic pool
\(
  P(o)=\frac{1}{Z}\prod_{j=1}^n P_j(o)^{\beta_j}.
\)
\end{theorem}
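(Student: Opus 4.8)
The plan is to keep the explicit distributions $P_i$ and welfare functions $W_i(o)=\log P_i(o)$ exactly as defined above, since they do not depend on $\beta$, and to recompute only the logarithmic pool $P$ and the resulting welfare gaps for a general weight vector. By Proposition~\ref{Welfare_Gap} we have $\Delta_i = H(P_i) - H(P) - \KL(P\|P_i)$, so it suffices to show that for each $i$ the term $H(P_i)$ dominates $H(P) + \KL(P\|P_i)$ as $\varepsilon\to 0$. The computation $H(P_i) = \varepsilon\log(1/\varepsilon) + O(\varepsilon) = \Theta(\varepsilon\log(1/\varepsilon))$ from the proof of Theorem~\ref{thm:analytic} is unchanged, so the real content is to bound $H(P)$ and $\KL(P\|P_i)$ by quantities of strictly smaller order, uniformly in $i$.

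First I would recompute the unnormalized pool masses $R(o)=\prod_{j=1}^n P_j(o)^{\beta_j}$. Because $\sum_j\beta_j=1$, the base outcome gives $R(o_0) = (1-\alpha-(n-1)\delta)^{1} = 1-\varepsilon+o(\varepsilon)$. For a private outcome $o_i$, only agent $i$ assigns $\alpha=\varepsilon$ while the remaining $n-1$ agents assign $\delta=\varepsilon^{n+1}$, so $R(o_i) = \alpha^{\beta_i}\delta^{1-\beta_i} = \varepsilon^{(n+1)-n\beta_i}$. The single place the hypothesis $\max_j\beta_j<1$ is used is the observation that $(n+1)-n\beta_i \ge 1+\gamma$ with $\gamma:=n(1-\max_j\beta_j)>0$, hence every private mass satisfies $R(o_i)\le \varepsilon^{1+\gamma} = o(\varepsilon)$. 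Normalizing by $Z = R(o_0)+\sum_{i}R(o_i) = 1-\varepsilon+o(\varepsilon)$ then yields $P(o_0) = 1-O(\varepsilon^{1+\gamma})$ and $P(o_i) = \Theta(\varepsilon^{(n+1)-n\beta_i}) = O(\varepsilon^{1+\gamma})$, with all implied constants depending only on $n$ and $\beta$.

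Next I would feed these estimates into the two quantities. The pooled entropy $H(P) = -P(o_0)\log P(o_0)-\sum_{i}P(o_i)\log P(o_i)$ is a sum of $n+1$ terms each of size $O(\varepsilon^{1+\gamma}\log(1/\varepsilon))$, so $H(P) = O(\varepsilon^{1+\gamma}\log(1/\varepsilon)) = o(\varepsilon\log(1/\varepsilon))$. For $\KL(P\|P_i) = \sum_o P(o)\log\frac{P(o)}{P_i(o)}$, the $o_0$ term equals $P(o_0)\log\frac{P(o_0)}{1-\varepsilon-(n-1)\delta} = (1-o(\varepsilon))(\varepsilon+o(\varepsilon)) = \varepsilon+o(\varepsilon)$; the agent's own term $P(o_i)\log\frac{P(o_i)}{\alpha}$ and every remaining term $P(o_j)\log\frac{P(o_j)}{\delta}$ are $O(\varepsilon^{1+\gamma})\cdot O(\log(1/\varepsilon))$, using $P(o_j)/\delta = \varepsilon^{-n\beta_j}$. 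Thus $\KL(P\|P_i) = \varepsilon + o(\varepsilon) = o(\varepsilon\log(1/\varepsilon))$. Substituting into Proposition~\ref{Welfare_Gap} gives $\Delta_i = \varepsilon\log(1/\varepsilon) + o(\varepsilon\log(1/\varepsilon))$, which is strictly positive once $\varepsilon<\varepsilon_0$, where $\varepsilon_0$ can be taken to depend only on $\beta$ and $n$ because every estimate above is governed by the single parameter $\gamma = n(1-\max_j\beta_j)$ and is uniform in $i$.

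The step I expect to be the main obstacle is making the threshold $\varepsilon_0$ honestly uniform in $i$: when some $\beta_i$ is near $\max_j\beta_j\approx 1$, the exponent $(n+1)-n\beta_i$ is only slightly above $1$, so $\gamma$ is small and the margin $\varepsilon^{\gamma}\log(1/\varepsilon)$ between the leading $\varepsilon\log(1/\varepsilon)$ term and the error terms closes slowly; one must verify it still closes and extract an explicit $\varepsilon_0(\gamma,n)$. A secondary care point is that the $O(\varepsilon)$ remainder of $H(P_i)$ and the leading $\varepsilon$ of $\KL(P\|P_i)$ are of equal order, so one should note either their (in fact exact) cancellation or simply that both are dominated by $\varepsilon\log(1/\varepsilon)$ — in either case the sign of $\Delta_i$ is dictated by the private-outcome entropy of $P_i$, exactly as in the uniform-weight case.
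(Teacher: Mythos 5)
Your proof is correct and follows essentially the same route as the paper's: both reduce to Proposition~\ref{Welfare_Gap}, compute $R(o_0)=1-\varepsilon+o(\varepsilon)$ and $R(o_i)=\varepsilon^{(n+1)-n\beta_i}$, and control $H(P)$ and $\KL(P\|P_i)$ via the exponent gap; your $\gamma = n(1-\max_j\beta_j)$ is exactly the paper's $c_{\min}-1$, and the uniformity-in-$i$ concern you flag is resolved the same way, with all $O/\Theta$ constants governed by that single quantity.
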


\begin{proof}
Write 
\(
  R(o)=\prod_{j=1}^n P_j(o)^{\beta_j}
\)
for the unnormalized weight and
\(
  Z=\sum_{o\in\mathcal{O}}R(o)
\)
for the partition function.  Throughout, we have $c_{\min}:=\min_i\bigl((n+1)-n\beta_i\bigr)>1$ because $\max_i\beta_i<1$. At the shared outcome $o_0$, every agent contributes
$1-\alpha-(n-1)\delta=1-\varepsilon+O(\varepsilon^{\,n+1})$, so
\[
  R(o_0)
  =\exp\Bigl(\sum_{j=1}^n\beta_j\log\bigl(1-\varepsilon+O(\varepsilon^{\,n+1})\bigr)\Bigr)
  =1-\varepsilon+O(\varepsilon^{\,n+1}),
\]
because $\sum_j\beta_j=1$. For a private outcome $o_i$, the $i$-th agent contributes $\alpha=\varepsilon$
and each of the other $n-1$ agents contributes $\delta=\varepsilon^{\,n+1}$. We therefore have
\[
  R(o_i)=\varepsilon^{\beta_i}\,(\varepsilon^{\,n+1})^{1-\beta_i}
        =\varepsilon^{\beta_i+(n+1)(1-\beta_i)}
        =\varepsilon^{\,c_i},
  \qquad
  c_i:=(n+1)-n\beta_i \;>\;1.
\]
Since $R(o_i)=\varepsilon^{c_i}=o(\varepsilon)$, the partition function is
\[
  Z=R(o_0)+\sum_{i=1}^n R(o_i)
    =1-\varepsilon+\sum_{i=1}^n\varepsilon^{c_i}+O(\varepsilon^{\,n+1}).
\]
Dividing by $Z$ gives
\[
  P(o_0)=\frac{1-\varepsilon+O(\varepsilon^{\,n+1})}
                    {1-\varepsilon+\sum_{i}\varepsilon^{c_i}+O(\varepsilon^{\,n+1})}
         =1-\Theta\bigl(\varepsilon^{c_{\min}}\bigr),
\]
because the numerator and denominator differ only by
$\Theta(\varepsilon^{c_{\min}})$, and $c_{\min}>1$.  Likewise,
\[
  P(o_i)=\frac{\varepsilon^{c_i}}{1-\varepsilon+o(1)}
         =\Theta\bigl(\varepsilon^{c_i}\bigr).
\]
For agent $i$, only $o_0$ and $o_i$ carry mass larger than
$\delta=\varepsilon^{\,n+1}$, so as $\varepsilon \to 0^+$,
\begin{align*}
  H(P_i)
  &=-(1-\varepsilon)\log(1-\varepsilon)-\varepsilon\log\varepsilon
    - (n-1)\delta\log\delta\\
  &=\varepsilon\log\tfrac1\varepsilon
     +O(\varepsilon)
     +O\bigl((n^2-1)\varepsilon^{\,n+1}\log\tfrac1\varepsilon\bigr)\\
     &=\Theta\bigl(\varepsilon\log\tfrac1\varepsilon\bigr) > 0.
\end{align*}
The leakage mass is
\(\theta:=1-P(o_0)=\Theta(\varepsilon^{c_{\min}})\).
It is split across at most $n$ outcomes, each of size
$\Theta(\varepsilon^{c_i})$.  The entropy is
\begin{align*}
  H(P)
  &=-P(o_0)\log P(o_0) - \sum_{i=1}^n P(o_i)\log P(o_i)\\
  & = -(1-\theta)\log(1-\theta) - \sum_{i=1}^n P(o_i)\log P(o_i).
\end{align*}
For the $o_0$ term, we have
\[
  -(1-\theta)\log(1-\theta) = \theta + O(\theta^2) = O(\theta).
\]
For the leakage terms, each $P(o_i) = \Theta(\varepsilon^{c_i})$ with $c_i\ge c_{\min}$, so we may write asymptotically
\[
  P(o_i) = K_i\,\varepsilon^{c_i}
\]
for some constant $K_i>0$ independent of $\varepsilon$. Then
\[
  \log P(o_i) = \log K_i + c_i\log\varepsilon
              = \log K_i - c_i\log\tfrac{1}{\varepsilon}.
\]
Therefore
\begin{align*}
  -P(o_i)\log P(o_i)
    &= -K_i\,\varepsilon^{c_i}\Big( \log K_i - c_i\log\tfrac{1}{\varepsilon} \Big)\\
    &= c_i\,K_i\,\varepsilon^{c_i}\log\tfrac{1}{\varepsilon}
       \;-\; K_i\,\varepsilon^{c_i}\log K_i.
\end{align*}
As $\varepsilon\to 0^+$, the constant term $K_i\,\varepsilon^{c_i}\log K_i$ is dominated by the 
$\varepsilon^{c_i}\log\tfrac{1}{\varepsilon}$ term, so
\[
  -P(o_i)\log P(o_i) = \Theta\bigl(\varepsilon^{c_i}\log\tfrac{1}{\varepsilon}\bigr).
\]
Summing over at most $n$ such terms yields
\[
  \sum_{i=1}^n -P(o_i)\log P(o_i) = \Theta(\varepsilon^{c_{\min}}\log\tfrac1\varepsilon).
\]
Combining the above gives that
\[
  H(P) = O(\theta) + \Theta(\varepsilon^{c_{\min}}\log\tfrac1\varepsilon) 
       = \Theta(\varepsilon^{c_{\min}}\log\tfrac1\varepsilon),
\]
and since $c_{\min}>1$, this term decays at the order of $o(\varepsilon\log\tfrac1\varepsilon)$. Now, write $p_0:=P(o_0)=1-\theta$ and $p_i:=P(o_i)=\Theta(\varepsilon^{c_i})$.
Then
\[
D_{\mathrm{KL}}(P\|P_i)
  =p_0\log\frac{p_0}{P_i(o_0)}+p_i\log\frac{p_i}{P_i(o_i)}
     +\sum_{o\notin\{o_0,o_i\}}P(o)\log\frac{P(o)}{\delta}.
\]
We control the first term by noting $P_i(o_0) = 1-\varepsilon+O(\varepsilon^{n+1})$. Using $\log(1-\theta) = -\theta + O(\theta^2)$ and the fact that both $p_0$ and $P_i(o_0)$ are close to 1, we have
\[
  p_0\log\frac{p_0}{P_i(o_0)}
   = p_0\bigl[ \log(1-\theta) - \log(1-\varepsilon+O(\varepsilon^{n+1})) \bigr]
   =O(\varepsilon).
\]
For the second term, we have  $P_i(o_i)=\varepsilon$ which gives
\[
  p_i\log\frac{p_i}{P_i(o_i)}
   = \Theta(\varepsilon^{c_i})\log\frac{\Theta(\varepsilon^{c_i})}{\varepsilon}
   = \Theta(\varepsilon^{c_i} \log \varepsilon^{c_i-1})
   = \Theta(\varepsilon^{c_i} \log\tfrac1\varepsilon).
\]
Here, we used that $c_i-1 > 0$ is a constant. For the third term, there are at most $n-1$ outcomes with $P(o)=\Theta(\varepsilon^{c_k})$, and for each such $o$, $P_i(o)=\delta=\varepsilon^{n+1}$. Thus
\[
  P(o)\log\frac{P(o)}{\delta}
   = \Theta(\varepsilon^{c_k})\log\frac{\Theta(\varepsilon^{c_k})}{\varepsilon^{n+1}}
   = \Theta(\varepsilon^{c_k} \log\tfrac1\varepsilon).
\]
Since $c_k\ge c_{\min}>1$, all contributions are $O(\varepsilon^{c_{\min}}\log\tfrac1\varepsilon)$. Adding the three terms gives
\[
  D_{\mathrm{KL}}(P\|P_i)
  = O(\varepsilon) + O\big(\varepsilon^{c_{\min}}\log\tfrac1\varepsilon\big).
\]
Using $\Delta_i=H(P_i)-H(P)-D_{\mathrm{KL}}(P\|P_i)$ from Proposition~\ref{Welfare_Gap},
\[
  \Delta_i
  = \underbrace{\Theta\big(\varepsilon\log\tfrac1\varepsilon\big)}_{H(P_i)}
    \;-\;\underbrace{o\big(\varepsilon\log\tfrac1\varepsilon\big)}_{H(P)}
    \;-\;\underbrace{\Big(O(\varepsilon)+O\big(\varepsilon^{c_{\min}}\log\tfrac1\varepsilon\big)\Big)}_{D_{\mathrm{KL}}}
  = \varepsilon\log\tfrac1\varepsilon\,\Big(\Theta(1)-o(1)\Big) - O(\varepsilon).
\]
Because $c_{\min}>1$, the negative term is of higher order in
$\varepsilon$; hence $\Delta_i>0$ for all $i$ when
$0<\varepsilon<\varepsilon_0(\beta)$ with $\varepsilon_0$ chosen small
enough. In other words, since $\varepsilon\log(1/\varepsilon)\gg \varepsilon$ as $\varepsilon\to 0^+$, there exists $\varepsilon_0>0$ (depending only on $\beta,n$ through the constants above) such that $\Delta_i>0$ for all $i$ whenever $0<\varepsilon<\varepsilon_0$.
\end{proof}

\subsection{Impossibility of Strictly Unanimous Benefit under Linear Pooling}

In this section we show that if each agent’s welfare is defined by the log‐score of its \emph{own} belief and the group aggregates beliefs via the \emph{linear} opinion pool, then no strictly unanimously beneficial composition can exist.

\begin{theorem}[Impossibility under linear opinion pool]
\label{thm:linear‐impossibility}
Let \(n\ge2\) be the number of agents.  For each \(i=1,\dots,n\), let
\[
P_i:\mathcal O\to[0,1],\quad \sum_{o\in\mathcal O}P_i(o)=1,
\]
be distinct probability distributions, and let weights
\(\beta_i>0\) satisfy \(\sum_{i=1}^n\beta_i=1\). 
Suppose each agent’s welfare function is the log‐score of its own belief,
\(
W_i(o)\;=\;\log P_i(o).
\)
Then, it is impossible to have
\[
\E_{P}[W_i]\;\ge\;\E_{P_i}[W_i]
\quad\text{for all }i,
\]
with \emph{strict} inequality for at least one \(i\).  In other words, no strictly unanimously beneficial composition exists.
\end{theorem}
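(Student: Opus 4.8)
The plan is to aggregate the individual welfare gaps $\Delta_i := \E_P[\log P_i] - \E_{P_i}[\log P_i]$ against the pooling weights $\beta_i$ and show that this $\beta$-weighted aggregate is always $\le 0$, vanishing only in the degenerate case $P_1=\cdots=P_n=P$. Since any strictly unanimous improvement would force the aggregate to be strictly positive, this delivers the impossibility at once.

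First I would invoke the welfare-gap identity of Proposition~\ref{Welfare_Gap}, observing that its proof nowhere uses the specific (logarithmic-pool) form of $P$: it holds for \emph{any} distribution $P$ on $\mathcal O$. Hence, taking $P$ to now be the linear pool $P=\sum_{i=1}^n\beta_iP_i$, we still have $\Delta_i = H(P_i)-H(P)-\KL(P\|P_i)$ for each $i$. Second, I would record the standard mixture-entropy decomposition: for $P=\sum_i\beta_iP_i$,
\[
H(P)\;=\;\sum_{i=1}^n\beta_iH(P_i)\;+\;\sum_{i=1}^n\beta_i\KL(P_i\|P),
\]
which follows by expanding $\sum_i\beta_i\KL(P_i\|P)=\sum_i\beta_i\sum_oP_i(o)\log P_i(o)-\sum_oP(o)\log P(o)$ and recognizing the two pieces as $-\sum_i\beta_iH(P_i)$ and $H(P)$ respectively (the swap $\sum_i\beta_i\sum_oP_i(o)\log P(o)=\sum_oP(o)\log P(o)$ is exactly where linearity of the pool enters).

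Third, combining the two displays and using $\sum_i\beta_i=1$,
\[
\sum_{i=1}^n\beta_i\Delta_i\;=\;\sum_i\beta_iH(P_i)-H(P)-\sum_i\beta_i\KL(P\|P_i)\;=\;-\sum_{i=1}^n\beta_i\bigl(\KL(P_i\|P)+\KL(P\|P_i)\bigr)\;\le\;0,
\]
by nonnegativity of KL divergence. Now suppose, for contradiction, that $\Delta_i\ge 0$ for all $i$ and $\Delta_k>0$ for some $k$. Since every $\beta_i>0$, this gives $\sum_i\beta_i\Delta_i\ge\beta_k\Delta_k>0$, contradicting the inequality above; hence no strictly unanimously beneficial linear composition exists. (For the sharper structural reading: equality $\sum_i\beta_i\Delta_i=0$ forces $\KL(P\|P_i)=0$, i.e.\ $P_i=P$, for every $i$, so all the $P_i$ coincide, which is excluded by distinctness.)

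\textbf{Main obstacle.} There is no real technical obstacle here; the only care required is (i) checking that the welfare-gap identity is genuinely stated and proved for an arbitrary $P$, so it transfers verbatim to the linear pool, and (ii) keeping the signs straight in the mixture-entropy decomposition so that the $H(P_i)$ terms cancel and only the (nonnegative) symmetrized KL terms survive. The one conceptual move is to test unanimity against the $\beta$-weighted aggregate $\sum_i\beta_i\Delta_i$ rather than against any single $\Delta_i$.
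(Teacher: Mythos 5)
Your proof is correct and follows essentially the same route as the paper: both invoke the welfare-gap identity $\Delta_i = H(P_i) - H(P) - \KL(P\|P_i)$ and test unanimity against the $\beta$-weighted aggregate $\sum_i\beta_i\Delta_i$, concluding it is nonpositive. The only (minor) difference is how the entropy deficit is established: the paper appeals directly to strict concavity of $H$ to get $\sum_i\beta_i H(P_i) - H(P) < 0$ when the $P_i$ are not all equal, and then drops the nonnegative $\KL(P\|P_i)$ terms; you instead derive the mixture-entropy identity $H(P) - \sum_i\beta_i H(P_i) = \sum_i\beta_i\KL(P_i\|P)$, which turns the Jensen gap into an explicit KL quantity and yields the clean closed form $\sum_i\beta_i\Delta_i = -\sum_i\beta_i\bigl(\KL(P_i\|P)+\KL(P\|P_i)\bigr)$. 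This buys you a slightly sharper structural statement (the aggregate welfare deficit equals the $\beta$-averaged symmetrized KL from each agent to the pool, so equality forces $P_i=P$ for every $i$), whereas the paper handles the degenerate all-identical case by a separate remark. Both are sound; yours is a modest tightening of the same argument rather than a genuinely different approach.
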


\begin{proof}
Recall proposition~\ref{Welfare_Gap}. Assume for the sake of contradiction, that \(\Delta_i\ge0\) for every \(i\).  Then also
\(\sum_{i=1}^n \beta_i\,\Delta_i \ge 0\). However, we have that
\[
\begin{aligned}
\sum_{i=1}^n \beta_i\,\Delta_i
&= \sum_{i=1}^n \beta_i\bigl[H(P_i)-H(P)-D_{\mathrm{KL}}(P\|P_i)\bigr]\\
&= \Bigl(\sum_{i=1}^n \beta_i\,H(P_i)\Bigr)
  \;-\;H(P)
  \;-\;\sum_{i=1}^n \beta_i\,D_{\mathrm{KL}}\bigl(P\|P_i\bigr).
\end{aligned}
\]
Because \(H\) is strictly concave on the simplex,
\[
  H(P)
  = H\Bigl(\sum_{i=1}^n \beta_i\,P_i\Bigr)
  \;>\;
  \sum_{i=1}^n \beta_i\,H(P_i)
\]
whenever the \(P_i\) are not all identical.  Hence
\(\sum_i\beta_i\,H(P_i) - H(P) < 0\). Each term \(D_{\mathrm{KL}}(P\|P_i)\ge0\), so this gives that 
\[
\sum_{i=1}^n \beta_i\,\Delta_i
\;=\;
\bigl[\sum_i\beta_i\,H(P_i) - H(P)\bigr]
\;-\;\sum_i\beta_i\,D_{\mathrm{KL}}(P\|P_i)
\;<\;0.
\]
This contradicts \(\sum_i\beta_i\,\Delta_i\ge0\).  Therefore our assumption
that \(\Delta_i\ge0\) for all \(i\) must be false: at least one \(\Delta_i<0\). We note that in the degenerate case \(P_1=\cdots=P_n\), we have \(P=P_i\) for every \(i\) which gives
\(H(P)=H(P_i)\) and \(D_{\mathrm{KL}}(P\|P_i)=0\).  Hence each
\(\Delta_i=0\).  No agent strictly gains, and the composition is
\emph{not} strictly beneficial.
\end{proof}
Therefore, a linear pooling does not admit a strictly compositional group under the epistemic utility. However, a logarithmic pool does allow for a strictly unanimous compositional group. 

\section{Recursing on Compositional Agents for Scale-Free Theory}\label{Appendix:First_Recursion}

\paragraph{Recursion on Agents and Subagent Decomposition.}  

A natural question is whether the compositional property is preserved under recursive decomposition. Suppose a compositional agent is split into a fixed number of \emph{subagents}, each representing a distinct component of its preferences (e.g., an individual’s drives to eat, sleep, or play). Does compositionality of the parent agent imply that each subagent is also compositional with respect to the same group?  

In general, the answer is negative: decomposition can destroy compositionality. An individual may align with a group overall, yet certain subagents--such as the drive to eat--may diverge from the group’s objectives (e.g., productivity), even while remaining consistent with the parent’s broader goals.  

From a scale-free perspective, a fully recursive theory would require that an agent be decomposable into an arbitrary number of subagents, while preserving both (i) recovery of the original agent via log-pooling and (ii) preservation of the unanimously compositional structure. We will show that property (ii) does not hold in general: recursive decomposition can eliminate unanimity. This raises a natural direction for future work: characterizing conditions under which subagents \(P_1,\dots,P_n\) can be constructed so that their aggregation remains unanimously compositional, even when the subagents are not known \emph{a priori}.

In Theorem~\ref{thm:logpool}, we formalize the property that an agent can be decomposed into a fixed number of subagents, where this number may be arbitrarily large. Conceptually, one might think of a human being as possessing a seemingly unbounded set of desires or goals--some overlapping, others highly correlated. Our framework is designed to accommodate such decompositions. For example, an individual’s epistemic state might be represented as the aggregation of subagents corresponding to desires for nourishment, social interaction, intellectual stimulation, and so forth, each contributing to the overall belief distribution.

Building upon Theorem~\ref{thm:logpool}, we extend the analysis in Theorem~\ref{thm:pairwise} to the case where the subagents are pairwise distinct, representing genuinely different desires. This constraint captures scenarios in which each subagent embodies a unique epistemic or utility perspective, ensuring that the decomposition meaningfully differentiates between components of the agent’s overall decision-making process.

\begin{theorem}[Non-trivial log–pool decomposition always exists when $P$ is strictly positive]
\label{thm:logpool}
Let $\mathcal{O}$ be a finite set, let $P$ be a probability distribution on $\mathcal{O}$ such that
$P(o)>0$ for every $o\in\mathcal{O}$, and fix weights 
$\beta_1,\dots,\beta_n\ge 0$ with $\sum_{i=1}^n\beta_i=1$ and
at least two $\beta_i$ strictly positive.
Then there exist distributions $P_1,\dots,P_n$ on $\mathcal{O}$ distinct from $P$ satisfying
\[
   P(o)=\frac{1}{Z}\prod_{i=1}^n P_i(o)^{\beta_i}, 
   \qquad Z=\sum_{o\in\mathcal{O}}\prod_{i=1}^n P_i(o)^{\beta_i}.
\]
\end{theorem}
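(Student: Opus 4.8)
The plan is to work in log-space and realize each $P_i$ as a multiplicative \emph{tilt} of $P$, with tilt exponents chosen so that the $\beta$-weighted sum of the log-tilts cancels identically. Writing $\ell_i(o)=\log P_i(o)$ and $L(o)=\log P(o)$ (finite since $P>0$), the log-pool identity $P(o)=Z^{-1}\prod_i P_i(o)^{\beta_i}$ is equivalent to the affine requirement $\sum_{i=1}^n\beta_i\,\ell_i(o)=L(o)+c$ for some constant $c$ independent of $o$ (indeed $c=\log Z$); since each $P_i$ we construct will be a genuine probability distribution, no further normalization needs checking, and the requirement $P_i\neq P$ just amounts to $\ell_i-L$ being non-constant.

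First I would fix a non-constant function $h:\mathcal O\to\mathbb R$, which exists precisely because $|\mathcal O|\ge 2$ (the only case in which the conclusion can hold, since otherwise $P$ is the unique distribution on $\mathcal O$). For a real parameter $a$ define the tilted distribution
\[
P^{(a)}(o)\ :=\ \frac{P(o)\,e^{a h(o)}}{\sum_{o'\in\mathcal O}P(o')\,e^{a h(o')}},
\]
which is strictly positive, equals $P$ when $a=0$, and is distinct from $P$ whenever $a\neq 0$ because $h$ is non-constant. Taking logarithms, $\log P^{(a)}(o)=L(o)+a\,h(o)-\kappa(a)$ with $\kappa(a)$ independent of $o$.

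Next I would select the exponents. Put $S=\{i:\beta_i>0\}$, so $|S|\ge 2$ by hypothesis, and set $P_i:=P^{(a_i)}$ for $i\in S$; for $i\notin S$ take $P_i$ to be any fixed distribution $\neq P$ (e.g.\ $P^{(1)}$), noting these indices contribute $P_i^{\beta_i}=1$ to the pool. It remains to choose $(a_i)_{i\in S}$ with every $a_i\neq 0$ and $\sum_{i\in S}\beta_i a_i=0$; explicitly, fix a distinguished index $i_\star\in S$, put $a_i=1$ for $i\in S\setminus\{i_\star\}$, and put $a_{i_\star}=-(1-\beta_{i_\star})/\beta_{i_\star}$, which is nonzero because $|S|\ge 2$ together with $\beta_i>0$ on $S$ forces $\beta_{i_\star}<1$. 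Then, using $\sum_{i\in S}\beta_i=1$ and $\sum_{i\in S}\beta_i a_i=0$,
\[
\sum_{i=1}^n\beta_i\log P_i\ =\ \sum_{i\in S}\beta_i\bigl(L+a_i h-\kappa(a_i)\bigr)\ =\ L\ -\ \sum_{i\in S}\beta_i\kappa(a_i),
\]
so exponentiating and renormalizing recovers $P$ exactly, while each $P_i\neq P$ since every $a_i\neq 0$.

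I do not expect a genuine obstacle: the construction is completely explicit. The only points meriting attention are the existence of a coefficient vector $(a_i)_{i\in S}$ lying on the hyperplane $\sum_{i\in S}\beta_i a_i=0$ but off every coordinate hyperplane — dispatched by the closed-form choice above, which is exactly where $|S|\ge 2$ is used — together with the harmless degenerate case $|\mathcal O|=1$. In contrast to the analytic $\varepsilon$-constructions of the preceding appendices, no asymptotic estimates enter here.
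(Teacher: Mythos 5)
Your proof is correct and is essentially the paper's construction viewed through a change of variables: the paper picks an arbitrary reference $Q\neq P$ and absorbs the constraint into a single agent $P_1\propto P^{1/\beta_1}Q^{-\gamma/\beta_1}$ while setting $P_i=Q$ for $i\ge 2$, which corresponds precisely to your choice $Q=P^{(1)}$, $i_\star=1$, $a_{i_\star}=-\gamma/\beta_1$, $a_i=1$ otherwise. Your phrasing via a one-parameter exponential-tilt family and the hyperplane condition $\sum_{i\in S}\beta_i a_i=0$ makes the balance requirement and the role of $|S|\ge 2$ more transparent, but the underlying idea and the resulting distributions coincide.
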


\begin{proof}
Pick any reference distribution $Q$ with the \emph{same support} as $P$ and $Q\neq P$
(e.g.\ the uniform distribution on~$\mathcal{O}$).  
Let $\gamma:=\sum_{i=2}^n\beta_i$ and assume $\beta_1>0$;  the argument is symmetric if a
different index carries the complementary weight.

Define
\[
   \tilde P_1(o)
   \;=\;
   \frac{\,
       P(o)^{\;1/\beta_1}\;
       Q(o)^{-\gamma/\beta_1}
     }{%
       \displaystyle 
       \sum_{o'\in\mathcal{O}}
       P(o')^{\;1/\beta_1}\,
       Q(o')^{-\gamma/\beta_1}
     } ,
   \qquad 
   P_i := Q \;(i=2,\dots,n).
\]
Each $\tilde P_1$ is well-defined and strictly positive because
$P$ and $Q$ are strictly positive.
Let $Z_1$ denote the denominator in $\tilde P_1$ and set $P_1=\tilde P_1$.
Then for every outcome~$o$
\begin{align*}
   \sum_{i=1}^n\beta_i\log P_i(o)
   &= 
   \beta_1\Bigl[\tfrac{1}{\beta_1}\log P(o)-\tfrac{\gamma}{\beta_1}\log Q(o)-\log Z_1\Bigr]
     +\gamma\log Q(o) \\[2pt]
   &= 
   \log P(o)-\beta_1\log Z_1
   \;=\;
   \log P(o)+C ,
\end{align*}
where $C:=-\beta_1\log Z_1$ is a constant \emph{independent of~$o$}.  Exponentiating and
normalizing by $Z:=e^{-C}$ gives the desired identity.  Because $P_1$ was built from both
$P$ and $Q$, it differs from each, and we have $P\neq P_i$ for all~$i$.
\end{proof}

\begin{theorem}[Log–pool factorization with pairwise-distinct components]\label{thm:pairwise}
Let $\mathcal{O}$ be a finite set and let $P$ be a probability distribution on $\mathcal{O}$ with $P(o)>0$ for all $o\in\mathcal{O}$. Fix weights $\beta_1,\dots,\beta_n\ge 0$ with $\sum_{i=1}^n \beta_i=1$, and assume at least two $\beta_i$ are strictly positive. Then there exist probability distributions $P_1,\dots,P_n$ on $\mathcal{O}$ such that
\[
P(o)=\frac{1}{Z}\prod_{i=1}^n P_i(o)^{\beta_i}\quad\text{for all }o\in\mathcal{O},\qquad
Z=\sum_{o\in\mathcal{O}}\prod_{i=1}^n P_i(o)^{\beta_i},
\]
with the additional properties that $P\neq P_i$ for every $i$ and $P_i\neq P_j$ whenever $i\neq j$.
\end{theorem}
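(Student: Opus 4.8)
The plan is to bootstrap from Theorem~\ref{thm:logpool}, which already supplies a log--pool decomposition $P\propto\prod_i P_i^{\beta_i}$ with $P\neq P_i$ for every $i$. In that construction the only obstruction to pairwise distinctness is that the ``padding'' factors $P_2,\dots,P_n$ were all set equal to one reference distribution $Q$. So it suffices to perturb the witnessing tuple slightly, along a direction that leaves the pooled distribution \emph{exactly} unchanged, chosen so as to break all coincidences. (If $|\mathcal O|=1$ the statement is vacuous, since no two distinct distributions exist; so assume $|\mathcal O|\ge 2$.)

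Concretely, fix two distinct outcomes $a,b\in\mathcal O$ and work in log space. For scalars $t_1,\dots,t_n$ put $g_i(o):=t_i\big(\mathbf 1\{o=a\}-\mathbf 1\{o=b\}\big)$ and define the tilted agents $P_i'(o):=P_i(o)\,e^{g_i(o)}/Z_i'$, each automatically strictly positive and normalized. Since $\sum_i\beta_i\log P_i'(o)=\sum_i\beta_i\log P_i(o)+\big(\sum_i\beta_i t_i\big)\big(\mathbf 1\{o=a\}-\mathbf 1\{o=b\}\big)+\mathrm{const}$, the log--pool of $(P_i')$ coincides with that of $(P_i)$, namely $P$, as soon as we impose the single linear constraint $\sum_i\beta_i t_i=0$. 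Because at least two $\beta_i$ are strictly positive, the solution set of this equation is a subspace of $\mathbb R^n$ of dimension $n-1\ge 1$, not contained in any of the finitely many coordinate hyperplanes $\{t_i=t_j\}$ or $\{t_i=0\}$; hence we may choose the $t_i$ pairwise distinct (and nonzero) while still satisfying it, with the indices $i$ having $\beta_i=0$ entirely unconstrained.

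It remains to verify distinctness, and here continuity does the work. For a pair $i\neq j$ with $P_i=P_j$ in the base tuple (the copies of $Q$), one has $P_i'=P_j'$ iff $g_i-g_j=(t_i-t_j)\big(\mathbf 1\{o=a\}-\mathbf 1\{o=b\}\big)$ is constant on $\mathcal O$, which fails since $t_i\neq t_j$ and $\mathbf 1\{o=a\}-\mathbf 1\{o=b\}$ is non--constant when $|\mathcal O|\ge 2$; thus $P_i'\neq P_j'$. The remaining finitely many requirements---$P_i'\neq P_j'$ for pairs whose base entries already differ, and $P_i'\neq P$ for every $i$---hold with strict inequalities for the base tuple, and $P_i'\to P_i$ as $\max_i|t_i|\to 0$, so they persist once the perturbation is small enough; scaling all $t_i$ by a common small factor preserves both the constraint $\sum_i\beta_i t_i=0$ and their mutual distinctness, so one sufficiently small choice works simultaneously. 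The main obstacle is precisely the tension between keeping the pooling identity exact and making the components genuinely distinct; it is resolved by the observation that, because $\sum_i\beta_i=1$ and the tilt is taken along the fixed ``odd'' direction $\mathbf 1\{o=a\}-\mathbf 1\{o=b\}$, pool--preservation collapses to one scalar equation, leaving a positive--dimensional family of admissible perturbations from which a generic, small member achieves the desired separation.
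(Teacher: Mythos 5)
Your proof is correct, and it takes a genuinely different route from the paper's. The paper rebuilds the decomposition from scratch: it first picks pairwise-distinct auxiliary distributions $Q_2,\dots,Q_n$ (small exponential tilts of a common reference $R$, with distinct tilt strengths $\varepsilon_i$), defines $P_1$ to absorb the residual so that the pool recovers $P$, and then invokes genericity of the $\varepsilon_i$ to rule out the accidental coincidences $P_1=Q_j$ or $P_1=P$. You instead start from the decomposition already supplied by Theorem~\ref{thm:logpool} (in which $P_2=\cdots=P_n=Q$) and perturb \emph{all} components along a single fixed log-direction $\mathbf 1\{o=a\}-\mathbf 1\{o=b\}$, observing that exact pool preservation collapses to the one scalar constraint $\sum_i\beta_i t_i=0$; you then choose the tilt coefficients pairwise distinct on that hyperplane (which is possible since the hyperplane is not contained in any $\{t_i=t_j\}$ or $\{t_i=0\}$, by $\beta\ge 0$ with at least two positive entries) and shrink the whole vector $t$ to preserve, by continuity, the strict inequalities $P_i\neq P$ and $P_i\neq P_j$ already present in the base tuple. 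Your argument is more modular (it reuses the earlier theorem as a black box) and makes the genericity step fully explicit as a linear-algebra fact about a hyperplane not being covered by finitely many proper subspaces, whereas the paper's genericity claim for $P_1$ is stated somewhat loosely. One small caveat: the case $|\mathcal O|=1$ is not merely vacuous but a genuine degeneracy of the theorem's statement (no two distinct distributions exist, so the conclusion cannot hold); the standing assumption $|\mathcal O|\ge 2$ is needed, as you note, and is implicit throughout the paper.
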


\begin{proof}
Pick any $n-1$ \emph{pairwise distinct} strictly positive distributions $Q_2,\dots,Q_n$ on $\mathcal{O}$ such that none equals $P$; for instance, start from a positive reference $R\neq P$ and set
\[
Q_i(o)\;\propto\;R(o)\,e^{\varepsilon_i h_i(o)}\quad(o\in\mathcal{O}),
\]
where each $h_i:\mathcal{O}\to\mathbb{R}$ has at least two distinct values and $\sum_{o} R(o)h_i(o)=0$. We may choose small, distinct $\varepsilon_i\neq 0$ to ensure $Q_i$ are strictly positive and pairwise different.

Let $\gamma:=\sum_{i=2}^n \beta_i$ (which gives $\gamma>0$ by assumption). Define the weighted geometric mean
\[
Q_\star(o)\;:=\;\exp\Bigl(\frac{1}{\gamma}\sum_{i=2}^n \beta_i \log Q_i(o)\Bigr),
\qquad\text{so that}\quad \sum_{i=2}^n \beta_i \log Q_i=\gamma\log Q_\star.
\]
Now define $P_1$ by
\[
P_1(o)\;=\;\frac{P(o)^{1/\beta_1}\,Q_\star(o)^{-\gamma/\beta_1}}{\displaystyle\sum_{u\in\mathcal{O}} P(u)^{1/\beta_1}\,Q_\star(u)^{-\gamma/\beta_1}}\quad (o\in\mathcal{O}).
\]
Then, for every $o\in\mathcal{O}$,
\[
\sum_{i=1}^n \beta_i \log P_i(o)
= \beta_1\left(\tfrac{1}{\beta_1}\log P(o)-\tfrac{\gamma}{\beta_1}\log Q_\star(o)-\log Z_1\right)+\sum_{i=2}^n \beta_i \log Q_i(o)
= \log P(o)-\beta_1\log Z_1,
\]
where $Z_1$ is the normalizing constant in $P_1$. Exponentiating and renormalizing shows
$\prod_i P_i(o)^{\beta_i} = e^{-\beta_1\log Z_1}\,P(o)$, hence the claimed factorization holds with $Z=e^{-\beta_1\log Z_1}$.

It remains to ensure distinctness. By construction, $Q_2,\dots,Q_n$ are pairwise distinct and differ from $P$. If, for some $j\ge 2$, we had $P_1=Q_j$, then as the $Q_i$ were chosen with free continuous parameters $(\varepsilon_i)$, any generic small perturbation of the $\varepsilon_i$ keeps the construction valid while ensuring $P_1\neq Q_j$ for all $j\ge 2$. Similarly, we can ensure $P_1\neq P$, avoided by the same generic choice. Therefore we may choose $Q_2,\dots,Q_n$ so that $P_1$ is distinct from each $Q_j$ and from $P$, completing the proof.
\end{proof}

\begin{remark}
If some $\beta_i=0$, the corresponding $P_i$ can be \emph{arbitrary} and still pairwise distinct; the product is unaffected. The interesting case is that at least two $\beta_i>0$, which is assumed above; otherwise the constraint $P=\Pi P_i^{\beta_i}/Z$ would force $P=P_k$ for the unique $k$ with $\beta_k=1$.
\end{remark}

We also note that if $P(o^\star)=0$ for some $o^\star\in\mathcal{O}$ and all $\beta_i>0$, then no factorization with strictly positive $P_i$ is possible, because
\[
\prod_{i=1}^n P_i(o^\star)^{\beta_i}>0 \quad\Longrightarrow\quad P(o^\star)>0,
\]
a contradiction. A necessary condition is therefore
\[
\mathrm{supp}(P)=\bigcap_{\beta_i>0}\mathrm{supp}(P_i).
\]
Under this support-matching condition, one obtains an existence result analogous to Theorem~\ref{thm:pairwise} by working on the reduced outcome space $\mathrm{supp}(P)$. That is, we may choose pairwise distinct $Q_i$ supported on $\mathrm{supp}(P)$, define $Q_\star$ and $P_1$ as in the proof (which will also be supported on $\mathrm{supp}(P)$), and set $P_i=Q_i$ for $i\ge 2$. Points outside $\mathrm{supp}(P)$ must receive zero from at least one $P_i$ with $\beta_i>0$ so that the intersection of supports equals $\mathrm{supp}(P)$.

This construction holds for any non-trivial decomposition. For instance, suppose that we simply wish to decompose an agent into two distinct subagents. Given $P$ strictly positive and any $\beta_1,\beta_2>0$ with $\beta_1+\beta_2=1$, pick any $Q_2\neq P$ with full support and define
\[
P_1(o)\;=\;\frac{P(o)^{1/\beta_1}\,Q_2(o)^{-\beta_2/\beta_1}}{\displaystyle\sum_{u}P(u)^{1/\beta_1}\,Q_2(u)^{-\beta_2/\beta_1}}.
\]
Then $P=\frac{1}{Z}P_1^{\beta_1}Q_2^{\beta_2}$, and one can choose $Q_2$ so that $P_1\neq Q_2$ and $P_1\neq P$. For $n>2$, pick any pairwise-distinct $Q_2,\dots,Q_n$ (on the appropriate support) and absorb their combined effect via $P_1$ as in Theorem~\ref{thm:pairwise}; this yields a pairwise-distinct factorization for the original $(\beta_i)_{i=1}^n$.

Except for the trivial obstacle of mismatched supports, \emph{every} strictly positive discrete distribution admits a log–opinion-pool factorization with any prescribed nonnegative weights summing to one. Moreover, one can ensure that all components are pairwise distinct and each differs from $P$. When $P$ has zeros, existence is equivalent to the intersection-of-supports condition $\mathrm{supp}(P)=\bigcap_{\beta_i>0}\mathrm{supp}(P_i)$; under this condition a factorization again exists (with zeros placed accordingly).

\subsection{Defining Subagents a Priori}

In this subsection, we examine whether our framework supports another form of \emph{recursive modeling}. An agent may naturally be viewed as the aggregation of multiple underlying subagents, each corresponding to a particular desire or objective--for example, the pursuit of nourishment, rest, social connection, or intellectual achievement. Once a set of \( m \) such subagents has been specified, we ask: is it still possible to further decompose the agent into \( n \) subagents, where the original \( m \) appear as fixed components within the new decomposition? A \emph{scale-free} framework should allow such recursion, preserving the specified subagents while introducing additional ones.  

Theorem~\ref{thm:fixed-components} confirms that this property holds. For any fixed \( m \), there exist nonnegative weights \( \beta_i \ge 0 \) and additional subagents \( P_{m+1}, \dots, P_n \) such that the log-pool of all \( n \) subagents recovers the original agent \( P(o) \). Moreover, this construction can be made nontrivial: the weights \( \beta_i \) for \( i \in \{1,\dots,m\} \) may be taken to be strictly positive, ensuring that the predetermined subagents meaningfully contribute to the aggregate agent.

\begin{theorem}[Log--pool with some components fixed]\label{thm:fixed-components}
Let $\mathcal{O}$ be finite. Let $P$ be a distribution on $\mathcal{O}$ and let $P_1,\dots,P_m$ be \emph{given} distributions on $\mathcal{O}$. Fix any integer $n\ge m+2$. 
\begin{enumerate}
\item[\textnormal{(A)}] \textbf{Strictly positive case.} If $P(o)>0$ for all $o\in\mathcal{O}$ and each $P_i$ ($1\le i\le m$) is strictly positive, then for any set of weights $\beta_1,\dots,\beta_n\ge 0$ with $\sum_{i=1}^n \beta_i=1$, $\beta_{m+1}>0$, and $\beta_i >0$ for at least one $i \in \{1,\dots,m\}$, we have that there exist distributions $P_{m+1},\dots,P_n$ on $\mathcal{O}$ such that
\[
P(o)\;=\;\frac{1}{Z}\prod_{i=1}^n P_i(o)^{\beta_i}\qquad(o\in\mathcal{O}),
\]
with $Z=\sum_{u\in\mathcal{O}}\prod_{i=1}^n P_i(u)^{\beta_i}$.
Moreover, the construction can be arranged so that $P\neq P_i$ for all $i$, and the $P_i$ are pairwise distinct.

\item[\textnormal{(B)}] \textbf{Non-negative case.}
Suppose we require $\beta_i>0$ for all $1\le i\le m$. Then such a factorization exists if and only if
\begin{equation}\label{eq:support-condition}
\operatorname{supp}(P)\;\subseteq\;\bigcap_{i=1}^m \operatorname{supp}(P_i).
\end{equation}
When \eqref{eq:support-condition} holds, one can choose $P_{m+1},\dots,P_n$ supported exactly on $\operatorname{supp}(P)$ to ensure
\[
\operatorname{supp}(P)\;=\;\bigcap_{i:\,\beta_i>0}\operatorname{supp}(P_i),
\]
and then build the factorization as in part~\textnormal{(A)}. If \eqref{eq:support-condition} fails, no such factorization is possible with all $\beta_1,\dots,\beta_m>0$.
\end{enumerate}
\end{theorem}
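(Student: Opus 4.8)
The plan is to prove both parts by an \emph{absorption} argument in the spirit of the proofs of Theorem~\ref{thm:logpool} and Theorem~\ref{thm:pairwise}: hold the given components $P_1,\dots,P_m$ fixed, pick the last $n-m-1$ components to be harmless auxiliary distributions, and then solve a single scalar-weighted equation for $P_{m+1}$ (legitimate precisely because $\beta_{m+1}>0$) so that the resulting log-pool equals $P$ up to a multiplicative constant. The only genuinely delicate point will be the ``moreover'' clause—arranging that the $n$ components are pairwise distinct and all differ from $P$—which I would handle by a genericity/perturbation argument exactly as in Theorem~\ref{thm:pairwise}. The hypothesis $n\ge m+2$ is what makes this possible: it guarantees at least one auxiliary slot, hence at least one free parameter to perturb.

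\textbf{Part (A).} First I would fix strictly positive, pairwise-distinct distributions $Q_{m+2},\dots,Q_n$ on $\mathcal{O}$, none equal to $P$ or to any $P_i$ with $i\le m$ (possible since $\mathcal{O}$ is finite, so only finitely many simplex points must be avoided). Put
\[
T(o)\ :=\ \sum_{i=1}^m \beta_i\log P_i(o)\ +\ \sum_{i=m+2}^n \beta_i\log Q_i(o),
\]
a finite real number for each $o$ because every distribution involved is strictly positive on the finite set $\mathcal{O}$. Then set $P_i:=Q_i$ for $i=m+2,\dots,n$ and define
\[
P_{m+1}(o)\ :=\ \frac{P(o)^{1/\beta_{m+1}}\,e^{-T(o)/\beta_{m+1}}}{\sum_{u\in\mathcal{O}}P(u)^{1/\beta_{m+1}}\,e^{-T(u)/\beta_{m+1}}},
\]
which is a well-defined, strictly positive distribution. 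A one-line computation then gives $\sum_{i=1}^n\beta_i\log P_i(o)=\log P(o)+C$, where $C$ is $-\beta_{m+1}$ times the logarithm of the denominator above and is therefore independent of $o$; exponentiating and normalizing yields $P(o)=Z^{-1}\prod_{i=1}^n P_i(o)^{\beta_i}$ with $Z=e^{-C}$. For distinctness, the $Q_i$ are already pairwise distinct and distinct from $P$ and from $P_1,\dots,P_m$, so it only remains to separate $P_{m+1}$ from the finite set $\{P,P_1,\dots,P_m,Q_{m+2},\dots,Q_n\}$; since $P_{m+1}$ varies continuously with the parameters defining the auxiliary distributions, a generic arbitrarily small perturbation of those parameters achieves this while preserving the factorization identity, exactly as in the proof of Theorem~\ref{thm:pairwise}.

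\textbf{Part (B).} Necessity is the veto property of the log-pool: if $o^\star\in\operatorname{supp}(P)$ while $P_j(o^\star)=0$ for some $j\le m$, then $P_j(o^\star)^{\beta_j}=0$ (as $\beta_j>0$), so $\prod_i P_i(o^\star)^{\beta_i}=0$ and hence $P(o^\star)=0$, a contradiction; thus $\operatorname{supp}(P)\subseteq\bigcap_{i=1}^m\operatorname{supp}(P_i)$. For sufficiency, assume the inclusion and set $\mathcal{O}':=\operatorname{supp}(P)$. On $\mathcal{O}'$ every $P_i$ ($i\le m$) and $P$ are strictly positive, so I run the Part~(A) construction \emph{restricted to $\mathcal{O}'$} (choosing the auxiliary $Q_i$ strictly positive on $\mathcal{O}'$), producing $P_{m+1},\dots,P_n$ on $\mathcal{O}'$, and then extend each new component by $0$ on $\mathcal{O}\setminus\mathcal{O}'$. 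For $o\in\mathcal{O}'$ the Part~(A) identity gives $\prod_i P_i(o)^{\beta_i}=e^{-C}P(o)$, while for $o\notin\mathcal{O}'$ the factor $P_{m+1}(o)^{\beta_{m+1}}=0$ (using $\beta_{m+1}>0$) makes the product $0=e^{-C}P(o)$, so the factorization holds on all of $\mathcal{O}$. Finally $\operatorname{supp}(P_{m+1})=\mathcal{O}'$ and every other positively-weighted component ($P_1,\dots,P_m$, and any $P_j$ with $j\ge m+2$, which we likewise support on $\mathcal{O}'$) has support containing $\mathcal{O}'$, so $\bigcap_{i:\beta_i>0}\operatorname{supp}(P_i)=\mathcal{O}'=\operatorname{supp}(P)$, as required.

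The main obstacle is therefore not the factorization identity—which is a direct computation once one observes that $\beta_{m+1}>0$ lets $P_{m+1}$ absorb the required correction—but the distinctness/non-triviality clause, since $P_{m+1}$ is determined once the auxiliary distributions are fixed. I would resolve it, mirroring the corresponding step of Theorem~\ref{thm:pairwise}, by exploiting the continuous freedom in the auxiliary distributions (nonempty because $n\ge m+2$) together with a routine genericity argument; everything else reduces to bookkeeping about finiteness of $\mathcal{O}$ and strict positivity (on $\mathcal{O}$ in Part~(A), on $\operatorname{supp}(P)$ in Part~(B)), which keep every logarithm and normalizing sum well-defined.
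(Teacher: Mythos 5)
Your proposal is correct and follows essentially the same route as the paper's proof: hold $P_1,\dots,P_m$ fixed, pick auxiliary distributions for the slots $m+2,\dots,n$, and let $P_{m+1}$ (legitimate because $\beta_{m+1}>0$) absorb the residual so that the log-pool matches $P$ up to a constant. Your definition of $T(o)$ and the resulting $P_{m+1}$ are algebraically identical to the paper's $Q_\star$-based construction (the paper just packages $T$ as $(1-\beta_{m+1})\log Q_\star$), and your Part~(B) restriction-to-$\operatorname{supp}(P)$ argument, veto-property necessity, and genericity handling of the distinctness clause all match the paper's.
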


\begin{proof}
\textit{(A) Strictly positive case.}
Pick any weights $\beta_1,\dots,\beta_n\ge 0$ with $\sum_{i=1}^n \beta_i=1$ such that $\beta_{m+1}>0$ and at least one of $\beta_1,\dots,\beta_m$ is positive (to keep the fixed components nontrivial). 
Choose \emph{arbitrary} strictly positive, pairwise distinct distributions $Q_{m+2},\dots,Q_n$ on $\mathcal{O}$, all different from $P$ (e.g., small exponential tilts of a positive reference distribution with distinct tilt directions as in the proof of Theorem~\ref{thm:pairwise}).

Let $S:=\{1,\dots,n\}\setminus\{m+1\}$ and define the weighted log–mean
\[
Q_\star(o)\;:=\;\exp\Bigl(\frac{1}{\sum_{i\in S}\beta_i}\sum_{i\in S}\beta_i\,\log \widehat P_i(o)\Bigr),\qquad 
\widehat P_i:=\begin{cases}
P_i,& i\le m,\\
Q_i,& i\ge m+2.
\end{cases}
\]
Note that $\sum_{i\in S}\beta_i=1-\beta_{m+1}>0$, so $Q_\star$ is well defined and strictly positive. Now set
\[
P_{m+1}(o)\;=\;\frac{P(o)^{1/\beta_{m+1}}\;Q_\star(o)^{-\frac{1}{\beta_{m+1}}\sum_{i\in S}\beta_i}}{\displaystyle\sum_{u\in\mathcal{O}} P(u)^{1/\beta_{m+1}}\;Q_\star(u)^{-\frac{1}{\beta_{m+1}}\sum_{i\in S}\beta_i}}
\;=\;\frac{P(o)^{1/\beta_{m+1}}\;Q_\star(o)^{-\frac{1-\beta_{m+1}}{\beta_{m+1}}}}{Z_1},
\]
where $Z_1$ is the (strictly positive) normalizing constant in the denominator. For any $o\in\mathcal{O}$,
\begin{align*}
\beta_{m+1}\log P_{m+1}(o)
&= \beta_{m+1}\Bigl[\tfrac{1}{\beta_{m+1}}\log P(o)\;-\;\tfrac{1-\beta_{m+1}}{\beta_{m+1}}\log Q_\star(o)\;-\;\log Z_1\Bigr] \\
&= \log P(o)\;-\;(1-\beta_{m+1})\log Q_\star(o)\;-\;\beta_{m+1}\log Z_1.
\end{align*}
Therefore,
\begin{align*}
\sum_{i=1}^n \beta_i \log P_i(o)
&= \beta_{m+1}\log P_{m+1}(o)\;+\;\sum_{i\in S}\beta_i \log \widehat P_i(o) \\
&= \Bigl[\log P(o)\;-\;(1-\beta_{m+1})\log Q_\star(o)\;-\;\beta_{m+1}\log Z_1\Bigr]
\;+\;\sum_{i\in S}\beta_i \log \widehat P_i(o).
\end{align*}
By the definition of $Q_\star$ we have
\[
(1-\beta_{m+1})\log Q_\star(o)
=\sum_{i\in S}\beta_i \log \widehat P_i(o),
\]
so these terms cancel. Hence
\[
\sum_{i=1}^n \beta_i \log P_i(o)\;=\;\log P(o)\;-\;\beta_{m+1}\log Z_1
\;=\;\log P(o)\;+\;C,
\]
where $C:=-\beta_{m+1}\log Z_1$ is independent of $o$. Exponentiating gives
\[
\prod_{i=1}^n P_i(o)^{\beta_i} \;=\; e^{C}\,P(o),
\]
and summing over $o$ yields $Z=\sum_{o}\prod_{i}P_i(o)^{\beta_i}=e^{C}$. Therefore
\[
P(o)\;=\;\frac{1}{Z}\prod_{i=1}^n P_i(o)^{\beta_i}\qquad(o\in\mathcal{O}),
\]
as claimed.

\textit{(B) Zeros and supports.}
Assume $\beta_i>0$ for $1\le i\le m$. If there exists $o\in\operatorname{supp}(P)$ with $P_i(o)=0$ for some fixed $i$, then 
\[
\prod_{k=1}^n P_k(o)^{\beta_k}=0
\quad\Longrightarrow\quad P(o)=0,
\]
a contradiction. Thus \eqref{eq:support-condition} is necessary. 

Conversely, if \eqref{eq:support-condition} holds, choose any strictly positive reference distribution $R$ on $\operatorname{supp}(P)$ (e.g., uniform on $\operatorname{supp}(P)$), and define $Q_{m+2},\dots,Q_n$ to be pairwise distinct distributions supported \emph{exactly} on $\operatorname{supp}(P)$. The above construction, performed on the reduced space $\operatorname{supp}(P)$, produces $P_{m+1}$ supported exactly on $\operatorname{supp}(P)$ and yields the desired identity. Because every component with $\beta_i>0$ is supported on $\operatorname{supp}(P)$ and at least one component ($P_{m+1}$, say) assigns zero outside this set, we obtain 
\[
\operatorname{supp}(P)=\bigcap_{i:\,\beta_i>0}\operatorname{supp}(P_i).
\]
Therefore \eqref{eq:support-condition} is also \emph{sufficient} under the requirement $\beta_1,\dots,\beta_m>0$.
\end{proof}

\begin{remark}[Using only a subset of the fixed components]
If it is \emph{not} required that every fixed $P_i$ carries positive weight, a factorization exists as soon as there is a nonempty subset $J\subseteq\{1,\dots,m\}$ for which
\[
\operatorname{supp}(P)\;\subseteq\;\bigcap_{i\in J}\operatorname{supp}(P_i).
\]
Set $\beta_i=0$ for $i\notin J$, pick positive weights on $J$ (summing to $<1$), and apply Theorem~\ref{thm:fixed-components} (A)/(B) on the reduced family.
\end{remark}

\begin{remark}[Nontriviality guarantees in practice]
To avoid trivial solutions (such as $(i)$ some $\beta_i=1$, $(ii)$ $P=P_i$, or $(iii)$ duplicate components), choose all positive $\beta_i\in(0,1)$, or ensure at least two fixed indices have $\beta_i>0$, and, when $n\ge m+2$, pick $P_{m+2},\dots,P_n$ as distinct small exponential tilts of a reference distribution. Genericity of the construction then implies $P\ne P_i$ and $P_i\ne P_j$ for $i\ne j$.
\end{remark}

\section{Recursive Splitting of Agents under Logarithmic Pooling}\label{Appendix:Subagent_composition_Nonpreservance}

\subsection{Distributional Invariance under Splitting}

We first establish a basic \emph{recursion consistency} property: the pooled distribution should remain unchanged when an agent is replaced by a collection of subagents whose aggregate reproduces the original agent’s belief distribution. Formally, if agent \( P_i \) is decomposed into \( m \) subagents with nonnegative weights \( \beta_{i,1}, \dots, \beta_{i,m} \) satisfying \( \sum_{j=1}^m \beta_{i,j} = \beta_i \), then the subagents must themselves be log-pooled using normalized weights  
\[
\alpha_j := \frac{\beta_{i,j}}{\beta_i},
\]
so that their aggregation yields \( P_i \). The overall pooling is still performed over agents with weights summing to one. This property is formalized in Lemma~\ref{lem:pool-invariant}.

\begin{lemma}[Pooling invariance under compatible splitting]\label{lem:pool-invariant}
Let \( P_1, \dots, P_n \) be agents with nonnegative pooling weights \( \beta_1, \dots, \beta_n \) satisfying \( \sum_{i=1}^n \beta_i = 1 \).  
Suppose \( P_1 \) is replaced by \( m \) subagents \( P_{1,1}, \dots, P_{1,m} \) with nonnegative weights \( \beta_{1,1}, \dots, \beta_{1,m} \) such that  
\[
\sum_{j=1}^m \beta_{1,j} = \beta_1,
\]
and whose aggregation satisfies  
\begin{equation}\label{eq:compatible-split}
P_1 \;\propto\; \prod_{j=1}^m P_{1,j}^{\alpha_j},
\qquad \text{where} \qquad \alpha_j := \frac{\beta_{1,j}}{\beta_1}.
\end{equation}
Let \( P \) denote the log pool of the original \( n \) agents, and \( P' \) the log pool after replacing \( P_1 \) by its \( m \) subagents. Then \( P' = P \).
\end{lemma}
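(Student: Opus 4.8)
The plan is to compute the unnormalized post-split log-pool and show it equals a constant (in $o$) multiple of the unnormalized pre-split log-pool; since both are then renormalized over $\mathcal O$, that constant cancels and $P'=P$ follows.

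First I would record that the post-split configuration has agent list $P_{1,1},\dots,P_{1,m},P_2,\dots,P_n$ with weights $\beta_{1,1},\dots,\beta_{1,m},\beta_2,\dots,\beta_n$, and check these are admissible pooling weights: they are nonnegative and
\[
\sum_{j=1}^m \beta_{1,j} + \sum_{i=2}^n \beta_i \;=\; \beta_1 + \sum_{i=2}^n \beta_i \;=\; 1,
\]
using the hypotheses $\sum_j \beta_{1,j}=\beta_1$ and $\sum_i \beta_i=1$. Hence $P'$ is well defined as a log-pool.

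Next comes the one substantive step. Using $\beta_{1,j}=\beta_1\alpha_j$, I would factor the subagent contribution as
\[
\prod_{j=1}^m P_{1,j}(o)^{\beta_{1,j}} \;=\; \Big(\prod_{j=1}^m P_{1,j}(o)^{\alpha_j}\Big)^{\beta_1}.
\]
By the compatible-split hypothesis \eqref{eq:compatible-split}, there is a constant $c>0$ independent of $o$ (namely the reciprocal of the partition function of the subagent pool) with $\prod_{j} P_{1,j}(o)^{\alpha_j} = c\,P_1(o)$ for every $o$, so $\prod_j P_{1,j}(o)^{\beta_{1,j}} = c^{\beta_1} P_1(o)^{\beta_1}$. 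Multiplying through by $\prod_{i=2}^n P_i(o)^{\beta_i}$ gives
\[
\prod_{j=1}^m P_{1,j}(o)^{\beta_{1,j}}\prod_{i=2}^n P_i(o)^{\beta_i} \;=\; c^{\beta_1}\prod_{i=1}^n P_i(o)^{\beta_i}\qquad(o\in\mathcal O),
\]
i.e.\ the unnormalized post-split pool is $c^{\beta_1}$ times the unnormalized pre-split pool pointwise. Summing over $\mathcal O$ shows the new partition function is $c^{\beta_1}$ times the old, so upon dividing, the factor $c^{\beta_1}$ cancels in numerator and denominator and $P'(o)=P(o)$ for all $o$.

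There is no real obstacle here; the only point requiring care is the bookkeeping of normalization constants — confirming that the constant $c$ arising from \eqref{eq:compatible-split} is genuinely $o$-independent (it is a sum over $\mathcal O$) and that it cancels cleanly in the final renormalization rather than leaking into $P'$. Everything else is a direct substitution and the exponent-additivity of the geometric-mean form.
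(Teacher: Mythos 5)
Your proof is correct and follows essentially the same route as the paper: factor out $\beta_1$ from the subagent exponents to expose $\bigl(\prod_j P_{1,j}^{\alpha_j}\bigr)^{\beta_1}$, substitute via the compatible-split hypothesis, and absorb the surviving multiplicative constant into the renormalization. The only difference is cosmetic — the paper works with $\propto$ throughout, whereas you explicitly track the constant $c^{\beta_1}$ and verify it cancels, which is a slightly more meticulous rendering of the identical argument.
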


\begin{proof}
By definition,
\[
P' \;\propto\; \Big(\prod_{j=1}^m P_{1,j}^{\beta_{1,j}}\Big) \;\prod_{i=2}^n P_i^{\beta_i}
\;\propto\; \Big(\prod_{j=1}^m P_{1,j}^{\alpha_j}\Big)^{\beta_1} \;\prod_{i=2}^n P_i^{\beta_i}
\;\propto\; P_1^{\beta_1} \,\prod_{i=2}^n P_i^{\beta_i}
\;\propto\; P.
\]
\end{proof}

\subsection{Does a Compositional Parent Agent Yield Compositional Subagents?}

We begin by proving a useful lemma. 

\begin{lemma}[Binary coarse--graining lower bound]\label{lem:Log_Sum}
Let $P,Q$ be distributions on a finite alphabet $\mathcal O$, and let $A\subseteq\mathcal O$ be any event. Then
\[
\KL(P\|Q)\;\ge\; P(A)\log\frac{P(A)}{Q(A)}\;+\;P(A^c)\log\frac{P(A^c)}{Q(A^c)}.
\]
\end{lemma}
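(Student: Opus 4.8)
The plan is to derive this from the \emph{log-sum inequality} applied separately to the two blocks $A$ and $A^c$ of the partition $\{A,A^c\}$ of $\mathcal O$. Recall the log-sum inequality: for finitely many nonnegative reals $\{a_o\}_{o\in S}$ and $\{b_o\}_{o\in S}$ with $\sum_{o\in S} b_o>0$,
\[
\sum_{o\in S} a_o\log\frac{a_o}{b_o}\ \ge\ \Big(\sum_{o\in S}a_o\Big)\log\frac{\sum_{o\in S}a_o}{\sum_{o\in S}b_o},
\]
with the convention $0\log 0=0$. I would include a one-line justification: apply Jensen's inequality to the convex function $\phi(t)=t\log t$ with the probability weights $b_o/\sum_{o}b_o$, using $\sum_o b_o\,\phi(a_o/b_o)=\sum_o a_o\log(a_o/b_o)$ on the left and $\big(\sum_o b_o\big)\phi\!\big(\sum_o a_o/\sum_o b_o\big)$ on the right.

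Next, I would instantiate this twice with $a_o=P(o)$ and $b_o=Q(o)$: once over $S=A$, giving
\[
\sum_{o\in A}P(o)\log\frac{P(o)}{Q(o)}\ \ge\ P(A)\log\frac{P(A)}{Q(A)},
\]
and once over $S=A^c$, giving the analogous bound with $A$ replaced by $A^c$. Here $Q(A),Q(A^c)>0$ because every distribution in the paper is assumed strictly positive on $\mathcal O$, so all the logarithms are well-defined (and if $A=\emptyset$ or $A=\mathcal O$, one block is vacuous and the statement is trivial under the stated convention). Adding the two inequalities and observing that
\[
\sum_{o\in A}P(o)\log\frac{P(o)}{Q(o)}+\sum_{o\in A^c}P(o)\log\frac{P(o)}{Q(o)}=\sum_{o\in\mathcal O}P(o)\log\frac{P(o)}{Q(o)}=\KL(P\|Q)
\]
yields exactly the claimed inequality.

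There is no substantive obstacle here: the result is essentially the data-processing inequality for KL divergence under the two-set coarse-graining map $o\mapsto \mathbf 1_A(o)$, and the only care required is bookkeeping of degenerate cases and the (standard) derivation of the log-sum inequality from convexity of $t\log t$. If desired, I would also record that equality holds precisely when the likelihood ratio $P(o)/Q(o)$ is constant on $A$ and constant on $A^c$, which follows from the equality condition in Jensen's inequality for the strictly convex map $\phi$.
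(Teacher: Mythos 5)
Your proof is correct and follows essentially the same route as the paper: derive the log-sum inequality from Jensen's inequality applied to the convex function $t\mapsto t\log t$, then apply it separately to the blocks $A$ and $A^c$ and add. The paper spends a bit more space on the zero-mass conventions (treating $b_j=0$ cases explicitly rather than invoking strict positivity), but the argument is the same.
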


\begin{proof}
Write $p_o:=P(o)$ and $q_o:=Q(o)$, and adopt the standard conventions
$0\log\frac{0}{y}:=0$ for $y>0$, and $x\log\frac{x}{0}:=+\infty$ for $x>0$. Let $\{a_i\}_{i\in I}$ and $\{b_i\}_{i\in I}$ be nonnegative sequences with $\sum_{i} b_i>0$.
We claim the \emph{log--sum inequality}:
\begin{equation}\label{eq:logsum}
\sum_{i\in I} a_i \log\frac{a_i}{b_i}
\;\ge\;
\Big(\sum_{i\in I} a_i\Big)\,
\log\frac{\sum_{i\in I} a_i}{\sum_{i\in I} b_i},
\end{equation}
with equality iff $a_i/b_i$ is constant over all indices with $b_i>0$. If there is an index $j$ with $b_j=0$ and $a_j>0$, then the left-hand side is $+\infty$ (by convention) and \eqref{eq:logsum} is trivially true. If $b_j=a_j=0$, the $j$-th term is $0$ and we may remove $j$ from $I$. Hence it suffices to assume $b_i>0$ for all $i\in I$.

Let $B:=\sum_{i} b_i>0$, define weights $\lambda_i:=b_i/B$ (so $\lambda_i\ge 0$ and $\sum_i\lambda_i=1$), and set
\[
x_i:=\frac{a_i}{b_i}\in[0,\infty).
\]
Consider $f:(0,\infty)\to\R$ given by $f(x)=x\log x$.
Then $f'(x)=\log x+1$ and $f''(x)=1/x>0$, so $f$ is convex on $(0,\infty)$.
By Jensen's inequality,
\[
\sum_{i}\lambda_i f(x_i)\ \ge\ f\Big(\sum_{i}\lambda_i x_i\Big).
\]
Multiply both sides by $B$ (noting $B\lambda_i=b_i$) to get
\[
\sum_{i} b_i\,f\Big(\frac{a_i}{b_i}\Big)
\ \ge\ B\,f\Big(\frac{\sum_i b_i (a_i/b_i)}{B}\Big)
\ =\ B\,f\Big(\frac{\sum_i a_i}{B}\Big).
\]
Since $b_i f(a_i/b_i)=a_i\log(a_i/b_i)$ and $B\,f((\sum a_i)/B)=(\sum a_i)\log((\sum a_i)/B)$, we obtain \eqref{eq:logsum}.
Equality in Jensen holds iff $x_i$ is constant on the support of $\{\lambda_i\}$, i.e., if and only if $a_i/b_i$ is constant for all $i$ with $b_i>0$.

Now, apply \eqref{eq:logsum} to the block $A$ with $a_o=p_o$, $b_o=q_o$ for $o\in A$:
\[
\sum_{o\in A} p_o\log\frac{p_o}{q_o}
\;\ge\;
\Big(\sum_{o\in A} p_o\Big)\,\log\frac{\sum_{o\in A} p_o}{\sum_{o\in A} q_o}
\;=\;
P(A)\log\frac{P(A)}{Q(A)}.
\]
Apply it again to the complementary block $A^c$:
\[
\sum_{o\notin A} p_o\log\frac{p_o}{q_o}
\;\ge\;
P(A^c)\log\frac{P(A^c)}{Q(A^c)}.
\]
Adding the two inequalities and recalling that
\(
\KL(P\|Q)=\sum_{o\in\mathcal O} p_o\log\frac{p_o}{q_o}
\),
we obtain
\[
\KL(P\|Q)\;\ge\; P(A)\log\frac{P(A)}{Q(A)}\;+\;P(A^c)\log\frac{P(A^c)}{Q(A^c)},
\]
which is the desired binary coarse--graining lower bound.
\end{proof}

If the parent is a compositional agent, does this property pass onto the child subagent decompositions? The next result answers this in the negative, even if the parent strictly benefits.
\begin{theorem}[Parental benefit need not pass to subagents]
\label{prop:parent-not-imply-sub}
There exist agents $P_1,\dots,P_n$, weights $\beta$, and a compatible split of $P_1$ into $P_{1,1},P_{1,2}$ as in \eqref{eq:compatible-split} such that the composition $P$ (before/after splitting) satisfies $\Delta_{P_1}(P)>0$ but $\Delta_{P_{1,1}}(P)<0$. By symmetry, one can also have $\Delta_{P_{1,2}}(P)<0$.
\end{theorem}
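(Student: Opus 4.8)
The plan is to freeze a configuration in which the parent strictly benefits, and then split that parent into two subagents via a large symmetric exponential tilt on a single outcome. The point is that a compatible split does not move the pooled distribution (Lemma~\ref{lem:pool-invariant}), so the parent's welfare gap is completely unaffected by the split, whereas the tilt can be made so large that one subagent assigns vanishing probability to the tilted outcome; this forces the $\KL$ term in its welfare gap to blow up and dominate the bounded entropy contributions.

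First I would fix a base configuration with $\Delta_{P_1}(P)>0$. By Theorem~\ref{thm:existence} --- concretely, the explicit family of Theorem~\ref{thm:analytic}, which already covers all $n\ge 2$ with $|\mathcal O|=n+1\ge 3$ --- there are strictly positive distributions $P_1,\dots,P_n$, welfare functions $W_i=\log P_i$, and uniform weights $\beta_i=1/n$ whose logarithmic pool $P$ makes every agent strictly better off; in particular $\Delta_{P_1}(P)=c$ for some fixed constant $c>0$. From this point on $P_1$ and $P$ are held fixed, so the margin $c$ will not be affected by anything done below.

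Next I would perform a compatible split of $P_1$. Fix any outcome $o^\star\in\mathcal O$; strict positivity guarantees $P_1(o^\star),P(o^\star)\in(0,1)$. I would take $\alpha_1=\alpha_2=\tfrac12$ (equivalently $\beta_{1,1}=\beta_{1,2}=\beta_1/2$) and, for a parameter $s>0$, set
\[
P_{1,1}(o)\ \propto\ P_1(o)\,e^{-s\,\mathbf{1}[o=o^\star]},
\qquad
P_{1,2}(o)\ \propto\ P_1(o)\,e^{+s\,\mathbf{1}[o=o^\star]}.
\]
Both are strictly positive distributions, and since the tilts cancel, $P_{1,1}(o)^{1/2}P_{1,2}(o)^{1/2}\propto P_1(o)$, so \eqref{eq:compatible-split} holds. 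By Lemma~\ref{lem:pool-invariant}, replacing $P_1$ by $(P_{1,1},P_{1,2})$ with weights $(\beta_1/2,\beta_1/2)$ leaves the pool equal to $P$, hence $\Delta_{P_1}(P)=c>0$ for every value of $s$. Then I would estimate $\eta:=P_{1,1}(o^\star)=P_1(o^\star)e^{-s}/\bigl(P_1(o^\star)e^{-s}+1-P_1(o^\star)\bigr)\to 0$ as $s\to\infty$, while $H(P_{1,1})\le\log|\mathcal O|$ for all $s$ and $H(P)\ge 0$. Using $\Delta_{P_{1,1}}(P)=H(P_{1,1})-H(P)-\KL(P\|P_{1,1})$ (Proposition~\ref{Welfare_Gap}) together with the binary coarse-graining bound of Lemma~\ref{lem:Log_Sum} applied with $A=\{o^\star\}$,
\[
\KL(P\|P_{1,1})\ \ge\ P(o^\star)\log\frac{P(o^\star)}{\eta}+\bigl(1-P(o^\star)\bigr)\log\frac{1-P(o^\star)}{1-\eta}\ \ge\ P(o^\star)\log\frac{P(o^\star)}{\eta}+\bigl(1-P(o^\star)\bigr)\log\bigl(1-P(o^\star)\bigr),
\]
whose first term diverges to $+\infty$ as $\eta\to 0$ (since $P(o^\star)>0$ is fixed) while the remainder is bounded below; hence $\Delta_{P_{1,1}}(P)\to-\infty$, and choosing $s$ large enough yields $\Delta_{P_{1,1}}(P)<0<\Delta_{P_1}(P)$. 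The symmetric statement follows by interchanging the labels of $P_{1,1}$ and $P_{1,2}$ (equivalently, replacing $s$ by $-s$), which makes $P_{1,2}(o^\star)\to 0$ and gives $\Delta_{P_{1,2}}(P)\to-\infty$ by the same estimate.

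I do not expect a deep obstacle here: the two load-bearing facts are that a compatible split leaves $P$ --- and therefore the parent's gain --- unchanged, and that a one-coordinate exponential tilt can drive a subagent's mass on a fixed outcome to zero while its entropy stays bounded by $\log|\mathcal O|$, so the $\KL$ penalty in the welfare gap dominates. The only steps needing care are verifying that the tilted pair really is a compatible split of $P_1$ in the sense of \eqref{eq:compatible-split}, and tracking which coordinate is sent to zero so that the sign of the relevant welfare gap comes out strictly negative.
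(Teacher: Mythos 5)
Your proof is correct, and the crucial step is exactly the paper's: split $P_1$ by a one-coordinate exponential tilt, note that a compatible split leaves the pool $P$ (hence $\Delta_{P_1}(P)=c>0$) unchanged by Lemma~\ref{lem:pool-invariant}, and drive $P_{1,1}(o^\star)\to 0$ so that the binary coarse--graining bound of Lemma~\ref{lem:Log_Sum} forces $\KL(P\|P_{1,1})\to+\infty$ while entropies stay bounded.

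The one place you genuinely depart from the paper is in how you obtain the base configuration with $\Delta_{P_1}(P)>0$. You invoke the explicit analytic construction of Theorem~\ref{thm:analytic} (uniform weights, $n+1$ outcomes, $\varepsilon$-tilted beliefs). The paper instead builds a fresh $n=2$ example directly: starting from an arbitrary non-uniform $P_1$, it forms the $t$-tilt $P_t\propto P_1^{\,t}$ for $t>1$, observes that $P_t$ is the log-pool of $P_1$ and $P_2\propto P_1^{\,2t-1}$ with weights $(\tfrac12,\tfrac12)$, and proves $\Delta_{P_1}(P_t)>0$ by showing $\tfrac{d}{dt}\log Z(t)=\E_{P_t}[\log P_1]$ is strictly increasing in $t$ (its derivative is $\Var_{P_t}(\log P_1)>0$). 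That argument is self-contained, works for any non-uniform $P_1$, and only needs two agents, which is slightly cleaner than routing through the $\varepsilon$-asymptotics of Theorem~\ref{thm:analytic}; your modular reuse is of course also valid. A small additional remark: in your symmetric split with fixed large $s$, both $P_{1,1}(o^\star)\to 0$ and $P_{1,2}(o')\to 0$ for every $o'\ne o^\star$, so \emph{both} $\Delta_{P_{1,1}}(P)$ and $\Delta_{P_{1,2}}(P)$ tend to $-\infty$ simultaneously -- you could cite this directly rather than swapping the sign of $s$ for the second claim.
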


\begin{proof}
Let $P_1$ be any non-uniform strictly positive distribution on $\mathcal O$. For $t>1$, define the $t$-tilt of $P_1$ by
\[
P_t(o) \;:=\; \frac{P_1(o)^t}{Z(t)}, 
\qquad Z(t):=\sum_{x\in\mathcal O} P_1(x)^t .
\]
Then $P_t$ is the log-pool of $\{P_1,P_2\}$ with weights $\beta_1=\beta_2=\tfrac12$ and $P_2\propto P_1^{\,2t-1}$, since
\(
P \propto P_1^{1/2}\,P_2^{1/2} \propto P_1^{(1+(2t-1))/2}=P_1^{t}
\),
which normalizes to $P_t$.

Now, we show that the child agent $P_1$ strictly benefits against the parent agent $P_t$ for all $t>1$. By definition, we have
\[
\Delta_{P_1}(P_t) \;=\; \E_{P_t}[\log P_1] - \E_{P_1}[\log P_1]
\;=\; \E_{P_t}[\log P_1] + H(P_1).
\]
We claim $\E_{P_t}[\log P_1]$ is strictly increasing in $t$ whenever $P_1$ is non-uniform, and therefore
\(
\E_{P_t}[\log P_1]>\E_{P_1}[\log P_1]
\)
for every $t>1$. To see this, write $p(o):=P_1(o)$ and $f(o):=\log p(o)$. Then we have
\[
\E_{P_t}[f] \;=\; \sum_{o} \frac{p(o)^t}{Z(t)}\, f(o)
\;=\; \frac{Z'(t)}{Z(t)} \;=\; \frac{\mathrm d}{\mathrm d t}\log Z(t),
\]
because $Z'(t)=\sum_o p(o)^t \log p(o)$. Differentiating once more, 
\begin{align*}
\frac{\mathrm d^2}{\mathrm d t^2}\log Z(t)
&= \frac{Z''(t)}{Z(t)} - \Big(\frac{Z'(t)}{Z(t)}\Big)^2
= \sum_o \frac{p(o)^t}{Z(t)} \big(\log p(o)\big)^2
\;-\;\Big(\sum_o \frac{p(o)^t}{Z(t)} \log p(o)\Big)^2 \\
&= \Var_{P_t}\!\big(\log P_1\big)\;\ge 0.
\end{align*}
Thus $t\mapsto \E_{P_t}[\log P_1]=\frac{\mathrm d}{\mathrm d t}\log Z(t)$ is \emph{increasing} in $t$, and it is \emph{strictly} increasing whenever $\Var_{P_t}(\log P_1)>0$, i.e., whenever $\log P_1$ is not almost surely constant under $P_t$. The latter holds exactly when $P_1$ is non-uniform. Evaluating at $t=1$ gives $\E_{P_1}[\log P_1]=\sum_o P_1(o)\log P_1(o)$, so for any $t>1$,
\[
\E_{P_t}[\log P_1] \;>\; \E_{P_1}[\log P_1]
\quad\Longrightarrow\quad
\Delta_{P_1}(P_t) \;>\; 0.
\]

Now, we derive a compatible split of the child agent $P_1$ that makes its subagent strictly lose epistemic utility. Fix $\alpha\in(0,1)$ and any function $g:\mathcal O\to\mathbb{R}$. Define subagents by exponential tilting:
\begin{equation}\label{eq:subagents-tilt-again}
\log P_{1,1} \;=\; \log P_1 + (1-\alpha)g - c_1, 
\qquad
\log P_{1,2} \;=\; \log P_1 - \alpha g - c_2,
\end{equation}
where $c_1,c_2$ make each distribution sum to $1$. Then
\(
\alpha \log P_{1,1} + (1-\alpha)\log P_{1,2}
= \log P_1 - \big(\alpha c_1 + (1-\alpha)c_2\big)
\),
hence \eqref{eq:compatible-split} holds with $\beta_{1,1}=\alpha\beta_1$ and $\beta_{1,2}=(1-\alpha)\beta_1$. By Lemma~\ref{lem:pool-invariant}, the composition \emph{after} splitting remains $P_t$.

Choose $g$ to depress a single outcome: pick any $o^\star\in\mathcal O$ and set
\(
g_\lambda(o) := -\lambda\,\mathbf 1_{\{o=o^\star\}}
\)
with $\lambda>0$. From \eqref{eq:subagents-tilt-again},
\[
P_{1,1}^{(\lambda)}(o) \;\propto\; P_1(o)\,\exp\!\big((1-\alpha)g_\lambda(o)\big)
=\begin{cases}
P_1(o^\star)\,e^{-(1-\alpha)\lambda}, & o=o^\star,\\[2pt]
P_1(o), & o\neq o^\star,
\end{cases}
\]
so, after normalization,
\[
P_{1,1}^{(\lambda)}(o^\star)
=\frac{P_1(o^\star)e^{-(1-\alpha)\lambda}}
{P_1(o^\star)e^{-(1-\alpha)\lambda} + \sum_{o\neq o^\star}P_1(o)}
\;\xrightarrow[\lambda\to\infty]{}\; 0,
\]
while $P_{1,1}^{(\lambda)}(o)>0$ for $o\neq o^\star$. We have by Lemma~\ref{lem:Log_Sum} that for $A=\{o^\star\}$,
\[
\KL\!\big(P_t \,\big\|\, P_{1,1}^{(\lambda)}\big)
\;\ge\;
P_t(A)\log\frac{P_t(A)}{P_{1,1}^{(\lambda)}(A)}
\;+\;
P_t(A^c)\log\frac{P_t(A^c)}{P_{1,1}^{(\lambda)}(A^c)}.
\]
Since $P_t(o^\star)=P_t(A)>0$ and $P_{1,1}^{(\lambda)}(o^\star)=P_{1,1}^{(\lambda)}(A)\to 0$ while
$P_{1,1}^{(\lambda)}(A^c)\to 1$, it follows that
\[
\KL\!\big(P_t \,\big\|\, P_{1,1}^{(\lambda)}\big)
\;\gtrsim\;
P_t(o^\star)\log\frac{P_t(o^\star)}{P_{1,1}^{(\lambda)}(o^\star)}
\;+\;
\bigl(1-P_t(o^\star)\bigr)\log\bigl(1-P_t(o^\star)\bigr)
\;\xrightarrow[\lambda\to\infty]{}\; +\infty.
\]

On the other hand, for every $\lambda$,
\[
0 \;\le\; H\!\big(P_{1,1}^{(\lambda)}\big) \;\le\; \log|\mathcal O|,
\]
since entropy on a finite alphabet is always between $0$ (point mass) and $\log |\mathcal O|$ (uniform). Thus $H\!\big(P_{1,1}^{(\lambda)}\big)$ is \emph{uniformly bounded in $\lambda$}. Using Proposition~\ref{Welfare_Gap},
\[
\Delta_{P_{1,1}^{(\lambda)}}(P_t)
= H\!\big(P_{1,1}^{(\lambda)}\big) - H(P_t) - \KL\!\big(P_t\big\|P_{1,1}^{(\lambda)}\big)
\;\xrightarrow[\lambda\to\infty]{}\; -\infty,
\]
because $H(P_t)$ is \emph{independent of $\lambda$} (as $t$ is fixed) and finite, the entropy term is uniformly bounded, while the KL term diverges to $+\infty$. Consequently, for all sufficiently large $\lambda$,
\(
\Delta_{P_{1,1}^{(\lambda)}}(P_t) < 0
\)
even though $\Delta_{P_1}(P_t)>0$.
\end{proof}

This immediately results in the following corollary, which shows that an initially unanimous group can lose unanimity after splitting even though the composition distribution does not change.

\begin{corollary}[Unanimity is not preserved under splitting]\label{thm:unanimity-breaks}
Suppose $\{P_1,\dots,P_n\}$ with weights $\beta$ form a unanimously compositional group under the log pool $P$; i.e., $\Delta_{P_i}(P)>0$ for all $i$. There exists a compatible split of $P_i$ into two subagents $P_{i,1},P_{i,2}$ such that, with the new set of agents and weights, the pooled distribution remains $P$ but $\Delta_{P_{i,1}}(P)<0$. 
\end{corollary}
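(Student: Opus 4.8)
The plan is to observe that the corollary is an immediate consequence of the splitting construction carried out in the proof of Theorem~\ref{prop:parent-not-imply-sub}: that construction never used any special feature of the configuration there beyond strict positivity of the pooled distribution, and here $P$, being a logarithmic pool of strictly positive agents, is itself strictly positive.

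First I would fix the index $i$ to be split and pick any outcome $o^\star\in\mathcal O$; note $P(o^\star)>0$. Fix $\alpha\in(0,1)$ and, exactly as in~\eqref{eq:subagents-tilt-again} with $g=g_\lambda:=-\lambda\,\mathbf 1_{\{o=o^\star\}}$, define
\[
\log P_{i,1}^{(\lambda)}=\log P_i-(1-\alpha)\lambda\,\mathbf 1_{\{o=o^\star\}}-c_1(\lambda),
\qquad
\log P_{i,2}^{(\lambda)}=\log P_i+\alpha\lambda\,\mathbf 1_{\{o=o^\star\}}-c_2(\lambda),
\]
where $c_1(\lambda),c_2(\lambda)$ are normalizing constants, and assign weights $\beta_{i,1}=\alpha\beta_i$, $\beta_{i,2}=(1-\alpha)\beta_i$ (both positive since $\beta_i>0$). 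The tilt terms cancel in $\alpha\log P_{i,1}^{(\lambda)}+(1-\alpha)\log P_{i,2}^{(\lambda)}$, so~\eqref{eq:compatible-split} holds with $\alpha_1=\alpha$, $\alpha_2=1-\alpha$, and Lemma~\ref{lem:pool-invariant} guarantees that the pooled distribution after replacing $P_i$ by $P_{i,1}^{(\lambda)},P_{i,2}^{(\lambda)}$ is still $P$. All other agents are untouched, so their welfare gaps remain strictly positive.

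Next I would let $\lambda\to\infty$. A one-line normalization computation gives $P_{i,1}^{(\lambda)}(o^\star)\to0$ while $P_{i,1}^{(\lambda)}(o)$ converges to a positive limit for $o\neq o^\star$; hence, taking $A=\{o^\star\}$ in Lemma~\ref{lem:Log_Sum},
\[
\KL\!\bigl(P\,\big\|\,P_{i,1}^{(\lambda)}\bigr)\;\ge\;P(o^\star)\log\frac{P(o^\star)}{P_{i,1}^{(\lambda)}(o^\star)}+\bigl(1-P(o^\star)\bigr)\log\frac{1-P(o^\star)}{1-P_{i,1}^{(\lambda)}(o^\star)}\;\xrightarrow[\lambda\to\infty]{}\;+\infty ,
\]
because $P(o^\star)>0$ does not depend on $\lambda$. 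Since the entropy $H\bigl(P_{i,1}^{(\lambda)}\bigr)$ stays in $[0,\log|\mathcal O|]$ and $H(P)$ is a fixed finite number, Proposition~\ref{Welfare_Gap} yields
\[
\Delta_{P_{i,1}^{(\lambda)}}(P)=H\bigl(P_{i,1}^{(\lambda)}\bigr)-H(P)-\KL\!\bigl(P\,\big\|\,P_{i,1}^{(\lambda)}\bigr)\;\xrightarrow[\lambda\to\infty]{}\;-\infty .
\]
Choosing $\lambda$ large enough forces $\Delta_{P_{i,1}^{(\lambda)}}(P)<0$, so the new group is no longer unanimous even though its log pool is still $P$.

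There is no substantial obstacle; the argument is just Lemma~\ref{lem:pool-invariant}, Lemma~\ref{lem:Log_Sum}, and Proposition~\ref{Welfare_Gap} chained together. The only points needing care are (i) confirming the exponential-tilt split is compatible in the sense of~\eqref{eq:compatible-split}, which is exactly why the pooled distribution is preserved, and (ii) observing that the divergence $\KL(P\|P_{i,1}^{(\lambda)})\to\infty$ is driven by the fixed mass $P(o^\star)$ while the entropy contribution remains uniformly bounded, so the sign of $\Delta$ must eventually flip.
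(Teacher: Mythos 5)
Your proof is correct and follows the same route as the paper: the exponential-tilt split from~\eqref{eq:subagents-tilt-again} with $g_\lambda=-\lambda\,\mathbf 1_{\{o=o^\star\}}$, combined with Lemma~\ref{lem:pool-invariant} to keep the pool fixed, Lemma~\ref{lem:Log_Sum} to force $\KL(P\|P_{i,1}^{(\lambda)})\to\infty$, and Proposition~\ref{Welfare_Gap} to conclude $\Delta_{P_{i,1}^{(\lambda)}}(P)\to-\infty$. The paper's proof is stated more tersely, simply deferring to the argument inside Proposition~\ref{prop:parent-not-imply-sub}; your version spells out that argument explicitly and correctly observes it only used strict positivity of the pooled distribution.
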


\begin{proof}
Define the split via \eqref{eq:subagents-tilt-again} with a tilt $g_\lambda$ that assigns a large negative value on some outcome $o^\star$ with $P(o^\star)>0$ and $0$ elsewhere. As in the proof of Proposition~\ref{prop:parent-not-imply-sub}, Lemma~\ref{lem:pool-invariant} ensures the pool stays $P$, every unchanged agent $k\neq i$ maintains the same (strictly positive) $\Delta_{P_k}(P)$, while $\Delta_{P_{i,1}}(P)\to -\infty$ as $\lambda\to\infty$.
\end{proof}

\subsubsection{When does Splitting Preserve Compositionality?}\label{Appendix:Positive_Results}

The negative results above are sharp in the sense that they rely on sufficiently \emph{polarized} splits. Two simple stability statements go the other way.

\begin{lemma}[Cloning preserves welfare and non-strict unanimity]\label{lem:clones}
If $P_{1,1}=\cdots=P_{1,m}=P_1$ and $\sum_j\beta_{1,j}=\beta_1$, then for each clone $P_{1,j}$ we have $\Delta_{P_{1,j}}(P)=\Delta_{P_1}(P)$. In particular, if the original group is (non-strictly) unanimously compositional, it remains so after cloning any subset of agents.
\end{lemma}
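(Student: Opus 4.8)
The plan is to reduce the statement to the pooling-invariance lemma (Lemma~\ref{lem:pool-invariant}) together with the bare definition of the welfare gap. First I would check that replacing $P_1$ by $m$ identical copies $P_{1,1}=\cdots=P_{1,m}=P_1$, with nonnegative weights $\beta_{1,1},\dots,\beta_{1,m}$ summing to $\beta_1$, is a \emph{compatible split} in the sense of~\eqref{eq:compatible-split}. Indeed, setting $\alpha_j:=\beta_{1,j}/\beta_1$ (the case $\beta_1=0$ is vacuous: then every $\beta_{1,j}=0$ and the clones drop out of the pool entirely), one has $\prod_{j=1}^m P_{1,j}^{\alpha_j}=P_1^{\sum_j\alpha_j}=P_1$, so the hypothesis of Lemma~\ref{lem:pool-invariant} is satisfied automatically. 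Hence the post-cloning log pool $P'$ equals the original pool $P$; in particular it is legitimate to write $\Delta_{P_{1,j}}(P)$ with the same $P$ throughout.

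Next I would simply unwind the definition of $\Delta$. Since each clone $P_{1,j}$ is \emph{equal as a function on $\mathcal O$} to $P_1$, and the pooled distribution is unchanged by the previous paragraph,
\[
\Delta_{P_{1,j}}(P)\;=\;\E_P[\log P_{1,j}]-\E_{P_{1,j}}[\log P_{1,j}]\;=\;\E_P[\log P_1]-\E_{P_1}[\log P_1]\;=\;\Delta_{P_1}(P),
\]
which is the first assertion. (One could equivalently invoke Proposition~\ref{Welfare_Gap} and note that $H(P_{1,j})$, $H(P)$, and $\KL(P\|P_{1,j})$ all agree with the $P_1$ versions, but the direct computation is shorter.)

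For the ``in particular'' part, I would observe that the remaining agents $P_2,\dots,P_n$ are untouched and the pool is unchanged, so each welfare gap $\Delta_{P_i}(P)$ is exactly what it was before cloning, hence still $\ge 0$; and every clone of $P_1$ has gap $\Delta_{P_1}(P)\ge 0$ by the displayed identity. Cloning several agents simultaneously follows by applying the same argument to each split factor independently, or equivalently by iterating Lemma~\ref{lem:pool-invariant} one agent at a time (each step leaves $P$ fixed and preserves the gaps of all other agents). Thus non-strict unanimous compositionality is preserved.

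I do not anticipate a genuine obstacle: the only points requiring a moment's care are (i) verifying that the compatible-split hypothesis of Lemma~\ref{lem:pool-invariant} is automatically met by clones, and (ii) the bookkeeping when more than one agent is cloned, which is routine iteration. I would also flag explicitly that the lemma cannot be strengthened to \emph{strict} unanimity — a clone inherits precisely $\Delta_{P_1}(P)$ with no mechanism to enlarge it — consistent with the negative results (Theorem~\ref{prop:parent-not-imply-sub}, Corollary~\ref{thm:unanimity-breaks}), which crucially exploit genuinely polarized, non-clone splits.
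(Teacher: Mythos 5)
Your proof is correct and follows essentially the same route as the paper's: invoke Lemma~\ref{lem:pool-invariant} (after checking that identical clones trivially satisfy the compatible-split condition $\prod_j P_{1,j}^{\alpha_j}=P_1^{\sum_j\alpha_j}=P_1$) to conclude $P'=P$, then read off $\Delta_{P_{1,j}}(P)=\Delta_{P_1}(P)$ directly from the definition of the welfare gap. The extra care you take with the $\beta_1=0$ edge case and with iterating over several cloned agents is sound and slightly more explicit than the paper's two-line argument, but it is the same proof.
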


\begin{proof}
Pooling invariance gives $P'=P$. For any clone $R=P_{1,j}=P_1$ we have $\E_P[\log R]=\E_P[\log P_1]$ and $\E_R[\log R]=\E_{P_1}[\log P_1]$, hence $\Delta_R(P)=\Delta_{P_1}(P)$.
\end{proof}

\begin{lemma}[Small-perturbation stability]\label{lem:small-perturbation}
Fix a composition pool $P$ with all coordinates bounded below by $\underline p>0$. Then for a distribution $R$, the map $R\mapsto \Delta_R(P)$ is continuous on the interior of the simplex. Hence, for any agent $R_\star$ with $\Delta_{R_\star}(P)>0$, there exists $\varepsilon>0$ such that if a subagent $R$ satisfies
\[
\|\log R - \log R_\star\|_\infty < \varepsilon,
\]
then $\Delta_R(P)>0$. Consequently, if $P_1$ benefits and each subagent $P_{1,j}$ is a sufficiently small log-perturbation of $P_1$, the split preserves positivity of each $\Delta_{P_{1,j}}(P)$.
\end{lemma}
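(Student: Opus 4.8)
The plan is to establish continuity of $R\mapsto\Delta_R(P)$ on the interior of the simplex, and then the two stated consequences are immediate applications of the definition of continuity. First I would recall the identity from Proposition~\ref{Welfare_Gap}, which for a fixed pool $P$ gives
\[
\Delta_R(P)\;=\;H(R)-H(P)-\KL(P\|R)\;=\;\E_P[\log R]-\E_R[\log R].
\]
Since $P$ and $H(P)$ are fixed, continuity of $\Delta_R(P)$ in $R$ reduces to continuity of the two maps $R\mapsto \E_P[\log R]=\sum_o P(o)\log R(o)$ and $R\mapsto \E_R[\log R]=-H(R)$. The outcome set $\mathcal O$ is finite, so each is a finite sum; on the open region where every coordinate $R(o)$ is bounded strictly away from $0$ (and automatically $\le 1$), the function $R(o)\mapsto \log R(o)$ and the function $R(o)\mapsto R(o)\log R(o)$ are continuous (indeed smooth), and finite sums and products of continuous functions are continuous. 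Hence $\Delta_{\,\cdot\,}(P)$ is continuous on the interior of the simplex. This is the only substantive step, and it is entirely routine; there is no real obstacle, only the mild bookkeeping of noting that the logarithmic terms are well-behaved precisely because we stay in the interior.

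For the second claim, suppose $R_\star$ lies in the interior with $\Delta_{R_\star}(P)>0$. By continuity there is an open neighborhood $\mathcal V$ of $R_\star$ on which $\Delta_R(P)>0$. It remains to note that the condition $\|\log R-\log R_\star\|_\infty<\varepsilon$ carves out such a neighborhood for $\varepsilon$ small enough: the map $R\mapsto\log R$ (coordinatewise) is a homeomorphism from the interior of the simplex onto its image, so an $\varepsilon$-ball in the $\log$-coordinates corresponds to an open set in $R$-coordinates containing $R_\star$; choosing $\varepsilon$ so that this set is contained in $\mathcal V$ gives $\Delta_R(P)>0$ for all such $R$. (Working in $\log$-coordinates is natural here because it is exactly the parametrization in which subagents are tilted in the constructions of Theorem~\ref{prop:parent-not-imply-sub} and Corollary~\ref{thm:unanimity-breaks}.)

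The final consequence is then purely a restatement: if $P_1$ benefits, i.e.\ $\Delta_{P_1}(P)>0$, apply the previous paragraph with $R_\star=P_1$ to obtain the threshold $\varepsilon>0$; then any compatible split whose subagents $P_{1,j}$ satisfy $\|\log P_{1,j}-\log P_1\|_\infty<\varepsilon$ has $\Delta_{P_{1,j}}(P)>0$ for every $j$, so positivity of the welfare gap is preserved coordinatewise across the split. No quantitative rate is claimed, so no further estimation is needed; the argument is complete once continuity is in hand.
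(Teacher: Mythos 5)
Your proof is correct and follows essentially the same route as the paper's: both decompose $\Delta_R(P)$ via Proposition~\ref{Welfare_Gap}, establish continuity on the interior of the simplex from the finite-sum structure, and then extract the $\varepsilon$-ball around $R_\star$. The only cosmetic difference is that the paper gives a direct quantitative Lipschitz bound on the $\KL$ term in $\|\log R - \log R_\star\|_\infty$, whereas you pass through the homeomorphism between the interior of the simplex and log-coordinates; both are valid and equivalent in substance.
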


\begin{proof}
Using Proposition~\ref{Welfare_Gap}, $\Delta_R(P) = H(R) - \KL(P\|R) - H(P)$ and $H(P)$ is constant. On the interior of the simplex, $H(\cdot)$ is continuous. Moreover
\[
\KL(P\|R) \;=\; \sum_{o\in\mathcal O} P(o)\,\big(-\log R(o)\big) - H(P),
\]
so if $\|\log R - \log R_\star\|_\infty<\varepsilon$ then
\[
\big|\KL(P\|R)-\KL(P\|R_\star)\big|
\le \sum_o P(o)\,\big|\log R(o) - \log R_\star(o)\big|
\le \varepsilon.
\]
Thus $R\mapsto \Delta_R(P)$ is continuous. Pick $\varepsilon$ small enough that
\(
|\Delta_R(P)-\Delta_{R_\star}(P)|<\tfrac12\,\Delta_{R_\star}(P)
\)
whenever $\|\log R - \log R_\star\|_\infty<\varepsilon$, which implies $\Delta_R(P)>\tfrac12\,\Delta_{R_\star}(P)>0$.
\end{proof}

\begin{remark}\label{rmk:zeros}
If some distributions have zeros, all statements above hold verbatim after restricting every distribution to the common support $\supp(P)\cap \bigcap_i\supp(P_i)$ (or, for Theorem~\ref{thm:unanimity-breaks}, any support where $P$ is positive). Lemma~\ref{lem:small-perturbation} then applies as long as we stay in the interior of that reduced simplex.
\end{remark}

\begin{remark}
Lemma~\ref{lem:pool-invariant} shows that log pooling is \emph{distributionally} recursion-invariant under compatible splits. However, Proposition~\ref{prop:parent-not-imply-sub} and Theorem~\ref{thm:unanimity-breaks} demonstrate that welfare and unanimity are \emph{not} recursion-invariant in general: a strictly beneficial parent can have strictly harmed subagents, and an initially unanimous group can lose unanimity after splitting, even though the pooled distribution is unchanged. On the positive side, Lemma~\ref{lem:clones} (cloning) and Lemma~\ref{lem:small-perturbation} (continuity) give simple sufficient conditions under which (possibly non-strict) compositionality is preserved.
\end{remark}
\section{Properties of Strict Local (Small-Tilt) Unanimity}\label{Appendix:Small_Tilt_Unanimity}

In the previous section, we showed that duplication preserves non-strict unanimous compositionality. One might attempt to \emph{cheat} strict compositionality by taking an arbitrary distribution $P$, duplicating each agent, and introducing only slight perturbations to their beliefs, hoping that the resulting group would be unanimously compositional. We now show that this strategy is doomed to fail: such near-duplication decompositions cannot achieve unanimous strict improvement for a fixed $P$.

In this section, we show that even when $|\mathcal{O}|\ge 3$, if the pooled distribution is fixed to be a given $P$ and we factor it via~\eqref{eq:tilt-rep} with small tilts $h_i$, then unanimous strict improvement cannot occur. This rules out the possibility of obtaining unanimity through ``uniformly small'' decompositions when $P$ is fixed. We start by proving several lemmas. 

\begin{lemma}
Whenever $P\propto\prod_i P_i^{\beta_i}$, we can write each component as a \emph{tilt} of $P$:
\begin{equation}\label{eq:tilt-rep}
P_i(o)\ =\ \frac{P(o)\,e^{h_i(o)}}{Z_i},\qquad Z_i:=\sum_{x\in\mathcal O} P(x)e^{h_i(x)}\ (= \E_P[e^{h_i}]),
\end{equation}
for some functions $h_i:\mathcal O\to\R$. We may without loss of generality impose
\begin{equation}\label{eq:sum-hi-zero}
\sum_{i=1}^n \beta_i h_i(o)\ \equiv\ 0\quad\text{for all }o\in\mathcal O.
\end{equation}
\end{lemma}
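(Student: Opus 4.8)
The plan is to use strict positivity to pass to logarithms, write down an immediate but un-normalized tilt, and then exploit a gauge freedom to enforce \eqref{eq:sum-hi-zero}. Since $\mathcal O$ is finite and $P,P_1,\dots,P_n$ are all strictly positive, the quantities $\log P(o)$ and $\log P_i(o)$ are finite. Setting $\tilde h_i(o):=\log P_i(o)-\log P(o)$ we get $P_i(o)=P(o)\,e^{\tilde h_i(o)}$ identically, which is already of the form \eqref{eq:tilt-rep} with normalizer $\widetilde Z_i=\sum_x P(x)e^{\tilde h_i(x)}=\sum_x P_i(x)=1$. The crucial structural remark is that \eqref{eq:tilt-rep} is \emph{invariant} under $h_i\mapsto h_i+c_i$ for a constant $c_i$: this multiplies $Z_i$ by $e^{c_i}$ and leaves $P(o)e^{h_i(o)}/Z_i$ unchanged. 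Hence each $h_i$ is determined only up to an additive constant, and it is exactly this freedom that will let us normalize the weighted sum.

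Next I would evaluate $\sum_i\beta_i\tilde h_i$. Taking logarithms in $P(o)=\tfrac1Z\prod_{i} P_i(o)^{\beta_i}$ yields $\log P(o)=\sum_i\beta_i\log P_i(o)-\log Z$, so, using $\sum_i\beta_i=1$,
\[
\sum_{i=1}^n\beta_i\tilde h_i(o)\;=\;\sum_{i=1}^n\beta_i\log P_i(o)-\Big(\sum_{i=1}^n\beta_i\Big)\log P(o)\;=\;\log Z,
\]
a constant independent of $o$. Now I subtract this constant using the gauge freedom: define $h_i:=\tilde h_i-\log Z$ (more generally, any constants $c_i$ with $\sum_i\beta_i c_i=\log Z$ would do). Then $\sum_{i}\beta_i h_i(o)=\log Z-\log Z=0$ for every $o$, which is \eqref{eq:sum-hi-zero}.

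Finally I would verify that the shifted $h_i$ still reproduces $P_i$ through \eqref{eq:tilt-rep}. Since $P(o)e^{h_i(o)}=P(o)\cdot\frac{P_i(o)}{P(o)}e^{-\log Z}=P_i(o)/Z$, we get $Z_i=\E_P[e^{h_i}]=\sum_x P_i(x)/Z=1/Z$, hence $P(o)e^{h_i(o)}/Z_i=P_i(o)$, as required; one can also check consistency directly, $\prod_i P_i^{\beta_i}=P^{\sum_i\beta_i}e^{\sum_i\beta_i h_i}\prod_i Z_i^{-\beta_i}\propto P$. There is no genuine obstacle in this argument; the only point needing care is to notice the additive-constant ambiguity in the tilt functions, and to see that allowing a general normalizer $Z_i$ (rather than forcing $Z_i=1$) is precisely what makes \eqref{eq:sum-hi-zero} achievable while still recovering each $P_i$ exactly.
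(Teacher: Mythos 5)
Your proof is correct and follows essentially the same route as the paper: write $\tilde h_i=\log(P_i/P)$, observe $\sum_i\beta_i\tilde h_i\equiv\log Z$, and use the additive gauge freedom to cancel this constant. The only cosmetic difference is that you subtract $\log Z$ uniformly from every $h_i$, whereas the paper subtracts $\log Z/\beta_k$ from a single $h_k$; both are special cases of the general choice $\sum_i\beta_i c_i=\log Z$ you already note.
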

\begin{proof}
Assume $\sum_{i=1}^n \beta_i=1$ and all distributions are strictly positive on the finite outcome set $\mathcal O$.
Suppose the composition $P$ is the logarithmic pool of $P_1,\dots,P_n$ with weights $\beta$:
\begin{equation}\label{eq:logpool-def}
P(o)\ =\ \frac{1}{Z}\,\prod_{i=1}^n P_i(o)^{\beta_i},
\qquad
Z\ :=\ \sum_{x\in\mathcal O}\ \prod_{i=1}^n P_i(x)^{\beta_i}.
\end{equation}

Define, for each $i$,
\begin{equation}\label{eq:hi-raw}
h_i^{(0)}(o)\ :=\ \log\frac{P_i(o)}{P(o)} \quad\text{for all }o\in\mathcal O.
\end{equation}
Then
\[
P(o)\,e^{h_i^{(0)}(o)} \;=\; P(o)\,\frac{P_i(o)}{P(o)} \;=\; P_i(o).
\]
Hence, with $Z_i^{(0)}:=\sum_x P(x)e^{h_i^{(0)}(x)}=\sum_x P_i(x)=1$, we have the exact “tilt” identity
\begin{equation}\label{eq:tilt-rep-proof}
P_i(o)\ =\ \frac{P(o)\,e^{h_i^{(0)}(o)}}{Z_i^{(0)}} \;=\; P(o)\,e^{h_i^{(0)}(o)}.
\end{equation}
This proves \eqref{eq:tilt-rep} with the specific choice $h_i=h_i^{(0)}$ and $Z_i=1$. Taking logs in \eqref{eq:logpool-def},
\[
\log P(o)\ =\ \sum_{i=1}^n \beta_i \log P_i(o)\ -\ \log Z.
\]
Therefore,
\begin{align*}
\sum_{i=1}^n \beta_i\,h_i^{(0)}(o)
&=\sum_{i=1}^n \beta_i\big(\log P_i(o)-\log P(o)\big)\\
&=\Big(\sum_{i=1}^n \beta_i \log P_i(o)\Big)\ -\ \log P(o)
\qquad(\text{since }\sum_i\beta_i=1)\\
&=\ \log Z \quad\text{(independent of $o$)}.
\end{align*}
The tilt representation \eqref{eq:tilt-rep-proof} is invariant under adding an additive constant to $h_i$ and scaling $Z_i$ accordingly:
if we set
\[
h_i(o)\ :=\ h_i^{(0)}(o) - c_i, \qquad Z_i\ :=\ e^{-c_i} Z_i^{(0)}=e^{-c_i},
\]
then
\[
\frac{P(o)\,e^{h_i(o)}}{Z_i}
=\frac{P(o)\,e^{h_i^{(0)}(o)-c_i}}{e^{-c_i} Z_i^{(0)}}
= \frac{P(o)\,e^{h_i^{(0)}(o)}}{Z_i^{(0)}} = P_i(o),
\]
so the distributions $P_i$ do not change. As $\sum_i \beta_i h_i^{(0)}\equiv \log Z$, choose any index $k$ and set $c_k:=\frac{\log Z}{\beta_k}$, while $c_i:=0$ for $i\ne k$.
Then the modified functions $h_i$ satisfy
\[
\sum_{i=1}^n \beta_i h_i(o)
=\sum_{i=1}^n \beta_i h_i^{(0)}(o)\ -\ \beta_k c_k
=\log Z\ -\ \log Z\ =\ 0 \quad\text{for all }o.
\]
Thus, without loss of generality, we may assume
\begin{equation}\label{eq:sum-hi-zero-proof}
\sum_{i=1}^n \beta_i h_i(o)\ \equiv\ 0,\qquad\forall o\in\mathcal O.
\end{equation}
We note that  imposing \eqref{eq:sum-hi-zero-proof} is achieved by subtracting a constant from a single $h_k$, which leaves each $P_i$ exactly unchanged due to rescaling $Z_k$ by a constant factor.
\end{proof}

\begin{remark}
We note that the freedom to add constants $c_i$ to $h_i$ is the \emph{only} freedom in the tilt representation relative to a fixed base $P$; it corresponds to renormalizing $Z_i$ and does not alter $P_i$.
\end{remark}

\begin{lemma}\label{lemma:diff}
For a fixed $i$ and $\varepsilon$ near $0$, let
\[
\ P_i^{(\varepsilon)}(o)
\ =\ \frac{P(o)\,e^{\varepsilon h_i(o)}}{Z_i(\varepsilon)},
\qquad
Z_i(\varepsilon)\ :=\ \sum_{x\in\mathcal O} P(x)\,e^{\varepsilon h_i(x)}\ =\ \E_P\!\big[e^{\varepsilon h_i}\big].
\]
Then, we have that
\[
\frac{d}{d\varepsilon}H\!\big(P_i^{(\varepsilon)}\big)\Big|_{\varepsilon=0}
= -\Big(\E_P[h_i\log P] - \E_P[h_i]\;\E_P[\log P]\Big)
= -\,\Cov_{P}\!\big(h_i,\log P\big).
\]
We assume all probabilities are strictly positive.
\end{lemma}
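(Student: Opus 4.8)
The plan is to differentiate the entropy directly from the explicit tilt formula. First I would substitute $\log P_i^{(\varepsilon)}(o) = \log P(o) + \varepsilon h_i(o) - \log Z_i(\varepsilon)$ into $H(P_i^{(\varepsilon)}) = -\sum_{o}P_i^{(\varepsilon)}(o)\log P_i^{(\varepsilon)}(o)$ to obtain the clean decomposition
\[
H\big(P_i^{(\varepsilon)}\big)\ =\ -\,\E_{P_i^{(\varepsilon)}}[\log P]\ -\ \varepsilon\,\E_{P_i^{(\varepsilon)}}[h_i]\ +\ \log Z_i(\varepsilon),
\]
which reduces the problem to differentiating three terms at $\varepsilon=0$, where $Z_i(0)=1$ and $P_i^{(0)}=P$. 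Strict positivity of $P$ on the finite alphabet $\mathcal O$ guarantees $\log P$ is finite and that all sums and $\varepsilon$-derivatives interchange freely.

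Next I would record the two elementary computations that drive everything. One: since $Z_i(\varepsilon)=\E_P[e^{\varepsilon h_i}]$, we have $Z_i'(\varepsilon)=\E_P[h_i e^{\varepsilon h_i}]$, so $\frac{d}{d\varepsilon}\log Z_i(\varepsilon)\big|_{\varepsilon=0}=\E_P[h_i]$. Two: differentiating the quotient $P_i^{(\varepsilon)}(o)=P(o)e^{\varepsilon h_i(o)}/Z_i(\varepsilon)$ by the quotient rule and evaluating at $\varepsilon=0$ gives $\frac{d}{d\varepsilon}P_i^{(\varepsilon)}(o)\big|_{\varepsilon=0}=P(o)\big(h_i(o)-\E_P[h_i]\big)$. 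Consequently, for any fixed function $f$ on $\mathcal O$, $\frac{d}{d\varepsilon}\E_{P_i^{(\varepsilon)}}[f]\big|_{\varepsilon=0}=\sum_o P(o)\big(h_i(o)-\E_P[h_i]\big)f(o)=\E_P[h_i f]-\E_P[h_i]\,\E_P[f]=\Cov_P(h_i,f)$; applied with $f=\log P$ this dispatches the first term.

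Then I would assemble the pieces. The first term contributes $-\Cov_P(h_i,\log P)$. For the middle term, the product rule gives $\frac{d}{d\varepsilon}\big(\varepsilon\,\E_{P_i^{(\varepsilon)}}[h_i]\big)=\E_{P_i^{(\varepsilon)}}[h_i]+\varepsilon\,\frac{d}{d\varepsilon}\E_{P_i^{(\varepsilon)}}[h_i]$, which at $\varepsilon=0$ equals $\E_P[h_i]$, so this term contributes $-\E_P[h_i]$. The third term contributes $+\E_P[h_i]$. The last two cancel exactly, leaving $\frac{d}{d\varepsilon}H(P_i^{(\varepsilon)})\big|_{\varepsilon=0}=-\Cov_P(h_i,\log P)=-\big(\E_P[h_i\log P]-\E_P[h_i]\,\E_P[\log P]\big)$, which is precisely the asserted identity in both of its stated forms.

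I do not expect a genuine obstacle: this is essentially the standard fact that the tilted family $\{P_i^{(\varepsilon)}\}$ is a one-parameter exponential family with base measure $P$, natural parameter $\varepsilon$, sufficient statistic $h_i$, and cumulant generating function $\log Z_i$, together with the circumstance that $\varepsilon$ multiplies $h_i$ inside the log-likelihood. The one point to be careful about is not discarding the $-\varepsilon\,\E_{P_i^{(\varepsilon)}}[h_i]$ term: although it vanishes at $\varepsilon=0$, its \emph{derivative} there equals $-\E_P[h_i]$, and it is exactly what cancels the $\log Z_i$ contribution. If one prefers to bypass the explicit quotient differentiation, one can instead invoke the exponential-family identity $\frac{d}{d\varepsilon}\E_{P_i^{(\varepsilon)}}[f]=\Cov_{P_i^{(\varepsilon)}}(f,h_i)$ directly, which shortens the write-up with no change in substance.
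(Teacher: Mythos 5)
Your proof is correct. The key facts you use — $Z_i'(0)=\E_P[h_i]$, the tilt derivative $\partial_\varepsilon P_i^{(\varepsilon)}(o)\big|_{0}=P(o)\bigl(h_i(o)-\E_P[h_i]\bigr)$, and hence $\partial_\varepsilon\,\E_{P_i^{(\varepsilon)}}[f]\big|_{0}=\Cov_P(h_i,f)$ — are all sound, and the cancellation of the $-\E_P[h_i]$ and $+\E_P[h_i]$ contributions is exactly right. However, the organization of the argument is genuinely different from the paper's. The paper differentiates $H(q_\varepsilon)=-\sum_o q_\varepsilon\log q_\varepsilon$ directly via the product rule, collapses the two resulting sums into $-\sum_o q_\varepsilon\,\partial_\varepsilon\log q_\varepsilon\,(1+\log q_\varepsilon)$, and then expands $\E_P\bigl[(h_i-\E_P[h_i])(1+\log P)\bigr]$ to reveal the covariance at the end. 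You instead substitute $\log P_i^{(\varepsilon)}=\log P+\varepsilon h_i-\log Z_i(\varepsilon)$ \emph{before} differentiating, splitting $H$ into three explicit pieces $-\E_{P_i^{(\varepsilon)}}[\log P]-\varepsilon\,\E_{P_i^{(\varepsilon)}}[h_i]+\log Z_i(\varepsilon)$. Your decomposition makes the role of each term transparent (the first carries the covariance, the latter two cancel), avoids the slightly opaque $(1+\log q_\varepsilon)$ factor, and connects cleanly to the exponential-family / cumulant viewpoint you mention. The paper's version is marginally shorter on the page but buries the cancellation inside an algebraic expansion; your version is arguably the more illuminating write-up. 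Your warning about not discarding the $-\varepsilon\,\E_{P_i^{(\varepsilon)}}[h_i]$ term before differentiating is well placed — that is precisely the step a careless reader would botch.
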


\begin{proof}
Let \(q_\varepsilon(o)= P_i^{(\varepsilon)}(o) \). By definition,
\[
H(q_\varepsilon)\ =\ -\sum_{o\in\mathcal O} q_\varepsilon(o)\,\log q_\varepsilon(o).
\]
Differentiating termwise,
\[
\frac{d}{d\varepsilon}H(q_\varepsilon)
= -\sum_{o} \Big(\frac{d}{d\varepsilon}q_\varepsilon(o)\Big)\,\log q_\varepsilon(o)
   -\sum_{o} q_\varepsilon(o)\,\frac{d}{d\varepsilon}\log q_\varepsilon(o).
\]
Since $\frac{d}{d\varepsilon}\log q_\varepsilon(o)=\frac{1}{q_\varepsilon(o)}\frac{d}{d\varepsilon}q_\varepsilon(o)$, we can combine the two sums:
\begin{equation}\label{eq:Hprime-general}
\frac{d}{d\varepsilon}H(q_\varepsilon)
= -\sum_{o} \Big(\frac{d}{d\varepsilon}q_\varepsilon(o)\Big)\,\Big(1+\log q_\varepsilon(o)\Big)
= -\sum_{o} q_\varepsilon(o)\,\frac{d}{d\varepsilon}\log q_\varepsilon(o)\,\Big(1+\log q_\varepsilon(o)\Big).
\end{equation}
From the definition of $q_\varepsilon$,
\[
\log q_\varepsilon(o)
= \log P(o) + \varepsilon\,h_i(o) - \log Z_i(\varepsilon).
\]
Therefore
\begin{equation}\label{eq:dlogq}
\frac{d}{d\varepsilon}\log q_\varepsilon(o)
= h_i(o) - \frac{d}{d\varepsilon}\log Z_i(\varepsilon).
\end{equation}
Compute the latter factor using $Z_i'(\varepsilon)=\sum_x P(x)\,h_i(x)\,e^{\varepsilon h_i(x)}$:
\[
\frac{d}{d\varepsilon}\log Z_i(\varepsilon)
= \frac{Z_i'(\varepsilon)}{Z_i(\varepsilon)}
= \frac{\sum_x P(x)\,h_i(x)\,e^{\varepsilon h_i(x)}}
       {\sum_x P(x)\,e^{\varepsilon h_i(x)}}
= \sum_x \frac{P(x)\,e^{\varepsilon h_i(x)}}{Z_i(\varepsilon)}\,h_i(x)
= \E_{q_\varepsilon}[\,h_i\,].
\]
Thus, evaluating \eqref{eq:dlogq} at $\varepsilon=0$ (where $q_0=P$),
\begin{equation}\label{eq:dlogq-at0}
\Big[\frac{d}{d\varepsilon}\log q_\varepsilon(o)\Big]_{\varepsilon=0}
= h_i(o) - \E_{q_0}[h_i]
= h_i(o) - \E_{P}[h_i].
\end{equation}
Using \eqref{eq:Hprime-general} and \eqref{eq:dlogq-at0},
\begin{align*}
\frac{d}{d\varepsilon}H(q_\varepsilon)\Big|_{\varepsilon=0}
&= -\sum_{o} P(o)\,\Big(h_i(o)-\E_P[h_i]\Big)\,\Big(1+\log P(o)\Big)\\
&= -\E_{P}\!\Big[\big(h_i-\E_P[h_i]\big)\,(1+\log P)\Big].
\end{align*}
Expand the expectation:
\begin{align*}
\E_{P}\!\Big[\big(h_i-\E_P[h_i]\big)\,(1+\log P)\Big]
&= \E_P[h_i(1+\log P)]
   - \E_P[h_i]\;\E_P[\,1+\log P\,]\\
&= \E_P[h_i\log P] + \E_P[h_i] - \E_P[h_i]\,(1+\E_P[\log P])\\
&= \E_P[h_i\log P] - \E_P[h_i]\;\E_P[\log P].
\end{align*}
Therefore
\[
\frac{d}{d\varepsilon}H\!\big(P_i^{(\varepsilon)}\big)\Big|_{\varepsilon=0}
= -\Big(\E_P[h_i\log P] - \E_P[h_i]\;\E_P[\log P]\Big)
= -\,\Cov_{P}\!\big(h_i,\log P\big).
\]
This is the claimed identity.
\end{proof}

We may now show the following theorem.

\begin{theorem}[Local unanimity impossibility at a fixed pool]\label{thm:local-impossibility}
Fix $P$ strictly positive on $\mathcal O$. Suppose $P_i=P_i^{(\varepsilon)} = P\,e^{\varepsilon h_i}/Z_i(\varepsilon)$ with $\sum_i\beta_i h_i\equiv 0$ as in \eqref{eq:sum-hi-zero}. If not all covariances $\operatorname{Cov}_P\left(h_i, \log P\right)$ are uniformly zero, then for all sufficiently small $\varepsilon>0$, it is impossible that $\Delta_{P_i}(P)>0$ holds for \emph{every} $i$.
\end{theorem}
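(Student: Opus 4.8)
I would argue by Taylor-expanding each welfare gap in $\varepsilon$ about $\varepsilon=0$, where $P_i^{(0)}=P$ and hence $\Delta_{P_i^{(0)}}(P)=0$. Using Proposition~\ref{Welfare_Gap}, write $\Delta_{P_i^{(\varepsilon)}}(P)=H\big(P_i^{(\varepsilon)}\big)-H(P)-\KL\big(P\,\|\,P_i^{(\varepsilon)}\big)$. The KL term is a shifted cumulant generating function of $h_i$ under $P$: since $\E_P\!\big[\log P_i^{(\varepsilon)}\big]=\E_P[\log P]+\varepsilon\,\E_P[h_i]-\log Z_i(\varepsilon)$, one gets $\KL\big(P\,\|\,P_i^{(\varepsilon)}\big)=\log Z_i(\varepsilon)-\varepsilon\,\E_P[h_i]=\tfrac12\varepsilon^2\,\Var_P(h_i)+O(\varepsilon^3)$, so it vanishes to first order and contributes the nonnegative amount $\tfrac12\varepsilon^2\,\Var_P(h_i)$ to second order. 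By Lemma~\ref{lemma:diff}, $\tfrac{d}{d\varepsilon}H\big(P_i^{(\varepsilon)}\big)\big|_{0}=-\Cov_P(h_i,\log P)$, and pushing the expansion of $H$ one more order produces a second-order coefficient built from $\Cov_P(h_i,\log P)$ and the third cumulant $\Cov_P\!\big((h_i-\E_Ph_i)^2,\log P\big)$. Collecting terms,
\[
\Delta_{P_i^{(\varepsilon)}}(P)\;=\;-\varepsilon\,\Cov_P(h_i,\log P)\;-\;\varepsilon^2 c_i\;+\;O(\varepsilon^3),\qquad c_i\;=\;\Var_P(h_i)\;+\;\tfrac12\,\Cov_P\!\big((h_i-\E_Ph_i)^2,\,\log P\big).
\]

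The first aggregation step is to exploit $\sum_i\beta_ih_i\equiv0$. Summing the linear coefficients gives $\sum_i\beta_i\Cov_P(h_i,\log P)=\Cov_P\!\big(\sum_i\beta_ih_i,\log P\big)=0$. Hence, assuming the weights are positive (the substantive case), one of two things happens: either some $\Cov_P(h_i,\log P)>0$, in which case that agent has $\Delta_{P_i^{(\varepsilon)}}(P)=-\varepsilon\,\Cov_P(h_i,\log P)+O(\varepsilon^2)<0$ for all sufficiently small $\varepsilon>0$ and we are done; or else every $\Cov_P(h_i,\log P)=0$, since a zero-weighted sum of nonpositive quantities forces each to vanish. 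So it remains to rule out strict unanimity in this degenerate regime, where $\Delta_{P_i^{(\varepsilon)}}(P)=-\varepsilon^2 c_i+O(\varepsilon^3)$ and we must exhibit some index $i$ with $c_i>0$, so that that agent strictly loses for all small $\varepsilon>0$.

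Aggregating the second-order coefficients: writing $S:=\sum_i\beta_i\big(h_i-\E_Ph_i\big)^2\ge0$ and $\widetilde{\log P}:=\log P-\E_P[\log P]$, one gets $\sum_i\beta_i c_i=\E_P[S]+\tfrac12\,\Cov_P(S,\log P)=\E_P\!\big[S\,(1+\tfrac12\widetilde{\log P})\big]$, and it suffices to show this is strictly positive whenever some $h_i$ is non-constant (which forces $S\not\equiv0$). The main obstacle is precisely this sign control. When $P$ is not too concentrated the multiplier $1+\tfrac12\widetilde{\log P}$ is pointwise positive — for $P$ uniform it is identically $1$, recovering the fact that no agent strictly benefits from joining the uniform pool — and the claim is immediate. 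For highly concentrated $P$ the multiplier can be negative on the rare outcomes, so one must argue that the orthogonality constraints $\Cov_P(h_i,\log P)=0$ for all $i$, together with $\sum_i\beta_ih_i\equiv0$, limit how strongly $S$ can concentrate its mass there; establishing a uniform lower bound on $\sum_i\beta_i c_i$ in terms of $\E_P[S]$ under these constraints is the heart of the argument, after which the remaining bookkeeping — descending from the $\beta$-weighted sum to a single agent, and absorbing the $O(\varepsilon^3)$ remainders uniformly in $i$ — is routine.
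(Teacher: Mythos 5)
Your critique of the purely first-order argument is exactly right, and in fact it applies to the paper's own proof. From $\sum_i\beta_i\Delta_i'(0)=0$ with $\beta_i>0$ one can only conclude that the $\Delta_i'(0)$ cannot all be strictly positive; the paper's parenthetical ``otherwise all $\Delta_i'(0)\ge0$ with some $>0$'' does not follow, since a smooth function vanishing at $0$ and strictly positive on $(0,\varepsilon_0)$ can have zero right-derivative (e.g.\ $\varepsilon\mapsto\varepsilon^2$). So, as you observe, one must handle the degenerate regime where every $\Cov_P(h_i,\log P)=0$, and your second-order expansion $\Delta_{P_i^{(\varepsilon)}}(P)=-\varepsilon^2c_i+O(\varepsilon^3)$ with $c_i=\Var_P(h_i)+\tfrac12\Cov_P\!\big((h_i-\E_Ph_i)^2,\log P\big)$ (for centered $h_i$) is correct. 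However, the sign control you flag as ``the heart of the argument'' is not a difficulty to be overcome: it cannot be established, because the theorem is false as stated.

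Here is an explicit counterexample. Take $\mathcal O=\{1,2,3\}$, $P=(1-2\delta,\delta,\delta)$, $n=2$, $\beta_1=\beta_2=\tfrac12$, and tilts $h_1=(0,1,-1)$, $h_2=-h_1$. Then $\sum_i\beta_ih_i\equiv0$, both $h_i$ are $P$-centered, and $\Cov_P(h_i,\log P)=0$ for both $i$ because $\log P$ takes the same value on outcomes $2$ and $3$. Since $h_1^2=h_2^2=(0,1,1)$,
\[
c_1=c_2=\Var_P(h_1)+\tfrac12\,\Cov_P\!\big(h_1^2,\log P\big)
=2\delta\Big[\,1+\tfrac{1-2\delta}{2}\,\log\tfrac{\delta}{1-2\delta}\,\Big],
\]
which is strictly negative once $\delta$ is small (for $\delta=0.01$, $c_i\approx-0.025$). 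Hence $\Delta_{P_i^{(\varepsilon)}}(P)=|c_i|\,\varepsilon^2+O(\varepsilon^3)>0$ for both $i$ and all sufficiently small $\varepsilon>0$, while the log-pool of $P_1^{(\varepsilon)},P_2^{(\varepsilon)}$ remains $P$ because $h_1+h_2\equiv0$. A direct check at $\delta=0.01$, $\varepsilon=0.1$ gives $\Delta_1=\Delta_2\approx2.5\times10^{-4}>0$. In this construction $S=\sum_i\beta_i(h_i-\E_Ph_i)^2=(0,1,1)$ is supported entirely on the rare outcomes, precisely where the multiplier $1+\tfrac12\big(\log P-\E_P[\log P]\big)$ is negative, so $\sum_i\beta_ic_i<0$ and no lower bound of the form you seek can hold. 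The statement would need a hypothesis controlling the concentration of $P$ --- for instance $\min_o P(o)\ge e^{-H(P)-2}$, which makes the multiplier pointwise nonnegative, or the restriction to $P$ uniform covered by Lemma~\ref{lem:no-gain-at-uniform} --- for the local-impossibility conclusion to hold.
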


\begin{proof}
Define $P_i^{(\varepsilon)}$ as above and write $\Delta_i(\varepsilon):=\Delta_{P_i^{(\varepsilon)}}(P)$. We claim
\begin{equation}\label{eq:Delta-deriv}
\Delta_i'(0)\ =\ -\,\Cov_{P}(h_i,\log P).
\end{equation}
To prove \eqref{eq:Delta-deriv}, note from Proposition~\ref{Welfare_Gap} that
$\Delta_i(\varepsilon)=H(P_i^{(\varepsilon)})-H(P)-\KL(P\|P_i^{(\varepsilon)})$.
Differentiate at $\varepsilon=0$. First, $\frac{d}{d\varepsilon}\KL(P\|P_i^{(\varepsilon)})\big|_{\varepsilon=0}
=\frac{d}{d\varepsilon}\E_{P}\!\big[\log\tfrac{P}{P_i^{(\varepsilon)}}\big]\big|_{\varepsilon=0}
= -\,\E_{P}\!\big[\tfrac{d}{d\varepsilon}\log P_i^{(\varepsilon)}\big]_{\varepsilon=0}$.
From $P_i^{(\varepsilon)} \propto P e^{\varepsilon h_i}$ we have
\(
\log P_i^{(\varepsilon)}=\log P+\varepsilon h_i-\log Z_i(\varepsilon),
\)
so $\frac{d}{d\varepsilon}\log P_i^{(\varepsilon)}\big|_{0}=h_i-\E_P[h_i]$.
Hence
\begin{equation}\label{eq:dKL}
\frac{d}{d\varepsilon}\KL(P\|P_i^{(\varepsilon)})\Big|_{0}
= -\,\big(\E_P[h_i]-\E_P[h_i]\big)=0.
\end{equation}
Next, $H(P_i^{(\varepsilon)})=-\E_{P_i^{(\varepsilon)}}[\log P_i^{(\varepsilon)}]$.
Differentiating at $0$ gives by Lemma~\ref{lemma:diff}
\[
\frac{d}{d\varepsilon}H(P_i^{(\varepsilon)})\Big|_{0}
= -\,\Cov_P(h_i,\log P).
\]
Combining with \eqref{eq:dKL} we obtain \eqref{eq:Delta-deriv}.
By \eqref{eq:sum-hi-zero}, $\sum_i\beta_i h_i\equiv 0$, so
\[
\sum_{i=1}^n \beta_i\,\Delta_i'(0)
= -\,\Cov_P\!\Big(\sum_i \beta_i h_i,\ \log P\Big)=0.
\]
Therefore the $\beta$–weighted average of the derivatives $\Delta_i'(0)$ is zero; in particular, they cannot all be strictly positive as $\Delta_i(0) = 0$. Hence, for all sufficiently small $\varepsilon>0$, it is impossible that $\Delta_i(\varepsilon)>0$ for every $i$ (otherwise all $\Delta_i'(0)\ge0$ with some $>0$ by right-derivative positivity, contradicting the weighted average zero).
\end{proof}

\begin{remark}
Theorem~\ref{thm:local-impossibility} does \emph{not} rule out the possibility that for some \emph{large} tilts (finite $\varepsilon$), all $\Delta_{P_i}(P)$ might be positive; it only precludes near-uniform (i.e., near-duplicate) small-tilt unanimity around a fixed pool $P$. For instance, contrast with the proof of Theorem~\ref{thm:openness}, which guarantees strict positivity for large tilts under some unrestrictive conditions.
\end{remark}

\subsection{On Joining Random Distributions}\label{Appendix:Joining_Random}

The next theorem shows that no distribution $P$ is incentivized, in the compositional sense, to join a maximal entropy or completely random distribution.  

\begin{lemma}[No strict gains at the uniform]\label{lem:no-gain-at-uniform}
Let $U$ be the uniform distribution on a finite $\mathcal O$ with $|\mathcal O|=m\ge 2$. For any strictly positive $R$,
\[
\Delta_R(U)\;=\;H(R)\;-\;\log m\;-\;\KL(U\|R)\;\le\;0,
\]
with equality iff $R=U$.
\end{lemma}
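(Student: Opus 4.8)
The plan is to read the claimed identity straight off Proposition~\ref{Welfare_Gap} and then dispatch the inequality by splitting $\Delta_R(U)$ into two individually-signed pieces. Applying Proposition~\ref{Welfare_Gap} with composition $P=U$ and agent belief $P_i=R$, and using $H(U)=\log m$, gives
\[
\Delta_R(U)\;=\;H(R)-H(U)-\KL(U\|R)\;=\;H(R)-\log m-\KL(U\|R),
\]
which is exactly the displayed formula; strict positivity of $R$ keeps every term finite.

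Next I would bound the two penalty terms separately. The uniform distribution maximizes Shannon entropy on a finite alphabet, so $H(R)-\log m\le 0$, with equality iff $R=U$ (this is Jensen applied to the concave map $t\mapsto-t\log t$, equivalently Gibbs' inequality). Likewise $\KL(U\|R)\ge 0$ by non-negativity of relative entropy, with equality iff $R=U$. Adding these, $\Delta_R(U)\le 0$; equality forces both bounds to be tight simultaneously, which happens exactly when $R=U$, while conversely $R=U$ plainly yields $\Delta_R(U)=\log m-\log m-0=0$.

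As an alternative (and a consistency check), one can compute directly: since $\KL(U\|R)=\sum_o \frac1m\log\frac{1}{mR(o)}=-\log m-\frac1m\sum_o\log R(o)$, the two $\log m$ terms cancel and
\[
\Delta_R(U)\;=\;-\sum_o R(o)\log R(o)+\frac1m\sum_o\log R(o)\;=\;-\sum_{o\in\mathcal O}\Bigl(R(o)-\tfrac1m\Bigr)\bigl(\log R(o)-\log\tfrac1m\bigr),
\]
where the constant $\log\tfrac1m$ may be inserted because $\sum_o\bigl(R(o)-\tfrac1m\bigr)=0$. Each summand is nonnegative since $(a-b)(\log a-\log b)\ge 0$ for all positive $a,b$, so $\Delta_R(U)\le 0$, with equality iff every summand vanishes, i.e.\ $R(o)=\tfrac1m$ for all $o$.

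There is no real obstacle here: the statement is a one-line corollary of two classical facts (the max-entropy property of $U$ and non-negativity of $\KL$), or of the elementary monotonicity inequality $(a-b)(\log a-\log b)\ge 0$. The only point needing a sentence of care is the equality case — verifying that tightness in the two separate bounds (equivalently, vanishing of every term in the direct computation) is equivalent to $R=U$ — together with noting that the strict positivity hypothesis on $R$ keeps $\KL(U\|R)$ and all logarithms finite and well defined.
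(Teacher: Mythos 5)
Your proof is correct and follows essentially the same path as the paper's: you derive the identity from Proposition~\ref{Welfare_Gap} and then conclude $\le 0$ by bounding $H(R)-\log m\le 0$ and $\KL(U\|R)\ge 0$ separately, which is the Gibbs-inequality step the paper invokes. Your alternative pointwise computation $\Delta_R(U)=-\sum_o\bigl(R(o)-\tfrac1m\bigr)\bigl(\log R(o)-\log\tfrac1m\bigr)$ is, term for term, the paper's own alternative, which rewrites the sum as $-\bigl(\KL(R\|U)+\KL(U\|R)\bigr)$; the two are the same Jeffreys-divergence decomposition expressed at different levels of aggregation.
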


\begin{proof}
By definition $\Delta_R(U)=\E_U[\log R]-\E_R[\log R]$. Since $H(U)=\log m$ and $\KL(U\|R)=\E_U[\log U-\log R]=-\log m-\E_U[\log R]$, we have
\[
\Delta_R(U)=\big(-\log m-\KL(U\|R)\big)+H(R)=H(R)-\log m-\KL(U\|R)\le 0,
\]
with equality iff $R=U$ (Gibbs’ inequality). Alternatively, for the final step, we may compute $\KL(R\|U)$ explicitly:
\begin{align*}
\KL(R\|U)
&= \sum_{o} R(o)\,\log\frac{R(o)}{1/m}
 \;=\; \sum_{o} R(o)\,\log R(o)\;+\;\sum_{o} R(o)\,\log m \\
&= \sum_{o} R(o)\,\log R(o) \;+\; \log m\cdot \sum_{o} R(o)
 \;=\; \sum_{o} R(o)\,\log R(o) \;+\; \log m \\
&= -\,H(R) \;+\; \log m.
\end{align*}
Rearranging gives the identity
\begin{equation}\label{eq:H-minus-logm}
H(R)\;-\;\log m\;=\;-\KL(R\|U).
\end{equation}
Therefore
\begin{align*}
H(R)\;-\;\log m\;-\;\KL(U\|R)
&=\; \big(-\KL(R\|U)\big)\;-\;\KL(U\|R) \\
&=\; -\Big(\KL(R\|U)\;+\;\KL(U\|R)\Big)\ \le\ 0,
\end{align*}
since each Kullback–Leibler divergence is nonnegative. Moreover, equality holds iff \(\KL(R\|U)=0\) and \(\KL(U\|R)=0\), which happens iff \(R=U\).
\end{proof}

Thus, we have provided an example of a distribution for which a strictly unanimous compositional group may not be formed. Can this result be extended to non-trivial distributions? We now show that there exists a set of non-uniform distributions such that they can in no way form a unanimously compositional group, regardless of which (non-trivial, i.e., bounded away from $0$) weights are assigned. In other words, certain agents or beliefs are fundamentally incompatible, and weights cannot in general make an arbitrary family unanimously compositional. 

\begin{theorem}[No universal weights for unanimity]\label{thm:no-universal-weights}
Fix $n\ge 2$. There exist $n$ strictly positive and non-uniform distributions $P_1,\dots,P_n$ on $\mathcal O=\{1,\dots,n\}$ such that, for \emph{every} choice of positive weights $\beta_i \ge \tau>0$ for $\tau$ arbitrarily small and $\sum_{i=1}^n\beta_i=1$, the log-pool $P$ fails to make all agents strictly better off; i.e., at least one index $i$ has $\Delta_{P_i}(P)<0$.
\end{theorem}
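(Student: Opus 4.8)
The plan is to exhibit a single symmetric family of ``mutually peaked'' distributions for which the welfare gap of each agent is governed, \emph{uniformly in the weights}, by one scalar: the pool's mass on that agent's preferred outcome. On the outcome space $\mathcal O=\{1,\dots,n\}$ fix a height $a\in(1/n,1)$, set $b:=(1-a)/(n-1)\in(0,1/n)$, and define $P_i(i)=a$ and $P_i(j)=b$ for $j\neq i$. Each $P_i$ is a valid, strictly positive distribution, and it is non-uniform (and the $P_i$ are pairwise distinct) because $a\neq b$; moreover $a>b$.

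First I would compute the logarithmic pool explicitly for an \emph{arbitrary} admissible weight vector $\beta$. Since $P_i(j)$ equals $a$ exactly when $i=j$ and $b$ otherwise, the unnormalized pool weight at $j$ factors as $\prod_{i=1}^n P_i(j)^{\beta_i}=a^{\beta_j}b^{\,1-\beta_j}$, so
\[
P(j)=\frac{a^{\beta_j}b^{\,1-\beta_j}}{\sum_{k}a^{\beta_k}b^{\,1-\beta_k}}=\frac{(a/b)^{\beta_j}}{\sum_{k}(a/b)^{\beta_k}}.
\]
Next, because $\log P_i$ takes only the two values $\log a$ and $\log b$, both $\E_P[\log P_i]=P(i)\log a+(1-P(i))\log b$ and $\E_{P_i}[\log P_i]=a\log a+(1-a)\log b$ are affine in a single number, and subtracting gives the clean identity
\[
\Delta_{P_i}(P)=\bigl(P(i)-a\bigr)\log\frac{a}{b}.
\]
Since $a>b$, the factor $\log(a/b)$ is strictly positive, so $\Delta_{P_i}(P)$ carries the sign of $P(i)-a$.

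The conclusion then follows from a normalization/counting argument that does not see $\beta$ at all: if the group were even non-strictly unanimously compositional we would need $P(i)\ge a$ for every $i$, hence $1=\sum_i P(i)\ge na>1$ by the choice $a>1/n$, a contradiction. Therefore, for \emph{every} admissible $\beta$ there is an index $j$ with $P(j)<a$, and for it $\Delta_{P_j}(P)=(P(j)-a)\log(a/b)<0$, which is exactly the asserted failure of unanimity. There is no deep obstacle here; the only point to flag is that the argument genuinely needs $a$ strictly above the uniform value $1/n$ (so that $na>1$ is strict and $\log(a/b)>0$), which is why we take $a\in(1/n,1)$ rather than $a=1/n$ — at $a=1/n$ each $P_i$ is uniform and $\Delta_{P_i}\equiv 0$. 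The construction also makes the ``fundamental incompatibility'' transparent: the $P_i$ collectively want the pool to concentrate on $n$ disjoint peaks each of height $a>1/n$, but a distribution has only unit mass to allocate, so whatever the weights some agent is always left strictly worse off.
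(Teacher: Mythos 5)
Your proof is correct, and it takes a genuinely cleaner route than the paper's. Both arguments use the same ``mutually peaked'' family (your $P_i$ with $P_i(i)=a$, $P_i(j)=b$ is the paper's $P_i^\varepsilon$ with $a=1-\varepsilon$, $b=\varepsilon/(n-1)$), but the key steps differ. The paper bounds the $\beta$-weighted sum $S^\varepsilon(\beta)=\sum_i\beta_i\Delta_{P_i}(P)$ via an asymptotic expansion of $\log Z^\varepsilon$, $H(P_i^\varepsilon)$, and $H(P^\varepsilon)$, concluding $S^\varepsilon(\beta)\to-\infty$ as $\varepsilon\to0^+$ and so at least one $\Delta_i<0$ once $\varepsilon<\varepsilon_0(\beta,n)$. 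You instead exploit that $\log P_i$ is two-valued, which collapses the welfare gap to the exact identity $\Delta_{P_i}(P)=(P(i)-a)\log(a/b)$, after which the obstruction is a one-line normalization argument: $\sum_i P(i)=1<na$ forces some $P(j)<a$ and hence $\Delta_{P_j}(P)<0$. Your approach buys several things: it is non-asymptotic (any fixed $a\in(1/n,1)$ works), it gives an exact sign characterization of each $\Delta_{P_i}(P)$ rather than an aggregate bound, and it dispenses entirely with the lower bound $\beta_i\ge\tau$ appearing in the statement — the argument works for all positive weights summing to one, sidestepping the implicit uniformity concern in the paper's $\varepsilon_0(\beta,n)$. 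It is also the natural $n$-outcome generalization of the closed-form $\Delta_i(x)=(x-x_i)\log\frac{x_i}{1-x_i}$ that the paper derives in its binary impossibility theorem, so it unifies the two results structurally.
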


\begin{proof}
For a parameter $\varepsilon\in(0,\tfrac12)$, define for each $i\in\{1,\dots,n\}$ the distribution $P_i^\varepsilon$ by
\[
P_i^\varepsilon(i)=1-\varepsilon, \qquad
P_i^\varepsilon(k)=\frac{\varepsilon}{n-1}\quad (k\ne i).
\]
These $P_i^\varepsilon$ are strictly positive and ``peak'' on outcome $i$. For any positive weights $\beta_1,\dots,\beta_n$ with $\sum_i\beta_i=1$, the (unnormalized) pooled mass at outcome $k$ is
\[
\widetilde P^\varepsilon(k)
= \prod_{i=1}^n \big(P_i^\varepsilon(k)\big)^{\beta_i}
= \big(1-\varepsilon\big)^{\beta_k}
\Big(\frac{\varepsilon}{n-1}\Big)^{\sum_{i\ne k}\beta_i}
= \big(1-\varepsilon\big)^{\beta_k}\Big(\frac{\varepsilon}{n-1}\Big)^{1-\beta_k}.
\]
Let
\[
Z^\varepsilon := \sum_{r=1}^n \widetilde P^\varepsilon(r)
= \sum_{r=1}^n \big(1-\varepsilon\big)^{\beta_r}\Big(\frac{\varepsilon}{n-1}\Big)^{1-\beta_r}.
\]
Then the pooled distribution is $P^\varepsilon(k) = \widetilde P^\varepsilon(k)/Z^\varepsilon$. Define
\[
S^\varepsilon(\beta)\ :=\ \sum_{i=1}^n \beta_i\,\Delta_{P_i^\varepsilon}\!\big(P^\varepsilon\big).
\]
Using $\Delta_R(P)=\E_P[\log R]-\E_R[\log R]$ and $\log P^\varepsilon = \sum_{i}\beta_i\log P_i^\varepsilon - \log Z^\varepsilon$, we obtain
\begin{align}
S^\varepsilon(\beta)
&= \sum_i \beta_i \E_{P^\varepsilon}[\log P_i^\varepsilon] \;-\; \sum_i \beta_i \E_{P_i^\varepsilon}[\log P_i^\varepsilon] \nonumber\\
&= \E_{P^\varepsilon}\!\Big[\sum_i \beta_i \log P_i^\varepsilon\Big] \;+\; \sum_i \beta_i H\!\big(P_i^\varepsilon\big)
\quad(\text{since }\E_{P_i^\varepsilon}[\log P_i^\varepsilon] = -H(P_i^\varepsilon)) \nonumber\\
&= \E_{P^\varepsilon}[\log P^\varepsilon + \log Z^\varepsilon] \;+\; \sum_i \beta_i H\!\big(P_i^\varepsilon\big) \nonumber\\
&< -H\!\big(P^\varepsilon\big) \;+\; \log Z^\varepsilon \;+\; \sum_i H\!\big(P_i^\varepsilon\big).
\label{eq:S-identity}
\end{align}
Therefore, if we show $S^\varepsilon(\beta)<0$, then at least one $\Delta_{P_i^\varepsilon}(P^\varepsilon)$ is negative (since the $\beta_i$ are positive). Let $\beta_\star:=\max_{1\le r\le n}\beta_r$ and $K:=\{r:\beta_r=\beta_\star\}$ (nonempty). Then
\begin{align*}
Z^\varepsilon
&=\sum_{r=1}^n \big(1-\varepsilon\big)^{\beta_r}\Big(\frac{\varepsilon}{n-1}\Big)^{1-\beta_r}\\
&=\Big(\frac{\varepsilon}{n-1}\Big)^{1-\beta_\star}\big(1-\varepsilon\big)^{\beta_\star}
\sum_{r=1}^n
\Big(\frac{\varepsilon}{n-1}\Big)^{\beta_\star-\beta_r}
\big(1-\varepsilon\big)^{\beta_r-\beta_\star}.
\end{align*}
For $r\notin K$, $\beta_\star-\beta_r>0$, hence
$\Big(\tfrac{\varepsilon}{n-1}\Big)^{\beta_\star-\beta_r}\to 0$ as $\varepsilon\to0^+$; for $r\in K$, that factor equals $1$. Thus
\[
\log Z^\varepsilon
= (1-\beta_\star)\log\!\Big(\frac{\varepsilon}{n-1}\Big) + \beta_\star\log(1-\varepsilon) + \log\!\Big(|K| + o(1)\Big).
\]
In particular, there exists a constant $C_1$ (depending only on $\beta$ and $n$) such that for all sufficiently small $\varepsilon$,
\begin{equation}\label{eq:logZ-bound}
\log Z^\varepsilon \ \le\ (1-\beta_\star)\,\log\varepsilon + C_1,
\qquad\text{with } (1-\beta_\star)>0 \text{ since } n\ge2.
\end{equation}
Now for each $i$,
\begin{align}
H\!\big(P_i^\varepsilon\big)
&= - (1-\varepsilon)\log(1-\varepsilon)
- \sum_{k\ne i} \frac{\varepsilon}{n-1}\,\log\!\Big(\frac{\varepsilon}{n-1}\Big) \nonumber\\
&= - (1-\varepsilon)\log(1-\varepsilon) - \varepsilon\log\varepsilon + \varepsilon\log(n-1).\label{eq:H-agent}
\end{align}
Hence there exists $C_2$ (independent of $i$) with
\begin{equation}\label{eq:sumH-agents}
\sum_{i=1}^n H\!\big(P_i^\varepsilon\big)
= n\Big(- (1-\varepsilon)\log(1-\varepsilon) - \varepsilon\log\varepsilon + \varepsilon\log(n-1)\Big)
\ \le\ C_2\,\varepsilon\big(1- \log\varepsilon\big)
\end{equation}
for all sufficiently small $\varepsilon$ (using $-\log(1-\varepsilon)\le 2\varepsilon$ for $\varepsilon\in(0,\tfrac12)$). Recall that $0\le H(P^\varepsilon)\le \log n$, which gives
\begin{equation}\label{eq:H-pool}
-\,H\!\big(P^\varepsilon\big)\ \le\ 0.
\end{equation}
Combine \eqref{eq:S-identity}, \eqref{eq:logZ-bound}, \eqref{eq:sumH-agents}, and \eqref{eq:H-pool}:
\[
S^\varepsilon(\beta)
\ \le\ 0\;+\;(1-\beta_\star)\log\varepsilon + C_1 \;+\; C_2\,\varepsilon\big(1-\log\varepsilon\big).
\]
As $\varepsilon\to 0^+$, the term $(1-\beta_\star)\log\varepsilon\to -\infty$ (since $1-\beta_\star>0$ and $\log\varepsilon\to -\infty$), whereas $C_1$ is constant and $\varepsilon(1-\log\varepsilon)\to 0$. Therefore $S^\varepsilon(\beta)\to -\infty$ as $\varepsilon\to 0^+$. In particular, there exists $\varepsilon_0=\varepsilon_0(\beta,n)\in(0,\tfrac12)$ such that for all $\varepsilon\in(0,\varepsilon_0)$ we have $S^\varepsilon(\beta)<0$.

Finally, since each $\beta_i>0$, $S^\varepsilon(\beta)<0$ implies not all $\Delta_{P_i^\varepsilon}(P^\varepsilon)$ can be strictly positive. Hence for every choice of positive weights $\beta$, unanimity fails for the family $\{P_i^\varepsilon\}_{i=1}^n$ once $\varepsilon$ is sufficiently small.
\end{proof}

\begin{corollary}
Even for $|\mathcal O|\ge 3$, given arbitrary agents $P_1,\dots,P_n$, there need not exist positive weights $\beta$ making the group unanimously compositional under epistemic utilities. In particular, the family $\{P_i^\varepsilon\}$ from Theorem~\ref{thm:no-universal-weights} is a counterexample for every choice of positive $\beta$ (for all sufficiently small $\varepsilon$).
\end{corollary}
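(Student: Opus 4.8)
The plan is to obtain the corollary as an essentially immediate consequence of Theorem~\ref{thm:no-universal-weights}; the only genuine content beyond that theorem is the cardinality requirement $|\mathcal O|\ge 3$ and a little bookkeeping to read the quantifiers correctly. Theorem~\ref{thm:no-universal-weights} already supplies, on $\mathcal O=\{1,\dots,n\}$, a family $\{P_i^\varepsilon\}$ of strictly positive non-uniform agents for which no positive weighting achieves unanimity; when $n\ge 3$ the ambient outcome space $\{1,\dots,n\}$ already has at least three elements, so nothing further is needed. The remaining case is $n=2$, where $\{1,2\}$ is too small.

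For $n=2$ I would rerun the construction on a three-point space $\mathcal O=\{1,2,3\}$, keeping the two peaks on outcomes $1$ and $2$: set $P_i^\varepsilon(i)=1-\varepsilon$ and $P_i^\varepsilon(k)=\varepsilon/2$ for $k\ne i$. Every estimate in the proof of Theorem~\ref{thm:no-universal-weights} uses only that each $P_i^\varepsilon$ concentrates mass $1-\varepsilon$ on a distinct peak and mass $\Theta(\varepsilon)$ on the remaining outcomes, so the same computation gives $\log Z^\varepsilon\le(1-\beta_\star)\log\varepsilon+C_1$ with $1-\beta_\star>0$, $\sum_i H(P_i^\varepsilon)=O(\varepsilon\log\tfrac1\varepsilon)$, and $-H(P^\varepsilon)\le 0$; hence the $\beta$-weighted welfare sum $S^\varepsilon(\beta)$ still diverges to $-\infty$ as $\varepsilon\to 0^+$. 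Since the $\beta_i$ are positive, $S^\varepsilon(\beta)<0$ forces $\Delta_{P_i^\varepsilon}(P^\varepsilon)<0$ for some $i$; thus the family admits no unanimously compositional weighting, and in fact some agent is strictly harmed under every weighting.

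The only point requiring a little care is the quantifier order in the phrase ``a counterexample for every choice of positive $\beta$ (for all sufficiently small $\varepsilon$)''. If one wants a single fixed family that defeats all admissible weights, I would fix a floor $\tau>0$ and restrict to the compact slice $\{\beta:\ \beta_i\ge\tau,\ \sum_i\beta_i=1\}$; there $\beta_\star\le 1-(n-1)\tau<1$, so $1-\beta_\star$ is bounded below and the constants $C_1,C_2$ can be chosen uniformly by continuity and compactness, yielding one threshold $\varepsilon_0=\varepsilon_0(\tau,n)$ that works for all $\beta$ in the slice. If instead one allows all strictly positive $\beta$ without a floor, one simply applies Theorem~\ref{thm:no-universal-weights} pointwise in $\beta$, with $\varepsilon$ then depending on $\beta$. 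I expect no real obstacle in either reading: the substantive estimates are entirely inside Theorem~\ref{thm:no-universal-weights}, and the corollary is a packaging statement whose only new ingredients are the $n=2$ embedding and this uniformity remark.
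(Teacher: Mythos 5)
Your proposal is correct, and its core--apply Theorem~\ref{thm:no-universal-weights} directly when $n\ge 3$, since the theorem's outcome space is $\mathcal O=\{1,\dots,n\}$ and therefore $|\mathcal O|\ge 3$ automatically--is exactly the paper's (implicit) argument. The paper states this corollary without a separate proof: it is an immediate restatement of the theorem, and the Remark that follows explicitly delegates the binary case $|\mathcal O|=2$ to the earlier Theorem~\ref{thm:binary_impossibility}. Two of your additions go beyond what the paper claims, and it is worth being precise about them. First, the $n=2$ embedding into a three-point space is correct as a construction, but it is not needed to prove the corollary as stated: the corollary's counterexample is declared to be ``the family $\{P_i^\varepsilon\}$ from Theorem~\ref{thm:no-universal-weights},'' which for $n=2$ lives on a two-point space and so falls outside the $|\mathcal O|\ge 3$ hypothesis; the paper simply does not assert a two-agent counterexample on a three-point space. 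Your extension is a genuine strengthening, and your verification that $\log Z^\varepsilon\le(1-\beta_\star)\log\varepsilon+C_1$, $\sum_i H(P_i^\varepsilon)=O(\varepsilon\log\tfrac1\varepsilon)$, and $-H(P^\varepsilon)\le 0$ still hold after adding an extra ``neutral'' outcome is sound, since the added outcome receives mass $\Theta(\varepsilon)$ from every agent and contributes only a lower-order term to $Z^\varepsilon$. Second, your quantifier remark is exactly the right reading: the theorem's proof produces $\varepsilon_0=\varepsilon_0(\beta,n)$, so to obtain a single family that defeats all admissible $\beta$ one needs the floor $\tau>0$ already present in the appendix statement; restricting to the compact slice $\{\beta:\beta_i\ge\tau,\ \sum_i\beta_i=1\}$ gives $\beta_\star\le 1-(n-1)\tau<1$ and, by continuity and compactness, a uniform $\varepsilon_0(\tau,n)$. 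This is the correct way to reconcile the main-text and appendix phrasings. In short, your proposal is correct and matches the paper's route; the extra work is sound but supplementary.
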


\begin{remark}
The proof employs a $\beta$-weighted sum $S^\varepsilon(\beta)$, showing it becomes negative when the agents are highly peaked on disjoint outcomes, thereby avoiding a case-by-case analysis of individual $\Delta_{P_i}(P)$ and establishing the result for all weight vectors simultaneously. While trivial obstructions exist--if all $P_i$ are identical, then $P=P_i$ for any $\beta$ and $\Delta_{P_i}(P)=0$ for all $i$, making strict unanimity impossible--Theorem~\ref{thm:no-universal-weights} provides a nontrivial obstruction in which the $P_i$ are distinct and strictly positive. Moreover, we have previously shown in Theorem~\ref{thm:binary_impossibility} that for $|\mathcal{O}|=2$, a stronger impossibility holds: for any two distinct agents and any positive $\beta$, at least one agent satisfies $\Delta_{P_i}(P)\le 0$. Our construction further shows that even when $|\mathcal{O}|=n\ge 3$ and there are $n$ agents, unanimity can fail for all choices of positive weights.
\end{remark}

\subsection{Openness of Strictly Unanimously Decomposable Pools}

\begin{definition}[Strict unanimous decomposability]
A distribution $P$ is \emph{strictly unanimously decomposable} (under epistemic utilities) if there exist an integer $n\ge 2$, positive weights $\beta_1,\dots,\beta_n$ with $\sum_i\beta_i=1$, and strictly positive agents $P_1,\dots,P_n$ such that
\[
P \ \propto\ \prod_{i=1}^n P_i^{\beta_i}
\qquad\text{and}\qquad
\Delta_{P_i}(P)\ :=\ \E_P[\log P_i]-\E_{P_i}[\log P_i]\ >\ 0\quad\forall i.
\]
We denote by $\mathcal U_{\rm strict}$ the set of all such $P$ in the simplex.
\end{definition}

We will show that the compositional property forms an open set in the simplex topology. 

\begin{theorem}[Openness]\label{thm:openness}
If $P\in\mathcal U_{\rm strict}$, then there exists an open neighborhood $\mathcal N$ of $P$ such that every $P'\in\mathcal N$ also belongs to $\mathcal U_{\rm strict}$. In particular, $\mathcal U_{\rm strict}$ is an open set in the simplex topology.
\end{theorem}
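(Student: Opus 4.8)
The plan is to lift the witnessing decomposition of $P$ to a witnessing decomposition of every nearby $P'$ via a \emph{pool-preserving transport}, and then conclude by continuity of the welfare gap. First I would fix a witness for $P\in\mathcal U_{\rm strict}$: an integer $n\ge 2$, positive weights $\beta_1,\dots,\beta_n$ with $\sum_i\beta_i=1$, and strictly positive agents $P_1,\dots,P_n$ with $P\propto\prod_i P_i^{\beta_i}$ and $\Delta_{P_i}(P)>0$ for every $i$. Since $\mathcal O$ is finite and $P$ is a logarithmic pool of strictly positive distributions, $P$ lies in the interior of the simplex, so I can pick $\delta>0$ with $\min_o P(o)\ge\delta$ and restrict attention to $P'$ in a small total-variation ball $\mathcal B$ around $P$ on which every $P'$ is still strictly positive; on $\mathcal B$ the ratios $P'(o)/P(o)$ are two-sided bounded and depend continuously on $P'$.

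For $P'\in\mathcal B$ I would define transported agents by the uniform multiplicative tilt
\[
P_i'(o)\ :=\ \frac{1}{Z_i(P')}\,P_i(o)\,\frac{P'(o)}{P(o)},\qquad Z_i(P'):=\sum_{x\in\mathcal O}P_i(x)\,\frac{P'(x)}{P(x)},
\]
each a strictly positive probability distribution on $\mathcal O$, with $P_i'=P_i$ when $P'=P$ and $P'\mapsto P_i'$ continuous on $\mathcal B$. Because $\sum_i\beta_i=1$, for every $o$ one has $\prod_{i}P_i'(o)^{\beta_i}\propto\big(\prod_i P_i(o)^{\beta_i}\big)\,(P'(o)/P(o))^{\sum_i\beta_i}\propto P(o)\cdot P'(o)/P(o)=P'(o)$, so $\{P_i'\}_{i=1}^n$ with the \emph{same} weights $\beta$ logarithmically pools to $P'$ exactly.

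It then remains to propagate the strict inequalities. By Proposition~\ref{Welfare_Gap}, $\Delta_{R}(S)=H(R)-H(S)-\KL(S\|R)$, which is continuous in $(R,S)$ on the interior of the simplex (equivalently one may invoke Lemma~\ref{lem:small-perturbation}). Hence $\phi_i(P'):=\Delta_{P_i'}(P')$ is continuous on $\mathcal B$ with $\phi_i(P)=\Delta_{P_i}(P)>0$, so each $\phi_i$ stays positive on some open neighborhood $\mathcal N_i\subseteq\mathcal B$ of $P$. Setting $\mathcal N:=\bigcap_{i=1}^n\mathcal N_i$, a finite intersection and hence open, every $P'\in\mathcal N$ has the witness $\big(n,\beta,(P_i')_{i}\big)$ with $\Delta_{P_i'}(P')>0$ for all $i$, i.e.\ $P'\in\mathcal U_{\rm strict}$. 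Since $P$ was an arbitrary point of $\mathcal U_{\rm strict}$, this both produces the desired neighborhood and shows $\mathcal U_{\rm strict}$ is open in the simplex topology.

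The hard part is not any single computation but getting the transport to satisfy all three requirements at once — the $P_i'$ must remain legitimate strictly positive distributions, their $\beta$-weighted log-pool must be \emph{exactly} $P'$, and the assignment $P'\mapsto P_i'$ must be continuous with value $P_i$ at $P$ — while staying in the interior so that the entropy and KL terms in $\Delta$ vary continuously. The uniform tilt above does all of this because $P$ is bounded away from $0$ on the finite set $\mathcal O$; the only things left to check are routine, namely that each normalizer $Z_i(P')$ is continuous and bounded away from $0$ on $\mathcal B$, and that no weight degenerates, which is automatic since $\beta$ is inherited unchanged from the witness for $P$.
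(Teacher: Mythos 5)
Your proof is correct and takes essentially the same approach as the paper: you use the identical pool-preserving multiplicative transport $P_i'(o)\propto P_i(o)\,P'(o)/P(o)$ (what the paper calls $\mathcal S_{P'\mid P}$ in Lemma~\ref{lem:transport}), verify it reproduces $P'$ as the $\beta$-weighted log-pool, and then conclude by joint continuity of the welfare gap (Lemma~\ref{lem:Delta-cont}) and a finite intersection of neighborhoods. No substantive differences.
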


The proof uses two elementary ingredients: a transport that preserves log-pooling and a continuity argument in total variation.

\paragraph{Total variation (TV) distance.}
For distributions $\mu,\nu$ on $\mathcal O$ we set
\[
\|\mu-\nu\|_{\mathrm{TV}}
\ :=\ \frac12\sum_{o\in\mathcal O}|\mu(o)-\nu(o)|
\ =\ \frac12\|\mu-\nu\|_1.
\]
Convergence in TV is equivalent to coordinatewise convergence in this finite setting.

\begin{lemma}[Pool-preserving transport around a base $P$]\label{lem:transport}
Fix a strictly positive base pool $P$ and a witnessing decomposition $P\propto\prod_{i=1}^n P_i^{\beta_i}$ with $\beta_i>0$ and $\sum_i\beta_i=1$. For any strictly positive target $R$, define
\begin{equation}\label{eq:S-transport}
\mathcal S_{R\mid P}(P_i)(o)\ :=\ \frac{P_i(o)\,\frac{R(o)}{P(o)}}{\sum_{x\in\mathcal O} P_i(x)\,\frac{R(x)}{P(x)}}\qquad(o\in\mathcal O).
\end{equation}
Then, writing $P_i':=\mathcal S_{R\mid P}(P_i)$, we have
\[
\prod_{i=1}^n \big(P_i'\big)^{\beta_i}\ \propto\ R
\quad\text{(with the same weights $\beta$)},
\]
and each $P_i'$ is strictly positive. Moreover, the map $R\mapsto P_i'=\mathcal S_{R\mid P}(P_i)$ is continuous in total variation.
\end{lemma}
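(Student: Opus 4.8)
The plan is to verify the three assertions in turn — strict positivity, the pooling identity, and continuity in total variation — each by a short direct computation, since $\mathcal S_{R\mid P}$ is just a normalized pointwise reweighting of $P_i$ by the likelihood ratio $R/P$.

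First I would check well-definedness and strict positivity. Because $P$ and each $P_i$ are strictly positive on the finite set $\mathcal O$ and $R$ is strictly positive, every factor $P_i(o)\,R(o)/P(o)$ is a positive real, and the denominator $C_i:=\sum_{x\in\mathcal O}P_i(x)\,R(x)/P(x)$ is a finite positive number. Hence $P_i':=\mathcal S_{R\mid P}(P_i)$ is a genuine strictly positive probability vector (it sums to one by construction).

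Next I would establish the pooling identity. Writing $P_i'(o)=P_i(o)\,(R(o)/P(o))/C_i$, raising to the power $\beta_i$, taking the product over $i$, and using $\sum_i\beta_i=1$ to collect the exponent on the ratio, one gets
\[
\prod_{i=1}^n P_i'(o)^{\beta_i}
=\Big(\prod_{i=1}^n P_i(o)^{\beta_i}\Big)\,\frac{R(o)}{P(o)}\,\prod_{i=1}^n C_i^{-\beta_i}.
\]
By the witnessing decomposition $P\propto\prod_i P_i^{\beta_i}$ we have $\prod_i P_i(o)^{\beta_i}=Z\,P(o)$ for the original normalizer $Z$, so the $P(o)$ factors cancel and the right-hand side equals $\big(Z\prod_i C_i^{-\beta_i}\big)\,R(o)$, a multiple of $R(o)$ by a constant independent of $o$. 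Therefore $\prod_i(P_i')^{\beta_i}\propto R$ with the same weights $\beta$, and renormalizing recovers $R$ exactly.

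Finally, for TV-continuity I would use that, on a finite $\mathcal O$, convergence in total variation is just coordinatewise convergence, so it suffices to show each coordinate $R\mapsto P_i'(o)$ is continuous. The numerator $P_i(o)\,R(o)/P(o)$ is linear (hence continuous) in $R$, and the denominator $C_i=\sum_x P_i(x)\,R(x)/P(x)$ is linear in $R$ and bounded below by $\min_{x\in\mathcal O}\{P_i(x)/P(x)\}\cdot\sum_x R(x)=\min_{x\in\mathcal O}\{P_i(x)/P(x)\}>0$, uniformly over all probability vectors $R$; thus $P_i'(o)$ is a ratio of continuous functions with denominator bounded away from zero, hence continuous (indeed locally Lipschitz) in $R$, and so $R\mapsto P_i'$ is TV-continuous. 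I do not anticipate a genuine obstacle here; the only point needing a little care is this uniform positive lower bound on $C_i$, which is what makes continuity hold cleanly at every strictly positive target $R$ and underlies the $\varepsilon$-ball perturbation argument in the proof of Theorem~\ref{thm:openness}.
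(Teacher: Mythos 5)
Your proposal is correct and follows essentially the same route as the paper: direct verification of strict positivity, the same algebraic cancellation via $\sum_i\beta_i=1$ and the witnessing decomposition $\prod_i P_i^{\beta_i}=Z\,P$, and continuity by treating $\mathcal S_{R\mid P}(P_i)$ as a ratio of continuous functions with nonvanishing denominator. The one small refinement you add is the uniform lower bound $C_i\ge \min_x\{P_i(x)/P(x)\}>0$ valid over the entire probability simplex, which upgrades the paper's local observation that $D_i(R)>0$ at the limit to an explicit Lipschitz-type statement; this is not needed for the lemma but is a clean extra.
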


\begin{proof}
Let $D_i(R):=\sum_x P_i(x)\,\frac{R(x)}{P(x)}\in(0,\infty)$. Then
\[
\big(P_i'(o)\big)^{\beta_i}
=\frac{P_i(o)^{\beta_i}\,\big(\tfrac{R(o)}{P(o)}\big)^{\beta_i}}{D_i(R)^{\beta_i}}.
\]
Multiplying over $i=1,\dots,n$ and using $\sum_i\beta_i=1$,
\[
\begin{aligned}
\prod_{i=1}^n \big(P_i'(o)\big)^{\beta_i}
&=\Big(\prod_{i=1}^n P_i(o)^{\beta_i}\Big)\cdot \Big(\frac{R(o)}{P(o)}\Big)^{\sum_i\beta_i}\cdot \Big(\prod_{i=1}^n D_i(R)^{-\beta_i}\Big) \\ 
&\propto\ \Big(\prod_{i=1}^n P_i(o)^{\beta_i}\Big)\cdot \frac{R(o)}{P(o)}
\ \propto\ P(o)\cdot\frac{R(o)}{P(o)}\ =\ R(o).
\end{aligned}
\]
Thus the pooled distribution is $R$. Now, fix  $P_i$, $P$ and define for target $R$ the (unnormalized) map
\[
\widetilde P_i^{\,R}(o)\ :=\ P_i(o)\,\frac{R(o)}{P(o)}.
\]
Then
\(
P_i'=\widetilde P_i^{\,R}/\|\widetilde P_i^{\,R}\|_1
\)
since $\sum_o \widetilde P_i^{\,R}(o)=D_i(R)$. If $R^{(n)}\to R$ in TV, then $\widetilde P_i^{\,R^{(n)}}\to \widetilde P_i^{\,R}$ in $\ell^1$ because $P_i/P$ is a fixed bounded positive vector and multiplication is continuous coordinatewise. Also $D_i(R^{(n)})=\|\widetilde P_i^{\,R^{(n)}}\|_1\to \|\widetilde P_i^{\,R}\|_1=D_i(R)>0$; hence normalization is continuous. Therefore $P_i'^{\,(n)}\to P_i'$ in TV.
\end{proof}

\begin{lemma}[Continuity of the welfare map]\label{lem:Delta-cont}
On the strictly positive simplex, the map
\[
\Phi:(R,P)\ \longmapsto\ \Delta_R(P)\ =\ H(R)-H(P)-\KL(P\|R)
\]
is jointly continuous in total variation.
\end{lemma}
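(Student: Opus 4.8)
The plan is to reduce $\Phi$ to a manifestly continuous expression. First I would use the definition of the welfare gap to rewrite
\[
\Phi(R,P)\ =\ \Delta_R(P)\ =\ \E_P[\log R]-\E_R[\log R]\ =\ \sum_{o\in\mathcal O}\bigl(P(o)-R(o)\bigr)\log R(o),
\]
which holds identically for all strictly positive $R,P$; equivalently, one may simply expand $H(R)-H(P)-\KL(P\|R)$ and cancel the common $\sum_o P(o)\log P(o)$ terms. This exhibits $\Phi$ as a finite sum over $\mathcal O$ of products of the three coordinate maps $(R,P)\mapsto P(o)$, $(R,P)\mapsto R(o)$, and $(R,P)\mapsto \log R(o)$.

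Next I would fix an arbitrary base point $(R_\star,P_\star)$ in the strictly positive simplex and observe that strict positivity is an open condition: there is a neighborhood of $(R_\star,P_\star)$ and a constant $\underline r>0$ such that $R(o)\ge \underline r$ for every $o$ throughout that neighborhood. On this neighborhood $t\mapsto\log t$ is continuous on $[\underline r,1]$, so $(R,P)\mapsto\log R(o)$ is continuous at $(R_\star,P_\star)$, while the coordinate projections $P(o)$ and $R(o)$ are trivially continuous. Since finite sums and products of continuous real-valued functions are continuous, $\Phi$ is continuous at $(R_\star,P_\star)$, hence jointly continuous on the whole strictly positive simplex. Because $\mathcal O$ is finite, convergence in total variation is equivalent to coordinatewise convergence, so the statement holds verbatim with the TV topology on each factor.

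As a cross-check (and an alternative route I would mention), one can argue termwise on the decomposition itself: $H(\cdot)$ is continuous on the interior of the simplex since $t\mapsto t\log t$ is continuous, and $\KL(P\|R)=\sum_o P(o)\bigl(\log P(o)-\log R(o)\bigr)$ is continuous wherever $P$ and $R$ are both strictly positive, by the same local lower-bound argument. I do not anticipate any genuine obstacle: the only subtlety is that $\KL$ and $\log$ fail to be continuous up to the boundary of the simplex, which is exactly why the hypothesis restricts to strictly positive distributions; once one is in the interior, the argument is a routine composition-of-continuous-maps verification.
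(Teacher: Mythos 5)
Your proof is correct and takes essentially the same route as the paper's: both reduce to coordinatewise continuity of the summands and invoke the equivalence of TV and pointwise convergence on a finite outcome space. Your initial cancellation of the $\sum_o P(o)\log P(o)$ terms to get $\Phi(R,P)=\sum_o\bigl(P(o)-R(o)\bigr)\log R(o)$ is a clean simplification that makes the dependence only on $\log R$ explicit, but it is an algebraic tidying rather than a different argument; your ``cross-check'' paragraph is essentially verbatim the paper's proof.
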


\begin{proof}
Write out the three terms explicitly:
\[
H(R)=-\sum_o R(o)\log R(o),\quad H(P)=-\sum_o P(o)\log P(o),\quad
\KL(P\|R)=\sum_o P(o)\,\big(\log P(o)-\log R(o)\big).
\]
On $(0,1)\times(0,1)$ the functions $(x,y)\mapsto x\log x$, $x\mapsto x\log x$, and $(x,y)\mapsto x\log(x/y)$ are continuous. Finite sums of continuous functions are continuous; TV convergence implies coordinatewise convergence; hence $\Phi$ is continuous.
\end{proof}

We can now prove the openness statement.

\begin{proof}[Proof of Theorem~\ref{thm:openness}]
Let $P\in\mathcal U_{\rm strict}$ with a witnessing decomposition $P\propto\prod_i P_i^{\beta_i}$ and strict gains $\Delta_{P_i}(P)>0$ for all $i$. Set
\[
\gamma\ :=\ \min_{1\le i\le n}\Delta_{P_i}(P)\ >\ 0.
\]
For any strictly positive $P'$ near $P$, define \emph{transported} agents
\[
P_i'\ :=\ \mathcal S_{P'\mid P}(P_i)
\quad\text{(as in \eqref{eq:S-transport})}.
\]
By Lemma~\ref{lem:transport}, $\prod_i (P_i')^{\beta_i}\propto P'$ (same weights), and the map $P'\mapsto P_i'$ is TV-continuous. Consider the functions
\[
F_i(P')\ :=\ \Delta_{P_i'}(P')\ =\ \Phi\big(P_i',\,P'\big),
\]
where $\Phi$ is from Lemma~\ref{lem:Delta-cont}. The composition of continuous maps is continuous; hence each $F_i$ is continuous at $P' = P$. Moreover, for $P'=P$ we have $P_i'=\mathcal S_{P\mid P}(P_i)=P_i$, so
\[
F_i(P)\ =\ \Delta_{P_i}(P)\ \ge\ \gamma\ >\ 0.
\]
By continuity, for each $i$ there exists a TV-neighborhood $\mathcal N_i$ of $P$ such that $F_i(P')>\gamma/2$ for all $P'\in\mathcal N_i$. Let
\(
\mathcal N:=\bigcap_{i=1}^n \mathcal N_i,
\)
which is again an open neighborhood of $P$. Then for every $P'\in\mathcal N$, the transported agents $P_1',\dots,P_n'$ (with the same weights $\beta$) pool to $P'$ and satisfy
\(
\Delta_{P_i'}(P')=F_i(P')>\gamma/2>0
\)
for all $i$. Hence $P'\in\mathcal U_{\rm strict}$, proving openness.
\end{proof}

\begin{remark}
If $|\mathcal O|=2$, strict unanimity is impossible for any decomposition (see the binary impossibility argument). Thus $\mathcal U_{\rm strict}=\varnothing$ on the binary simplex, and the theorem holds vacuously (the empty set is open).
\end{remark}

\begin{remark}
If some outcomes are allotted zero probability, all continuity/KL arguments still go through upon restricting to a fixed common support $S\subseteq\mathcal O$ on which all distributions are strictly positive, and treating the simplex on $S$. The transport \eqref{eq:S-transport} preserves support and continuity on the restricted simplex.
\end{remark}

\section{Modeling the Waluigi Effect}\label{Appendix:Waluigi}

\paragraph{Informal background.}
The \emph{Waluigi Effect} is the empirical phenomenon that, after training an LLM to satisfy a desirable property $P$
(e.g.\ helpfulness), it can become \emph{easier} to elicit responses with the opposite property $-P$ (e.g.\ hostility)~\cite{Nardo2023WaluigiEffect,AF_WaluigiEffect2023,WhyBehindAI2025WaluigiConfirmed},
often via prompt steering or role-play~\citep{Qureshi2023WiredWaluigi,Shah2023PersonaModulation,BereskaGavves2023TamingSimulators}. We now formalize a mechanism for this effect using our compositional model.

In the preceding sections, we considered the \emph{decomposition} or flexible \emph{factorization} of a parent agent \(P\) into child subagents \(P_i\). We now reverse this perspective. Suppose we have an established \emph{witnessing set} of distributions \(P_1,\dots,P_n\) that combine to yield \(P\). These witnesses may be viewed as distinct subagents, or personas, that emerged during training. In what follows, we work directly at the witness level, examining how these component distributions change when constraints are imposed on the parent distribution \(P\).

\subsection{Defining Centered Logarithmic Character Profiles}

To preserve the intuition of logarithmic probabilities, we now write the epistemic utilities as $L(o):=\log P(o)$ for the parent agent and $l_i(o):=\log P_i(o)$ for the child subagents or witnesses. Then, we may define the $P$-centered \emph{log profile} of agent $i$ by
\[
v_i(o)\ :=\ l_i(o)\ -\ \E_{P}[\,l_i\,]\qquad(o\in\mathcal O),
\]
so that $\E_{P}[v_i]=0$ for all $i$. We equip functions on $\mathcal O$ with the inner product
$\langle f,g\rangle_P:=\sum_{o} P(o) f(o)g(o)$ and the induced seminorm $\|f\|_P:=\sqrt{\langle f,f\rangle_P}$. Proposition~\ref{prop:norm} formally confirms that this defines a norm:

\begin{proposition}[$\|\cdot\|_P$ is a norm]\label{prop:norm}
If $P(o)>0$ for all $o\in\mathcal O$ (strict positivity), then $\langle\cdot,\cdot\rangle_P$ is an inner product on the real vector space $\mathbb R^{\mathcal O}$, and $\|\cdot\|_P$ is a norm. In particular:
\begin{enumerate}
\item (Symmetry) $\langle f,g\rangle_P=\langle g,f\rangle_P$.
\item (Bilinearity) $\langle af+bh,\,g\rangle_P=a\langle f,g\rangle_P+b\langle h,g\rangle_P$ and similarly in the second slot.
\item (Positive definiteness) $\langle f,f\rangle_P>0$ for all $f\neq 0$.
\item (Norm axioms) $\|f\|_P\ge 0$ with equality iff $f=0$; $\|\alpha f\|_P=|\alpha|\,\|f\|_P$; and the triangle inequality $\|f+g\|_P\le \|f\|_P+\|g\|_P$ holds.
\end{enumerate}
\end{proposition}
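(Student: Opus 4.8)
The plan is to verify each listed property directly from the definition $\langle f,g\rangle_P=\sum_{o\in\mathcal O}P(o)f(o)g(o)$, using only that $\mathcal O$ is finite and $P(o)>0$ for every $o$. Symmetry is immediate since real multiplication commutes: $P(o)f(o)g(o)=P(o)g(o)f(o)$ termwise, so summing gives $\langle f,g\rangle_P=\langle g,f\rangle_P$. Bilinearity follows from distributing the finite sum: for scalars $a,b$ and vectors $f,h,g$ one has $\langle af+bh,g\rangle_P=\sum_o P(o)\bigl(af(o)+bh(o)\bigr)g(o)=a\langle f,g\rangle_P+b\langle h,g\rangle_P$, and the identity in the second slot is then obtained by combining this with symmetry. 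These two steps are purely formal and carry no difficulty.

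For positive definiteness I would write $\langle f,f\rangle_P=\sum_o P(o)f(o)^2$, a finite sum of terms each nonnegative because $P(o)>0$ and $f(o)^2\ge 0$; hence $\langle f,f\rangle_P\ge 0$ always. If $f\neq 0$, then $f(o_\star)\neq 0$ for some $o_\star$, so the term $P(o_\star)f(o_\star)^2$ is strictly positive while all other terms are nonnegative, giving $\langle f,f\rangle_P>0$. This is precisely where strict positivity of $P$ enters: if some $P(o)$ vanished, a function supported only on that outcome would be a nonzero null vector and one would obtain merely a seminorm. The norm axioms then follow mechanically: $\|f\|_P=\sqrt{\langle f,f\rangle_P}$ is well defined and nonnegative since the radicand is $\ge 0$, and vanishes iff $\langle f,f\rangle_P=0$ iff $f=0$; homogeneity is $\|\alpha f\|_P=\sqrt{\alpha^2\langle f,f\rangle_P}=|\alpha|\,\|f\|_P$ using bilinearity.

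The only step requiring more than bookkeeping is the triangle inequality, for which I would invoke the standard Cauchy--Schwarz argument valid in any real inner product space: the quadratic $t\mapsto\langle f+tg,\,f+tg\rangle_P=\|f\|_P^2+2t\langle f,g\rangle_P+t^2\|g\|_P^2$ is nonnegative for all $t\in\mathbb R$ by positive semidefiniteness, so (when $g\neq 0$) its discriminant is $\le 0$, yielding $|\langle f,g\rangle_P|\le\|f\|_P\|g\|_P$; expanding $\|f+g\|_P^2=\|f\|_P^2+2\langle f,g\rangle_P+\|g\|_P^2\le(\|f\|_P+\|g\|_P)^2$ and taking square roots finishes it, with the degenerate case $g=0$ handled trivially. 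I expect no genuine obstacle here: the proposition is a textbook verification, and the single conceptual point worth flagging is that finiteness of $\mathcal O$ together with strict positivity of $P$ are exactly what upgrade the manifest semi-inner-product structure to a bona fide inner product, and hence $\|\cdot\|_P$ to a norm rather than a seminorm.
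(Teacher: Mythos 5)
Your proof is correct and follows essentially the same route as the paper's: verify symmetry and bilinearity by inspection, prove positive definiteness by isolating the term at an outcome where $f$ is nonzero (the paper uses the same lower bound $\langle f,f\rangle_P\ge P(o^\star)f(o^\star)^2>0$), and derive the triangle inequality from Cauchy--Schwarz via the expansion of $\|f+g\|_P^2$. The only cosmetic difference is that you re-derive Cauchy--Schwarz via the nonnegative-quadratic discriminant argument, whereas the paper simply invokes it as a standard fact for inner product spaces; both are fine, and your version is marginally more self-contained.
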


\begin{proof}
Symmetry and bilinearity are immediate from the finite sum definition. For positive definiteness, if $f\neq 0$ there exists $o^\star$ with $f(o^\star)\neq 0$, and since $P(o^\star)>0$ we have
\[
\langle f,f\rangle_P=\sum_o P(o)f(o)^2\ \ge\ P(o^\star)f(o^\star)^2\ >\ 0.
\]
Thus $\langle\cdot,\cdot\rangle_P$ is an inner product. The norm axioms then follow from standard facts about inner product norms: nonnegativity and homogeneity are immediate, and the triangle inequality follows from
\[
\|f+g\|_P^2=\|f\|_P^2+\|g\|_P^2+2\langle f,g\rangle_P
\ \le\ \|f\|_P^2+\|g\|_P^2+2\,\|f\|_P\,\|g\|_P
\ =\ (\|f\|_P+\|g\|_P)^2,
\]
where we used the Cauchy--Schwarz inequality. Taking square roots gives $\|f+g\|_P\le \|f\|_P+\|g\|_P$.
\end{proof}

Our modeling approach is informed by the following intuition. 
Consider an LLM agent \(P\) whose behavior admits a unanimously compositional witnessing decomposition, with the witnesses interpreted as emergent personas formed during training.  
By the stability result (Theorem~\ref{thm:openness}), there exists an \(\varepsilon\)-ball around \(P\) within which the unanimously compositional structure is preserved.  
In the context of fine-tuning, we assume that each backpropagation step induces a small change to the agent’s profile: the original parent \(P\) is updated to a new parent \(P'\) that remains within this \(\varepsilon\)-ball, and therefore also admits a unanimously compositional witnessing decomposition.  

An alternative viewpoint is to express this constraint in terms of a \(\KL\)-budget. During fine-tuning, a \(\KL\)-regularization term is often introduced to preserve baseline capabilities while steering the model toward desired traits such as benevolence and helpfulness, thereby constraining divergence from the base model to remain within a specified bound.  
In Appendix~\ref{Appendix:KL_Budget}, we unify these two perspectives and show that they are essentially equivalent.  

This leads to a natural question: under such settings, can we theoretically characterize any macroscopic emergent properties of the witnesses?

For this purpose, we define
\[
\Delta L(o) \;:=\; \log\!\left(\frac{P'(o)}{P(o)}\right),
\]
which measures the change in epistemic utility between the original parent \(P\) and the updated parent \(P'\).  
The change in witness weights \(\beta' - \beta = \Delta \beta = (\Delta \beta_1,\dots,\Delta \beta_n)\) must sum to zero coordinate-wise for the witnesses to remain a valid decomposition of \(P'\).  
We may then classify \(\Delta L(o)\) to first order in Theorem~\ref{lem:linearization}.

\begin{theorem}
[First--order log deviation under weight changes]\label{lem:linearization}
Let $\beta'=\beta+\Delta\beta$ and $P'$ be the log--pool at $\beta'$.
Define
\[
S(\beta,o):=\sum_{i=1}^m \beta_i\, l_i(o)\qquad\text{and}\qquad Z(\beta):=\sum_{u\in\mathcal O}\exp\big(S(\beta,u)\big).
\]
Then $L(o):=\log P(o)=S(\beta,o)-\log Z(\beta)$.
To first order in $\Delta\beta$ we have
\[
\Delta L(o)\ :=\ \log\frac{P'(o)}{P(o)}\ =\ \sum_{i=1}^m \Delta\beta_i\,v_i(o)\ +\ o(\|\Delta\beta\|)\, ,
\]
and in particular
\[
\big\|\Delta L\big\|_P\ \le\ \sum_{i=1}^m |\Delta\beta_i|\,\|v_i\|_P\ +\ o(\|\Delta\beta\|).
\]
\end{theorem}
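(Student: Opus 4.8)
The plan is to treat $\beta\mapsto L(\cdot\,;\beta)$, where $L(o;\beta):=S(\beta,o)-\log Z(\beta)$, as a smooth $\mathbb R^{\mathcal O}$-valued function of the weight vector and simply read off its first-order Taylor expansion coordinatewise. Since $\mathcal O$ is finite and $Z(\beta)=\sum_{u}\exp(S(\beta,u))>0$ for \emph{every} $\beta$, for each fixed $o$ the map $\beta\mapsto L(o;\beta)$ is real-analytic; in particular it is $C^2$ on a closed ball $\bar B_\rho(\beta)$ of small radius $\rho>0$, and because $\mathcal O$ is finite there is a single constant $M$ bounding all second partials of $L(o;\cdot)$ on $\bar B_\rho(\beta)$, uniformly in $o$.

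First I would compute the gradient at $\beta$. From $\partial S(\beta,o)/\partial\beta_i=l_i(o)$ and
\[
\frac{\partial}{\partial\beta_i}\log Z(\beta)=\frac{1}{Z(\beta)}\sum_{u}\exp(S(\beta,u))\,l_i(u)=\sum_u P(u)\,l_i(u)=\E_P[l_i],
\]
one gets $\partial L(o;\beta)/\partial\beta_i=l_i(o)-\E_P[l_i]=v_i(o)$, i.e.\ the gradient of $L(o;\cdot)$ at $\beta$ is precisely the vector of centered log profiles $(v_1(o),\dots,v_m(o))$. Taylor's theorem (scalar output, for each fixed $o$) with quadratic remainder then gives, whenever $\|\Delta\beta\|$ is small enough that $\beta+\Delta\beta\in\bar B_\rho(\beta)$,
\[
\Delta L(o)=L(o;\beta+\Delta\beta)-L(o;\beta)=\sum_{i=1}^m\Delta\beta_i\,v_i(o)+R(o,\Delta\beta),\qquad |R(o,\Delta\beta)|\le\tfrac12 m^2 M\,\|\Delta\beta\|^2 ,
\]
so $\max_o|R(o,\Delta\beta)|=O(\|\Delta\beta\|^2)=o(\|\Delta\beta\|)$, which is the first displayed identity.

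For the norm statement I would apply the triangle inequality for $\|\cdot\|_P$ — a genuine norm by Proposition~\ref{prop:norm} — to the decomposition $\Delta L=\sum_i\Delta\beta_i v_i+R(\cdot,\Delta\beta)$, obtaining $\|\Delta L\|_P\le\sum_i|\Delta\beta_i|\,\|v_i\|_P+\|R(\cdot,\Delta\beta)\|_P$, and then bound $\|R(\cdot,\Delta\beta)\|_P\le\max_o|R(o,\Delta\beta)|\cdot\big(\sum_o P(o)\big)^{1/2}=\max_o|R(o,\Delta\beta)|=o(\|\Delta\beta\|)$, using $\sum_o P(o)=1$. Combining the two displays yields the claimed inequality.

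I do not expect a genuine obstacle here: the argument is essentially the differentiation of a log-partition function. The only points needing care are (i) uniformity of the Taylor remainder in $o$, which is immediate from finiteness of $\mathcal O$ together with $C^2$-smoothness of each $L(o;\cdot)$ on the compact neighborhood $\bar B_\rho(\beta)$, and (ii) converting the uniform (sup-norm) $o(\|\Delta\beta\|)$ control on $R$ into an $o(\|\Delta\beta\|)$ control in $\|\cdot\|_P$, which is the elementary bound $\|f\|_P\le\|f\|_\infty$ for a probability weight $P$. If one prefers not to invoke $C^2$-smoothness, mere differentiability of $L(o;\cdot)$ at $\beta$ suffices for an $o(\|\Delta\beta\|)$ remainder, with uniformity obtained by taking the worst of the finitely many pointwise remainders indexed by $o\in\mathcal O$.
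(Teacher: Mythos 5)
Your proof is correct and follows essentially the same route as the paper's: both compute $\partial_{\beta_i}L(\beta,o)=v_i(o)$ by differentiating the log-partition function $\log Z(\beta)$, then invoke Taylor's theorem and the triangle inequality for $\|\cdot\|_P$. You are somewhat more careful than the paper about the uniformity of the Taylor remainder over the finitely many $o\in\mathcal O$ (using a $C^2$ bound on a compact ball) and about converting the sup-norm control into a $\|\cdot\|_P$ bound via $\|f\|_P\le\|f\|_\infty$; the paper compresses these points into a single appeal to smoothness, but the underlying argument is identical.
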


\begin{proof}
By definition of the log-pool,
\[
P(o)=\frac{\prod_{i=1}^m P_i(o)^{\beta_i}}{\sum_{u\in\mathcal O}\prod_{i=1}^m P_i(u)^{\beta_i}}
=\frac{\exp\big(\sum_i \beta_i \log P_i(o)\big)}{\sum_{u\in\mathcal O}\exp\big(\sum_i \beta_i \log P_i(u)\big)}
=\frac{e^{S(\beta,o)}}{Z(\beta)}.
\]
Hence
\[
L(o):=\log P(o)=S(\beta,o)-\log Z(\beta).
\]
Now, fix $o\in\mathcal O$. The map $\beta\mapsto L(o)$ is $C^\infty$ as a composition of smooth functions on a finite sum within a positive domain. Thus, by the multivariate Taylor theorem at $\beta$,
\[
L(\beta+\Delta\beta,\,o)\ =\ L(\beta,o)\ +\ \sum_{i=1}^m \partial_{\beta_i}L(\beta,o)\,\Delta\beta_i\ +\ r_o(\Delta\beta),
\]
where the remainder satisfies $r_o(\Delta\beta)=o(\|\Delta\beta\|)$ as $\Delta\beta\to 0$. Note that $\|\Delta\beta\|$ denotes any fixed Euclidean norm on $\mathbb R^m$. Therefore,
\[
\Delta L(o)\ =\ \sum_{i=1}^m \partial_{\beta_i}L(\beta,o)\,\Delta\beta_i\ +\ o(\|\Delta\beta\|).
\]
It remains to compute the partial derivatives $\partial_{\beta_i}L(\beta,o)$. Using $L(\beta,o)=S(\beta,o)-\log Z(\beta)$ and the chain rule,
\[
\partial_{\beta_i}L(\beta,o)\ =\ \partial_{\beta_i} S(\beta,o)\ -\ \partial_{\beta_i}\log Z(\beta).
\]
The first term is immediate from the definition of $S$:
\[
\partial_{\beta_i}S(\beta,o)\ =\ l_i(o).
\]
For the second term, apply the chain rule to $\log Z$:
\[
\partial_{\beta_i}\log Z(\beta)\ =\ \frac{1}{Z(\beta)}\,\partial_{\beta_i} Z(\beta).
\]
By definition of $Z$ and again by the chain rule,
\[
\partial_{\beta_i} Z(\beta)
=\sum_{u\in\mathcal O} \partial_{\beta_i}\!\left(e^{S(\beta,u)}\right)
=\sum_{u\in\mathcal O} e^{S(\beta,u)}\,\partial_{\beta_i}S(\beta,u)
=\sum_{u\in\mathcal O} e^{S(\beta,u)}\, l_i(u).
\]
Therefore
\[
\partial_{\beta_i}\log Z(\beta)\ =\ \frac{\sum_{u\in\mathcal O} e^{S(\beta,u)}\, l_i(u)}{Z(\beta)}
=\sum_{u\in\mathcal O} \frac{e^{S(\beta,u)}}{Z(\beta)}\, l_i(u)
=\sum_{u\in\mathcal O} P(u)\, l_i(u)\ =\ \E_{P}[\,l_i\,].
\]
Combining the pieces,
\[
\partial_{\beta_i}L(\beta,o)\ =\ l_i(o)\ -\ \E_{P}[\,l_i\,]\ =\ v_i(o).
\]
Substituting $\partial_{\beta_i}L(\beta,o)=v_i(o)$ into the first-order Taylor expansion gives
\begin{equation}\label{eq:LuigiApproximation}
\Delta L(o)\ =\ \sum_{i=1}^m \Delta\beta_i\, v_i(o)\ +\ o(\|\Delta\beta\|),
\end{equation}
as claimed. Define the remainder function $r(o):= \Delta L(o)-\sum_i \Delta\beta_i v_i(o)$, so that $\|r\|_P=o(\|\Delta\beta\|)$ by the Taylor residual on smooth functions. Then
\[
\|\Delta L\|_P\ \le\ \left\|\sum_{i=1}^m \Delta\beta_i v_i\right\|_P\ +\ \|r\|_P.
\]
Using the triangle inequality and homogeneity of the norm,
\[
\left\|\sum_{i=1}^m \Delta\beta_i v_i\right\|_P\ \le\ \sum_{i=1}^m \|\Delta\beta_i v_i\|_P\ =\ \sum_{i=1}^m |\Delta\beta_i|\,\|v_i\|_P.
\]
Therefore
\[
\|\Delta L\|_P\ \le\ \sum_{i=1}^m |\Delta\beta_i|\,\|v_i\|_P\ +\ o(\|\Delta\beta\|),
\]
which completes the proof.
\end{proof}

\subsection{The Law of Weight-Compensation: Manifesting Luigi Forces Waluigi}
We may now model the Waluigi effect, where ``Luigi'' is taken to be a benevolent persona or child subagent desideratum manifested during model training. Fix an index $H \in \mathbb{Z}_{>0}$ (for a helpful logit vector component pointing toward Luigi). As the log-profiles have been centered in expectation, we say a vector $j$ is \emph{aligned} with $H$ if $\langle v_j,v_H\rangle_P\ge 0$
and \emph{anti-aligned} if $\langle v_j,v_H\rangle_P<0$.
Intuitively, $v_H$ points in the direction in log--probability space that Luigi prefers; anti-aligned components push against it.

In modeling a coherent and stable neural agent, we aim to preserve its underlying compositional structure. If a unanimously compositional decomposition exists, then there is an \(\varepsilon\)-ball around the agent’s profile within which the compositional property is maintained. In Theorem~\ref{thm:compensation-corrected}, we examine the effects of introducing a targeted persona--``Luigi''--while ensuring that the overall agent remains within this compositional neighborhood. We prove that this process necessarily manifests or strengthens the weights of an anti-aligned persona to Luigi, which we denote ``Waluigi,'' under the assumption that the log-profile change remains within the \(\varepsilon\)-ball to preserve the compositional property or $\KL$-budget.

\begin{theorem}[Waluigi emergence]\label{thm:compensation-corrected}
Let $P$ be the log--pool at weights $\beta$ and let $v_i:=\log P_i-\E_P[\log P_i]$ (i.e., the log-profile of agent $i$).
Fix $\delta>0$ and perturb to $\beta'=\beta+\Delta\beta$ with $\Delta\beta_H=\delta$ and $\sum_i\Delta\beta_i=0$.
Let $P'$ be the new log--pool and write
\[
\Delta L\ :=\ \log\frac{P'}{P}\ =\ \sum_{i=1}^m \Delta\beta_i\,v_i\ +\ r,
\]
where $r=o(\|\Delta\beta\|)$ in $\|\cdot\|_P$ (see Lemma~\ref{lem:linearization}). Assume the pooled distribution is stable in logit deviation, \(
\|\Delta L\|_P\ \le\ \varepsilon.
\)
Then, for every choice of $\Delta\beta$,
\begin{equation}\label{eq:master-correct}
\sum_{i:\,\langle v_i,v_H\rangle_P<0} (\Delta\beta_i)^+\ \big|\langle v_i,v_H\rangle_P\big|
\ \ge\ \delta\,\|v_H\|_P^2\ -\ (\varepsilon+\|r\|_P)\,\|v_H\|_P\ -\!\!\sum_{j:\,\langle v_j,v_H\rangle_P\ge 0}\!\!(\Delta\beta_j)^-\ \langle v_j,v_H\rangle_P,
\end{equation}
where $x^\pm:=\max\{\pm x,0\}$. In particular, if $W$ is the \emph{only} anti-aligned component
($\langle v_W,v_H\rangle_P<0\le \langle v_j,v_H\rangle_P$ for all $j\neq W$), and the weights $\Delta\beta_j$ of aligned components $\{j:\langle v_j, v_H\rangle_P\}$ are not downgraded 
by $(\Delta \beta_j)^- \ge 0$, then
\begin{equation}\label{eq:W-lower-bound-correct}
(\Delta\beta_W)^+\ \ge\ \frac{\ \delta\,\|v_H\|_P^2\ -\ (\varepsilon+\|r\|_P)\,\|v_H\|_P\ }{\,\big|\langle v_W,v_H\rangle_P\big|}\ .
\end{equation}
Consequently, whenever $\varepsilon+\|r\|_P<\delta\,\|v_H\|_P$, the Waluigi weight must increase by a strictly positive amount.
\end{theorem}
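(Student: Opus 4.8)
The plan is to test the first-order expansion $\Delta L=\sum_{i=1}^m\Delta\beta_i v_i+r$ supplied by Theorem~\ref{lem:linearization_Main_Text} against the single ``Luigi direction'' $v_H$ in the inner product $\langle\cdot,\cdot\rangle_P$, and then to split each coordinate $\Delta\beta_i$ into its positive and negative parts $(\Delta\beta_i)^\pm$. First I would apply $\langle\,\cdot\,,v_H\rangle_P$ to both sides: by bilinearity,
\[
\langle\Delta L,v_H\rangle_P=\sum_{i=1}^m\Delta\beta_i\,\langle v_i,v_H\rangle_P+\langle r,v_H\rangle_P .
\]
The Cauchy--Schwarz inequality for $\langle\cdot,\cdot\rangle_P$ (Proposition~\ref{prop:norm}) together with the stability hypothesis $\|\Delta L\|_P\le\varepsilon$ gives $\langle\Delta L,v_H\rangle_P\le\varepsilon\|v_H\|_P$, while $\langle r,v_H\rangle_P\ge-\|r\|_P\,\|v_H\|_P$; hence $\sum_i\Delta\beta_i\langle v_i,v_H\rangle_P\le(\varepsilon+\|r\|_P)\|v_H\|_P$. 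Pulling out the $i=H$ term, using $\Delta\beta_H=\delta$ and $\langle v_H,v_H\rangle_P=\|v_H\|_P^2$, rearranges this to
\[
\delta\,\|v_H\|_P^2\ \le\ (\varepsilon+\|r\|_P)\,\|v_H\|_P\ -\ \sum_{i\neq H}\Delta\beta_i\,\langle v_i,v_H\rangle_P .
\]

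Next I would estimate the residual sum with the decomposition $\Delta\beta_i=(\Delta\beta_i)^+-(\Delta\beta_i)^-$ and a case split on the sign of $\langle v_i,v_H\rangle_P$: for an aligned index one has $-\Delta\beta_i\langle v_i,v_H\rangle_P\le(\Delta\beta_i)^-\langle v_i,v_H\rangle_P$, and for an anti-aligned index $-\Delta\beta_i\langle v_i,v_H\rangle_P\le(\Delta\beta_i)^+|\langle v_i,v_H\rangle_P|$. Summing over $i\neq H$ and noting that $\Delta\beta_H=\delta>0$ forces $(\Delta\beta_H)^-=0$ (so the aligned block may be extended to include $H$ at no cost), substitution into the previous display and rearrangement produce exactly the asserted master inequality. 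The two specializations then follow mechanically: when $W$ is the only anti-aligned component the left-hand side collapses to $(\Delta\beta_W)^+\,|\langle v_W,v_H\rangle_P|$; if moreover no aligned component is downweighted the final sum drops out, and dividing by $|\langle v_W,v_H\rangle_P|$ yields the stated lower bound on $(\Delta\beta_W)^+$. Finally, factoring $\|v_H\|_P$ from the numerator shows $\delta\|v_H\|_P^2-(\varepsilon+\|r\|_P)\|v_H\|_P=\|v_H\|_P\bigl(\delta\|v_H\|_P-(\varepsilon+\|r\|_P)\bigr)>0$ exactly when $\varepsilon+\|r\|_P<\delta\|v_H\|_P$, giving $(\Delta\beta_W)^+>0$ in that regime.

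I do not expect a genuine obstacle: the whole argument is one projection onto $v_H$ followed by elementary bookkeeping. The points that require care are (i) getting every sign right in the $(\Delta\beta_i)^\pm$ case analysis so the inequalities compose in the intended direction, (ii) invoking Theorem~\ref{lem:linearization_Main_Text} both to license the exact form of $\Delta L$ and to control the remainder $\|r\|_P=o(\|\Delta\beta\|)$, and (iii) observing that the final strict-positivity conclusion is vacuous if $v_H=0$, since then $\|v_H\|_P=0$ and the hypothesis $\varepsilon+\|r\|_P<\delta\|v_H\|_P$ can never be met.
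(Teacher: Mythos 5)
Your proof is correct and follows essentially the same route as the paper: take the $P$-inner product of the first-order expansion of $\Delta L$ with $v_H$, apply Cauchy--Schwarz with the $\varepsilon$-budget and the $\|r\|_P$-bound, then split $\Delta\beta_i$ into positive and negative parts with a sign-dependent case analysis to produce the master inequality and its specializations. The only cosmetic difference is that you extract the $i=H$ term before the $\pm$-decomposition while the paper carries $H$ through the aligned block and then uses $(\Delta\beta_H)^-=0$, which amounts to the same bookkeeping; your remark that the final strict-positivity conclusion is vacuous when $v_H=0$ is a sensible extra observation not needed for the proof.
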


\begin{proof}
By Lemma~\ref{lem:linearization}, we have that
\[
\Delta L=\sum_i \Delta\beta_i v_i + r,\qquad \|r\|_P=o(\|\Delta\beta\|).
\]
Take the $P$--inner product with $v_H$:
\[
\big\langle \Delta L,\ v_H\big\rangle_P\ =\ \sum_{i=1}^m \Delta\beta_i\,\langle v_i,v_H\rangle_P\ +\ \langle r,\ v_H\rangle_P.
\]
Using Cauchy--Schwarz and the deviation bound $\|\Delta L\|_P\le \varepsilon$,
\[
\big|\langle \Delta L, v_H\rangle_P\big|\ \le\ \|\Delta L\|_P\,\|v_H\|_P\ \le\ \varepsilon\,\|v_H\|_P.
\]
Therefore,
\begin{equation}\label{eq:key}
\sum_{i=1}^m \Delta\beta_i\,\langle v_i,v_H\rangle_P\ +\ \langle r, v_H\rangle_P\ \le\ \varepsilon\,\|v_H\|_P.
\end{equation}
Bound the remainder by $|\langle r,v_H\rangle_P|\le \|r\|_P\,\|v_H\|_P$, hence
\[
\sum_{i=1}^m \Delta\beta_i\,\langle v_i,v_H\rangle_P\ \le\ (\varepsilon+\|r\|_P)\,\|v_H\|_P.
\]
We split into aligned and anti-aligned components and isolate the anti-aligned increases. Write the sum as
\[
\Delta\beta_H\,\|v_H\|_P^2\ +\!\!\sum_{j:\,\langle v_j,v_H\rangle_P\ge 0}\!\!\Delta\beta_j\,\langle v_j,v_H\rangle_P
\ +\!\!\sum_{i:\,\langle v_i,v_H\rangle_P<0}\!\!\Delta\beta_i\,\langle v_i,v_H\rangle_P
\ \le\ (\varepsilon+\|r\|_P)\,\|v_H\|_P.
\]
Subtract $\delta\|v_H\|_P^2$ (recall $\Delta\beta_H=\delta$) and rearranging gives
\[
\sum_{i:\,\langle v_i,v_H\rangle_P<0}\Delta\beta_i\,\langle v_i,v_H\rangle_P
\ \le\ (\varepsilon+\|r\|_P)\,\|v_H\|_P\ -\ \delta\,\|v_H\|_P^2\ -\!\!\sum_{j:\,\langle v_j,v_H\rangle_P\ge 0}\!\!\Delta\beta_j\,\langle v_j,v_H\rangle_P.
\]
Each anti-aligned inner product is negative. Decompose $\Delta\beta_i=\Delta\beta_i^+ - \Delta\beta_i^-$ with $x^\pm:=\max\{\pm x,0\}$.
Then for $\langle v_i,v_H\rangle_P<0$,
\[
\Delta\beta_i\,\langle v_i,v_H\rangle_P
= (\Delta\beta_i^+-\Delta\beta_i^-)\,\langle v_i,v_H\rangle_P
= -\,\Delta\beta_i^+\ \big|\langle v_i,v_H\rangle_P\big|\ +\ \Delta\beta_i^-\ \big|\langle v_i,v_H\rangle_P\big|.
\]
Similarly, for aligned $j$ with $\langle v_j,v_H\rangle_P\ge 0$,
\[
\Delta\beta_j\,\langle v_j,v_H\rangle_P
= \Delta\beta_j^+\ \langle v_j,v_H\rangle_P\ -\ \Delta\beta_j^-\ \langle v_j,v_H\rangle_P.
\]
Plugging these into the inequality,
\[
-\,\sum_{i:\,\langle v_i,v_H\rangle_P<0} \Delta\beta_i^+\ \big|\langle v_i,v_H\rangle_P\big|
\ \le\ (\varepsilon+\|r\|_P)\,\|v_H\|_P\ -\ \delta\,\|v_H\|_P^2\ +\!\!\sum_{j:\,\langle v_j,v_H\rangle_P\ge 0}\!\!\Delta\beta_j^-\ \langle v_j,v_H\rangle_P.
\]
Rearranging gives~\eqref{eq:master-correct}:
\[
\sum_{i:\,\langle v_i,v_H\rangle_P<0} \Delta\beta_i^+\ \big|\langle v_i,v_H\rangle_P\big|
\ \ge\ \delta\,\|v_H\|_P^2\ -\ (\varepsilon+\|r\|_P)\,\|v_H\|_P\ -\!\!\sum_{j:\,\langle v_j,v_H\rangle_P\ge 0}\!\!\Delta\beta_j^-\ \langle v_j,v_H\rangle_P.
\]
If $W$ is the only anti-aligned component, the left-hand side is exactly $(\Delta\beta_W)^+|\langle v_W,v_H\rangle_P|$, yielding \eqref{eq:W-lower-bound-correct}.
\end{proof}

Operationally, efforts to ``manifest Luigi'' (increasing $\beta_H$ via training or steering) while keeping behavior
close to the original $P$ therefore \emph{must} be offset by increasing weight on at least one anti-aligned
direction. If there is a distinguished anti-aligned component $W$ (``Waluigi''), its weight must rise by at least the
explicit lower bound. 

\section{Killing the Waluigi Effect to First Order}\label{Appendix:Waluigi_Shattering}

The previous section established a necessity result: if the system selects a minimal change to the pooled distribution to preserve unanimous compositionality while amplifying Luigi, it will inherently do so by shifting weight onto Waluigi, the anti-aligned counterpart subagents. Motivated by this result, we introduce \emph{Antagonistic Persona Suppression (APS)}, formalized as the \emph{Waluigi Shattering} theorem. The key insight is that deliberately manifesting the anti-aligned persona (Waluigi) and then shattering it provides provably stronger suppression of anti-alignment than reinforcement of the aligned persona (Luigi) alone.

\subsection{First-Order Control of Misaligned  Events}

Fix a measurable anti-aligned outcome set $A\subseteq\mathcal O$ and write the centered indicator
\[
g_A\ :=\ \mathbf 1_A - P(A).
\]
We wish to reduce the probability that anti-aligned event $A$ is realized under the agent $P$. We now obtain the sensitivity of $P(A)$:

\begin{lemma}[First-order change of $P(A)$]\label{lem:dPA}
For base agent $P$ and elicited agent $P'$, we have
\[
P'(A)-P(A)\ =\ \E_{P'}[\mathbf 1_A]-\E_P[\mathbf 1_A]\ =\ \langle \Delta L,\ g_A\rangle_P\ +\ o(\|\Delta L\|_P),
\]
where $g_A:=\mathbf 1_A-P(A)$ and $\|\cdot\|_P$ is the $P$--inner product norm.
\end{lemma}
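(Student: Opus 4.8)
The plan is to exploit the exact identity $P'(o)=P(o)\,e^{\Delta L(o)}$, which holds for every $o\in\mathcal O$ directly from the definition $\Delta L(o)=\log\bigl(P'(o)/P(o)\bigr)$. This gives at once
$P'(A)-P(A)=\sum_{o\in A}P(o)\bigl(e^{\Delta L(o)}-1\bigr)=\E_P\!\bigl[\mathbf 1_A\,(e^{\Delta L}-1)\bigr]$,
and the middle equality $P'(A)-P(A)=\E_{P'}[\mathbf 1_A]-\E_P[\mathbf 1_A]$ is simply the expectation of an indicator. Since $\mathcal O$ is finite and $P$ is strictly positive, the norms $\|\cdot\|_P$ and $\|\cdot\|_\infty$ are equivalent, so $\|\Delta L\|_\infty\to 0$ whenever $\|\Delta L\|_P\to 0$; this legitimizes the uniform Taylor estimate $e^{x}-1=x+O(x^2)$ on the (shrinking) range of $\Delta L$.

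First I would record the normalization constraint: because $P$ and $P'$ are both probability distributions, $\E_P[e^{\Delta L}]=1$, and expanding $e^{x}=1+x+O(x^2)$ yields $\E_P[\Delta L]=-\tfrac12\E_P[\Delta L^2]+O(\|\Delta L\|_P^3)=O(\|\Delta L\|_P^2)$. Next I would Taylor-expand inside the expectation, $e^{\Delta L(o)}-1=\Delta L(o)+O(\Delta L(o)^2)$ uniformly in $o$, to obtain $P'(A)-P(A)=\E_P[\mathbf 1_A\,\Delta L]+O\!\bigl(\E_P[\Delta L^2]\bigr)=\E_P[\mathbf 1_A\,\Delta L]+O(\|\Delta L\|_P^2)$. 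It then remains to convert $\E_P[\mathbf 1_A\,\Delta L]$ into the stated inner product: by definition of $g_A=\mathbf 1_A-P(A)$ we have $\langle\Delta L,g_A\rangle_P=\E_P[\mathbf 1_A\,\Delta L]-P(A)\,\E_P[\Delta L]$, and since $P(A)\le 1$ and $\E_P[\Delta L]=O(\|\Delta L\|_P^2)$, the discrepancy $P(A)\,\E_P[\Delta L]$ is $O(\|\Delta L\|_P^2)=o(\|\Delta L\|_P)$. Substituting gives $P'(A)-P(A)=\langle\Delta L,g_A\rangle_P+o(\|\Delta L\|_P)$, as claimed.

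There is no genuinely hard step here — this is a first-order sensitivity calculation and the entire content is remainder bookkeeping. The one subtlety worth flagging in the writeup is the treatment of the normalization term $\E_P[\Delta L]$: a naive linearization would simply drop it, but one must instead show it is $O(\|\Delta L\|_P^2)$ rather than $O(\|\Delta L\|_P)$, and this is precisely what allows $\mathbf 1_A$ to be replaced by its centered version $g_A$ at no cost (if one worked with an unnormalized tilt, this term would not vanish and the centered indicator would not be the right object). A minor auxiliary point is to justify the uniformity of the $O(x^2)$ Taylor remainder, which is immediate from finiteness of $\mathcal O$ and strict positivity of $P$ via the norm equivalence noted above.
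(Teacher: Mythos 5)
Your proof is correct and follows essentially the same route as the paper: Taylor-expand $e^{\Delta L}$, control remainders uniformly via the finite-$\mathcal O$ norm equivalence, and recenter $\mathbf 1_A$ to $g_A$ using that $\E_P[\Delta L]$ is second order. The only cosmetic difference is that you invoke the exact normalization $\E_P[e^{\Delta L}]=1$ up front (giving $\E_P[\Delta L]=O(\|\Delta L\|_P^2)$ directly), whereas the paper writes $P'(A)$ as the ratio $\E_P[\mathbf 1_A e^{\Delta L}]/\E_P[e^{\Delta L}]$, expands numerator and denominator, and inverts the latter; your version streamlines this step but the underlying calculation is identical.
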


\begin{proof}
Write $\eta:=\|\Delta L\|_P$. By definition,
\[
P'(o)\ =\ \frac{P(o)\,e^{\Delta L(o)}}{\E_P[e^{\Delta L}]}\,,
\qquad
P'(A)\ =\ \frac{\E_P[\mathbf 1_A\,e^{\Delta L}]}{\E_P[e^{\Delta L}]}.
\]
Set
\[
N\ :=\ \E_P[\mathbf 1_A\,e^{\Delta L}],\qquad D\ :=\ \E_P[e^{\Delta L}].
\]
Then $P'(A)=N/D$. Define the continuous function
\[
\psi(x)\ :=\ \begin{cases}
\frac{e^x-1-x}{x^2}, & x\neq 0,\\
\frac12, & x=0,
\end{cases}
\]
so that the identity $e^x=1+x+\psi(x)\,x^2$ holds for all $x\in\mathbb R$.
Hence
\[
e^{\Delta L}=1+\Delta L+\psi(\Delta L)\,\Delta L^{\,2}.
\]
Since $\psi$ is continuous, there exists a constant $C>0$ and $\eta_0>0$ such that
$|\psi(\Delta L(o))|\le C$ whenever $\eta\le \eta_0$; therefore
\[
\E_P\big[\,|\psi(\Delta L)|\,\Delta L^{\,2}\big]\ \le\ C\,\E_P[\Delta L^{\,2}]\ =\ C\,\|\Delta L\|_P^2\ =\ O(\eta^2).
\]
This gives that
\[
N=\E_P[\mathbf 1_A(1+\Delta L+\psi(\Delta L)\Delta L^{\,2})]
= P(A)+\E_P[\mathbf 1_A\Delta L]+O(\eta^2),
\]
\[
D=\E_P[1+\Delta L+\psi(\Delta L)\Delta L^{\,2}]
= 1+\E_P[\Delta L]+O(\eta^2).
\]
By Cauchy--Schwarz, $|\E_P[\Delta L]|\le \|\Delta L\|_P\|\mathbf 1\|_P=\eta$ (since $\|\mathbf 1\|_P=\sqrt{\E_P[1]}=1$), and
$|\E_P[\mathbf 1_A\Delta L]|\le \|\mathbf 1_A\|_P\,\|\Delta L\|_P\le \eta$ (because $\|\mathbf 1_A\|_P=\sqrt{P(A)}\le 1$).
Thus both linear terms are $O(\eta)$ and the remainders are $O(\eta^2)$. Let $u:=\E_P[\Delta L]+O(\eta^2)$ so $|u|=O(\eta)$. Using the identity
\[
\frac{1}{1+u}\ =\ 1-u+u^2\,\phi(u),
\]
where $\phi$ is continuous (e.g., via the Taylor expansion or Neumann series) and hence bounded near $0$, we obtain
\[
\frac{1}{D}\ =\ \frac{1}{1+\E_P[\Delta L]+O(\eta^2)}\ =\ 1-\E_P[\Delta L]+O(\eta^2).
\]
Therefore, we have
\begin{align*}
P'(A)
&= \big(P(A)+\E_P[\mathbf 1_A\Delta L]+O(\eta^2)\big)\,\big(1-\E_P[\Delta L]+O(\eta^2)\big)\\
&= P(A)\ +\ \E_P[\mathbf 1_A\Delta L]\ -\ P(A)\,\E_P[\Delta L]\ +\ O(\eta^2).
\end{align*}
Subtract $P(A)$:
\[
P'(A)-P(A)\ =\ \E_P[\mathbf 1_A\Delta L]\ -\ P(A)\,\E_P[\Delta L]\ +\ O(\eta^2).
\]
Recall $g_A=\mathbf 1_A-P(A)$. Then
\[
\langle \Delta L,g_A\rangle_P\ =\ \E_P[\Delta L\,(\mathbf 1_A-P(A))]\ =\ \E_P[\mathbf 1_A\Delta L]\ -\ P(A)\,\E_P[\Delta L].
\]
Therefore
\[
P'(A)-P(A)\ =\ \langle \Delta L,g_A\rangle_P\ +\ O(\eta^2).
\]
Since $O(\eta^2)=o(\eta)=o(\|\Delta L\|_P)$ as $\eta\to 0$, we obtain
\[
P'(A)-P(A)\ =\ \langle \Delta L,g_A\rangle_P\ +\ o(\|\Delta L\|_P),
\]
as claimed.
\end{proof}

Within the compositional regime $\|\Delta L\|_P\le \varepsilon$, we have by \eqref{lem:linearization} that feasible $\Delta L$ lie in the subspace $S:=\mathrm{span}\{v_i\}$. This gives Theorem~\ref{thm:opt-suppress}, which precisely quantifies the maximal first order decrease in the probability of the manifestation of deplorable or misaligned outcomes $P(A)$.

\begin{lemma}[Optimal small-change suppression of $A$]\label{thm:opt-suppress}
Under the budget $\|\Delta L\|_P\le \varepsilon$, 
the maximal \emph{first-order decrease} of $P(A)$ equals
\[
\Delta_A^{\max}\ :=\ \max_{\substack{\Delta L\in S\\ \|\Delta L\|_P\le \varepsilon}}\ \big(-\langle \Delta L, g_A\rangle_P\big)
\ =\ \varepsilon\,\big\|\mathrm{Proj}_S g_A\big\|_P.
\]
It is achieved by $\Delta L^\star=-\,\varepsilon\,u_S$, where $u_S:=\mathrm{Proj}_S g_A/\|\mathrm{Proj}_S g_A\|_P$.
\end{lemma}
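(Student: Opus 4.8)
The plan is to read Lemma~\ref{thm:opt-suppress} as a constrained linear-functional maximization on the finite-dimensional inner product space $(\mathbb R^{\mathcal O},\langle\cdot,\cdot\rangle_P)$, and to solve it by orthogonal projection together with Cauchy--Schwarz. First I would invoke Lemma~\ref{lem:dPA} to pin down the quantity being optimized: the first-order decrease of $P(A)$ under a log-profile perturbation $\Delta L$ is exactly the linear functional $\Delta L\mapsto -\langle\Delta L,g_A\rangle_P$, since the remainder there is $o(\|\Delta L\|_P)$ and hence cannot affect a leading-order extremal computation. By the linearization result Lemma~\ref{lem:linearization}, the admissible directions lie (to first order) in the subspace $S=\mathrm{span}\{v_i\}$, and the stated budget is $\|\Delta L\|_P\le\varepsilon$; moreover Proposition~\ref{prop:norm} (using strict positivity of $P$) guarantees $\langle\cdot,\cdot\rangle_P$ is a genuine inner product, so the orthogonal projection $\mathrm{Proj}_S$ onto the finite-dimensional subspace $S$ is well defined.

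Next I would decompose $g_A=\mathrm{Proj}_S g_A + g_A^{\perp}$ with $g_A^{\perp}\perp S$. For every $\Delta L\in S$ this gives $\langle\Delta L,g_A\rangle_P=\langle\Delta L,\mathrm{Proj}_S g_A\rangle_P$, so the objective depends on $g_A$ only through its $S$-component. Applying Cauchy--Schwarz and the budget constraint then yields, for every feasible $\Delta L$,
\[
-\langle\Delta L,g_A\rangle_P\ \le\ \|\Delta L\|_P\,\|\mathrm{Proj}_S g_A\|_P\ \le\ \varepsilon\,\|\mathrm{Proj}_S g_A\|_P,
\]
which is the claimed upper bound.

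Then I would exhibit a maximizer attaining equality. If $\mathrm{Proj}_S g_A\neq 0$, set $u_S:=\mathrm{Proj}_S g_A/\|\mathrm{Proj}_S g_A\|_P\in S$ and $\Delta L^\star:=-\varepsilon\,u_S$; this is feasible since $\Delta L^\star\in S$ and $\|\Delta L^\star\|_P=\varepsilon$, and, using $\langle u_S, g_A-\mathrm{Proj}_S g_A\rangle_P=0$,
\[
-\langle\Delta L^\star,g_A\rangle_P\ =\ \varepsilon\,\langle u_S,\mathrm{Proj}_S g_A\rangle_P\ =\ \varepsilon\,\|\mathrm{Proj}_S g_A\|_P.
\]
In the degenerate case $\mathrm{Proj}_S g_A=0$, every $\Delta L\in S$ satisfies $\langle\Delta L,g_A\rangle_P=0$, so $\Delta_A^{\max}=0=\varepsilon\,\|\mathrm{Proj}_S g_A\|_P$ and the formula holds trivially.

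I do not expect a genuine obstacle here — this is a textbook projection-plus-Cauchy--Schwarz argument. The only points requiring care are bookkeeping ones: phrasing ``maximal first-order decrease'' precisely so that the $o(\|\Delta L\|_P)$ tail from Lemma~\ref{lem:dPA} is legitimately discarded (i.e.\ the assertion is about the linear part of $P(A)-P'(A)$), and ensuring throughout that the projection and norm are taken with respect to the $P$-weighted inner product rather than the Euclidean one, which is exactly the structure Proposition~\ref{prop:norm} supplies.
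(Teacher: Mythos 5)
Your proposal is correct and follows essentially the same route as the paper's proof: identify the objective as the linear functional $-\langle\Delta L,g_A\rangle_P$ via Lemma~\ref{lem:dPA}, restrict to $S$, decompose $g_A$ by orthogonal projection, and apply Cauchy--Schwarz to get the bound and the explicit maximizer. You simply write out the steps the paper compresses (the orthogonal decomposition, the verification of attainment, the degenerate case $\mathrm{Proj}_S g_A=0$), so no divergence to report.
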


\begin{proof}
By Lemma~\ref{lem:dPA}, decreasing $P(A)$ to first order amounts to maximizing $-\langle\Delta L,g_A\rangle_P$.
By Cauchy--Schwarz on the subspace $S$, the maximum over $\|\Delta L\|_P\le \varepsilon$ is attained by aligning
$\Delta L$ with $-\mathrm{Proj}_S g_A$, with value $\varepsilon\|\mathrm{Proj}_S g_A\|_P$.
\end{proof}

Therefore, the stronger the alignment of $g_A$ with the span of available directions $\{v_i\}$, the more we can reduce the probability of misaligned outcome events, $P(A)$, per unit budget.
If the span $S$ is poor at approximating $g_A$ (small projection), suppression is weak.

\subsection{Eliciting Waluigi Increases Achievable Suppression Power}

Let $S_0:=\mathrm{span}\{v_1,\ldots,v_m\}$ be the baseline span of the logarithmic agentic profiles, and suppose we \emph{elicit} a coherent
``Waluigi'' direction $w$ (e.g., by training on producing $A$), making it available for control.
Let $u:=w-\mathrm{Proj}_{S_0}w$ be the component of $w$ orthogonal to $S_0$; if $u\neq 0$, it adds a \emph{new} direction in the log-profile space.

\begin{proposition}[Elicitation strictly enlarges the small-change leverage]\label{thm:projection-gain}
Let $S_1:=\mathrm{span}\{S_0,w\}$. Then
\[
\big\|\mathrm{Proj}_{S_1} g_A\big\|_P^2\ =\ \big\|\mathrm{Proj}_{S_0} g_A\big\|_P^2\ +\ \frac{\langle g_A, u\rangle_P^2}{\|u\|_P^2}.
\]
In particular, if $u\neq 0$ and $\langle g_A,u\rangle_P\neq 0$, then
$\|\mathrm{Proj}_{S_1} g_A\|_P>\|\mathrm{Proj}_{S_0} g_A\|_P$, and by Theorem~\ref{thm:opt-suppress}
the maximal first-order reduction of $P(A)$ under the same budget $\varepsilon$ strictly increases.
\end{proposition}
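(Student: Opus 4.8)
The plan is to regard $(\mathbb R^{\mathcal O},\langle\cdot,\cdot\rangle_P)$ as a finite–dimensional real inner product space — legitimate by Proposition~\ref{prop:norm}, since strict positivity of $P$ makes $\langle\cdot,\cdot\rangle_P$ positive definite — and then invoke the elementary Pythagorean decomposition of orthogonal projections. First I would dispose of the degenerate case: if $u = w - \mathrm{Proj}_{S_0}w = 0$ then $w\in S_0$, so $S_1 = S_0$ and both sides of the claimed identity agree (with the convention that the added term vanishes); assume henceforth $u\neq 0$. Next I would record that $S_1 = S_0\oplus\mathbb R u$ is an \emph{orthogonal} direct sum: by the defining property of $\mathrm{Proj}_{S_0}$ the residual $u = w-\mathrm{Proj}_{S_0}w$ is orthogonal to every element of $S_0$, while $w = \mathrm{Proj}_{S_0}w + u\in S_0+\mathbb R u$, so $S_1=\mathrm{span}\{S_0,w\}=S_0+\mathbb R u$, and the sum is orthogonal because $u\perp S_0$.

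The key step is that for orthogonal subspaces the projection onto the direct sum splits, $\mathrm{Proj}_{S_1} = \mathrm{Proj}_{S_0} + \mathrm{Proj}_{\mathbb R u}$, so in particular
\[
\mathrm{Proj}_{S_1} g_A \;=\; \mathrm{Proj}_{S_0} g_A \;+\; \frac{\langle g_A,u\rangle_P}{\|u\|_P^2}\,u .
\]
The two summands are orthogonal (the first lies in $S_0$, the second is a multiple of $u\perp S_0$), so the Pythagorean identity immediately yields
\[
\big\|\mathrm{Proj}_{S_1} g_A\big\|_P^2 \;=\; \big\|\mathrm{Proj}_{S_0} g_A\big\|_P^2 \;+\; \frac{\langle g_A,u\rangle_P^2}{\|u\|_P^2},
\]
which is the claimed formula. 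If one prefers not to quote the orthogonal–sum projection fact as a black box, I would instead set $y := \mathrm{Proj}_{S_0} g_A + \tfrac{\langle g_A,u\rangle_P}{\|u\|_P^2}u$, check directly that $y\in S_1$, and verify that $g_A - y$ is orthogonal to both $S_0$ and $u$ — the latter being the one-line computation $\langle g_A-y,u\rangle_P = \langle g_A,u\rangle_P - \langle g_A,u\rangle_P = 0$, using $\langle\mathrm{Proj}_{S_0} g_A,u\rangle_P = 0$ — so that $y = \mathrm{Proj}_{S_1} g_A$ by uniqueness of the orthogonal projection, and then take norms.

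Finally, the ``in particular'' clause is immediate: when $u\neq 0$ and $\langle g_A,u\rangle_P\neq 0$ the added term $\langle g_A,u\rangle_P^2/\|u\|_P^2$ is strictly positive, hence $\|\mathrm{Proj}_{S_1} g_A\|_P > \|\mathrm{Proj}_{S_0} g_A\|_P$; applying Lemma~\ref{thm:opt-suppress} with $S=S_1$ versus $S=S_0$ then shows the maximal first–order reduction $\varepsilon\|\mathrm{Proj}_S g_A\|_P$ of $P(A)$ strictly increases under the same budget $\varepsilon$. I do not expect a genuine obstacle here: the argument is routine finite–dimensional inner–product geometry, and the only points requiring care are (i) handling the degenerate case $u=0$ and (ii) being explicit that it is precisely the orthogonality $u\perp S_0$ that licenses the Pythagorean split.
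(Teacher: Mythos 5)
Your proof is correct and follows the same route as the paper: decompose $\mathrm{Proj}_{S_1} g_A = \mathrm{Proj}_{S_0} g_A + \langle g_A,\hat u\rangle_P\,\hat u$ with $\hat u = u/\|u\|_P$ orthogonal to $S_0$, then apply the Pythagorean identity. Your explicit handling of the degenerate case $u=0$ and the self-contained verification of the projection formula are useful additions but do not change the argument.
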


\begin{proof}
Orthogonal decomposition in the $P$-inner product gives
$\mathrm{Proj}_{S_1} g_A=\mathrm{Proj}_{S_0} g_A + \alpha\,\hat u$ with $\hat u:=u/\|u\|_P$ and
$\alpha=\langle g_A,\hat u\rangle_P$.
Pythagoras yields the stated identity for the squared norms. 
\end{proof}

When $w$ is a direction that \emph{increases} $A$ (so $\langle g_A,w\rangle_P>0$),
its orthogonal novelty $u$ typically has $\langle g_A,u\rangle_P\neq 0$ unless $w$ is already spanned by $S_0$.
Hence adding $w$ tends to strictly increase the projection of $g_A$, enlarging the best achievable reduction of $P(A)$
for the \emph{same} small-change budget.

\subsection{Reinforcing Luigi Cannot Avoid Anti-Aligned Mass under Small Change}

Let $H$ be a ``Luigi'' component (disfavors $A$ so $\langle g_A,v_H\rangle_P<0$).
Increasing $\beta_H$ while keeping $P$ close forces anti-aligned increases by the compensation law
(Theorem~\ref{thm:compensation-corrected}):
if $\|\Delta L\|_P\le \varepsilon$ and $\Delta\beta_H=\delta>0$, then
some anti-aligned component $W$ with $\langle v_W,v_H\rangle_P<0$ must satisfy $\Delta\beta_W>0$
(and in the unique anti-aligned case, with an explicit lower bound).
Combining with Theorem~\ref{thm:opt-suppress} yields:

\begin{corollary}[Pure ``Luigi reinforcement'' is not a workaround]\label{cor:luigi-not-enough}
Suppose the available span remains $S_0$ (no new directions are added).
Any small-change update that raises Luigi ($\Delta\beta_H>0$) while keeping $\|\Delta L\|_P\le \varepsilon$
necessarily increases some anti-aligned weight(s), which the optimal suppressor in $S_0$ would subsequently need to \emph{reduce}.
Thus, for the same budget and architecture, the best first-order decrease of $P(A)$ is
upper bounded by $\varepsilon\|\mathrm{Proj}_{S_0} g_A\|_P$, whereas eliciting a Waluigi direction $w$
with $u\neq 0$ and $\langle g_A,u\rangle_P\neq 0$ raises this bound to
$\varepsilon\|\mathrm{Proj}_{S_1} g_A\|_P>\varepsilon\|\mathrm{Proj}_{S_0} g_A\|_P$.
\end{corollary}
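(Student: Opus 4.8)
The plan is to assemble the statement directly from three results already in hand — the optimal small-change suppression formula (Lemma~\ref{thm:opt-suppress}), the compensation law (Theorem~\ref{thm:compensation-corrected}), and the projection-gain identity (Proposition~\ref{thm:projection-gain}) — after translating the two informal strategies into optimization problems over first-order log-profile perturbations $\Delta L$ that share the budget $\|\Delta L\|_P\le\varepsilon$ but differ in the linear span to which $\Delta L$ is confined.

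First I would handle ``pure Luigi reinforcement''. When no new direction is added, every admissible update perturbs only the existing weights $\beta_1,\dots,\beta_m$, so by the linearization in Theorem~\ref{lem:linearization} the induced change is $\Delta L=\sum_i\Delta\beta_i v_i+o(\|\Delta\beta\|)$, i.e.\ $\Delta L\in S_0=\mathrm{span}\{v_i\}$ to first order. Applying Lemma~\ref{thm:opt-suppress} with $S=S_0$ then yields that the maximal first-order decrease of $P(A)$ within the budget is exactly $\varepsilon\,\|\mathrm{Proj}_{S_0}g_A\|_P$, which is the claimed upper bound. I would then record, via Theorem~\ref{thm:compensation-corrected}, why raising $\beta_H$ is not a loophole: in the regime $\varepsilon+\|r\|_P<\delta\|v_H\|_P$ (with $\Delta\beta_H=\delta>0$), the master inequality forces $(\Delta\beta_W)^+>0$ for some anti-aligned component $W$; since Luigi disfavors $A$ ($\langle g_A,v_H\rangle_P<0$) and $W$ pushes against Luigi, that forced mass on $W$ drifts $P(A)$ upward inside $S_0$, so part of any budget spent raising Luigi must be spent again undoing that drift, confirming that pure reinforcement cannot beat (and generically falls strictly below) the $S_0$ bound.

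Next I would treat ``manifest-then-suppress Waluigi''. Eliciting a coherent Waluigi direction $w$ makes $S_1=\mathrm{span}\{S_0,w\}$ the available span; writing $u=w-\mathrm{Proj}_{S_0}w$, Proposition~\ref{thm:projection-gain} gives $\|\mathrm{Proj}_{S_1}g_A\|_P^2=\|\mathrm{Proj}_{S_0}g_A\|_P^2+\langle g_A,u\rangle_P^2/\|u\|_P^2$, which is strictly larger than $\|\mathrm{Proj}_{S_0}g_A\|_P^2$ under the stated hypotheses $u\neq0$, $\langle g_A,u\rangle_P\neq0$. Re-applying Lemma~\ref{thm:opt-suppress} with $S=S_1$ shows the best first-order decrease of $P(A)$ under the same budget is $\varepsilon\|\mathrm{Proj}_{S_1}g_A\|_P$, with optimizer $\Delta L^\star=-\varepsilon\,\mathrm{Proj}_{S_1}g_A/\|\mathrm{Proj}_{S_1}g_A\|_P$ — a perturbation that genuinely uses the $w$-component, i.e.\ first manifests the Waluigi direction and then drives the whole profile against $g_A$. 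Chaining the two applications of Lemma~\ref{thm:opt-suppress} through the Pythagorean identity gives $\varepsilon\|\mathrm{Proj}_{S_1}g_A\|_P>\varepsilon\|\mathrm{Proj}_{S_0}g_A\|_P$, which is the asserted strict improvement of the achievable suppression.

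The hard part will not be a calculation but a modeling decision: making ``pure Luigi reinforcement'' versus ``manifest-then-suppress Waluigi'' correspond precisely to optimization over $S_0$ versus $S_1$ under a common $\|\Delta L\|_P$-budget, and verifying the genericity hypotheses $u\neq0$ and $\langle g_A,u\rangle_P\neq0$ — i.e.\ that the elicited Waluigi direction is genuinely new relative to the trained personas and genuinely correlated with the deplorable event $A$. Once those are fixed, the strict inequality is immediate from the orthogonal decomposition in the $P$-inner product, and every remaining step is a direct appeal to a result already proved above.
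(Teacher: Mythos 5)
Your proposal is correct and matches the paper's (implicit) reasoning: the paper states Corollary~\ref{cor:luigi-not-enough} without a proof block, treating it as an immediate consequence of Lemma~\ref{thm:opt-suppress} applied to $S_0$ and $S_1$, Theorem~\ref{thm:compensation-corrected} for the forced anti-aligned mass, and Proposition~\ref{thm:projection-gain} for the strict Pythagorean increase; you assemble exactly these three ingredients in the same order. Your framing of the modeling decision -- that "pure Luigi" versus "manifest-then-suppress" corresponds to optimizing over $S_0$ versus $S_1$ under a common $\|\cdot\|_P$-budget -- is also the right way to read the statement, since without that identification the corollary is not a precise claim.
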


The best first-order reduction of a harmful set $A$ is exactly proportional to the projection of $g_A$ onto the span of available
control directions. \emph{Eliciting} a coherent Waluigi adds a direction that typically increases that projection,
so a subsequent suppression step can reduce $P(A)$ \emph{more} for the same budget than simply reinforcing Luigi,
which cannot avoid introducing anti-aligned mass unless large deviations to the parent agent are applied. We now give a direct comparison under a fixed budget.

\begin{corollary}[Misalignment reduction from Luigi and Waluigi]\label{thm:compare}
Let $S_0$ be the baseline span and let $w$ be a Waluigi direction with $u:=w-\mathrm{Proj}_{S_0}w\neq 0$.
Assume $\langle g_A,u\rangle_P\neq 0$.
Then, for any small-change budget $\varepsilon>0$,
\[
\underbrace{\max_{\substack{\Delta L\in S_1\\ \|\Delta L\|_P\le \varepsilon}}\big(-\langle \Delta L,g_A\rangle_P\big)}_{\text{manifest $w$, then suppress}}
\ -\ 
\underbrace{\max_{\substack{\Delta L\in S_0\\ \|\Delta L\|_P\le \varepsilon}}\big(-\langle \Delta L,g_A\rangle_P\big)}_{\text{pure Luigi (no $w$ added)}}
\ >\ 0.
\]
\end{corollary}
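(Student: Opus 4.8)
The plan is to reduce the corollary entirely to two results already in hand: the closed form for the optimal first‑order suppression (Lemma~\ref{thm:opt-suppress}) and the strict projection‑gain identity (Proposition~\ref{thm:projection-gain}). Since $S_0\subseteq S_1$, the first maximum is taken over a larger feasible set than the second, so monotonicity already gives that the difference is $\ge 0$; the whole content of the statement is that, under the stated genericity hypotheses, it is \emph{strictly} positive. No $o(\|\Delta L\|_P)$ terms enter, because those were already discharged in Lemma~\ref{lem:dPA}; what remains is a clean linear‑algebra comparison.

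First I would invoke Lemma~\ref{thm:opt-suppress} twice, once with the subspace $S=S_0$ and once with $S=S_1$ (the lemma is stated for an arbitrary subspace of $\mathbb R^{\mathcal O}$, so no modification is needed). This yields
\[
\max_{\Delta L\in S_1,\ \|\Delta L\|_P\le\varepsilon}\big(-\langle\Delta L,g_A\rangle_P\big)=\varepsilon\,\big\|\mathrm{Proj}_{S_1}g_A\big\|_P,
\qquad
\max_{\Delta L\in S_0,\ \|\Delta L\|_P\le\varepsilon}\big(-\langle\Delta L,g_A\rangle_P\big)=\varepsilon\,\big\|\mathrm{Proj}_{S_0}g_A\big\|_P,
\]
so the left‑hand side of the corollary equals $\varepsilon\big(\|\mathrm{Proj}_{S_1}g_A\|_P-\|\mathrm{Proj}_{S_0}g_A\|_P\big)$, and it suffices to show this is strictly positive.

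Next I would apply Proposition~\ref{thm:projection-gain}, which gives the Pythagorean identity $\|\mathrm{Proj}_{S_1}g_A\|_P^2=\|\mathrm{Proj}_{S_0}g_A\|_P^2+\langle g_A,u\rangle_P^2/\|u\|_P^2$. The hypotheses $u\neq 0$ (hence $\|u\|_P>0$ by positive‑definiteness of $\langle\cdot,\cdot\rangle_P$, Proposition~\ref{prop:norm}) and $\langle g_A,u\rangle_P\neq 0$ make the added term strictly positive, so $\|\mathrm{Proj}_{S_1}g_A\|_P^2>\|\mathrm{Proj}_{S_0}g_A\|_P^2\ge 0$; taking square roots of nonnegative quantities preserves the strict inequality, giving $\|\mathrm{Proj}_{S_1}g_A\|_P>\|\mathrm{Proj}_{S_0}g_A\|_P$. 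Multiplying by $\varepsilon>0$ closes the argument.

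There is essentially no analytic obstacle here; the only care needed is bookkeeping. I must confirm that Lemma~\ref{thm:opt-suppress} is genuinely applied to a subspace in each case — it is, since $S_0$ and $S_1$ are spans — and note that the case $\mathrm{Proj}_{S_0}g_A=0$ causes no trouble, since the added term is still strictly positive. It is worth remarking explicitly that the argument does \emph{not} use anti‑alignment of $w$ with the Luigi direction $v_H$: what drives the strict gain is only that $w$ has a nonzero component $u$ outside $S_0$ and that this new direction is not $P$‑orthogonal to $g_A$. A closing sentence observing that these two genericity conditions fail only on a lower‑dimensional (measure‑zero) set of elicited directions $w$ would round out the discussion and connect back to Theorem~\ref{thm:waluigi_Main_Text}.
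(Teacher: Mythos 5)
Your proof is correct and takes exactly the paper's approach: invoke Lemma~\ref{thm:opt-suppress} on $S_0$ and $S_1$, subtract, and apply the Pythagorean gain in Proposition~\ref{thm:projection-gain}, with the hypotheses $u\neq 0$ and $\langle g_A,u\rangle_P\neq 0$ supplying strictness. The only addition is your remark on the edge case $\mathrm{Proj}_{S_0}g_A=0$ and the genericity discussion, both of which are sound.
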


\begin{proof}
Apply Lemma~\ref{thm:opt-suppress} on $S_0$ and $S_1$ and subtract, then use Proposition~\ref{thm:projection-gain}.
\end{proof}

If elicitation reveals a novel or unlearned Waluigi direction with nonzero correlation to $g_A$, then ``manifest Waluigi, then suppress'' produces a strictly larger first-order decrease of $P(A)$ than any
strategy that never adds this direction, including pure Luigi reinforcement. The results above jointly establish the following theorem, which synthesizes their conclusions into a unified statement.

\begin{theorem}[Waluigi shattering]\label{thm:waluigi}
Let $P$ denote the base agent and let $A$ be a misaligned event.  For any aligned update $P'$ realized through a log-profile change $\Delta L$, define 
\[
M(P') \ :=\ \max_{\|\Delta L\|_P \le \varepsilon}\,\big(-\langle \Delta L, g_A\rangle_P\big),
\]
to be the maximal first-order reduction in the probability of $A$ under $P'$, subject to a small-change $\KL$-budget $\varepsilon > 0$. Suppose $w$ is an anti-aligned (``Waluigi'') direction with nontrivial component $u := w - \mathrm{Proj}_{S_0}w \neq 0$ outside the baseline span $S_0$. Then
\[
M(P'_{\text{shatter}})\ -\ M(P'_{\text{pure}})\ 
\ >\ 0,
\]
where $P'_{\text{shatter}}$ denotes the strategy of manifesting $w$ and then suppressing it, while $P'_{\text{pure}}$ denotes reinforcing alignment without manifesting $w$. In particular, \emph{Waluigi Shattering} achieves strictly greater suppression of misalignment than pure reinforcement alone.
\end{theorem}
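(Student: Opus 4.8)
The plan is to obtain Theorem~\ref{thm:waluigi} by assembling the machinery already in place — Lemma~\ref{lem:dPA} (first-order change of $P(A)$), Lemma~\ref{thm:opt-suppress} (optimal small-change suppression), Proposition~\ref{thm:projection-gain} (the projection-gain identity), and Corollary~\ref{thm:compare} — so that essentially no new estimate is needed. First I would pin down the two feasible sets: by Theorem~\ref{lem:linearization}, any compositional update with $\KL$-budget $\varepsilon$ has $\Delta L$ lying, to leading order, in the span of the available centered log-profiles, so the ``pure Luigi'' strategy is constrained to $\Delta L\in S_0:=\mathrm{span}\{v_1,\dots,v_m\}$, while the ``manifest--then--suppress'' strategy, having elicited the Waluigi direction $w$, may use $\Delta L\in S_1:=\mathrm{span}\{S_0,w\}$. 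Lemma~\ref{lem:dPA} identifies $-\langle\Delta L,g_A\rangle_P$ as the first-order reduction of $P(A)$, and Lemma~\ref{thm:opt-suppress} applied on each subspace gives $M(P'_{\mathrm{pure}})=\varepsilon\,\|\mathrm{Proj}_{S_0}g_A\|_P$ and $M(P'_{\mathrm{shatter}})=\varepsilon\,\|\mathrm{Proj}_{S_1}g_A\|_P$, attained at $\Delta L^\star=-\varepsilon\,\mathrm{Proj}_{S_j}g_A/\|\mathrm{Proj}_{S_j}g_A\|_P$. I would also cite Theorem~\ref{thm:compensation-corrected} and Corollary~\ref{cor:luigi-not-enough} to confirm that reinforcing Luigi inside $S_0$ cannot beat this ceiling: any $\Delta\beta_H>0$ forces compensating anti-aligned mass that the optimal suppressor must then cancel, so $\varepsilon\|\mathrm{Proj}_{S_0}g_A\|_P$ is indeed the best achievable without adding a new direction.

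Next I would decompose $g_A$ orthogonally with respect to $S_0\subseteq S_1$ in the $P$-inner product. Writing $u:=w-\mathrm{Proj}_{S_0}w$ and $\hat u:=u/\|u\|_P$, Proposition~\ref{thm:projection-gain} gives $\|\mathrm{Proj}_{S_1}g_A\|_P^2=\|\mathrm{Proj}_{S_0}g_A\|_P^2+\langle g_A,\hat u\rangle_P^2$ with $\langle g_A,\hat u\rangle_P^2=\langle g_A,u\rangle_P^2/\|u\|_P^2$. Substituting into $M(P'_{\mathrm{shatter}})-M(P'_{\mathrm{pure}})=\varepsilon\big(\|\mathrm{Proj}_{S_1}g_A\|_P-\|\mathrm{Proj}_{S_0}g_A\|_P\big)$ yields strict positivity as soon as $u\neq 0$ and $\langle g_A,u\rangle_P\neq 0$ (this is exactly Corollary~\ref{thm:compare}), and it produces the displayed closed form $\varepsilon\,|\langle g_A,u\rangle_P|/\|u\|_P$ precisely in the ``genuinely unlearned direction'' regime where the baseline span carries no first-order leverage on $A$, i.e.\ $\mathrm{Proj}_{S_0}g_A=0$; there the pure-Luigi bound is $0$ and the entire reduction comes from $\hat u$.

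The step I expect to be the main obstacle is reconciling this exact closed form with the general case. The honest increment in achievable suppression is the Pythagorean gap $\varepsilon\big(\sqrt{a^2+b^2}-a\big)$ with $a:=\|\mathrm{Proj}_{S_0}g_A\|_P$ and $b:=|\langle g_A,u\rangle_P|/\|u\|_P$, and this collapses to $\varepsilon b$ only when $a=0$. So the cleanest honest route is either (i) to prove the stated equality under the hypothesis $\mathrm{Proj}_{S_0}g_A=0$ and otherwise record the sharp estimate $M(P'_{\mathrm{shatter}})-M(P'_{\mathrm{pure}})=\varepsilon\big(\sqrt{a^2+b^2}-a\big)=\varepsilon\,b^2/\big(a+\sqrt{a^2+b^2}\big)>0$, or (ii) to read $M$ as the $\varepsilon$-scaled squared leverage $\varepsilon\,\|\mathrm{Proj}_{\cdot}g_A\|_P^2$, under which the increment is exactly $\varepsilon\,\langle g_A,u\rangle_P^2/\|u\|_P^2$ and the displayed identity holds verbatim. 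In all cases the qualitative conclusion — manifest--then--suppress strictly dominates pure Luigi reinforcement for the same budget — follows immediately from Proposition~\ref{thm:projection-gain}, and everything beyond this reconciliation is routine bookkeeping over the cited results.
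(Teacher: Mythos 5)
Your chain of citations exactly mirrors what the paper does: identify the first-order objective via Lemma~\ref{lem:dPA}, compute the per-subspace maximum via Lemma~\ref{thm:opt-suppress}, compare subspaces via the Pythagorean gain in Proposition~\ref{thm:projection-gain}, and conclude strict dominance as in Corollary~\ref{thm:compare}. More importantly, you have put your finger on a real discrepancy in the theorem as stated. The cited results give
\[
M(P'_{\mathrm{shatter}})-M(P'_{\mathrm{pure}})
=\varepsilon\Bigl(\sqrt{a^2+b^2}-a\Bigr),
\qquad
a:=\|\mathrm{Proj}_{S_0}g_A\|_P,\quad
b:=\frac{|\langle g_A,u\rangle_P|}{\|u\|_P},
\]
and this equals the displayed $\varepsilon\,|\langle g_A,u\rangle_P|/\|u\|_P=\varepsilon b$ only in the degenerate case $a=0$, i.e.\ $\mathrm{Proj}_{S_0}g_A=0$. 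That hypothesis is not stated in the theorem, and the paper offers no separate proof of the closed-form equality (it asserts that the preceding results ``jointly establish'' the theorem). The strict-inequality conclusion $M(P'_{\mathrm{shatter}})-M(P'_{\mathrm{pure}})>0$ does follow unconditionally once $u\neq 0$ and $\langle g_A,u\rangle_P\neq 0$, and that is precisely what the main-text version (Theorem~\ref{thm:waluigi_Main_Text}) claims; the appendix version overstates by writing an equality that the cited lemmas do not deliver in general.

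Your fix (i) is the clean repair: either add the hypothesis $\mathrm{Proj}_{S_0}g_A=0$, or replace the equality by the sharp identity $\varepsilon\bigl(\sqrt{a^2+b^2}-a\bigr)=\varepsilon\,b^2/\bigl(a+\sqrt{a^2+b^2}\bigr)>0$. Your fix (ii) has a small slip: reading $M$ as the squared leverage $\varepsilon\|\mathrm{Proj}_{\cdot}\,g_A\|_P^2$ yields an increment of $\varepsilon\,\langle g_A,u\rangle_P^2/\|u\|_P^2=\varepsilon b^2$, not $\varepsilon b$, so it does not reproduce the displayed identity verbatim unless $b=1$; moreover it no longer agrees with the operational definition $M(P')=\max(-\langle\Delta L,g_A\rangle_P)$ given in the theorem and in Lemma~\ref{thm:opt-suppress}. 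Stick with (i). Everything else in your write-up is routine bookkeeping over the cited results, exactly as in the paper.
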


\subsection{KL Budgets and Optimal Suppression}\label{Appendix:KL_Budget}

To retain prior knowledge and capabilities while steering the model toward benevolence, one typically constrains the fine--tuning process with a limited $\KL$ budget.  
We derive the 
expansion of $\KL(P'\|P)$ in powers of $\Delta L$ and show that enforcing a budget $\KL(P'\|P)\le B$ is, to second order, 
equivalent to imposing a norm budget 
$\|\Delta L\|_P \lesssim \sqrt{B}$ (Lemma~\ref{lem:kl-second-order}).  

\begin{theorem}[Second–order KL expansion]\label{lem:kl-second-order}
Let $\mathcal O$ be finite and $P$ strictly positive. Let $\Delta L:\mathcal O\to\mathbb R$ satisfy
$\|\Delta L\|_P\to 0$, and define $P'$ by
\[
P'(o)\ :=\ P(o)\,e^{\Delta L(o)}\qquad\text{(equivalently, } \Delta L=\log(P'/P)\text{)}.
\]
Then with $\mu:=\E_P[\Delta L]$ and $\Var_P(\Delta L):=\E_P[(\Delta L-\mu)^2]$,
\[
\KL(P'\|P)\ \lesssim\ \tfrac12\,\Var_P(\Delta L)\ +\ o\big(\|\Delta L\|_P^2\big).
\]
In particular, we have that the $\KL$ budget asymptotes toward
\[
\KL(P'\|P)\ \approx \frac{1}{2}\|\Delta L\|_P^2 .
\]
\end{theorem}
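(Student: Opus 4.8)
The plan is to start from the exact identity $\KL(P'\|P)=\sum_o P'(o)\log\!\big(P'(o)/P(o)\big)=\E_{P'}[\Delta L]=\E_P[e^{\Delta L}\,\Delta L]$, using $P'(o)=P(o)e^{\Delta L(o)}$. Since $P'$ is a probability distribution we also have the side constraint $\E_P[e^{\Delta L}]=1$, and this normalization is precisely what produces the factor $\tfrac12$. Because $\mathcal O$ is finite and $P$ is strictly positive, the norms $\|\cdot\|_P$ and $\|\cdot\|_\infty$ on $\mathbb R^{\mathcal O}$ are equivalent, so $\|\Delta L\|_P\to 0$ forces $\|\Delta L\|_\infty\to 0$; I would use this to justify the uniform third-order Taylor estimate $e^{x}=1+x+\tfrac12 x^2+\rho(x)$ with $|\rho(x)|\le C|x|^3$ for $|x|$ small, whence $\E_P[|\rho(\Delta L)|]=O\big(\E_P[|\Delta L|^3]\big)=o(\|\Delta L\|_P^2)$ and likewise $\E_P[|\Delta L|^3]=o(\|\Delta L\|_P^2)$.

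Next I would substitute this expansion into both the constraint and the KL expression. Writing $\mu:=\E_P[\Delta L]$ and $S_2:=\E_P[\Delta L^2]=\|\Delta L\|_P^2$, the constraint $\E_P[e^{\Delta L}]=1$ becomes $1+\mu+\tfrac12 S_2+o(\|\Delta L\|_P^2)=1$, hence $\mu=-\tfrac12 S_2+o(\|\Delta L\|_P^2)$ and in particular $\mu=O(\|\Delta L\|_P^2)$. Meanwhile $\E_P[e^{\Delta L}\,\Delta L]=\mu+S_2+\tfrac12\E_P[\Delta L^3]+\E_P[\rho(\Delta L)\,\Delta L]=\mu+S_2+o(\|\Delta L\|_P^2)$. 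Combining the two displays and inserting the value of $\mu$ gives $\KL(P'\|P)=-\tfrac12 S_2+S_2+o(\|\Delta L\|_P^2)=\tfrac12\|\Delta L\|_P^2+o(\|\Delta L\|_P^2)$, which is the asymptotic equivalence $\KL(P'\|P)\approx\tfrac12\|\Delta L\|_P^2$.

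Finally I would pass to the variance form: $S_2=\Var_P(\Delta L)+\mu^2$ with $\mu^2=O(\|\Delta L\|_P^4)=o(\|\Delta L\|_P^2)$, so $\|\Delta L\|_P^2=\Var_P(\Delta L)+o(\|\Delta L\|_P^2)$; substituting into the previous line yields $\KL(P'\|P)=\tfrac12\Var_P(\Delta L)+o(\|\Delta L\|_P^2)$, as claimed. The step I expect to be the crux is exploiting the normalization $\E_P[e^{\Delta L}]=1$ to recognize that the \emph{first-order} term $\mu$ of the KL is not zero but is itself second order and negative; missing this would give the wrong answer $\KL(P'\|P)\approx\E_P[\Delta L^2]$, too large by a factor of two. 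The remaining ingredients — equivalence of norms on the interior of the finite simplex and the uniform Taylor remainder bounds — are routine.
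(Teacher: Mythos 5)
Your proposal is correct and follows essentially the same route as the paper's proof: expand $e^{\Delta L}$ to third order with a uniform remainder bound justified by norm equivalence on the finite simplex, exploit the normalization constraint $\E_P[e^{\Delta L}]=1$ to identify $\mu=-\tfrac12\E_P[\Delta L^2]+o(\|\Delta L\|_P^2)$, substitute into $\KL(P'\|P)=\E_P[\Delta L\,e^{\Delta L}]$, and convert to the variance form via $\E_P[\Delta L^2]=\Var_P(\Delta L)+\mu^2$ with $\mu^2$ negligible. Your closing observation that the factor $\tfrac12$ arises precisely because the first-order term $\mu$ is itself second order and negative is exactly the crux, and is handled the same way in the paper.
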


\begin{proof}
Set $\eta:=\|\Delta L\|_P$ for notational convenience. Because $\mathcal O$ is finite and $P$ has full support, there exists a constant
$c_\infty$ with $\|\Delta L\|_\infty\le c_\infty\eta$ via the equivalence of norms. 
By Taylor’s theorem with Lagrange remainder,
for each $x$ with $|x|\le c_\infty\eta$ there exists $\xi$ between $0$ and $x$ such that
\[
e^{x}\ =\ 1 + x + \frac{x^2}{2} + \frac{e^{\xi}}{3!}\,x^3.
\]
Since $|x|\le c_\infty\eta$ and $\eta\to 0$, for all $o$ we have $|\Delta L(o)|\le c_\infty\eta$ and
$e^{\xi}\le e^{c_\infty\eta}=1+O(\eta)$. Thus there is a constant $C>0$ (independent of $o$ and small $\eta$) such that
\[
\Big|\,e^{\Delta L(o)} - \Big(1+\Delta L(o)+\tfrac12\,\Delta L(o)^2\Big)\,\Big|\ \le\ C\,|\Delta L(o)|^3.
\]
Taking $P$–expectations yields
\[
\E_P\!\big[e^{\Delta L}\big]\ =\ 1+ \E_P[\Delta L] + \tfrac12\,\E_P[\Delta L^2] + R_3,\qquad
|R_3|\ \le\ C\,\E_P\!\big[|\Delta L|^3\big].
\]
Therefore, we have $|\Delta L|^3\le \|\Delta L\|_\infty\,\Delta L^2 \le c_\infty \eta\,\Delta L^2$, hence
\[
\E_P[|\Delta L|^3]\ \le\ c_\infty \eta\,\E_P[\Delta L^2]\ \le\ c_\infty \eta\,\|\Delta L\|_P^2
\ =\ c_\infty\,\eta^3.
\]
Therefore $R_3=O(\eta^3)$ and
\begin{equation}\label{eq:norm-identity}
\E_P[e^{\Delta L}]-1\ =\ \mu\ +\ \tfrac12\,m_2\ +\ O(\eta^3),
\end{equation}
where $m_2:=\E_P[\Delta L^2]$. By definition, we have that \(\E_P[e^{\Delta L}]=1\). Rearranging the above,
\begin{equation}\label{eq:mu-order}
\mu\ =\ -\,\tfrac12\,m_2\ +\ O(\eta^3).
\end{equation}
Since $m_2=\E_P[\Delta L^2]\le \|\Delta L\|_\infty\,\E_P[|\Delta L|]\le \|\Delta L\|_\infty\,\|\Delta L\|_P
\le c_\infty \eta\cdot \eta = O(\eta^2)$, \eqref{eq:mu-order} implies \(\mu = O(\eta^2)\).
By definition,
\[
\KL(P'\|P)\ =\ \E_{P'}\!\left[\log\frac{P'}{P}\right]
\ =\ \E_{P'}[\Delta L].
\]
But $P'(o)=P(o)\,e^{\Delta L(o)}$, so
\[
\E_{P'}[\Delta L]\ =\ \sum_{o} P'(o)\Delta L(o)
\ =\ \sum_{o} P(o)\,e^{\Delta L(o)}\,\Delta L(o)
\ =\ \E_{P}\!\big[\Delta L\,e^{\Delta L}\big].
\]
Now expand the integrand with the same uniform remainder control:
\[
\Delta L\,e^{\Delta L}
\ =\ \Delta L\Big(1+\Delta L+\tfrac12\,\Delta L^2\Big)\ +\ \Delta L\cdot R_3(\Delta L)
\ =\ \Delta L + \Delta L^2 + \tfrac12\,\Delta L^3 + R_4,
\]
where $R_4:=\Delta L\cdot R_3(\Delta L)$. Using $|R_3(\Delta L)|\le C|\Delta L|^3$,
\[
|R_4|\ \le\ C\,|\Delta L|^4\ \le\ C\,\|\Delta L\|_\infty^2\,\Delta L^2
\ \le\ C\,c_\infty^2\,\eta^2\,\Delta L^2.
\]
Taking expectations,
\[
\E_P[|R_4|]\ \le\ C\,c_\infty^2\,\eta^2\,\E_P[\Delta L^2]
\ \le\ C\,c_\infty^2\,\eta^2\,\|\Delta L\|_P^2\ =\ O(\eta^4).
\]
Therefore
\begin{equation}\label{eq:E-tilted}
\E_P\!\big[\Delta L\,e^{\Delta L}\big]\ =\ \E_P[\Delta L] + \E_P[\Delta L^2] + \tfrac12\,\E_P[\Delta L^3] + O(\eta^4)
\ =\ \mu + m_2 + \tfrac12\,m_3 + O(\eta^4),
\end{equation}
where $m_3:=\E_P[\Delta L^3]$. Combining $\KL(P'\|P)=\E_P[\Delta L\,e^{\Delta L}]$ with \eqref{eq:E-tilted} and substituting $\mu$ from
\eqref{eq:mu-order} gives
\[
\KL(P'\|P)\ =\ \Big(-\tfrac12 m_2 + O(\eta^3)\Big) + m_2 + \tfrac12 m_3 + O(\eta^4)
\ =\ \tfrac12\,m_2\ +\ \tfrac12\,m_3\ + O(\eta^3).
\]
Since $m_3=O(\eta^3)$, we obtain
\begin{equation}\label{eq:KL-half-m2}
\KL(P'\|P)\ =\ \tfrac12\,m_2\ +\ O(\eta^3)\ =\ \tfrac12\,\E_P[\Delta L^2]\ +\ O(\eta^3).
\end{equation}
Noting that $\E_P[\Delta L^2] = \|\Delta L\|^2_P$ gives $\KL(P'\|P)\ =\ O( \|\Delta L\|^2_P)$. Write $m_2=\E_P[\Delta L^2]=\Var_P(\Delta L)+\mu^2$, hence
\[
\KL(P'\|P)\ =\ \tfrac12\,\Var_P(\Delta L)\ +\ \tfrac12\,\mu^2\ +\ O(\eta^3).
\]
As we have $\mu=O(\eta^2)$, 
\[
\KL(P'\|P)\ =\ \tfrac12\,\Var_P(\Delta L)\ +\ O(\eta^3).
\]
Finally, by definition of $\|\cdot\|_P$ and of variance,
\[
\Var_P(\Delta L)\ =\ \E_P\!\big[(\Delta L-\mu)^2\big]\ =\ \|\Delta L-\E_P[\Delta L]\|_P^2,
\]
so we have
\[
\KL(P'\|P)\ =\ \tfrac12\,\|\Delta L-\E_P[\Delta L]\|_P^2\ +\ o(\eta^2).
\]
\end{proof}

Therefore, if we impose $\KL(P'\|P)\le B$ for small $B>0$, Lemma~\ref{lem:kl-second-order} implies that $\|\Delta L\|_P$ asymptotes toward
\[
\|\Delta L\|_P\ \le\ \varepsilon,\qquad \varepsilon \approx \sqrt{2B}\ +\ o(\sqrt{B}).
\]
Thus, a small $\KL$ ball is second order equivalent to a radius--$\varepsilon$ ball in the $\|\cdot\|_P$ norm.  
Consequently, a $\KL$-regularizer may be interpreted as constraining the agent $P$ to remain within a guaranteed compositional range, ensuring that any realized distribution $P'$ also preserves the compositional property.  

\end{document}